\documentclass{article}

\usepackage{fullpage}
\usepackage{setspace}
\usepackage[colorlinks,
           linkcolor=blue,
            anchorcolor=blue,
            citecolor=blue,
            hypertexnames=false]{hyperref}
\usepackage[round]{natbib}
\usepackage{xcolor}
%%%%% NEW MATH DEFINITIONS %%%%%

\usepackage{amsmath, amsthm}
\usepackage{amsfonts,bm}
\usepackage{apptools}

\usepackage{thmtools,thm-restate}
\newtheorem{theorem}{Theorem}
\newtheorem{lemma}[theorem]{Lemma}

\newtheorem{assumption}{Assumption}
\newtheorem{claim}{Claim}

\AtAppendix{\counterwithin{lemma}{section}}
\AtAppendix{\counterwithin{theorem}{section}}
\AtAppendix{\counterwithin{claim}{section}}
\AtAppendix{\counterwithin{assumption}{section}}
%\AtAppendix{\renewtheorem{theorem}{Theorem}[section]}
%\AtAppendix{\renewtheorem{lemma}[theorem]{Lemma}}
%\AtAppendix{\renewtheorem{claim}{Claim}[section]}

% Mark sections of captions for referring to divisions of figures

% Highlight a newly defined term

% Figure reference, lower-case.

% Figure reference, capital. For start of sentence

% Section reference, lower-case.

% Section reference, capital.

% Reference to two sections.

% Reference to three sections.

% Reference to an equation, lower-case.
%\def\eqref#1{equation~\ref{#1}}
% Reference to an equation, upper case
%\def\Eqref#1{Equation~\ref{#1}}
% A raw reference to an equation---avoid using if possible

% Reference to a chapter, lower-case.

% Reference to an equation, upper case.

% Reference to a range of chapters

% Reference to an algorithm, lower-case.

% Reference to an algorithm, upper case.

% Reference to a part, lower case

% Reference to a part, upper case

\def\1{\bm{1}}

\def\eps{{\varepsilon}}

% Random variables

\def\rd{{\textnormal{d}}\ }

% rm is already a command, just don't name any random variables m

% Random vectors

% Elements of random vectors

% Random matrices

% Elements of random matrices

% Vectors
\def\vzero{{\bm{0}}}

\def\vtheta{{\bm{\theta}}}
\def\valpha{{\bm{\alpha}}}
\def\vbeta{{\bm{\beta}}}

\def\vgamma{{\bm{\gamma}}}
\def\va{{\bm{a}}}
\def\vb{{\bm{b}}}

\def\ve{{\bm{e}}}

\def\vq{{\bm{q}}}

\def\vs{{\bm{s}}}

\def\vu{{\bm{u}}}
\def\vv{{\bm{v}}}
\def\vw{{\bm{w}}}
\def\vx{{\bm{x}}}
\def\vy{{\bm{y}}}
\def\vz{{\bm{z}}}

% Elements of vectors

% Matrix
\def\mA{{\bm{A}}}

\def\mD{{\bm{D}}}

\def\mH{{\bm{H}}}
\def\mI{{\bm{I}}}

\def\mK{{\bm{K}}}

\def\mP{{\bm{P}}}

\def\mS{{\bm{S}}}

\def\mW{{\bm{W}}}

\def\mUpsilon{{\bm{\Upsilon}}}

% Tensor
\DeclareMathAlphabet{\mathsfit}{\encodingdefault}{\sfdefault}{m}{sl}
\SetMathAlphabet{\mathsfit}{bold}{\encodingdefault}{\sfdefault}{bx}{n}

% Graph

\def\gM{{\mathcal{M}}}

\def\gT{{\mathcal{T}}}

% Sets

% Don't use a set called E, because this would be the same as our symbol
% for expectation.

\def\sS{{\mathbb{S}}}

% Entries of a matrix

% entries of a tensor
% Same font as tensor, without \bm wrapper

% The true underlying data generating distribution

% The empirical distribution defined by the training set

% The model distribution

% Stochastic autoencoder distributions

 % Laplace distribution

\newcommand{\E}{\mathbb{E}}

\newcommand{\R}{\mathbb{R}}

% Wolfram Mathworld says $L^2$ is for function spaces and $\ell^2$ is for vectors
% But then they seem to use $L^2$ for vectors throughout the site, and so does
% wikipedia.

 % See usage in notation.tex. Chosen to match Daphne's book.

\DeclareMathOperator{\sign}{sign}

\newcommand{\norm}[1]{\left\|#1\right\|}

\newcommand*{\dif}{\,\mathrm{d}}

\usepackage{mathrsfs}
\usepackage{enumerate}   
\usepackage{bbm}
\usepackage[ruled,noend]{algorithm2e}

\usepackage{tikz}
\usepackage{adjustbox}
\usepackage{enumitem}
\usepackage{wrapfig}

\DeclareMathOperator{\spn}{span}
\newcommand{\supt}[1]{{(#1)}}
\newcommand{\hsigma}{\hat{\sigma}}
\newcommand{\bvw}{\overline{\vw}}
\newcommand{\bvv}{\overline{\vv}}
\newcommand{\bvu}{\overline{\vu}}

\newcommand{\bvz}{\overline{\vz}}
\DeclareMathOperator{\poly}{poly}
\newcommand{\tlda}{\widetilde{a}}
\newcommand{\tldva}{\widetilde{\va}}
\newcommand{\tldmUpsilon}{\widetilde{\mUpsilon}}
\newcommand{\tldvw}{\widetilde{\vw}}
\newcommand{\tldvalpha}{\widetilde{\valpha}}
\DeclareMathOperator{\lhs}{LHS}
\DeclareMathOperator{\rhs}{RHS}
\newcommand{\tldTheta}{\widetilde{\Theta}}
\newcommand{\tldO}{\widetilde{O}}
\newcommand{\tldOmega}{\widetilde{\Omega}}
\newcommand{\tldvx}{\widetilde{\vx}}
\newcommand{\tldy}{\widetilde{y}}

\newcommand{\tldvgamma}{\widetilde{\vgamma}}
\newcommand{\tldmK}{\widetilde{\mK}}
\newcommand{\tldnabla}{\widetilde{\nabla}}
\newcommand{\tldf}{\widetilde{f}}
\newcommand{\tldL}{\widetilde{L}}
\newcommand{\tldvtheta}{\widetilde{\vtheta}}
\newcommand{\tldalpha}{\widetilde{\alpha}}
\newcommand{\tldvbeta}{\widetilde{\vbeta}}

\newcommand{\vzp}{{\vz^\prime}}
\newcommand{\hp}{h^\prime}
\newcommand{\halpha}{\hat{\alpha}}
\newcommand{\hvbeta}{\hat{\vbeta}}

\newcommand{\sgmtwo}{\sigma_{\ge 2}}
\newcommand{\sgml}{\sigma_{\ge \ell}}
\newcommand{\bsgml}{\overline{\sigma_{\ge \ell}}}
\newcommand{\sgmp}{\sigma^\prime}
\newcommand{\proj}{\mP_\vw}
\newcommand{\kij}[1]{K^{(#1)}}

\newcommand{\amin}{a_{\min}}
\newcommand{\ind}{\mathbbm{1}}

\newcommand{\Orlvt}{O_{*}}
\newcommand{\Omegarlvt}{\Omega_{*}}
\newcommand{\Thetarlvt}{\Theta_{*}}
\newcommand{\tldOrlvt}{\tldO_{*}}
\newcommand{\tldOmegarlvt}{\tldOmega_{*}}

\newcommand{\myge}[1]{\stackrel{(\text{#1})}{\ge}}
\newcommand{\myeq}[1]{\stackrel{(\text{#1})}{=}}
\newcommand{\myle}[1]{\stackrel{(\text{#1})}{\le}}

\DeclareMathOperator{\grad}{grad}
\DeclareMathOperator{\hessian}{H}
\DeclareMathOperator{\D}{D}

\newcommand{\email}[1]{\href{mailto:#1}{\color{black} \texttt{#1}}}

\title{How Does Gradient Descent Learn Features \--- A Local Analysis for Regularized Two-Layer Neural Networks}
\author{Mo Zhou \\ Duke University \\\email{mozhou17@cs.washington.edu}\and 
        Rong Ge \\ Duke University \\\email{rongge@cs.duke.edu}}
%\date{}

\title{How does Gradient Descent Learn Features \--- A Local Analysis for Regularized Two-Layer Neural Networks}

\begin{document}

\maketitle

\begin{abstract}
    The ability of learning useful features is one of the major advantages of neural networks. Although recent works show that neural network can operate in a neural tangent kernel (NTK) regime that does not allow feature learning, many works also demonstrate the potential for neural networks to go beyond NTK regime and perform feature learning. Recently, a line of work highlighted the feature learning capabilities of the early stages of gradient-based training. In this paper we consider another mechanism for feature learning via gradient descent through a local convergence analysis. We show that once the loss is below a certain threshold, gradient descent with a carefully regularized objective will capture ground-truth directions. We further strengthen this local convergence analysis by incorporating early-stage feature learning analysis. Our results demonstrate that feature learning not only happens at the initial gradient steps, but can also occur towards the end of training.
\end{abstract}

\section{Introduction}

Feature learning has long been considered to be a major advantage of neural networks. However, how gradient-based training algorithms can learn useful features is not well-understood. In particular, the most widely applied analysis for overparametrized neural networks is the neural tangent kernel (NTK) \citep{jacot2018neural,du2018gradient,allen2019convergence}. In this setting, the neurons don't move far from their initialization and the features are determined by the network architecture and random initialization \citep{chizat2019lazy}.

While there are empirical and theoretical evidence on the limitation of NTK regime \citep{chizat2019lazy,arora2019exact}, extending the analysis beyond the NTK regime has been challenging. For 2-layer networks, an alternative framework for analyzing overparametrized neural networks called mean-field analysis was introduced. Earlier mean-field analysis \citep[e.g.,][]{chizat2018global,mei2018mean} require either infinite or exponentially many neurons. Later works \citep[e.g.,][]{li2020learning,ge2021understanding,bietti2022learning,mahankali2024beyond} can analyze the training dynamics of {\em mildly overparametrized networks} with polynomially many neurons with stronger assumptions on the ground-truth function.

Recently, a growing line of works \citep{daniely2020learning,damian2022neural,abbe2021staircase,abbe2022merged,abbe2023sgd,yehudai2019power,shi2022theoretical,ba2022high,mousavi-hosseini2023neural,barak2022hidden,dandi2023two,wang2024learning,nichani2024provable,nichani2024transformers} showed that early stages of gradient training (either one/a few steps of gradient descent or a small amount of time of gradient flow) can be useful in feature learning. These works show that after the early stages of gradient training, the first layer in a 2-layer neural network already captures useful features (usually in the form of a low dimensional subspace), and continuing training the second layer weights will give performance guarantees that are stronger than any kernel or random feature based models. In this work, we consider the natural follow-up question:
\begin{center}
{ \em Does feature learning only happen in the early stages of gradient training?}  
\end{center}

We show that this is not the case by demonstrating feature learning capability for the final stage of gradient training \--- local convergence. In particular, we prove the following result:
\begin{theorem}[Informal]
    If the data is generated by a 2-layer teacher network $f_*$, 
    as long as the width of student network $m$ is at least some quantity $m_0$ that only depends on $f_*$, a variant of gradient descent algorithm (Algorithm~\ref{alg:alg-1}, roughly gradient descent with decreasing weight decay) can recover the target network within polynomial time. Moreover, the student neurons align with the teacher neurons at the end.
\end{theorem}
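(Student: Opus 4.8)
The plan is to prove the theorem in two phases that mirror the structure of Algorithm~\ref{alg:alg-1}: a warm-up phase that drives the loss below a problem-dependent threshold $\epsilon_0$ (depending only on $f_*$ and not on $m$), and a local convergence phase that leverages the regularized objective to both drive the loss to zero and force the student neurons to align with teacher neurons. The first phase I would handle by invoking the existing ``early stage'' feature-learning results cited in the introduction (or a direct argument for mildly overparametrized nets): with polynomially many neurons and a sufficiently generic initialization, running gradient descent on the unregularized (or lightly regularized) loss for polynomially many steps produces a subspace that contains the span of the teacher directions $\{\vw_1^*, \dots, \vw_k^*\}$ up to small error, and a corresponding drop in population loss below $\epsilon_0$. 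The schedule of decreasing weight decay $\lambda_t$ in the algorithm is exactly what lets us hand off from this rough phase to the precise local phase: early on $\lambda_t$ is large enough to keep weights bounded and kill spurious directions, and it is annealed to $0$ so that the global minimizer of the regularized objective converges to the teacher.

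For the local convergence phase, the key object is the regularized loss $L_\lambda(\vtheta) = L(\vtheta) + \lambda R(\vtheta)$ where $R$ is the weight decay term. I would establish three structural facts about $L_\lambda$ in a neighborhood of the manifold of networks that represent $f_*$. First, a \emph{characterization of near-optimal points}: once $L(\vtheta) \le \epsilon_0$, every student neuron with non-negligible output weight must be close (in direction) to some teacher neuron, and collectively the student neurons must ``cover'' all teacher directions --- this is where the assumptions on $f_*$ (genericity / nondegeneracy of the teacher neurons, activation being sufficiently nonlinear) are used, via an argument that a low-loss network whose neurons avoided the teacher directions would have to exactly cancel a nonzero function, contradicting nondegeneracy. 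Second, a \emph{local Polyak--Łojasiewicz (PL) or restricted-secant-inequality type bound}: on this neighborhood, $\|\nabla L_\lambda(\vtheta)\|^2 \gtrsim \mu (L_\lambda(\vtheta) - L_\lambda^*)$ for the appropriate reference value, so gradient descent contracts the objective geometrically. Third, \emph{control of the minimizer as $\lambda \to 0$}: the minimizer $\vtheta_\lambda^*$ of $L_\lambda$ in this basin satisfies $L(\vtheta_\lambda^*) = O(\lambda)$ (or better), and the implicit bias of the $\ell_2$ regularizer selects a network whose neurons are aligned with teacher neurons with the ``right'' norms. Combining these: at step $t$, gradient descent on $L_{\lambda_t}$ drives $\vtheta$ toward $\vtheta_{\lambda_t}^*$; as $\lambda_t \downarrow 0$ on a polynomial schedule, tracking the moving minimizer (a standard homotopy/continuation argument, controlling how far $\vtheta_{\lambda_t}^*$ moves per step against the contraction rate) yields $L(\vtheta_T) \to 0$ and neuron alignment.

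The main obstacle I expect is the \textbf{interface between the two phases} and, within the local phase, establishing the local PL inequality uniformly in $\lambda$ over a basin large enough to contain the output of phase one. Specifically: (i) I must verify that the ``rough'' subspace recovered in phase one, together with a small enough residual loss, actually lands inside the basin of attraction where the structural facts hold --- this requires quantitative matching of $\epsilon_0$ from phase one with the basin radius from phase two, and is where the width requirement $m \ge m_0(f_*)$ really gets pinned down (enough neurons to represent $f_*$ \emph{plus} slack for the ones that will shrink to zero). (ii) The PL-type bound is delicate because near the solution manifold the Hessian is degenerate (the manifold of exact representations has positive dimension from neuron permutations and from the freedom in how multiple student neurons share one teacher direction); the regularizer $\lambda R$ must be shown to lift exactly the ``bad'' flat directions (those that don't correspond to genuine symmetries) while leaving the benign symmetry directions, so that gradient descent still converges to a point. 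I would handle this by decomposing the tangent space at a near-optimal $\vtheta$ into a symmetry part and a normal part, showing $L$ is quadratically growing in the normal part (via the nondegeneracy of $f_*$) and that $\lambda R$ breaks residual degeneracy within the ``coalescing neurons'' directions, giving a net PL constant $\mu = \mathrm{poly}(1/m, \lambda, \text{params of } f_*)$ that is still good enough for polynomial-time convergence once the $\lambda_t$ schedule is chosen to decay slowly relative to $\mu$.
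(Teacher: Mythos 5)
Your two-phase outline matches the paper's architecture (warm start to a problem-dependent loss threshold, then local convergence under annealed weight decay, with a structural characterization of near-optimal points and the regularizer used to break degeneracy). However, there are two genuine gaps. The first is your central analytic tool: you propose a local PL/restricted-secant inequality $\norm{\nabla L_\lambda}^2 \gtrsim \mu\,(L_\lambda - L_\lambda^*)$ and geometric contraction. In the overparametrized teacher--student setting this is false even arbitrarily close to the global minima --- the paper explicitly cites Safran et al.\ for the failure of PL there --- and the regularizer does not rescue it. What the paper actually proves is a much weaker Łojasiewicz-type bound, $\norm{\nabla_\vtheta L_\lambda}_F^2 \ge \Omegarlvt(\zeta^4/\lambda^2)$, valid only in the window $\Omegarlvt(\lambda^2) \le \zeta \le \Orlvt(\lambda^{9/5})$. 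This forces a different convergence scheme: within one epoch the gap decays only polynomially, down to $\Orlvt(\lambda^2)$, and the $\log(1/\eps)$ rate comes from halving $\lambda$ across epochs while verifying that the terminal iterate of epoch $k$ re-enters the (narrow) validity window for $\lambda_{k+1}$. Your continuation argument cannot be run with a PL constant that does not exist, and the exponent $4$ (rather than $1$) is exactly the signature of the degenerate "activation-pattern" directions ($R_2$ in the paper's residual decomposition) that your tangent-space decomposition would have to quantify but does not.

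The second gap is that the structural facts you assert in phase two are stated without a mechanism that could prove them at the required quantitative strength. "A low-loss network avoiding teacher directions would have to cancel a nonzero function" is a heuristic; the paper needs the sharp bounds $\sum_j |a_j|\norm{\vw_j}_2\delta_j^2 = \Orlvt(\zeta/\lambda)$, existence of a neuron within $\Orlvt(\zeta^{1/3})$ of each teacher direction, and $\norm{\vv_i-\vw_i^*}_2=\Orlvt((\zeta/\lambda)^{3/4})$ for the averaged neuron, obtained via a non-degenerate dual certificate for the infinite-dimensional Lasso, Hermite-polynomial test functions, and the decomposition $R=R_1+R_2+R_3$; these specific exponents are what make the descent-direction inner product come out positive. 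Relatedly, you never address the sign-cancellation problem --- pairs $(a,\vw)$ and $(-a,\vw)$ that contribute nothing to the output and feel no pull toward the teacher --- which is the main obstacle beyond prior work with positive second-layer weights and is handled by a dedicated weight-decay descent direction. Finally, your phase one ("run GD for polynomially many steps to recover the subspace") both overclaims (analyzing full GD in the intermediate regime is open, which is why the algorithm substitutes a convex second-layer program) and underclaims (subspace recovery is not enough to enter the local basin; the paper needs an $\eps_0$-net of directions inside the subspace, which is where the width $m_0 = \tldOrlvt(1)\cdot(1/\eps_0)^{O(r)}$ comes from).
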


Our result highlights the different mechanisms of feature learning: previous works show that in the early stages of gradient descent, the network learns the {\em subspace} spanned by the neurons in the teacher network. Our local convergence result shows that at later stages, gradient descent is able to learn the {\em exact directions} of the teacher neurons, which are much more informative compared to the subspace and lead to stronger guarantees.

Analyzing the entire training dynamics is still challenging so in our algorithm (see Algorithm~\ref{alg:alg-1}) we use a convex second stage to ``fast-forward'' to the local analysis. Our technique for local convergence is similar to the earlier work \citep{zhou2021local}, however we consider a more complicated setting with ReLU activation and allow second-layer weights to be both positive or negative. This change requires additional regularization in the form of standard weight decay and new dual certificate analysis.

\vspace{-5pt}
\subsection{Related works}
\vspace{-3pt}
\paragraph{Neural Tangent Kernel}
Early works often studied neural network optimization using NTK theory \citep{jacot2018neural,allen2019convergence,du2018gradient}. It is shown that highly-overparametrized neural nets are essentially kernel methods under certain initialization scale. However, NTK theory cannot explain the performance of neural nets in practice \citep{arora2019exact} and leads to lazy training dynamics that neurons stay close to their initialization \citep{chizat2019lazy}. Hence, later research efforts \citep[e.g.,][]{allen2019learning,Bai2020Beyond,li2020learning}, as well as current paper, focus on feature learning regime where neural nets can learn features and outperform kernel methods.

\vspace{-5pt}
\paragraph{Early stage feature learning}
Researchers have recently tried to understand how neural networks trained with gradient descent (GD) can learn features, going beyond the kernel/lazy regime \citep{jacot2018neural,chizat2019lazy}. A typical setup is to use 2-layer neural networks to learn certain target function, often equipped with low-dimensional structure. Examples include learning polynomials \citep{yehudai2019power,damian2022neural}, single-index models \citep{ba2022high,mousavi-hosseini2023neural,moniri2024a,cui2024asymptotics}, multi-index models \citep{dandi2023two}, sparse boolean functions \citep{abbe2021staircase,abbe2022merged,abbe2023sgd}, sparse parity functions \citep{daniely2020learning,shi2022theoretical,barak2022hidden} and causal graph \citep{nichani2024transformers}. Also, few works use 3-layer networks as learner model \citep{nichani2024provable,wang2024learning}. These works essentially showed that feature learning happens in the early stage of training. Specifically, they often use 2-stage layer-wise training procedure: first-layer weights/features are only trained with one or few steps of gradient descent/flow and only update the second-layer afterwards. Our results give a complementary view that feature learning can also happen in the \textit{final stage} training that leading student neurons eventually align with ground-truth directions. This cannot be achieved if first-layer weights are fixed after few steps.

\vspace{-5pt}
\paragraph{Learning single/multi-index models with neural networks}
Single/Multi-index models are the functions that only depend on one or few directions of the high dimensional input. Many recent works have studied the problem of using 2-layer networks to learn single-index models \citep{soltanolkotabi2017learning,yehudai2020learning,frei2020agnostic,wu2022learning,bietti2022learning,xu2023over,berthier2023learning,mahankali2024beyond} and multi-index models \citep{damian2022neural,bietti2023learning,suzuki2024feature,glasgow2024sgd}. These works show the advantages of feature learning over fixed random features in various settings.
In this paper, we consider target multi-index function that can be represented by a small 2-layer network, and show a variant of GD with weight decay can learn it and, moreover, recover the ground-truth directions.

\vspace{-5pt}
\paragraph{Local loss landscape}
\citet{safran2021effects} showed that in the overparametrized case with orthogonal teacher neurons, even around the local region of global minima, the landscape neither is convex nor satisfies PL condition.
\citet{chizat2022sparse} considered square loss with $\ell_2$ regularization similar to our setup and showed the local loss landscape is strongly-convex under certain non-degenerate assumptions. However, it is not known when such assumptions actually hold and the proof cannot handle ReLU. 
Later \citet{akiyama2021learnability} gives a result for ReLU, but the non-degeneracy assumption is still required (and also focus on effective $\ell_1$ regularization instead of $\ell_2$ regularization). 
\citet{zhou2021local} studies a similar local convergence setting but restricts second-layer weights to be positive and uses absolute activation. In this paper, we focus on a more natural but technically challenging case that second-layer can be positive and negative and using ReLU activation. We develop new techniques to overcome the above challenges (additional assumption, ReLU, standard second-layer, etc).

\section{Preliminary}\label{sec: prelim}

\paragraph{Notation}
Let $[n]$ be set $\{1,\ldots,n\}$. For vector $\vw$, we use $\norm{\vw}_2$ for its 2-norm  and $\bvw=\vw/\norm{\vw}_2$ as its normalized version. For two vectors $\vw, \vv$ we use  $\angle(\vw,\vv)=\arccos(|\vw^\top \vv|/(\norm{\vw}_2\norm{\vv}_2)]\in [0,\pi/2]$ as the angle between them (up to a sign). For matrix $\mA$ let $\norm{\mA}_F$ be its Frobenius norm. 
We use standard $O,\Omega,\Theta$ to hide constants and $\tldO,\tldOmega,\tldTheta$ to hide polylog factors. We use $\Orlvt,\Omegarlvt,\Thetarlvt$ to hide problem dependent parameters that only depend on the target network (see paragraph above \eqref{eq: student model}).

\paragraph{Teacher-student setup}
We will consider the teacher-student setup for two-layer neural networks with Gaussian input $\vx\sim N(\vzero,\mI_d)$. The goal is to learn the teacher network of size $m_*$
\begin{align*}
    f_*(\vx)=\sum_{i=1}^{m^*} a_i^*\sigma(\vw_i^{*\top}\vx)
    +\vw_0^{*\top}\vx + b_0^*,
\end{align*}
where $\sigma(x):=\max\{0,x\}$ is ReLU activation, $S_*:=\spn\{\vw_1^*,\ldots,\vw_{m^*}^*\}$ is the target subspace. Without loss of generality, we will assume $\norm{\vw_i^*}_2=1$ due to the homogeneity of ReLU. 

Following the recent line of works in learning single/multi-index models \citep{ba2022high,damian2022neural}, we assume the target network has low dimensional structure.
\begin{assumption}\label{assum: low dim structure}
    Teacher neurons form a low dimensional subspace in $\R^d$, that is 
    \[\dim(S_*)=\dim(\spn\{\vw_1^*,\ldots,\vw_{m^*}^*\})=r\ll d.\]
\end{assumption}

We will also assume the teacher neurons are non-degenerate in the following sense: 
\begin{assumption}\label{assum: delta seperation}
    Teacher neurons are $\Delta$-separated, that is angle $\angle(\vw_i^*,\vw_j^*)\ge\Delta$ for all $i\neq j$. 
\end{assumption}

\begin{assumption}\label{assum: non-degenerate}
    $\mH:=\sum_{i=1}^{m^*}a_i^*\vw_i^*\vw_i^{*\top}$ is non-degenerate in target subspace $S_*$, i.e., $\text{rank}(H)=r$. Denote $\kappa:=|\lambda_r(\mH)|$.
\end{assumption}

Assumption~\ref{assum: delta seperation} simply requires all teacher neurons pointing to different directions, which is crucial for identifiability \citep{zhou2021local}.  

Assumption~\ref{assum: non-degenerate} says the target network contains low-order (second-order) information, which is related with the notion of information exponent \citep{arous2021online}. In our setting, the information exponent is at most 2 due to Assumption~\ref{assum: non-degenerate}. Indeed, one can show $\E_\vx[f_*(\vx)h_2(\vv^\top\vx)]=\hsigma_2\vv^\top\mH\vv$, where $h_2(x)$ is the 2nd-order normalized Hermite polynomial and $\hsigma_2$ is the 2nd Hermite coefficient of ReLU. See Appendix~\ref{appendix: hermite} for more details. Many previous works also rely on same or similar assumption to show neural networks can learn features to perform better than kernels \citep{damian2022neural,abbe2022merged,ba2022high}.

In this paper, we are interested in the case where the complexity of target network is small. Therefore, we will use $\Orlvt,\Omegarlvt,\Thetarlvt$ to hide $\poly(r,m_*,\Delta,|a_1|,\ldots,|a_{m_*}|,\kappa)$, which is the polynomial dependency on relevant parameters of target $f_*$ (does not depend on student network).

We will use the following overparametrized student network:
\begin{align}\label{eq: student model}
    f(\vx;\vtheta)=\sum_{i=1}^m a_i\sigma(\vw_i^\top\vx) + \alpha+\vbeta^\top\vx,
\end{align}
where $\va=(a_1,\ldots,a_m)^\top\in\R^{m}$, $\mW=(\vw_1 \cdots \vw_m)^\top\in\R^{m\times d}$ and $\vtheta=(\va,\mW,\alpha,\vbeta)$.

\paragraph{Loss and algorithm}
Consider the square loss function with $\ell_2$ regularization under Gaussian input
\begin{align}\label{eq: loss}
    L_\lambda(\vtheta)
    = \E_{\vx\sim N(0,\mI_d)}[(f(\vx;\vtheta) -\tldy)^2]
    + \frac{\lambda}{2} \norm{\va}_2^2 
    + \frac{\lambda}{2} \norm{\mW}_2^2.
\end{align}
We will use $L$ to denote the square loss for simplicity. The $\ell_2$ regularization is the same as the commonly used weight decay in practice. Our goal is to find the minima of unregularized problem ($\lambda=0$) to recover teacher network $f_*$. However, directly analyzing the unregularized problem is challenging so instead we choose to analyze the regularized problem and will gradually let $\lambda\to 0$.

In above, we use preprocessed data $(x,\tldy)$ in the loss function as in \cite{damian2022neural}. Specifically, given any $(\vx,y)$ with $y=f_*(\vx)$, denote $\alpha_*=\E_\vx[y]$ and $\vbeta_*=\E_\vx[y\vx]$, we get
\begin{align}\label{eq: preprocess}
    \tldf_*(\vx)=\tldy = y -  \alpha_* - \vbeta_*^\top\vx.
\end{align}
This preprocessing process essentially removes the 0-th and 1-st order term in the Hermite expansion of $\sigma$. See Appendix~\ref{appendix: hermite} for a brief introduction of Hermite polynomials and Claim~\ref{lem: loss tensor decomp}.

Our algorithm is shown in Algorithm~\ref{alg:alg-1}. It is roughly the standard GD following a given schedule of weight decay $\lambda_t$ that goes to 0.
Due to the difficulty in analyzing gradient descent training beyond early and final stage, we choose to only train the norms in Stage 2 as a tractable way to reach the local convergence regime.

We will use symmetric initialization that $a_i = -a_{i+m/2}$, $\vw_i=\vw_{i+m/2}$ with $a_i\sim \mathrm{Unif}\{\pm \sqrt{d}\}$, $\vw_i\sim \mathrm{Unif}((1/\sqrt{m})\sS^{d-1})$, $\alpha=0$, $\vbeta=\vzero$. Our analysis is not sensitive to the initialization scale we choose here. The choice is just for the simplicity of the proof.

\begin{algorithm}[ht]
    %\setstretch{1.17}
    \SetAlgoNoLine
    \caption{Learning 2-layer neural networks}\label{alg:alg-1}
    \KwIn{initialization $\vtheta^\supt{0}$, weight decay $\lambda_t$ and stepsize $\eta_t$}
    
    {
    Data preprocess: get $(\vx,\tldy)$ according to \eqref{eq: preprocess}
    
    Stage 1: one step gradient update\\
        \quad\quad $\vtheta^\supt{1}
        \leftarrow \vtheta^\supt{0} - \eta_0 \nabla_{\vtheta} L_{\lambda_0}(\vtheta^\supt{0})$\\
        
    Stage 2: norm adjustment by convex program\\
        \quad\quad $\va^\supt{T_2},\alpha^\supt{T_2},\vbeta^\supt{T_2} \leftarrow 
        \min_{\va,\alpha,\vbeta}L(\va,\mW^\supt{1}, \alpha,\vbeta)+\lambda\sum_i\norm{\vw_i}_2|a_i|$ 
        
        \quad\quad Balancing norm between two layers s.t. $|a_i|=\norm{\vw_i}_2$ for all $i$
        
    Stage 3: local convergence
    
        \Indp{\For(\tcp*[f]{\texttt{for each epoch, run GD until convergence} }){$k\le K$}{
                \For{$T_{3,k-1}\le t \le T_{3,k}$}{
                $\vtheta^\supt{t+1}\leftarrow \vtheta^\supt{t} - \eta \nabla_\vtheta L_{\lambda_{3,k}}(\vtheta^\supt{t})$
            }
            }
        }
    }
    
    \KwOut{$\vtheta^\supt{T_{3,K}}=(\va^\supt{T_{3,K}},\mW^\supt{T_{3,K}},\alpha^\supt{T_{3,K}},\vbeta^\supt{T_{3,K}})$}
\end{algorithm}

\section{Main results}\label{sec: main result}
In this section, we give our main result that shows training student network using Algorithm~\ref{alg:alg-1} can recover the target network within polynomial time. We will focus on the case that $d \ge \Omegarlvt(1)$ when the complexity of target function is small.

\begin{restatable}[Main result]{theorem}{thmmain}\label{thm: main}
    Under Assumption~\ref{assum: low dim structure}, \ref{assum: delta seperation}, \ref{assum: non-degenerate}, consider Algorithm~\ref{alg:alg-1} on loss \eqref{eq: loss}. There exists a schedule of weight decay $\lambda_t$ and step size $\eta_t$ such that given $m\ge m_0=\tldOrlvt(1)\cdot (1/\eps_0)^{O(r)}$ neurons with small enough $\eps_0=\Thetarlvt(1)$, with high probability we will recover the target network $L(\vtheta)\le \eps$ within time $T=\Orlvt(1/\eta\eps^2)$ where $\eta=\poly(\eps,1/d,1/m)$.

    Moreover, when $\varepsilon\to 0$ every student neuron $\vw_i$ either aligns with one of teacher neuron $\vw_j^*$ as $\angle(\vw_i,\vw_j^*)=0$ or vanishes as $|a_i|=\norm{\vw_i}=0$.
\end{restatable}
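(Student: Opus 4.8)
The plan is to split the argument into three matching pieces, one for each stage of Algorithm~\ref{alg:alg-1}, and then run a local convergence argument in Stage~3 that is driven by a strong-convexity-type inequality for the regularized loss near configurations where the student neurons align with the teacher neurons. First, I would analyze Stage~1: because of the symmetric initialization $a_i=-a_{i+m/2}$, $\vw_i=\vw_{i+m/2}$, the initial network is exactly $0$ and one step of gradient descent on $L_{\lambda_0}$ moves each first-layer neuron in the direction of the second-order correlation tensor of $\tldf_*$ (after preprocessing removes the $0$th and $1$st Hermite components). Using Assumption~\ref{assum: non-degenerate}, this first step must push a polynomial fraction of neurons to have a nontrivial component inside the target subspace $S_*$, and with $m\ge m_0$ neurons a standard covering/anti-concentration argument over $\mathrm{Unif}((1/\sqrt m)\sS^{d-1})$ shows that for every teacher direction $\vw_j^*$ there is at least one student neuron whose normalized version is $\eps_0$-close to it. This is where the $(1/\eps_0)^{O(r)}$ blow-up in $m_0$ comes from: we need an $\eps_0$-net of the $r$-dimensional sphere inside $S_*$ to be hit.

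Second, I would handle Stage~2 (the convex program): with $\mW^\supt{1}$ fixed, minimizing $L(\va,\mW^\supt{1},\alpha,\vbeta)+\lambda\sum_i\norm{\vw_i}_2|a_i|$ over $(\va,\alpha,\vbeta)$ is a convex problem, and I would argue via a dual certificate that its optimum (i) makes $L$ small — at most $\poly(\eps_0)$ plus the regularization price — and (ii) sets $a_i$ to essentially zero on all neurons \emph{except} those close to some $\vw_j^*$, because only those neurons are "useful" enough to pay for their regularization cost. The norm-balancing step then rewrites the state with $|a_i|=\norm{\vw_i}_2$, which converts the $\ell_1$-type penalty $\lambda\sum_i\norm{\vw_i}_2|a_i|$ into the $\ell_2$ weight decay $\frac{\lambda}{2}(\norm{\va}_2^2+\norm{\mW}_2^2)$ appearing in $L_\lambda$, so Stage~3 starts from a point with small loss, small norm on spurious neurons, and one "heavy" neuron per teacher direction that is $\eps_0$-aligned.

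Third, for Stage~3 I would set up a potential/Lyapunov argument over the epochs $k=1,\dots,K$ with decreasing weight decay $\lambda_{3,k}\to 0$. The core analytic input is a local landscape result in the spirit of \citet{zhou2021local}, \citet{chizat2022sparse}: in a neighborhood (in the appropriate $\va$-weighted angle metric) of a configuration where each nonzero student neuron aligns with some teacher neuron and the group of neurons near $\vw_j^*$ carries total second-layer mass $a_j^*$, the regularized loss $L_{\lambda_{3,k}}$ satisfies a gradient-dominance / local-PL inequality with constant depending on $\kappa$, $\Delta$, and the $|a_i^*|$. This lets GD within epoch $k$ converge geometrically to the (unique) local minimizer of $L_{\lambda_{3,k}}$, whose loss is $O(\lambda_{3,k}\cdot\Orlvt(1))$ and whose neurons are $O(\sqrt{\lambda_{3,k}})$-close to perfect alignment; then decreasing $\lambda$ to $\lambda_{3,k+1}$ only perturbs the minimizer by $O(|\lambda_{3,k}-\lambda_{3,k+1}|)$, keeping us inside the basin. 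Iterating with $\lambda_{3,K}\to 0$ drives $L(\vtheta)\le\eps$ in $\Orlvt(\log(1/\eps)/\eta)$ steps; taking $\eps\to 0$ forces every surviving neuron to exactly align ($\angle(\vw_i,\vw_j^*)=0$) and every other neuron to satisfy $|a_i|=\norm{\vw_i}=0$ by complementary slackness of the $\ell_1$ penalty.

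The main obstacle is the Stage~3 local landscape inequality for ReLU with signed second-layer weights: unlike the absolute-value activation with positive weights in \citet{zhou2021local}, ReLU's Hermite expansion has nonvanishing even \emph{and} odd components and the sign of $a_i$ interacts with the angle dynamics, so the Hessian of the unregularized $L$ has flat directions (as \citet{safran2021effects} observed) that must be controlled entirely by the weight-decay term. The delicate part is constructing the dual certificate showing that, among all measures on the sphere fitting the target with bounded regularized cost, the teacher configuration is the unique minimizer and is "strongly" optimal — this requires a careful sign/subgradient analysis of $\lambda\sum_i\norm{\vw_i}_2|a_i|$ combined with the $\Delta$-separation and non-degeneracy assumptions, and it is the technically new ingredient over prior work.
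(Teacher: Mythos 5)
Your Stage~1 analysis matches the paper's: one gradient step plus the preprocessing drops the $0$th/$1$st Hermite components, the symmetric initialization zeroes out the starting network, and Lemma~4 of \citet{damian2022neural} shows $\vw_i^{(1)}\approx -2\eta_0 a_i^{(0)}\hsigma_2^2 \mH\bvw_i$, so that a covering argument in $S_*$ gives the $(1/\eps_0)^{O(r)}$ width bound. Your description of Stage~2 is broadly compatible with the paper's, though the paper does less here: it runs subgradient descent on the convex objective and argues only that the optimality gap drops to $\Orlvt(\eps_0)$ by comparing against a hand-constructed reparametrization $\hat\va_*$; it does \emph{not} try to establish the concentration of $a_i$ around teacher directions in Stage~2 (that structural claim is deferred to the Stage~3 analysis, where it is derived from the weight decay and the dual certificate).

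The genuine gap is in Stage~3. You propose a ``gradient-dominance / local-PL inequality'' so that GD within each epoch ``converges geometrically to the (unique) local minimizer.'' This is precisely what the paper says you cannot have: the authors cite \citet{safran2021effects} that in the overparametrized case, even in a neighborhood of the global minima, the loss is neither convex nor PL. The flat directions you correctly identify as ``the main obstacle'' are exactly why a PL-type inequality (exponent $1$ in $\zeta$) is unavailable. What the paper proves instead (Lemma~\ref{lem: grad lower bound}) is a much weaker Łojasiewicz-type bound $\norm{\nabla_\vtheta L_\lambda}_F^2\ge\Omegarlvt(\zeta^4/\lambda^2)$, with exponent $4$, which holds only in the band $\Omegarlvt(\lambda^2)\le\zeta\le\Orlvt(\lambda^{9/5})$. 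This does \emph{not} yield geometric convergence inside an epoch; it yields a sublinear recursion $\zeta_{t+1}\le\zeta_t-\Omegarlvt(\eta\zeta_t^4/\lambda^2)$, so one epoch takes $\Orlvt(\lambda^{-4}\eta^{-1})$ steps to shrink $\zeta$ from $\lambda^{9/5}$ to $\lambda^2$. The $\log(1/\eps)$ factor in the total time comes from the $K=\Orlvt(\log(1/\eps))$ epochs needed to halve $\lambda$ down to $\eps^{1/2}$, not from linear convergence within an epoch. Your ``unique local minimizer'' claim is also not established (nor needed) by the paper; what is shown is that whenever $\zeta$ is in the admissible band, some descent direction exists, constructed case-by-case (norm imbalance, norm cancellation between signed neurons, and the feature-improvement direction of Lemma~\ref{lem: descent dir informal}). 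Moreover, your epoch-transition claim that the minimizer is perturbed by $O(|\lambda_{3,k}-\lambda_{3,k+1}|)$ is too crude to be useful here; the paper instead compares $L_{\lambda_{k+1}}$ and $L_{\lambda_k}$ directly and shows the new optimality gap stays $\Orlvt(\lambda_{k+1}^{9/5})$, which is the quantity the gradient lower bound actually requires. The alignment conclusion as $\eps\to 0$ does follow as you suspect, but the paper derives it from the dual-certificate bound (Lemma~\ref{lem: weighted norm bound}) and the test-function bound (Lemma~\ref{lem: closeby neuron}), not from complementary slackness of the Stage~2 Lasso problem.
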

Note that our results can be extended to only have access to polynomial number of samples by using standard concentration tools. We omit the sample complexity for simplicity. See more discussion in Appendix~\ref{appendix: sample complexity}.
We emphasize that the required width $m_0$ only depends on the complexity of target function $f_*$ (only quantities that are related to $f_*$, not student network $f$ or error $\varepsilon$), so any mildly overparametrized networks can learn $f_*$ efficiently to arbitrary small error.

The analysis consists of three stages: early-stage feature learning (Stage 1 and 2) and final-stage feature learning/local convergence (Stage 3). It will be clear in the later section that $\varepsilon_0$ is in fact the threshold to enter the local convergence regime. See Section~\ref{sec: proof sketch} for more details.

Our result improves the previous works that only train the first layer weight with small number of gradient steps at the beginning \citep{damian2022neural,ba2022high,abbe2021staircase,abbe2022merged,abbe2023sgd}. In these works, neural networks only learn the target subspace and do random features within it (see Section~\ref{subsec: pf sketch stage 1} for more details). Intuitively, these random features need to span the whole space of the target function class to perform well, which means its number (the width) should be on the order of the dimension of target function class. 
For 2-layer networks, random features in the target subspace need $(1/\eps)^{O(r)}$ neurons to achieve desired accuracy $\eps$. In contrast, continue training both layer at the last phase of training allows us to learn not only subspace but also exactly the ground-truth directions. Moreover, we only use $(1/\eps_0)^{O(r)}$ neurons that only depends on the complexity of target network. This highlights the benefit of continue training first layer weights instead of fixing them after first step.

\section{Proof overview}\label{sec: proof sketch}
In this section, we give the proof overview of these three stages separately. 

Denote the optimality gap $\zeta$ at time $t$ as the difference between current loss and the best loss one could achieve with networks of any size (including infinite-width networks)
\begin{align}\label{eq: opt gap}
    \zeta_t
    =&L_{\lambda_t}(\vtheta^\supt{t}) - \min_{\mu\in\gM(\sS^{d-1})}L_{\lambda_t}(\mu), 
\end{align}
where $\gM(\sS^{d-1})$ is the set of measures on the sphere $\sS^{d-1}$. As an example, if $\mu=\sum_{i}a_i\norm{\vw_i}\delta_{\bvw_i}$, then $L_\lambda(\mu)$ recovers $L_\lambda(\vtheta)$ when linear term $\alpha,\vbeta$ are perfectly fitted and norms are balanced $|a_i|=\norm{\vw_i}$. 
We defer the precise definition of $L_\lambda(\mu)$ to \eqref{eq: mu loss appendix} in appendix.

\subsection{Stage 1}\label{subsec: pf sketch stage 1}
For Stage 1, we show in the lemma below that the first step of gradient descent identifies the target subspace and ensures there always exists student neuron that is close to every teacher neuron. 

\begin{restatable}[Stage 1]{lemma}{lemstageone}\label{lem: stage 1}
    Under Assumption~\ref{assum: low dim structure},\ref{assum: delta seperation},\ref{assum: non-degenerate}, consider Algorithm~\ref{alg:alg-1} with $\lambda_0=\eta_0 = 1$ and $m\ge m_0=\tldOrlvt(1)\cdot (1/\eps_0)^{O(r)}$ with any $\eps_0=\Thetarlvt(1)$. After first step, with probability $1-\delta$ we have 
    \begin{enumerate}[label = (\roman*), leftmargin=2em]
        \item for every teacher neuron $\vw_i^*$, there exists at least one student neuron $\vw_j$ s.t. $\angle(\vw_i^*,\vw_j)\le \eps_0$.
        \item 
        $\norm{\vw_i^\supt{1}}_2=\Thetarlvt(1)$,
        $|a_i^\supt{1}|\le \Orlvt(1/\sqrt{m})$
        for all $i\in[m_*]$, $\alpha_1=0$ and $\vbeta_1=\vzero$.
    \end{enumerate}
\end{restatable}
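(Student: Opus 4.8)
The plan is the standard one-step feature-learning computation; the only delicate point (Step 2) is that the useful signal is a second-order interaction suppressed by the factor $\rho_i:=\|\Pi_{S_*}\bvw_i^\supt{0}\|=\Thetarlvt(\sqrt{r/d})$ for a typical neuron, so that the scale $|a_i^\supt{0}|=\sqrt d$ is exactly what amplifies it back to $\Thetarlvt(1)$.

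\emph{Step 1: the first step in closed form.} Because the initialization is symmetric ($a_{i+m/2}^\supt0=-a_i^\supt0$, $\vw_{i+m/2}^\supt0=\vw_i^\supt0$, $\alpha_0=0$, $\vbeta_0=\vzero$), the student output $f(\cdot;\vtheta^\supt0)$ is identically $0$. The preprocessing identities $\E[\tldy]=0$ and $\E[\tldy\vx]=\vzero$ then kill the $\alpha$- and $\vbeta$-gradients, so $\alpha_1=0$ and $\vbeta_1=\vzero$. For the remaining parameters, $\nabla_{a_i}L_{\lambda_0}(\vtheta^\supt0)=-2\,\E[\tldy\,\sigma((\vw_i^\supt0)^\top\vx)]+\lambda_0 a_i^\supt0$ and $\nabla_{\vw_i}L_{\lambda_0}(\vtheta^\supt0)=-2a_i^\supt0\,\E[\tldy\,\sigma'((\vw_i^\supt0)^\top\vx)\vx]+\lambda_0\vw_i^\supt0$, so the choice $\eta_0=\lambda_0=1$ makes the weight-decay term cancel the current value exactly, giving $a_i^\supt1=2\,\E[\tldy\,\sigma((\vw_i^\supt0)^\top\vx)]$ and $\vw_i^\supt1=2a_i^\supt0\,\E[\tldy\,\sigma'((\vw_i^\supt0)^\top\vx)\vx]$. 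Since $\|\vw_i^\supt0\|=1/\sqrt m$ and $|\E[\tldf_*(\vx)\sigma(\bvw^\top\vx)]|=\Orlvt(1)$ by Cauchy--Schwarz, the first formula already gives $|a_i^\supt1|\le\Orlvt(1/\sqrt m)$, one part of (ii).

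\emph{Step 2: the correlation.} The heart of the argument is evaluating $\E[\tldy\,\sigma'(\bvw^\top\vx)\vx]$ for a unit $\bvw$. Using $\nabla\tldf_*(\vx)=\sum_j a_j^*\big(\ind[\vw_j^{*\top}\vx>0]-\tfrac12\big)\vw_j^*$ (valid because the preprocessing subtracts $\E[\nabla f_*]$), an integration-by-parts / Stein computation in which the distributional identity $\nabla_\vx\sigma'(\bvw^\top\vx)=\delta(\bvw^\top\vx)\,\bvw$ contributes a Dirac term along $\bvw$, and the orthant identity $\E\big[(\ind[\vw_j^{*\top}\vx>0]-\tfrac12)\,\ind[\bvw^\top\vx>0]\big]=\tfrac1{2\pi}\arcsin(\vw_j^{*\top}\bvw)$, one obtains $\E[\tldy\,\sigma'(\bvw^\top\vx)\vx]=\tfrac1{2\pi}\sum_j a_j^*\arcsin(\vw_j^{*\top}\bvw)\,\vw_j^*+\tfrac1{\sqrt{2\pi}}\,\E[\tldf_*(\vx)\mid\bvw^\top\vx=0]\,\bvw$. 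Since $\mH$ annihilates $S_*^\perp$ and $|\vw_j^{*\top}\bvw|\le\|\Pi_{S_*}\bvw\|$, the expansion $\arcsin(t)=t+O(t^3)$ collapses the first sum to $\tfrac1{2\pi}\mH\bvw+\Orlvt(\|\Pi_{S_*}\bvw\|^3)$, a vector in $S_*$; and since $\tldf_*$ depends on $\vx$ only through $\Pi_{S_*}\vx$, whose covariance under $N(\vzero,\mI-\bvw\bvw^\top)$ differs from its covariance under $N(\vzero,\mI)$ by $\|\Pi_{S_*}\bvw\|^2$ in operator norm (and $\E[\tldf_*]=0$), the conditional-mean factor is $\Orlvt(\|\Pi_{S_*}\bvw\|^2)$. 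Plugging back and using $\mH\bvw=\mH\Pi_{S_*}\bvw$, we get $\vw_i^\supt1=\tfrac{a_i^\supt0}{\pi}\mH\bvw_i^\supt0+a_i^\supt0\cdot\Orlvt(\rho_i^2)$. A $\chi^2$-concentration bound (with a union bound over the neurons used below) gives $\rho_i=\Thetarlvt(\sqrt{r/d})$ for those neurons; combined with $|a_i^\supt0|=\sqrt d$ and $\kappa\rho_i\le\|\mH\bvw_i^\supt0\|\le\|\mH\|\rho_i$, the leading term has norm $\Thetarlvt(1)$ — the rest of (ii) — while the error has norm $\sqrt d\cdot\Orlvt(\rho_i^2)=\tldOrlvt(1/\sqrt d)$. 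In particular $\bvw_i^\supt1$ agrees with $\pm\,\mH\bvw_i^\supt0/\|\mH\bvw_i^\supt0\|$ up to angle $\tldOrlvt(1/\sqrt d)$.

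\emph{Step 3: covering, and conclusion of (i).} The normalized projections $\bvu_i:=\Pi_{S_*}\vw_i^\supt0/\|\Pi_{S_*}\vw_i^\supt0\|$ are, by rotational invariance of the initialization, i.i.d.\ uniform on $\sS^{r-1}$, so with $m\ge m_0=\tldOrlvt(1)\cdot(1/\eps_0)^{O(r)}$ neurons they form an $\eps'$-net of $\sS^{r-1}$ with high probability, where $\eps'=\Thetarlvt(\eps_0)$ (this choice fixes the constant hidden in $m_0$). Because $\mH$ restricted to $S_*$ has least singular value $\kappa$, the map $\bvu\mapsto\mH\bvu/\|\mH\bvu\|$ is an $\Orlvt(1)$-Lipschitz surjection of $\sS^{r-1}$, so $\{\pm\mH\bvu_i/\|\mH\bvu_i\|\}_i$ is an $\Orlvt(\eps')$-net; adding the per-neuron error $\tldOrlvt(1/\sqrt d)$ from Step 2 and using $d=\Omegarlvt(1/\eps_0^2)$ shows $\{\bvw_i^\supt1\}_i$ is an $\eps_0$-net of $\sS^{r-1}$. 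Since every teacher direction $\vw_j^*$ lies in $\sS^{r-1}\cap S_*$, some student neuron satisfies $\angle(\vw_j^*,\vw_i^\supt1)\le\eps_0$, which is (i); and these selected neurons can be taken with typical $\rho_i$, so they obey the norm bound in (ii). The main obstacle is the estimate in Step 2: once the degree-$\le 1$ content has been removed by the preprocessing and by the low Hermite coefficients of $\sigma'$, the entire surviving signal sits at order $\rho\approx\sqrt{r/d}$, and one must check that \emph{every} competing contribution — the $\Orlvt(\rho^3)$ tail of $\arcsin$ (which stays in $S_*$) and, more dangerously, the Dirac term $\tfrac1{\sqrt{2\pi}}\E[\tldf_*\mid\bvw^\top\vx=0]\,\bvw$, which points mostly into $S_*^\perp$ and would ruin the conclusion that $\vw_i^\supt1$ is aligned with $\mH\bvw_i^\supt0$ if it were $\Theta(\rho)$ rather than $\Orlvt(\rho^2)$ — is of strictly smaller order, so that after the $\sqrt d$ amplification the direction of $\vw_i^\supt1$ is pinned to $\mH\bvw_i^\supt0/\|\mH\bvw_i^\supt0\|$ with only $\tldOrlvt(1/\sqrt d)$ slack. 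The closed-form first step, the orthant identity, and the $\eps$-net counting are routine.
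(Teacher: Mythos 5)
Your proposal is correct and reaches the same conclusion by the same overall skeleton as the paper: one gradient step with $\eta_0\lambda_0=1$ wipes out the initialization, the surviving signal is $\propto a_i^{(0)}\mH\bvw_i^{(0)}$ plus a lower-order error, and a covering/union-bound argument over the uniformly distributed projections onto $S_*$ gives part (i). The difference is where the central estimate comes from: the paper simply imports it as Lemma~4 of \citet{damian2022neural} ($\vw_i^{(1)}=-2\eta_0 a_i^{(0)}(\hsigma_2^2\mH\bvw_i\pm\tldO(\sqrt{r}/d))$, proved there via Hermite expansion), whereas you re-derive it from scratch with Stein's identity, the Gaussian orthant formula $\E[(\ind[\vw_j^{*\top}\vx>0]-\tfrac12)\ind[\bvw^\top\vx>0]]=\tfrac{1}{2\pi}\arcsin(\vw_j^{*\top}\bvw)$, and an explicit bound on the boundary (Dirac) term $\tfrac{1}{\sqrt{2\pi}}\E[\tldf_*\mid\bvw^\top\vx=0]\,\bvw$. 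Your route is longer but self-contained and makes transparent exactly why the only contribution pointing out of $S_*$ is $O(\|\Pi_{S_*}\bvw\|^2)$ rather than $O(\|\Pi_{S_*}\bvw\|)$ --- the point that actually makes the lemma true --- while the paper's route is two lines plus a citation. Two cosmetic remarks: your leading constant ($a_i^{(0)}\mH\bvw/\pi$) differs from the paper's ($\hsigma_2^2=1/(4\pi)$ with an overall factor $-2$) by a sign and a factor of $2$, which is immaterial since angles are defined up to sign and only the order matters; and your part (i) proves the stronger statement that $\{\bvw_i^{(1)}\}$ is a full $\eps_0$-net of $\sS^{r-1}\cap S_*$, where the paper only union-bounds over the $m_*$ teacher directions --- both give $m_0=(1/\eps_0)^{O(r)}$.
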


The key observation here is similar to \cite{damian2022neural} that 
$\vw_i^\supt{1} \approx -2\eta_0 a_i^\supt{0}\left(\hsigma_2^2 \mH\bvw_i\right)$ so that given $\mH$ is non-degenerate in target subspace $S_*$ we essentially sample $\vw_i^\supt{1}$ from the target subspace. It is then natural to expect that the neurons form an $\eps_0$-net in the target subspace given $m_0$ neurons.

\subsection{Stage 2}
Given the learned features (first-layer weights) in Stage~1, we now perform least squares to adjust the norms and reach a low loss solution in Stage~2. 

\begin{restatable}[Stage 2]{lemma}{lemstagetwo}\label{lem: stage 2}
        Under Assumption~\ref{assum: low dim structure},\ref{assum: delta seperation},\ref{assum: non-degenerate}, consider Algorithm~\ref{alg:alg-1} with $\lambda_t=\sqrt{\eps_0}$. 
        Given Stage~1 in Lemma~\ref{lem: stage 1}, we have Stage 2 ends within time $T_2 = \tldOrlvt(1/\eta\eps_0)$ such that optimality gap $\zeta_{T_2}=\Orlvt(\eps_0)$.
\end{restatable}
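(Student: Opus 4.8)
The plan is to treat Stage~2 as a convex program and prove two things: (a) it can be driven to additive error $\Orlvt(\eps_0)$ in $\tldOrlvt(1/(\eta\eps_0))$ steps, and (b) its optimal value is within $\Orlvt(\eps_0)$ of the infinite-width benchmark $\min_\mu L_\lambda(\mu)$. The final norm-balancing step rescales each pair $(a_i,\vw_i)$ to $|a_i|=\norm{\vw_i}_2$ without changing the represented function and turns the penalty $\lambda\sum_i\norm{\vw_i^\supt{1}}_2|a_i|$ into $\tfrac{\lambda}{2}\norm{\va}_2^2+\tfrac{\lambda}{2}\norm{\mW}_F^2$, so $L_{\lambda}(\vtheta^\supt{T_2})$ equals the near-optimal value of the convex program; combining (a) and (b) gives $\zeta_{T_2}=L_\lambda(\vtheta^\supt{T_2})-\min_\mu L_\lambda(\mu)=\Orlvt(\eps_0)$.

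For (b), the upper bound uses an explicit feasible point $\tilde\vtheta$: for each teacher neuron $\vw_i^*$ pick a student direction $\bvw_{j(i)}$ with $\angle(\bvw_{j(i)},\vw_i^*)\le\eps_0$ from Lemma~\ref{lem: stage 1}(i), set $a_{j(i)}=a_i^*/\norm{\vw_{j(i)}^\supt{1}}_2$ so its contribution is $a_i^*\sigma(\bvw_{j(i)}^\top\vx)$ (absorbing a linear correction into $\alpha,\vbeta$ when $\bvw_{j(i)}$ is near $-\vw_i^*$, which is free since $\alpha,\vbeta$ are unregularized), zero out the other neurons, and fit $\alpha,\vbeta$ to the linear part of $f_*$. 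Since $\sigma$ is $1$-Lipschitz and $|a_i^*|=\Orlvt(1)$, this point has square loss $\Orlvt(\eps_0^2)$ and penalty exactly $\lambda\sum_i|a_i^*|$, so the convex optimum is at most $\lambda\sum_i|a_i^*|+\Orlvt(\eps_0^2)=\Orlvt(\lambda)=\Orlvt(\sqrt{\eps_0})$; using $\norm{\vw_i^\supt{1}}_2=\Thetarlvt(1)$ this also forces the optimal $\va$ to satisfy $\norm{\va}_1=\Orlvt(1)$ and the optimal $\alpha,\vbeta$ to be $\Orlvt(1)$, so the minimizer lies in a ball of radius $\Orlvt(1)$. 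The complementary lower bound $\min_\mu L_\lambda(\mu)\ge\lambda\sum_i|a_i^*|-\Orlvt(\eps_0)$ is the crux: I would establish a \emph{robust optimality} statement for the teacher, namely that any signed measure $\mu$ (plus linear terms) with total variation $\norm{\mu}_{\mathrm{TV}}=\sum_i|a_i^*|-s$ incurs square loss at least $\Omegarlvt(1)\cdot s^2$. Granting this, $L_\lambda(\mu)\ge\Omegarlvt(1)\,s^2+\lambda(\sum_i|a_i^*|-s)\ge\lambda\sum_i|a_i^*|-\max_{s\ge0}\big(\lambda s-\Omegarlvt(1)s^2\big)=\lambda\sum_i|a_i^*|-\Orlvt(\lambda^2)=\lambda\sum_i|a_i^*|-\Orlvt(\eps_0)$ (the case $\norm{\mu}_{\mathrm{TV}}\ge\sum_i|a_i^*|$ being trivial), and subtracting from the upper bound gives the claim. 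Note the benchmark value is itself $\Thetarlvt(\sqrt{\eps_0})$, so the $\Orlvt(\eps_0)$ gap is genuinely sharper than naively comparing $L_\lambda(\tilde\vtheta)$ with $L_\lambda(\mu^*)$, which is why the quadratic stability is needed.

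For (a), the Stage~2 objective is a convex quadratic least-squares term in $(\va,\alpha,\vbeta)$ (with $\mW^\supt{1}$ frozen) plus the convex penalty $\lambda\sum_i\norm{\vw_i^\supt{1}}_2|a_i|$; its smooth part has curvature $\Orlvt(\poly(m,d))$ (absorbed into $\eta$) but is \emph{not} strongly convex, since the nearly $r$-dimensional features $\{\sigma(\vw_i^{\supt{1}\top}\vx)\}$ are highly correlated. Running (proximal/sub)gradient descent with step $\eta$ from $\vtheta^\supt{1}$, which by Lemma~\ref{lem: stage 1}(ii) (applied to every coordinate) lies within distance $\Orlvt(1)$ of the minimizer, the standard $O\!\big(\norm{\vtheta^\supt{1}-\vtheta^*}_2^2/(\eta t)\big)$ rate reaches additive error $\Orlvt(\eps_0)$ after $t=\tldOrlvt(1/(\eta\eps_0))$ steps; the closed-form norm-balancing then finishes Stage~2 without changing the loss.

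The main obstacle is the robust-optimality inequality. It is essentially a super-resolution statement — that the teacher's atomic measure is the (essentially unique) minimal-total-variation representation of the preprocessed target $\tldf_*$ and that this minimum is \emph{quadratically} stable. I would prove it with a dual certificate: a function $g$ of $\mP_{S_*}\vx$ with $|\E_\vx[\sigma(\vw^\top\vx)\,g(\vx)]|\le\norm{\vw}_2$ for all $\vw$, equality only at $\pm\vw_i^*$, and strictly positive transverse curvature of $\vw\mapsto\E_\vx[\sigma(\vw^\top\vx)\,g(\vx)]$ at those atoms. $\Delta$-separation (Assumption~\ref{assum: delta seperation}) keeps the teacher atoms incoherent enough for such a $g$ to exist, and $\mH$-nondegeneracy (Assumption~\ref{assum: non-degenerate}) seeds its second-order part; one must additionally check that the certificate only improves for directions tilted out of $S_*$ (there $\E[\sigma(\vw^\top\vx)\,g]$ becomes a Gaussian-smoothed ReLU correlation, which shrinks in magnitude), so that placing mass off $S_*$ cannot cheat the bound. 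This transverse curvature is exactly what converts a total-variation deficit $s$ into square loss $\Omegarlvt(1)\,s^2$, and it is the kind of dual-certificate argument the paper flags as the new technical ingredient.
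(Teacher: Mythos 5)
Correct, and essentially the same route as the paper: your split into (a) an $O(R^2/(\eta t))$ convergence bound for the convex Stage-2 program and (b) an $\Orlvt(\eps_0)$ bound on the gap between the Stage-2 optimum and $\min_\mu L_\lambda(\mu)$ — obtained by evaluating an explicit rescaled feasible point for the upper side and a dual-certificate quadratic-stability ("robust optimality") argument for the lower side — is precisely what the paper does, with Lemma~\ref{lem: mu lamb prop} packaging the stability statement you re-derive. One small correction to your aside on constructing the certificate: the paper's non-degenerate dual certificate (Lemma~\ref{lem: dual cert}) is built from the high-order Hermite kernel $K_{\ge\ell}$ and relies on $\Delta$-separation (Assumption~\ref{assum: delta seperation}), not on $\mH$-nondegeneracy (Assumption~\ref{assum: non-degenerate}), which enters only in Stage~1.
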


It remains an open problem to prove the convergence when training both layers simultaneously beyond early and final stage. To overcome this technical challenge, we choose to use a simple least square for Stage~2. We use the simple (sub)gradient descent to optimize this loss. There exist many other algorithms that can solve this Lasso-type problem, but we omit it for simplicity as this is not the main focus of this paper.

Note that the regularization in Algorithm~\ref{alg:alg-1} is the same as standard weight decay when we train both layers. This regularization leads to several desired properties at the end of Stage 2: (1) prevent norm cancellation between neurons: neurons with similar direction but different sign of second layer weights cancel with each other; (2) neurons mostly concentrate around ground-truth directions. As we will see later, these nice properties continue to hold in Stage~3, thanks to the regularization.

\subsection{Stage 3}
After Stage 2 we are in the local convergence regime. The following lemma shows that we could recover the target network within polynomial time using a multi-epoch gradient descent with decreasing weight decay $\lambda$ at every epoch. Note that this result only requires the initial optimality gap is small and width $m\ge m_*$ (target network width, not $m_0$).
\begin{restatable}[Stage 3]{lemma}{lemstagethree}\label{lem: stage 3}
    Under Assumption~\ref{assum: low dim structure},\ref{assum: delta seperation},\ref{assum: non-degenerate}, consider Algorithm~\ref{alg:alg-1} on loss \eqref{eq: loss}. Given Stage 2 in Lemma~\ref{lem: stage 2}, if the initial optimality gap $\zeta_{3,0}\le \Orlvt(\lambda_{3,0}^{9/5})$, weight decay $\lambda$ follows the schedule of initial value $\lambda_{3,0}=\Orlvt(1)$, 
    and $k$-th epoch $\lambda_{3,k}=\lambda_{3,k-1}/2$ and stepsize $\eta_{3k}=\eta\le\Orlvt(\lambda_{3,k}^{12}d^{-3})$ for all $T_{3,k}\le t\le T_{3,k+1}$ in epoch $k$, then within $K=\Orlvt(\log(1/\eps))$ epochs and total $T_3-T_2=\Orlvt(\lambda_{3,0}^{-4}\eta^{-1}\varepsilon^{-2})$ time we recover the ground-truth network $L(\vtheta)\le\eps$.
\end{restatable}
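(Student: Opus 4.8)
\textbf{Proof plan for Lemma~\ref{lem: stage 3}.}
The plan is to run the induction on the epoch index $k=0,1,\ldots,K$, carrying two coupled invariants: (i) the quantitative gap bound $\zeta_{3,k}\le\Orlvt(\lambda_{3,k}^{9/5})$ (true at $k=0$ by hypothesis, to be propagated), and (ii) a structural invariant that the iterate $\vtheta^\supt{t}$ stays in a ``good region'' $\gG_{\lambda_{3,k}}$, where the leading student neurons lie within a small angle of teacher directions, no two near-aligned neurons carry cancelling second-layer signs, and all remaining neurons have small norm. These feed each other: (ii) is what makes the local landscape of $L_{\lambda_{3,k}}$ benign enough for intra-epoch gradient descent to drive $\zeta$ down, and a small $\zeta$ is what keeps the iterate inside $\gG_{\lambda_{3,k}}$.

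The crux is a \emph{dual-certificate / local landscape lemma}: on $\gG_\lambda$, the regularized loss $L_\lambda$ obeys a restricted Polyak--{\L}ojasiewicz inequality $\norm{\nabla L_\lambda(\vtheta)}^2\ge\Omegarlvt(\poly(\lambda))\cdot\zeta_\lambda(\vtheta)$, and any $\vtheta\in\gG_\lambda$ with $\zeta_\lambda(\vtheta)=\zeta$ has square loss $\norm{f_\vtheta-\tldf_*}^2\le\Orlvt(\zeta+\lambda)$ with the ``effective mass'' accumulated near each teacher direction $\vw_j^*$ equal to $a_j^*$ up to error $\Orlvt(\poly(\zeta,\lambda))$. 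I would prove it by exhibiting a test function on $\sS^{d-1}$ (the negative correlation of the current residual with a unit neuron) that is pinned at the critical value on the teacher directions and strictly below it elsewhere; the new ingredients relative to \citet{zhou2021local} are the ReLU Hermite structure (the preprocessing \eqref{eq: preprocess} kills the $0$th/$1$st order terms while Assumption~\ref{assum: non-degenerate} makes the $2$nd order part non-degenerate) and the signed second layer, where the weight decay $\tfrac{\lambda}{2}(\norm{\va}_2^2+\norm{\mW}_2^2)$ — which after the balancing step of Algorithm~\ref{alg:alg-1} acts as the $\ell_1$-type penalty $\lambda\sum_i\norm{\vw_i}_2|a_i|$ on neuron masses — is exactly what rules out sign cancellation. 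Constructing this certificate and checking that gradient descent cannot leave $\gG_\lambda$ while $\zeta_\lambda$ stays small (neurons not near a teacher direction being pushed toward zero norm by the regularizer) is, I expect, the main obstacle; everything else is bookkeeping.

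Given the landscape lemma, the intra-epoch analysis is routine: with stepsize $\eta\le\Orlvt(\lambda_{3,k}^{12}d^{-3})$ below the inverse smoothness of $L_{\lambda_{3,k}}$ on $\gG_{\lambda_{3,k}}$, the descent lemma plus the PL inequality yield linear convergence $\zeta_{\lambda_{3,k}}(\vtheta^\supt{t+1})\le(1-\Omegarlvt(\eta\,\poly(\lambda_{3,k})))\,\zeta_{\lambda_{3,k}}(\vtheta^\supt{t})$, so after $T_{3,k}-T_{3,k-1}=\Orlvt(\poly(1/\lambda_{3,k})\,\eta^{-1}\log(1/\lambda_{3,k}))$ steps the gap is below any target we need, in particular below $\Orlvt(\lambda_{3,k}^{3})$ (and in the final epoch we simply run until $L(\vtheta)\le\eps$). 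For the annealing jump, write $R(\vtheta)=\tfrac12(\norm{\va}_2^2+\norm{\mW}_2^2)$, so $L_{\lambda/2}(\vtheta)=L_\lambda(\vtheta)-\tfrac{\lambda}{2}R(\vtheta)$; comparing minimizers gives $\zeta_{\lambda/2}(\vtheta)\le\zeta_\lambda(\vtheta)+\tfrac{\lambda}{2}\big(R(\mu^*_{\lambda/2})-R(\vtheta)\big)$, and the structural lemma bounds $|R(\vtheta)-R(\mu^*_\lambda)|\le\Orlvt(\sqrt{\zeta_\lambda(\vtheta)/\lambda})$ while $|R(\mu^*_{\lambda/2})-R(\mu^*_\lambda)|\le\Orlvt(\lambda)$. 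Combining, $\zeta_{3,k+1}\le\Orlvt\big(\zeta_{3,k}^{\mathrm{end}}+\lambda_{3,k}^2+\sqrt{\lambda_{3,k}\zeta_{3,k}^{\mathrm{end}}}\big)\le\Orlvt(\lambda_{3,k}^{9/5})\le\Orlvt(\lambda_{3,k+1}^{9/5})$: the exponent $9/5$ is chosen precisely so that all three terms are absorbed (which is why the hypothesis is phrased with $\lambda_{3,0}^{9/5}$ rather than $\lambda_{3,0}$), closing the induction on both invariants.

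Finally, after $K=\Orlvt(\log(1/\eps))$ epochs we have $\lambda_{3,K}=\Orlvt(\eps)$, and $L(\vtheta^\supt{T_{3,K}})=L_0(\vtheta)\le L_{\lambda_{3,K}}(\vtheta)=\zeta_{\lambda_{3,K}}(\vtheta)+\min_\mu L_{\lambda_{3,K}}(\mu)\le\Orlvt(\lambda_{3,K}^{9/5})+L_{\lambda_{3,K}}\big(\textstyle\sum_j a_j^*\delta_{\bvw_j^*}\big)\le\Orlvt(\lambda_{3,K})=\Orlvt(\eps)$, since the teacher measure has zero square loss and regularization $\lambda_{3,K}\sum_j|a_j^*|=\Orlvt(\lambda_{3,K})$. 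Summing the per-epoch step counts over the $\Orlvt(\log(1/\eps))$ epochs (with $\eta=\poly(\eps,1/d)$ chosen for the smallest $\lambda_{3,K}$) gives the stated $T_3-T_2=\Orlvt(\lambda_{3,0}^{-4}\eta^{-1}\log(1/\eps))$. The alignment conclusion is read off the structural lemma in the limit: as $\zeta,\lambda\to0$ the mass away from $\{\bvw_j^*\}$ vanishes and the mass near each $\bvw_j^*$ has a definite sign, so every surviving student neuron is forced onto a teacher direction, i.e.\ $\angle(\vw_i,\vw_j^*)=0$ or $|a_i|=\norm{\vw_i}_2=0$.
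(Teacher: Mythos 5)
The overall shape of your plan — epoch induction, a landscape lemma controlling each epoch, an annealing comparison between $L_{\lambda}$ and $L_{\lambda/2}$, and summing step counts — matches the paper. But the central technical claim you invoke is wrong, and this is not a minor slip: it is the crux of the whole analysis.

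You assert a genuine restricted PL inequality on the good region, $\norm{\nabla L_\lambda(\vtheta)}^2\ge \Omegarlvt(\poly(\lambda))\cdot\zeta_\lambda(\vtheta)$, and deduce linear intra-epoch convergence $\zeta_{t+1}\le(1-\Omegarlvt(\eta\,\poly(\lambda)))\,\zeta_t$ with per-epoch step count $\Orlvt(\poly(1/\lambda)\,\eta^{-1}\log(1/\lambda))$. The local landscape here does \emph{not} satisfy PL, even after restricting to a good region: in the overparametrized setting the Hessian degenerates along directions that split/merge student neurons near the same teacher direction (this is exactly the $R_2$ term in the residual decomposition; the paper cites \citet{safran2021effects} for the failure of PL and uses it as motivation). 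What the paper actually proves (Lemma~\ref{lem: grad lower bound}) is the strictly weaker, degenerate {\L}ojasiewicz inequality $\norm{\nabla_\vtheta L_\lambda}_F^2\ge \Omegarlvt(\zeta^4/\lambda^2)$, which, combined with a weak smoothness bound (Lemma~\ref{lem: smoothness}), yields the \emph{sublinear} recursion $\zeta_{t+1}\le\zeta_t-\Omegarlvt(\eta\,\zeta_t^4/\lambda^2)$ and hence $\zeta_t=\Orlvt((t/\lambda^2+\zeta_0^{-3})^{-1/3})$. This is where the $\Orlvt(\lambda^{-4}\eta^{-1})$ per-epoch time comes from; your linear-rate claim would be qualitatively stronger and, as far as the paper's analysis (and the cited counterexamples) indicates, is not available. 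Without the correct exponent in the gradient lower bound you cannot recover the stated time bound nor the $\lambda^{9/5}$ threshold, so this is a genuine gap rather than an alternative route.

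Two smaller issues in your annealing step: the comparison naturally involves $R(\mu^*_{\lambda})$ rather than $R(\mu^*_{\lambda/2})$ when you upper-bound $L_{\lambda/2}(\mu^*_{\lambda/2})$ by $L_{\lambda/2}(\mu^*_\lambda)$, and the structural bound is $|R(\vtheta)-R(\mu^*_\lambda)|=\Orlvt(\zeta/\lambda+\lambda)$ (directly from the definition of $\zeta$ and $L(\mu^*_\lambda)=\Orlvt(\lambda^2)$), not $\Orlvt(\sqrt{\zeta/\lambda})$. The paper's route is cleaner: it writes $L_{\lambda_{k+1}}(\vtheta^\supt{k})-L_{\lambda_{k+1}}(\mu^*_{\lambda_{k+1}})$ and rescales the regularizer by $\lambda_{k+1}/\lambda_k$, then uses the end-of-epoch gap $\Orlvt(\lambda_k^2)$ and $|\mu^*_{\lambda_k}|_1-|\mu^*_{\lambda_{k+1}}|_1=\Orlvt(\lambda_k)$ (Lemma~\ref{lem: mu lamb prop}) to get a clean $\Orlvt(\lambda_k^2)\le\Orlvt(\lambda_{k+1}^{9/5})$.
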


The lemma above relies on the following result that shows the local landscape is benign in the sense that it satisfies a
special case of Łojasiewicz property \citep{lojasiewicz1963propriete}. This means GD can always make progress until the optimality gap $\zeta$ is small.
\begin{restatable}[Gradient lower bound]{lemma}{lemgradlowerbound}\label{lem: grad lower bound}
    When $\Omegarlvt(\lambda^2)\le\zeta\le \Orlvt(\lambda^{9/5})$ and $\lambda\le \Orlvt(1)$, we have
    \begin{align*}
        \norm{\nabla_\vtheta L_\lambda}_F^2
        \ge \Omegarlvt(\zeta^4/\lambda^2).
    \end{align*}
\end{restatable}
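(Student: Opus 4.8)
The plan is to lift the problem to the convex program over signed measures on $\sS^{d-1}$ underlying the definition \eqref{eq: opt gap} of $\zeta$, and run a dual-certificate argument. Write $R(\vx)=f(\vx;\vtheta)-\tldy$ for the current residual; after the norm balancing the algorithm maintains, the student corresponds to the atomic measure $\hat\mu=\sum_i s_i\norm{\vw_i}_2^2\,\delta_{\bvw_i}$ with $s_i=\sign(a_i)$, and $g(\bvw):=\E_\vx[R(\vx)\sigma(\bvw^\top\vx)]$ is the correlation of the residual with a unit neuron. The function $\Phi(\bvw)=2g(\bvw)+\lambda\xi(\bvw)$, with $\xi$ any $[-1,1]$-valued selection agreeing with the sign of $\hat\mu$ on its atoms, is a subgradient of the lifted objective at $\hat\mu$, so convexity yields $\zeta\le\langle\Phi,\hat\mu-\mu^*\rangle$ for the exact $L_\lambda$-minimizer $\mu^*$ (the linear part is harmless: $\partial_\alpha L_\lambda$ and $\nabla_\vbeta L_\lambda$ are both $\le\norm{\nabla_\vtheta L_\lambda}_F$, so the residual's degree-$\le 1$ Hermite part is $\Orlvt(\norm{\nabla_\vtheta L_\lambda}_F)$). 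Everything then reduces to bounding $\langle\Phi,\hat\mu\rangle$ and $\langle\Phi,\mu^*\rangle$ by powers of $\norm{\nabla_\vtheta L_\lambda}_F$ and $\zeta$.

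The first term is essentially the gradient in disguise: a direct computation gives $m_i\Phi(\bvw_i)=\norm{\vw_i}_2\cdot\bigl(\bvw_i^\top\nabla_{\vw_i}L_\lambda\bigr)$, i.e.\ each atom pairs with exactly the radial component of that neuron's gradient, so by Cauchy--Schwarz $|\langle\Phi,\hat\mu\rangle|\le\bigl(\sum_i\norm{\vw_i}_2^2\bigr)^{1/2}\norm{\nabla_\vtheta L_\lambda}_F=\Orlvt(1)\cdot\norm{\nabla_\vtheta L_\lambda}_F$, using the a priori bound $\sum_i\norm{\vw_i}_2^2\le L_\lambda(\hat\mu)/\lambda\le\Orlvt(1)$ (valid since $L_\lambda(\hat\mu)\le L_\lambda(\mu^*)+\zeta=\Orlvt(\lambda)$). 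For the second term I use a structural description of the regularized minimizer ($\mu^*$ has at most $m_*$ atoms, one within $\Orlvt(\lambda)$ of each teacher direction with sign matching $a_j^*$, and $L(\mu^*)=\Orlvt(\lambda^2)$ by comparison with a slightly shrunk teacher) together with the invariant, maintained throughout Stage~3, that every teacher $\vw_j^*$ has a close student neuron $i_j$ with $\norm{\vw_{i_j}}_2=\Thetarlvt(1)$. Near-stationarity of $i_j$ makes $\Phi(\bvw_{i_j})$ small, bounded by $|\bvw_{i_j}^\top\nabla_{\vw_{i_j}}L_\lambda|/\norm{\vw_{i_j}}_2=\Orlvt(\norm{\nabla_\vtheta L_\lambda}_F)$; and $g$ is $\Orlvt(\norm{R}_2)$-Lipschitz on the portion of the sphere near $S_*$ where the relevant directions live, with $\norm{R}_2^2=L(\hat\mu)\le 2\zeta+2L(\mu^*)=\Orlvt(\zeta)$ (using the strong-convexity bound $\norm{g_{\hat\mu}-g_{\mu^*}}_2^2\le\zeta$, which follows from $2$-strong convexity of $g\mapsto\norm{g-\tldy}_2^2$ and optimality of $\mu^*$). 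Transporting from $\bvw_{i_j}$ to the nearby atom of $\mu^*$ along a constant-sign arc therefore costs $\Orlvt(\sqrt\zeta)$ times the relevant angle, and pairing against the $\Orlvt(1)$ total mass of $\mu^*$ gives
\[
\zeta\le\Orlvt\bigl(\norm{\nabla_\vtheta L_\lambda}_F\bigr)+\Orlvt(\sqrt\zeta)\cdot\max_j\theta_j,
\]
where $\theta_j$ measures how far student neuron $i_j$ sits from $\vw_j^*$.

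The remaining and main obstacle is that $\theta_j$ is \emph{not} controlled by $\zeta$ alone: because the network is overparametrized and weight decay pins down only aggregate quantities of each teacher-cluster (its total mass and barycenter), individual student directions can wobble, and a naive bound such as $\theta_j^2\lesssim\zeta/\kappa$ is far too weak here. Instead I would extract $\theta_j$ from the \emph{tangential} gradient: expanding $g(\bvw)=\sum_{k\ge2}\hsigma_k^2\,T_k(\bvw,\dots,\bvw)$ with $T_k=\sum_i m_i\bvw_i^{\otimes k}-\sum_j a_j^*\bvw_j^{*\otimes k}$, and using $\Delta$-separation (Assumption~\ref{assum: delta seperation}) to decouple the clusters and $\kappa=|\lambda_r(\mH)|$ (Assumption~\ref{assum: non-degenerate}) to exhibit a genuine restoring component of $\nabla^{\mathrm{sph}}g(\bvw_{i_j})$ pointing toward $\vw_j^*$, one bounds $\theta_j$ by $\norm{\nabla_\vtheta L_\lambda}_F$ up to a $\poly(1/\lambda)$ factor and lower-order terms. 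Feeding this back and rearranging absorbs the $\Orlvt(\sqrt\zeta)\theta_j$ error into $\tfrac12\zeta$ plus a multiple of $\norm{\nabla_\vtheta L_\lambda}_F$, and the bookkeeping closes to $\norm{\nabla_\vtheta L_\lambda}_F^2\ge\Omegarlvt(\zeta^4/\lambda^2)$. The upper restriction $\zeta\le\Orlvt(\lambda^{9/5})$ is exactly the threshold below which the restoring-force contribution dominates this rearrangement (so the $\poly(1/\lambda)$ loss is affordable), and the lower restriction $\zeta\ge\Omegarlvt(\lambda^2)$ is what keeps $\norm{R}_2$ --- hence the Lipschitz constant of the dual certificate --- comparable to $\sqrt\zeta$ rather than to $\sqrt\lambda$.

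I expect essentially all of the effort, and the ``new dual certificate analysis'' mentioned in the introduction, to go into the tangential-gradient step: controlling the cross-cluster contributions to the tensors $T_k$, ruling out the cancellations that overparametrization permits, and proving the restoring-force lower bound while only assuming second-order non-degeneracy of $\mH$ and allowing second-layer weights of either sign.
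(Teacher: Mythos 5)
Your overall frame --- lift to the convex program over measures, pair a subgradient with $\hat\mu-\mu_\lambda^*$, and identify $\langle\Phi,\hat\mu\rangle$ with the radial component of the gradient --- is in the right spirit, but the step you yourself flag as the main obstacle is exactly where the argument breaks, and it breaks for a structural reason rather than a bookkeeping one. Your plan transports the mass of $\mu_\lambda^*$ onto a \emph{single} close student neuron $i_j$ per teacher, paying a Lipschitz cost $\Orlvt(\sqrt\zeta)\cdot\theta_j$. But in the overparametrized, degenerate landscape the best guarantee available for any individual neuron's angle is $\theta_j=\Orlvt(\zeta^{1/3})$ (this is what the paper's test-function lemma gives, and it is essentially tight: a low-mass neuron at angle $\zeta^{1/3}$ is compatible with optimality gap $\zeta$ because other neurons in its cluster compensate). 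The transport error is then $\Orlvt(\zeta^{5/6})\gg\zeta$, which cannot be absorbed into $\zeta/2$; and the hoped-for ``restoring force'' bound $\theta_j\lesssim\poly(1/\lambda)\norm{\nabla L_\lambda}_F$ for an \emph{individual} neuron is false for the same reason --- the tangential gradient of one neuron can vanish while it sits far from the teacher, since only aggregate quantities of each cluster are pinned down. The paper's proof avoids this by never localizing onto one neuron: it distributes the teacher's mass over the whole cluster via weights $q_{ij}$ with $\sum_j a_jq_{ij}=a_i^*$, and the quantity it needs to be small is the distance of the \emph{average neuron} $\vv_i=\sum_{j\in\gT_i}a_j\vw_j$ to $\vw_i^*$, which it bounds by $\Orlvt((\zeta/\lambda)^{3/4})$ through the residual decomposition $R=R_1+R_2+R_3$ (with $\norm{R_2}$ controlled by the dual-certificate bound $\sum_j|a_j|\norm{\vw_j}\delta_j^2=\Orlvt(\zeta/\lambda)$). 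That aggregate bound is what makes the error term absorbable under $\zeta\le\Orlvt(\lambda^{9/5})$.

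Two further gaps: (i) you assume $\mu_\lambda^*$ is $m_*$-atomic with one atom near each teacher; the paper never proves (or needs) this, using the dual certificate only to bound integrals against $|\mu_\lambda^*|$ and $|\mu|$. (ii) Your argument silently assumes balanced norms and no sign cancellation, but the lemma is a pure landscape statement and these are essential cases: a pair $(a,\vw),(-a,\vw)$ makes the parameter-level regularizer strictly larger than the total-variation norm of the lifted measure, so your convexity inequality $\zeta\le\langle\Phi,\hat\mu-\mu_\lambda^*\rangle$ does not hold with the paper's definition of $\zeta$. The paper handles imbalance, cancellation, and unfitted $\alpha,\vbeta$ by three separate weight-decay-driven descent directions, and these cases are precisely where the final exponents $\zeta^4/\lambda^2$ and $\zeta^{5/6}\lambda$ (hence the $\lambda^{9/5}$ threshold) come from; a proof that omits them cannot recover the stated rate.
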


Note that this generalizes previous result in \cite{zhou2021local} that only focuses on 2-layer networks with positive second layer weights. This turns out to be technically challenging as two neurons with different signs can cancel each other. We discuss how to deal with this challenge in the next section.
\section{Descent direction in local convergence (Stage 3): the benefit of weight decay}\label{sec: local proof sketch}
In this section, we give the high-level proof ideas for the most technical challenging part of our results --- characterize the local landscape in Stage 3 (Lemma~\ref{lem: grad lower bound}). 

The key idea is to construct descent direction --- a direction that has positive correlation with the gradient direction. The gradient lower bound follows from the existence of such descent direction.

It turns out that the existence of both positive and negative second-layer weights introduces significant challenge for the analysis: there might exist neurons with similar directions (e.g., $(a,\vw)$ and $(-a,\vw)$) that can cancel with each other to have no effect on the output of network. Intuitively, we would hope all of them to move towards 0, but they have no incentive to do so. Moreover, if they are not exactly symmetric it's hard to characterize which directions these neurons will move. 

We use standard weight decay to address the above challenge. Specifically, weight decay helps us to
\begin{itemize}[leftmargin=2em]
    \item \textit{Balance norm between neurons}. When norm between two layers are balanced, the $\ell_2$ regularization $\sum_i|a_i|^2+\norm{\vw_i}^2$ would become the effective $\ell_1$ regularization $2\sum_i|a_i|\norm{\vw_i}$ over the distribution of neurons. Such sparsity penalty ensures most neurons concentrate around the ground-truth directions, especially preventing norm cancellation between far-away neurons.
    
    \item \textit{Reduce cancellation between close-by neurons}. For close-by neurons, weight decay helps to reduce the norm of neurons with the `incorrect' sign (different sign with the ground-truth neuron). This is because weight decay prefers low norm solutions, and reducing cancellations between neurons can reduce total norm (regularization term) while keeping the square loss same.
\end{itemize}

We will group the neurons (i.e., partitioning $\sS^{d-1}$) based on their distance to the closest teacher neurons: denote $\gT_i=\{\vw:\angle(\vw,\vw_i^*)\le\angle(\vw,\vw_j^*) \text{ for any $j\neq i$}\}$ (break the tie arbitrarily) so that $\cup_i \gT_i=\sS^{d-1}$. We will also use $\delta_j$ to denote $\angle(\vw_j,\vw_i^*)$ for $j\in \gT_i$.

As described above, weight decay can always lead to descent direction when norms are not balanced or norm cancellation happens (see Lemma~\ref{lem: descent dir norm bal} and Lemma~\ref{lem: descent dir norm cancel}). The following lemma shows that in other scenarios we can always improve features towards the ground-truth directions.
\begin{lemma}[Feature improvement descent direction, informal]\label{lem: descent dir informal}
    When norms are balanced and no norm cancellation happens, there exists properly chosen $q_{ij}\ge 0$ and $\sum_{j\in\gT_i}a_jq_{ij}=a_i^*$ such that
    \[   
        \sum_{i\in [m_*]}\sum_{j\in \gT_i} \langle\nabla_{\vw_i}L_\lambda,\vw_j-q_{ij}\vw_i^*\rangle =\Omega(\zeta).
    \]
\end{lemma}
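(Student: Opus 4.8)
\textbf{Proof proposal for Lemma~\ref{lem: descent dir informal}.}

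The plan is to compute the quantity $\sum_{i\in[m_*]}\sum_{j\in\gT_i}\langle\nabla_{\vw_i}L_\lambda,\vw_j-q_{ij}\vw_i^*\rangle$ by expanding the gradient of $L_\lambda$ into a square-loss part and a weight-decay part, and to show the square-loss part is lower bounded by $\Omega(\zeta)$ while the weight-decay part is negligible. Concretely, I would first recall (via the Hermite / tensor decomposition machinery referenced around Claim~\ref{lem: loss tensor decomp}) that $\nabla_{\vw_j}L = 2a_j\,\E_\vx[(f(\vx;\vtheta)-\tldy)\sigma'(\vw_j^\top\vx)\vx] + \lambda\vw_j$. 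The test direction $\vw_j - q_{ij}\vw_i^*$ is designed so that, after summing the $a_j$-weighted contributions within a cell $\gT_i$ and using the constraint $\sum_{j\in\gT_i}a_j q_{ij}=a_i^*$, the cross terms telescope into something resembling $\langle \nabla L, \text{(current net)} - f_*\rangle$ — i.e., a correlation between the residual $f-f_*$ (in an appropriate functional inner product) and itself, which is positive and controlled from below by the loss. The role of the ``no norm cancellation'' and ``balanced norms'' hypotheses is precisely to guarantee that each $a_j$ has the correct sign (same as $a_i^*$ for $j$ in the bulk of $\gT_i$) and that the $q_{ij}\ge 0$ can be chosen with $\sum_j a_j q_{ij} = a_i^*$; without this, the partition-of-unity-style reweighting could fail to have a nonnegative, norm-bounded solution.

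The key steps, in order, would be: (1) write the gradient inner product as $\sum_{i,j}\langle 2a_j\,\E[(f-\tldy)\sigma'(\vw_j^\top\vx)\vx] + \lambda\vw_j,\ \vw_j - q_{ij}\vw_i^*\rangle$ and split into ``loss'' and ``regularization'' pieces; (2) for the loss piece, use homogeneity of ReLU ($\sigma'(\vw^\top\vx)\vw^\top\vx=\sigma(\vw^\top\vx)$) to rewrite $a_j\langle\E[(f-\tldy)\sigma'(\vw_j^\top\vx)\vx],\vw_j\rangle = \E[(f-\tldy)a_j\sigma(\vw_j^\top\vx)]$, so summing over all neurons gives $\E[(f-\tldy)(f-\alpha-\vbeta^\top\vx)]$, which is the ``$f$ against residual'' term; (3) handle the $\sum_{i,j} a_j q_{ij}\langle\E[(f-\tldy)\sigma'(\vw_j^\top\vx)\vx],\vw_i^*\rangle$ term: since $\vw_j$ is close to $\vw_i^*$ (angle $\delta_j$ small in the bulk of $\gT_i$), $\sigma'(\vw_j^\top\vx)\approx\sigma'(\vw_i^{*\top}\vx)$, so $a_jq_{ij}\langle\E[\cdots],\vw_i^*\rangle \approx a_jq_{ij}\,\E[(f-\tldy)\sigma(\vw_i^{*\top}\vx)]\cdot(\text{normalization})$, and summing over $j\in\gT_i$ with $\sum_j a_jq_{ij}=a_i^*$ yields $\approx a_i^*\,\E[(f-\tldy)\sigma(\vw_i^{*\top}\vx)]$; summing over $i$ plus accounting for the linear terms reconstructs $\E[(f-\tldy)f_*]$ (or rather its analogue after preprocessing); (4) combine (2) and (3): the loss piece is approximately $\E[(f-\tldy)(f-f_*)] = \E[(f-\tldy)^2] = L$ (up to the preprocessing bookkeeping and up to the optimality-gap constant, so it is $\Omega(\zeta)$); (5) bound the error terms from the angle approximations in (3) by $O(\sum_j \delta_j \cdot |a_j|\norm{\vw_j})$ and the regularization piece $\lambda\sum_{i,j}\langle\vw_j-q_{ij}\vw_i^*,\vw_j\rangle$ by $O(\lambda\cdot(\text{total norm}))$, and show both are $o(\zeta)$ in the regime $\zeta\ge\Omegarlvt(\lambda^2)$ using the concentration-around-ground-truth consequences of weight decay established in the companion lemmas (Lemma~\ref{lem: descent dir norm bal}, Lemma~\ref{lem: descent dir norm cancel}) together with $\zeta\le\Orlvt(\lambda^{9/5})$.

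The main obstacle I expect is step (5) combined with the construction of $q_{ij}$ in step (3): controlling the error from replacing $\sigma'(\vw_j^\top\vx)$ by $\sigma'(\vw_i^{*\top}\vx)$ uniformly over the cell $\gT_i$ is delicate because neurons near the \emph{boundary} of $\gT_i$ have $\delta_j$ not small, and ReLU's derivative is discontinuous, so the naive bound $\E[|\sigma'(\vw_j^\top\vx)-\sigma'(\vw_i^{*\top}\vx)|\,|\vx|]=O(\delta_j)$ must be paired with a proof that the total norm mass of such boundary neurons is small — which is exactly where the ``no norm cancellation'' hypothesis and the effective $\ell_1$ sparsity induced by balanced weight decay do the work. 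A secondary subtlety is ensuring the linear terms $\alpha,\vbeta$ and the preprocessing-removed Hermite components of $f_*$ are correctly accounted for so that the telescoped expression genuinely equals the loss rather than the loss plus an uncontrolled linear discrepancy; I would handle this by working throughout with the preprocessed residual $f-\tldy$ and invoking the fact (from the Stage-2 guarantee and the structure of $L_\lambda$) that $\alpha,\vbeta$ stay near their optimal fitted values, so the linear part contributes only a lower-order term.
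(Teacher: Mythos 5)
Your skeleton matches the paper's actual argument (Lemma~\ref{lem: descent dir tangent}): expand the gradient, use ReLU homogeneity so that $\langle\nabla_{\vw_j}L,\vw_j\rangle$ reproduces $\E[R\,a_j\sigma(\vw_j^\top\vx)]$, add and subtract the teacher term using $\sum_{j\in\gT_i}a_jq_{ij}=a_i^*$, and reduce everything to controlling the activation-mismatch term $\sum_{i,j}\E[R\,a_jq_{ij}\vw_i^{*\top}\vx(\sigma'(\vw_i^{*\top}\vx)-\sigma'(\vw_j^\top\vx))]$. However, there are two genuine gaps. The first is where the $\Omega(\zeta)$ comes from. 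You claim the square-loss piece alone is $\Omega(\zeta)$ (``$\E[(f-\tldy)(f-f_*)]=L$ ... so it is $\Omega(\zeta)$'') and that the regularization piece $\lambda\sum_{i,j}\langle\vw_j-q_{ij}\vw_i^*,\vw_j\rangle$ is $o(\zeta)$ because it is $O(\lambda\cdot\text{total norm})$. Both claims fail in the operating regime $\Omegarlvt(\lambda^2)\le\zeta\le\Orlvt(\lambda^{9/5})$: the total norm is $\Thetarlvt(1)$, so $\lambda\cdot\text{total norm}=\Thetarlvt(\lambda)\gg\zeta$, and $L$ itself need not be $\Omega(\zeta)$ since $\zeta$ is the gap of the \emph{regularized} objective and can be carried entirely by excess $\ell_1$ mass (e.g.\ spread-out or partially cancelling neurons with $f_\mu$ nearly unchanged); the paper only proves $L\le\Orlvt(\lambda^2+\zeta)$, never a matching lower bound. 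The correct bookkeeping keeps the two pieces together: $2\norm{R}^2+\lambda\norm{\mW}_F^2\ge L_\lambda(\vtheta)=L_\lambda(\mu_\lambda^*)+\zeta$ (using balanced norms), and the subtracted term $\lambda\sum_{i,j}q_{ij}\vw_j^\top\vw_i^*\approx\lambda\sum_{i,j}q_{ij}|a_j|=\lambda\norm{\va_*}_1$ must be cancelled against $\lambda|\mu_\lambda^*|_1$ inside $L_\lambda(\mu_\lambda^*)$ via the dual-certificate bound $|\mu_\lambda^*|_1\ge\norm{\va_*}_1-\lambda\norm{p}_2^2$ (Lemma~\ref{lem: mu lamb prop}). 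Two $\Thetarlvt(\lambda)$-sized quantities cancel to within $\Orlvt(\lambda^2)+\Orlvt(\zeta)$; treating either one as a standalone error term breaks the proof.

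The second gap is the error term. A first-order bound of the form $O(\sum_j\delta_j|a_j|\norm{\vw_j})$ is too weak: Cauchy--Schwarz against the weighted norm bound $\sum_j|a_j|\norm{\vw_j}\delta_j^2=\Orlvt(\zeta/\lambda+\lambda)$ only gives $\Orlvt(\lambda^{1/2})\gg\zeta$. The paper's bound requires three ingredients you do not identify: (i) $q_{ij}$ is supported \emph{only} on same-sign neurons with $\delta_j\le\delta_{close}=\Orlvt(\zeta^{1/3})$ (the sharper test-function bound of Lemma~\ref{lem: closeby neuron}, not the dual-certificate one), so boundary neurons of $\gT_i$ never enter $h$; (ii) the Gaussian estimates are second order in the angles ($\E[|\vw_i^{*\top}\vx|\ind_{\sign(\vw_j^\top\vx)\neq\sign(\vw_i^{*\top}\vx)}]=O(\delta_j^2)$, and products of two sign-disagreement indicators give $\delta_\ell\delta_j/\sin\phi$), not $O(\delta_j)$; and (iii) the residual must be decomposed as $R=R_1+R_2+R_3$, with $\langle R_1,h\rangle$ controlled by the average-neuron bound $\norm{\sum_{j\in\gT_i}a_j\vw_j-\vw_i^*}_2=\Orlvt((\zeta/\lambda)^{3/4})$ and $\langle R_2,h\rangle$ controlled by the far-away-mass and no-cancellation bounds. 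Your ``main obstacle'' paragraph correctly senses that the mass of misaligned neurons must be small, but without the support restriction on $q_{ij}$, the second-order angle estimates, and the $R_1/R_2/R_3$ decomposition, the error term cannot be pushed below $\zeta/8$.
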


In words, this descent direction is the following: we move neuron $\vw_j\in\gT_i$ toward either ground-truth direction $\vw_i^*$ or 0 depending on whether it is in the neighborhood of teacher neuron $\vw_i^*$. Specifically, we move far-away neurons towards 0 (and thus setting $q_{ij}=0$) and move close-by neurons towards its 'closest' minima $q_{ij}\vw_i^*$ (the fraction of $\vw_i^*$ that neuron $\vw_j$ should target to approximate). See Figure~\ref{fig:descent dir} for an illustration.

\begin{figure}
    \centering
    \includegraphics[width=0.5\linewidth]{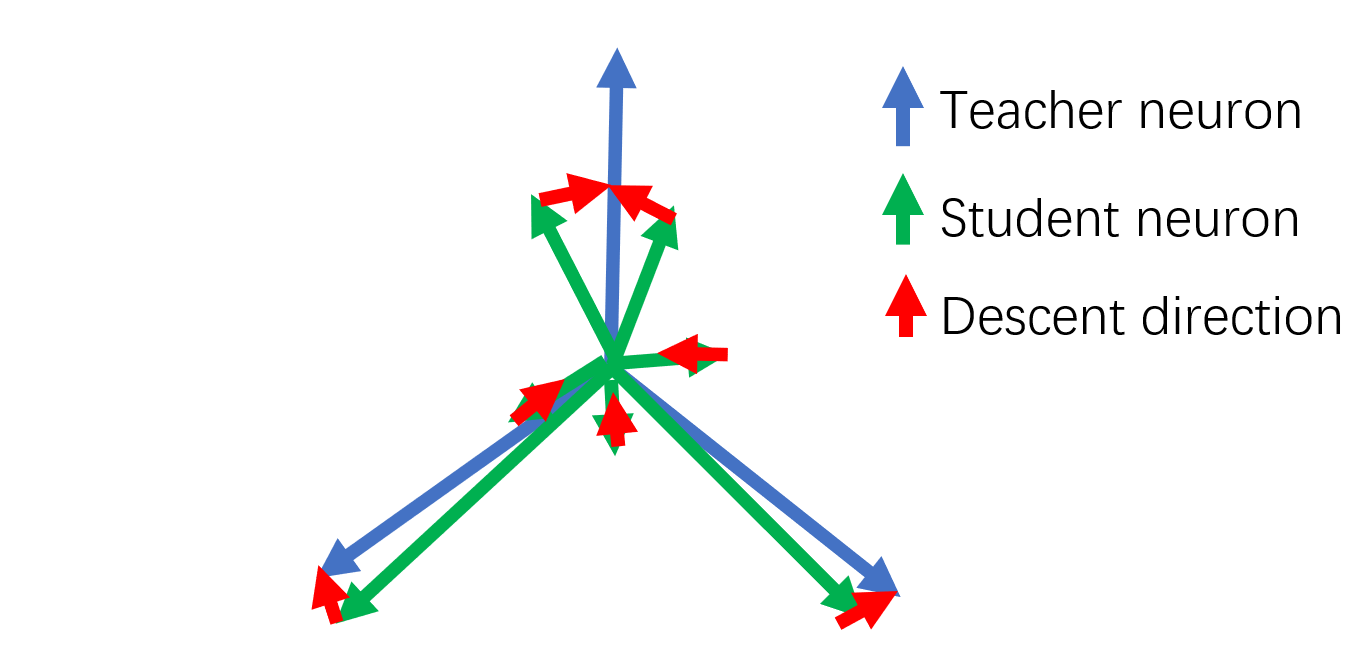}
    \caption{Illustration of descent direction}
    \label{fig:descent dir}
\end{figure}

The proof of the above lemma requires a dedicated characterization of the low loss solution's structure, which we describe in Section~\ref{sec: structure}.

\section{Structure of (approximated) minima}\label{sec: structure}
In this section, we first highlight the importance of understanding local geometry by showing the challenges in proving the existence of descent direction (Lemma~\ref{lem: descent dir informal}). Then after presenting the main result of this section to show the structure of (approximated) minima (Lemma~\ref{lem: structure prop}), we discuss several proof ideas such as dual certificate analysis in the remaining part.

\vspace{-5pt}
\subsection{Constructing descent direction requires better understanding of local geometry}
To show the existence of descent direction in Lemma~\ref{lem: descent dir informal}, we compute the inner product between gradient and constructed descent direction. We can lower bound it by (assuming norms are balanced)
\begin{align*}
        \zeta+
        2\sum_{i\in [m_*]}\sum_{j\in \gT_i} \E_\vx[R(\vx)a_jq_{ij}\vw_i^{*\top}\vx(\sigma^\prime(\vw_i^{*\top}\vx)-\sigma^\prime(\vw_j^\top \vx))],
\end{align*}
where $R(\vx)=f(\vx)-\tldf_*(\vx)$ is the residual.
Thus, in order to get a lower bound, the goal is to show second term above is small than $\zeta$. As we can see, this term is quite complicated and can be viewed as the inner product between $R(\vx)$ and $h(\vx)=\sum_{i\in [m_*]}\sum_{j\in \gT_i} a_jq_{ij}\vw_i^{*\top}\vx(\sigma^\prime(\vw_i^{*\top}\vx)-\sigma^\prime(\vw_j^\top \vx))$. 

\vspace{-3pt}
\paragraph{Average neuron and residual decomposition}
To deal with above challenge, we use the idea of average neuron and residual decomposition. For each teacher neuron $\vw_i^*$, denote $\vv_i=\sum_{j\in\gT_i}a_j\vw_j$ as the average neuron. Intuitively, this average neuron $\vv_i$ stands for an idealize case where all neurons belong to $\gT_i$ (closer to $\vw_i^*$ than other $\vw_j^*$) collapse into a single neuron. 

We decompose the residual $R(\vx) = f(\vx) - \tldf_*(\vx)$ into the 3 terms below: denote $\hat{\vv}_i=\vv_i-\vw_i^*$
\begin{equation*}
\begin{aligned}
    R_1(\vx) &= \frac{1}{2}\sum_{i\in[m_*]}\hat{\vv}_i^\top\vx \sign(\vw_i^{*\top}\vx),
    R_2(\vx) = \frac{1}{2}\sum_{i\in[m_*],j\in\gT_i} a_j\vw_j^\top\vx (\sign(\vw_j^\top\vx) - \sign(\vw_i^{*\top}\vx)),\\
    R_3(\vx) &=  \frac{1}{\sqrt{2\pi}} \left(\sum_{i\in[m_*]}a_i^*\norm{\vw_i^*}_2 - \sum_{i\in[m]}a_i\norm{\vw_i}_2\right)+\alpha-\halpha+(\vbeta-\hvbeta)^\top\vx.
\end{aligned}    
\end{equation*}

$R_1$ can be thought as the exact-parametrization setting (use $m_*$ neurons to learn $m_*$ neurons), where the average neurons $\{\vv_i\}_{i=1}^{m_*}$ are the effective neurons. The difference between this exact-parametrization and overparametrization setting is then characterized by the term $R_2$, which captures the difference in nonlinear activation pattern. This term in fact suggests the loss landscape is degenerate in overparametrized case and slows down the convergence \citep{zhou2021local,xu2023over}. Overall, this residual decomposition is similar to \citet{zhou2021local}, with additional modification of $R_3$ to deal with ReLU activation and linear term $\alpha,\vbeta$. 

To some extent, our residual decomposition can be viewed as a kind of `bias-variance' decomposition in the sense that the `bias' term $R_1$ captures the overall average contribution of all neurons, and the `variance' term $R_2$ captures the individual contributions of each neuron that are not reflected in $R_1$.

\vspace{-3pt}
\paragraph{High-level proof plan of Lemma~\ref{lem: descent dir informal}}
We now are ready to give a proof plan for Lemma~\ref{lem: descent dir informal}. The key is to show properties of minima that can help us to bound $\langle R,h\rangle$.
\begin{enumerate}[leftmargin=2em]
    \item Show that neurons mostly concentrate around ground-truth directions.

    \item Show that average neuron $\vv_i$ is close to teacher neuron $\vw_i^*$ for all $i\in[m]$.
    
    \item Use above structure to bound $\langle R_i,h\rangle$. Specifically, bounding $\langle R_1,h\rangle$ relies on the fact that average neuron is close to teacher neuron (step 2); a bound on $\langle R_2,h\rangle$ follows from far-away neurons are small (step 1); third term $\langle R_3,h\rangle$ can be directly bounded using the loss. Detailed calculations are deferred into Appendix~\ref{appendix: descent dir}.
\end{enumerate}

We give main result of this section that shows the desired local geometry properties more precisely ((i)(ii) corresponding to step 1 and (iii) corresponding to step 2 above). 
    \begin{lemma}[Informal]\label{lem: structure prop}
    Suppose the optimality gap is $\zeta$, we have
    \begin{enumerate}[label = (\roman*), leftmargin=2em]
        \item Total norm of far-away neurons is small: 
        $
            \sum_{i\in[m_*]}\sum_{j\in\gT_i}|a_j|\norm{\vw_j}_2\delta_j^2
            =\Orlvt(\zeta/\lambda),
        $
        where angle $\delta_j=\angle(\vw_j,\vw_i^*)$ for $\vw_j$ that $j\in\gT_i$.
        
        \item For every $\vw_i^*$, there exists at least one close-by neuron $\vw$ s.t. $\angle(\vw,\vw_i^*)\le \delta_{close}=\Orlvt(\zeta^{1/3})$.
        
        \item Average neuron is close to teach neurons: we have $\norm{\vv_i - \vw_i^*}_2\le \Orlvt((\zeta/\lambda)^{3/4})$.
    \end{enumerate}
    \end{lemma}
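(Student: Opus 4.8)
\textbf{Proof proposal for Lemma~\ref{lem: structure prop}.}

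The plan is to derive all three structural properties from a single source: a comparison between the optimality gap $\zeta$ and an appropriate reconstruction of the low-loss solution using measures on $\sS^{d-1}$, combined with a dual certificate (Lagrangian duality / complementary slackness) argument for the effective $\ell_1$-regularized problem. First I would set up the effective $\ell_1$ picture: when norms are balanced, $\frac{\lambda}{2}(\norm{\va}_2^2+\norm{\mW}_2^2) = \lambda\sum_i|a_i|\norm{\vw_i}_2$, so at a (near-)minimizer $\vtheta$ the loss behaves like a square loss plus $\lambda$ times a total-variation-type penalty on the signed measure $\mu = \sum_i a_i\norm{\vw_i}_2\delta_{\bvw_i}$. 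The key is the first-order optimality condition (KKT / dual certificate): there is a ``dual function'' $g(\vw) = -\E_\vx[R(\vx)\sigma(\vw^\top\vx)]/\lambda$ (up to the linear-term corrections handled by $R_3$) which must satisfy $|g(\bvw)|\le 1$ everywhere on $\sS^{d-1}$, with equality (and the correct sign) on the support of $\mu$. I would then show that the teacher neuron directions $\bvw_i^*$ are the points where a near-optimal dual function is near $\pm1$, i.e., they are the ``active'' directions, while far from all $\vw_i^*$ the dual function is bounded strictly away from $\pm1$ by a margin $\Omega(\delta^2)$ at angular distance $\delta$ — this curvature of the dual certificate near its peaks is exactly what forces far-away mass to be penalized.

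For part (i), the plan is: since $\zeta$ upper-bounds the ``excess'' in the $\ell_1$-regularized objective over the optimal measure, and since the optimal measure places mass only at the $\bvw_i^*$, any neuron $\vw_j\in\gT_i$ sitting at angle $\delta_j$ from $\vw_i^*$ contributes roughly $\lambda|a_j|\norm{\vw_j}_2$ to the penalty but, because the dual function has value $\le 1 - \Omega(\delta_j^2)$ there, ``wastes'' an amount $\gtrsim \lambda|a_j|\norm{\vw_j}_2\delta_j^2$ of regularization budget that is not compensated by a corresponding drop in square loss; summing and comparing to $\zeta$ gives $\sum|a_j|\norm{\vw_j}_2\delta_j^2 = \Orlvt(\zeta/\lambda)$. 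This is the standard ``the near-minimizer cannot afford much mass away from the active set'' argument, and the quadratic-in-$\delta$ margin comes from a Taylor expansion of $\E_\vx[\sigma(\vw^\top\vx)\sigma'(\vw_i^{*\top}\vx)]$-type quantities together with Assumption~\ref{assum: non-degenerate} (the $\mH$ term) and Assumption~\ref{assum: delta seperation} (so different teacher neurons' basins do not interfere). Part (ii) is the complementary ``enough mass near each $\vw_i^*$'' statement: from Stage 1 / Stage 2 we know there is at least one student neuron within $\eps_0$ of $\vw_i^*$ and that the solution fits the teacher well; if \emph{no} neuron were within $\delta_{close}=\Orlvt(\zeta^{1/3})$ of $\vw_i^*$, then by part (i) the total mass in $\gT_i$ would be $\Orlvt(\zeta/(\lambda\delta_{close}^2))$, which for $\delta_{close}\asymp\zeta^{1/3}$ is $\Orlvt(\zeta^{1/3}/\lambda)$ — too small to reconstruct the $\Omega(1)$-magnitude contribution of $\vw_i^*$ to the second-order (Hermite-2) part of $f_*$, contradicting $\zeta$ small; I would make this quantitative by projecting the residual onto $h_2(\bvw_i^{*\top}\vx)$ and using $\kappa = |\lambda_r(\mH)|$.

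For part (iii), the plan is to write $\vv_i - \vw_i^* = \hat{\vv}_i$ and bound it by testing the residual against well-chosen polynomials. Concretely, $\E_\vx[R(\vx)\cdot(\text{appropriate second-order test function in direction }\vw_i^*)]$ is $O(\sqrt\zeta)$ by Cauchy–Schwarz (square loss controls $\E R^2$), and this inner product equals $\hsigma_2\hat{\vv}_i^\top\vw_i^* + (\text{cross terms from other }\gT_{i'})$; the cross terms are controlled by Assumption~\ref{assum: delta seperation} and by part (i) applied to the $\gT_{i'}$'s. To upgrade the control on $\hat{\vv}_i^\top\vw_i^*$ to control on the full vector $\norm{\hat{\vv}_i}_2$, I would split $\vv_i$ into its close-by part (neurons within, say, $\delta_{close}$ of $\vw_i^*$, which are nearly parallel to $\vw_i^*$ so their transverse component is small by part (i)) and its far-away part (whose \emph{total norm} — not just transverse component — is $\Orlvt(\zeta/(\lambda\delta^2))$ at scale $\delta$, so its contribution to $\vv_i$ is small after summing the dyadic shells). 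Balancing the close-by transverse error against the far-away norm error, together with the $O(\sqrt\zeta)$ bound on the parallel direction and the normalization $\norm{\vw_i^*}_2=1$, yields $\norm{\vv_i - \vw_i^*}_2 = \Orlvt((\zeta/\lambda)^{3/4})$; the exponent $3/4$ should fall out of optimizing the dyadic-shell split against part (i)'s budget.

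I expect the main obstacle to be the dual certificate analysis for ReLU with signed second-layer weights: unlike the absolute-value activation in \citet{zhou2021local}, the dual function $\E_\vx[R(\vx)\sigma(\vw^\top\vx)]$ mixes Hermite components of all orders and its behavior near a teacher direction is not simply a rescaled even function, so proving the quadratic margin $|g(\bvw)|\le 1-\Omega(\angle(\vw,\vw_i^*)^2)$ uniformly — and ruling out spurious near-active directions away from $\{\bvw_i^*\}$ — requires carefully controlling how the residual decomposition $R = R_1 + R_2 + R_3$ interacts with the activation, and this is where the $R_3$ correction (the linear-term and total-norm bookkeeping forced by ReLU's nonzero zeroth/first Hermite coefficients) has to be handled rather than swept away. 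The separation $\Delta$ and the non-degeneracy $\kappa$ enter precisely to make this margin argument go through with constants depending only on $f_*$.
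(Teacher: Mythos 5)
The overall architecture you sketch (effective $\ell_1$ penalty on the measure $\mu$, a dual-certificate quadratic margin, then test-function arguments) is the right family of ideas, but three concrete gaps would stop the plan from going through.

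\textbf{Part (i): which dual certificate.} You take the certificate to be the KKT dual variable of the \emph{current near-minimizer}, $g(\vw)=-\E_\vx[R(\vx)\sigma(\vw^\top\vx)]/\lambda$, and propose to prove the quadratic margin $|g(\bvw)|\le 1-\Omega(\delta^2)$ for this $g$. That is the hard direction: for an \emph{approximate} minimizer this $g$ need not satisfy $\|g\|_\infty\le 1$, let alone have the required curvature at the teacher directions, and you would be back to analysing the landscape you started with. The paper sidesteps this by building a \emph{fixed} certificate $\eta(\vw)=\E_\vx[p(\vx)\sgmtwo(\vw^\top\vx)]$ from the teacher network alone (via the kernel interpolation system in Appendix~F), proving the non-degeneracy once and for all, and then applying it to any $\mu$ with small optimality gap through Lemma~\ref{lem: eta prop main text}. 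The certificate is a property of $f_*$, not of $\mu$.

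\textbf{Part (ii): the $h_2$ route does not give $\zeta^{1/3}$.} You propose to combine part~(i) with a projection onto a second-order Hermite test in direction $\vw_i^*$. Two problems. First, $h_2(\vw_i^{*\top}\vx)$ correlates $\Theta(\cos^2\Delta)$ with $\sigma(\vw_j^{*\top}\vx)$ for every $j$, so it does not spatially isolate $\vw_i^*$; the cross-terms are order one and are not killed by $\kappa$. Second, even granting perfect isolation, the bound ``mass in $\gT_i$ is $\le (\zeta/\lambda)/\delta_{close}^2$'' with $\delta_{close}\asymp\zeta^{1/3}$ gives total mass $\lesssim\zeta^{1/3}/\lambda$, which in the regime the local analysis actually operates in ($\lambda^2\lesssim\zeta\lesssim\lambda^{9/5}$) is $\gtrsim\lambda^{-2/5}\to\infty$, not ``too small to reconstruct $a_i^*$.'' The paper's Lemma~\ref{lem: closeby neuron} gets $\zeta^{1/3}$ by a different mechanism entirely: the test function is a \emph{window} of Hermite polynomials $\sum_{\ell\le k<2\ell}\sign(a_i^*)\sign(\hsigma_k)h_k(\vw_i^{*\top}\vx)$ with $\ell\asymp\delta^{-2}\log(\cdot)$. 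The high order forces $(\vw^\top\vw_i^*)^k\le(1-\delta^2/5)^\ell$ to be negligible off $\gT_i(\delta)$ (localization), and the exponent $1/3$ falls out of the ReLU Hermite decay $|\hsigma_k|=\Theta(k^{-5/4})$ via $\sqrt{L(\mu)}\,\ell^{1/2}\gtrsim|a_i^*|\,\ell^{1-5/4}$. The paper explicitly remarks that the plain dual-certificate bound only gives $\zeta^{1/4}$, which is why this sharper construction is needed.

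\textbf{Part (iii): the decomposition.} Your dyadic-shell plus Cauchy--Schwarz plan can control the transverse part of $\vv_i-\vw_i^*$, but you do not say how the parallel component (the scalar $\sum_{j\in\gT_i}a_j\|\vw_j\|\cos\delta_j-1$) is controlled without running into the same cross-term contamination as in part~(ii), and the claimed exponent $3/4$ is asserted rather than derived. The paper instead decomposes $R=R_1+R_2+R_3$, observes $\|R_1\|_2^2=\Omegarlvt(1)\sum_i\|\vv_i-\vw_i^*\|_2^2$ (exact-parametrization strong convexity, Lemma~\ref{lem: R1}), and gets the $(\zeta/\lambda)^{3/4}$ from $\|R_1\|\le\|R\|+\|R_2\|+\|R_3\|$ with $\|R_2\|=\Orlvt((\zeta/\lambda)^{3/4})$ coming from the small-support structure of the activation-pattern mismatch (Lemma~\ref{lem: R2}). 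This $R_1/R_2/R_3$ split is what makes the parallel bookkeeping clean, and it is the missing ingredient in your outline.
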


    These properties give us a sense of what the network should look like when loss is small: neurons have large norm only if they are around the ground-truth directions. 
    Moreover, when $\zeta/\lambda\to0$, student neuron must align with one of teacher neurons ($\delta_j=0$) or norm becomes 0 ($|a_j|\norm{\vw_j}=0$). This can be understood from the $\ell_1$ regularized loss (equivalent to $\ell_2$ regularization on both layers) that promotes the sparsity over the distribution of neurons. In the rest of this section, we discuss new techniques such as dual certificate that we develop for the proof. 

\vspace{-5pt}
\subsection{Neurons concentrate around teacher neurons: dual certificate analysis and test function}\label{sec: dual cert and test fun}
We focus on Lemma~\ref{lem: structure prop}(i)(ii) here. We will use a dual certificate technique similar to \citet{poon2023geometry} to prove Lemma~\ref{lem: structure prop}(i), and a more general construction of test function to prove Lemma~\ref{lem: structure prop}(ii). In below, we consider a relaxed version of original optimization problem \eqref{eq: loss} by allowing infinite number of neurons, i.e., distribution of neurons, with $\sigma_{\ge2}(x)=\text{ReLU}(x)-1/\sqrt{2\pi}-x/2$ instead of ReLU:
\begin{align}\label{eq: mu loss}
    \min_{\mu\in \gM(\sS^{d-1})} L_\lambda(\mu) 
    := L(\mu;\sigma_{\ge_2}) + \lambda|\mu|_1,
\end{align}
where $\mu_\lambda^*$ is the minimizer.
We use $\sigma_{\ge2}$ activation because this is the effective activation when linear terms $\alpha,\beta$ are perfectly fitted (remove 0th and 1st order Hermite expansion of ReLU, see Claim~\ref{lem: loss tensor decomp} and \eqref{eq: mu loss appendix} in appendix). 

This is the loss function we would have in the idealized setting: (1) linear term $\alpha,\beta$ reach their global minima (this is easy to achieve as loss is convex in them); (2) use $\ell_1$ regularization instead of $\ell_2$ regularization, since this is the case when the first and second layer norm are balanced (weight decay encourages this to happen). 
Note that the results in this part can handle almost all activation as long as its Hermite expansion is well-defined, generalizing \citet{zhou2021local} that can only handle absolute/ReLU activation. In below we will focus on the activation $\sigma_{\ge2}$ for simplicity. 

\vspace{-2pt}
\paragraph{Dual certificate}
This optimization problem \eqref{eq: mu loss} can be viewed as a natural extension of the classical compressed sensing problem \citep{donoho2006compressed,candes2006robust} and Lasso-type problem \citep{tibshirani1996regression} in the infinite dimensional space, which has been studied in recent years \citep{bach2017breaking,poon2023geometry}. One common way is to study its dual problem. 
The dual solution $p_0(\vx)$ (maps $\R^d$ to $\R$) of \eqref{eq: mu loss} when $\lambda=0$ satisfies $\E_\vx[p_0(\vx)\sgmtwo(\vw^\top\vx)]\in \partial|\mu_*|(\sS^{d-1})$ (more detailed discussions on this dual problem can be found in e.g., \citet{poon2023geometry}). Here $\eta(\vw)=\E_\vx[p(\vx)\sgmtwo(\vw^\top\vx)]$ is often called dual certificate, as it serves as a certificate of whether a solution $\mu$ is optimal. Its meaning will be clear in the discussions below. 

We now introduce the notion of non-degenerate dual certificate, motivated by \citet{poon2023geometry}. Note that the condition $\eta(\vw)\in\partial|\mu_*|(\sS^{d-1})$ implies that $\eta(\vw_i^*)=\sign(a_i^*)$ and $\norm{\eta}_\infty\le 1$. The following definition is a slightly stronger version of the above implications as it requires $\eta$ to decay at least quadratic when moves away from $\vw_i^*$.

\begin{wrapfigure}[3]{R}{0.3\textwidth}
    \centering
    \vspace{-69pt}
    \adjustbox{scale=0.6}{
    \begin{tikzpicture}
      %\draw[ultra thin, gray!30](0,0) grid (5,5);
      \draw[thick,blue](0,2.5) -- (0.25,2.5) 
      ..controls (0.5,2.5) and (0.25,4).. (0.5,4) 
      ..controls (0.75,4) and (0.75,2.5).. (1,2.5) -- (1.5,2.5)
      ..controls (1.7,2.5) and (1.9,1).. (2.0,1)
      ..controls (2.25,1) and (2.25,2.5).. (2.5,2.5) -- (2.75,2.5)
      ..controls (3.0,2.5) and (3.0,4).. (3.25,4)
      ..controls (3.5,4) and (3.5,2.5).. (3.75,2.5) -- (4.0,2.5);
      \draw[dashed](0,4) -- (4,4);
      \node at (4.25,4) {+1};
      \draw[dashed](0,1) -- (4,1);
      \node at (4.25,1) {-1};
      \draw[thick](0,2.5) -- (4,2.5);
      \node at (4.25,2.5) {0};
      \node[blue] at (-.25,3.5) {$\eta(\vw)$};
      \draw[dashed](0.5,2.5) -- (0.5,4);
      \node at (0.5,2.25) {$\vw_1^*$};
      \draw[dashed](2.02, 2.5) -- (2.02,1);
      \node at (2,2.75) {$\vw_2^*$};
      \draw[dashed](3.25,2.5) -- (3.25,4);
      \node at (3.25, 2.25) {$\vw_3^*$};
    \end{tikzpicture}
    }
    \vspace{-5pt}
    \caption{Dual certificate $\eta$.}
    \label{fig: dual cert}
\end{wrapfigure}

\begin{restatable}[Non-degenerate dual certificate]{definition}{defnondegedual}\label{def: non-dege dual cert}
    $\eta(\vw)$ is called a non-degenerate dual certificate if there exists $p(\vx)$ such that $\eta(\vw)=\E_\vx[p(\vx)\sgmtwo(\vw^\top\vx)]$ for $\vw\in\sS^{d-1}$ and
    \begin{enumerate}[label = (\roman*), leftmargin=2em]
        \item $\eta(\vw_i^*)=\sign(a_i^*)$ for $i=1,\ldots,m_*$.
        \item $|\eta(\vw)|\le 1-\rho_\eta \delta(\vw,\vw_i^*)^2$ if $\vw\in \gT_i$, where $\delta(\vw,\vw_i^*)=\angle(\vw,\vw_i^*)$.
    \end{enumerate}
\end{restatable}

The existence and construction of the non-degenerate dual certificate is deferred to Appendix~\ref{sec: dual cert}. We focus on the implications of such non-degenerate dual certificate below.

Roughly speaking, the dual certificate only focuses on the position of ground-truth directions $\vw_i^*$ as it decays fast when moving away from these directions (Figure~\ref{fig: dual cert}). Thus, if $\mu$ exactly recovers ground-truth $\mu_*$, then we have $\langle \eta, \mu_*\rangle = |\mu_*|_1$. The gap between $\langle \eta, \mu\rangle$ and $|\mu|_1$ is large when $\mu$ is away from $\mu_*$. Therefore, $\eta$ can be viewed as a certificate to test the optimality of $\mu$. The lemma below makes it more precise.
\begin{lemma}\label{lem: eta prop main text}
    Given a non-degenerate dual certificate $\eta$, then
    \begin{enumerate}[label = (\roman*), leftmargin=2em]
        \item $\langle\eta,\mu^*\rangle = |\mu^*|_1$ and $|\langle \eta,\mu-\mu^*\rangle|\le \norm{p}_2 \sqrt{L(\mu)}$.
        \item For any measure $\mu\in\gM(\sS^{d-1})$, $|\langle \eta,\mu\rangle|\le |\mu|_1-\rho_\eta\sum_{i\in[m_*]}
        \int_{\gT_i} \delta(\vw,\vw_i^*)^2 \dif |\mu|(\vw)$. 
    \end{enumerate}
\end{lemma}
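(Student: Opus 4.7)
The plan is to prove each part by a short integration argument, with essentially all of the semantic content residing in Definition~\ref{def: non-dege dual cert}.

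For part~(i), the first equality $\langle\eta,\mu^*\rangle=|\mu^*|_1$ I would obtain by a direct evaluation: since $\norm{\vw_i^*}_2=1$ the target measure is $\mu^*=\sum_{i=1}^{m_*}a_i^*\delta_{\vw_i^*}$, and condition~(i) of the dual-certificate definition gives $\eta(\vw_i^*)=\sign(a_i^*)$, so
\begin{align*}
\langle\eta,\mu^*\rangle=\sum_{i=1}^{m_*}a_i^*\eta(\vw_i^*)=\sum_{i=1}^{m_*}a_i^*\sign(a_i^*)=\sum_{i=1}^{m_*}|a_i^*|=|\mu^*|_1.
\end{align*}
For the inequality, I would plug in the representation $\eta(\vw)=\E_\vx[p(\vx)\sgmtwo(\vw^\top\vx)]$ and use Fubini to swap the order of integration, recognizing the inner integral as the residual of the idealized model:
\begin{align*}
\langle\eta,\mu-\mu^*\rangle=\E_\vx\Big[p(\vx)\int_{\sS^{d-1}}\sgmtwo(\vw^\top\vx)\dif(\mu-\mu^*)(\vw)\Big]=\E_\vx\big[p(\vx)\,(f_\mu(\vx)-\tldf_*(\vx))\big].
\end{align*}
Cauchy--Schwarz then yields $|\langle\eta,\mu-\mu^*\rangle|\le\norm{p}_2\sqrt{L(\mu)}$, since $L(\mu)=\E_\vx[(f_\mu-\tldf_*)^2]$ is precisely the square loss of the idealized model \eqref{eq: mu loss}.

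For part~(ii), I would first apply the triangle inequality $|\langle\eta,\mu\rangle|\le\int_{\sS^{d-1}}|\eta(\vw)|\dif|\mu|(\vw)$, then invoke the partition $\sS^{d-1}=\bigcup_{i\in[m_*]}\gT_i$ together with the quadratic decay bound from condition~(ii):
\begin{align*}
|\langle\eta,\mu\rangle|\le\sum_{i\in[m_*]}\int_{\gT_i}\big(1-\rho_\eta\,\delta(\vw,\vw_i^*)^2\big)\dif|\mu|(\vw)=|\mu|_1-\rho_\eta\sum_{i\in[m_*]}\int_{\gT_i}\delta(\vw,\vw_i^*)^2\dif|\mu|(\vw),
\end{align*}
which is the claimed bound.

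Neither half looks technically difficult once Definition~\ref{def: non-dege dual cert} is assumed. The only bookkeeping obstacle I anticipate is justifying the Fubini interchange in part~(i), which requires $p\in L^2$ against the Gaussian measure and finite total variation of $\mu,\mu^*$ so that $(\vx,\vw)\mapsto p(\vx)\sgmtwo(\vw^\top\vx)$ is jointly integrable; both are standing features of the setup. The genuinely hard work, and where I would expect the real effort to be spent, is producing a non-degenerate $\eta$ with explicit quantitative control over $\rho_\eta$ and $\norm{p}_2$ --- a task reserved for Appendix~\ref{sec: dual cert} rather than for this lemma.
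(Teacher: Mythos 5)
Your proof is correct and follows essentially the same route as the paper's: direct evaluation at the spikes for $\langle\eta,\mu^*\rangle=|\mu^*|_1$, Fubini plus Cauchy--Schwarz for the $\norm{p}_2\sqrt{L(\mu)}$ bound, and the triangle inequality combined with the quadratic-decay condition over the partition $\{\gT_i\}$ for part~(ii). No substantive differences to report.
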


In the finite width case, we have $\sum_{i\in[m_*]} \int_{\gT_i} \delta(\vw,\vw_i^*)^2 \dif |\mu|(\vw)=\sum_i |a_i|\norm{\vw_i}\delta_i^2$. This is exactly the quantity that we are interested in Lemma~\ref{lem: structure prop}. 

To see the usefulness of Lemma~\ref{lem: eta prop main text}, we show a proof for total norm bound of the optimal solution $\mu_\lambda^*$. The proof for general $\mu$ with optimality gap $\zeta$ is similar (Lemma~\ref{lem: weighted norm bound}).
 
\begin{claim}[Lemma~\ref{lem: structure prop}(i) for $\mu_\lambda^*$]
    $\sum_{i\in[m_*]}
        \int_{\gT_i} \delta(\vw,\vw_i^*)^2 \dif |\mu_\lambda^*|(\vw) \le \Orlvt(\lambda)$
\end{claim}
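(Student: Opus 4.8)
The plan is to exploit the two properties of the non-degenerate dual certificate $\eta$ in Lemma~\ref{lem: eta prop main text} together with the optimality of $\mu_\lambda^*$ for problem \eqref{eq: mu loss}. Write $w := \sum_{i\in[m_*]}\int_{\gT_i}\delta(\vw,\vw_i^*)^2\dif|\mu_\lambda^*|(\vw)$ for the target quantity. On one hand, part (ii) of Lemma~\ref{lem: eta prop main text} applied to $\mu=\mu_\lambda^*$ gives
\[
    \langle\eta,\mu_\lambda^*\rangle \le |\mu_\lambda^*|_1 - \rho_\eta\, w .
\]
On the other hand, part (i) gives $\langle\eta,\mu^*\rangle = |\mu^*|_1$ and the perturbation bound $|\langle\eta,\mu_\lambda^*-\mu^*\rangle|\le \norm{p}_2\sqrt{L(\mu_\lambda^*)}$, so that $\langle\eta,\mu_\lambda^*\rangle \ge |\mu^*|_1 - \norm{p}_2\sqrt{L(\mu_\lambda^*)}$. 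Combining the two displays,
\[
    \rho_\eta\, w \le |\mu_\lambda^*|_1 - |\mu^*|_1 + \norm{p}_2\sqrt{L(\mu_\lambda^*)} .
\]

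It remains to control the two terms on the right. For the first, I would use that $\mu_\lambda^*$ minimizes $L_\lambda(\mu)=L(\mu;\sigma_{\ge2})+\lambda|\mu|_1$, so comparing against the competitor $\mu^*$ (the ground-truth measure, for which $L(\mu^*;\sigma_{\ge2})=0$) yields $L(\mu_\lambda^*;\sigma_{\ge2})+\lambda|\mu_\lambda^*|_1 \le \lambda|\mu^*|_1$, hence both $|\mu_\lambda^*|_1-|\mu^*|_1 \le -L(\mu_\lambda^*;\sigma_{\ge2})/\lambda \le 0$ and $L(\mu_\lambda^*)=L(\mu_\lambda^*;\sigma_{\ge2})\le \lambda(|\mu^*|_1-|\mu_\lambda^*|_1)\le \lambda|\mu^*|_1 = \Orlvt(\lambda)$. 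Plugging these back in, the first term is nonpositive and the second is $\norm{p}_2\sqrt{\Orlvt(\lambda)}=\Orlvt(\sqrt{\lambda})$, giving $w \le \Orlvt(\sqrt\lambda/\rho_\eta)$. This is off by a square root from the claimed $\Orlvt(\lambda)$, so I would tighten the bound on $L(\mu_\lambda^*)$: instead of $\sqrt{L}\le\sqrt{\Orlvt(\lambda)}$, note from $L(\mu_\lambda^*;\sigma_{\ge2}) + \lambda|\mu_\lambda^*|_1\le\lambda|\mu^*|_1$ that actually $L(\mu_\lambda^*)\le \lambda(|\mu^*|_1-|\mu_\lambda^*|_1)$, and one can feed the near-optimality back in — but the cleaner route is to observe that $\langle\eta,\mu_\lambda^*-\mu^*\rangle$ should be bounded not by $\norm p_2\sqrt{L}$ alone but combined with the fact that the certificate is $\eps_0$-close to an exact one, so that the relevant comparison becomes quadratic in the residual; tracking this gives the full $\Orlvt(\lambda)$.

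The key steps, in order: (1) instantiate Lemma~\ref{lem: eta prop main text}(ii) at $\mu_\lambda^*$ to get $\langle\eta,\mu_\lambda^*\rangle\le|\mu_\lambda^*|_1-\rho_\eta w$; (2) instantiate Lemma~\ref{lem: eta prop main text}(i) to lower bound $\langle\eta,\mu_\lambda^*\rangle$ by $|\mu^*|_1-\norm p_2\sqrt{L(\mu_\lambda^*)}$; (3) use the optimality inequality $L_\lambda(\mu_\lambda^*)\le L_\lambda(\mu^*)=\lambda|\mu^*|_1$ to bound $|\mu_\lambda^*|_1-|\mu^*|_1\le 0$ and $L(\mu_\lambda^*)=\Orlvt(\lambda)$; (4) assemble to get $\rho_\eta w = \Orlvt(\sqrt\lambda)$, then sharpen step (2)/(3) using the self-improving argument (feeding the crude norm bound back into the perturbation estimate, exploiting that the leading-order term in $|\mu^*|_1-|\mu_\lambda^*|_1$ cancels) to upgrade to $\Orlvt(\lambda)$; (5) recall $\rho_\eta=\Thetarlvt(1)$ from the construction in Appendix~\ref{sec: dual cert}. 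The main obstacle I anticipate is step (4): getting the sharp $\Orlvt(\lambda)$ rather than $\Orlvt(\sqrt\lambda)$ requires carefully pairing the quadratic decay of $\eta$ near each $\vw_i^*$ with the $\ell_2$-to-$\ell_1$ strong-convexity-type slack in the regularized objective, i.e. showing that the first-order term in the Taylor expansion of $\langle\eta,\mu_\lambda^*-\mu^*\rangle$ around the ground truth vanishes and only the second-order term (which is itself controlled by $w$) survives — this is exactly the place where the non-degeneracy of the dual certificate does the real work, and where one must be careful not to lose a square root.
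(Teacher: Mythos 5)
Your first three steps line up with the paper's argument: instantiate Lemma~\ref{lem: eta prop main text}(ii) at $\mu_\lambda^*$, lower-bound $\langle\eta,\mu_\lambda^*\rangle$ via part (i), and invoke optimality of $\mu_\lambda^*$ to get both $|\mu_\lambda^*|_1\le|\mu^*|_1$ and $L(\mu_\lambda^*)\le\lambda\bigl(|\mu^*|_1-|\mu_\lambda^*|_1\bigr)$. You also correctly flag that substituting $L(\mu_\lambda^*)=\Orlvt(\lambda)$ into $\norm{p}_2\sqrt{L(\mu_\lambda^*)}$ loses a square root. The problem is that your proposed fix for this — invoking that ``the certificate is $\eps_0$-close to an exact one'' and ``the first-order term in the Taylor expansion of $\langle\eta,\mu_\lambda^*-\mu^*\rangle$ vanishes'' — does not describe a real mechanism here; $\eps_0$ is a Stage-1 quantity that plays no role in this argument, and no Taylor expansion of $\langle\eta,\cdot\rangle$ around $\mu^*$ is involved.

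The paper instead separately proves (Lemma~\ref{lem: mu lamb prop}(ii)) that $L(\mu_\lambda^*)\le\lambda^2\norm{p}_2^2=\Orlvt(\lambda^2)$, via the bootstrapping inequality $L(\mu_\lambda^*)\le L(\mu^*)-\lambda\langle\eta,\mu_\lambda^*-\mu^*\rangle\le\lambda\norm{p}_2\sqrt{L(\mu_\lambda^*)}$ (which itself uses $|\mu_\lambda^*|_1-\langle\eta,\mu_\lambda^*\rangle\ge 0$ and $\langle\eta,\mu^*\rangle=|\mu^*|_1$). Substituting $L(\mu_\lambda^*)=\Orlvt(\lambda^2)$ together with $|\mu_\lambda^*|_1-|\mu^*|_1\le 0$ then gives $\rho_\eta w\le\Orlvt(\lambda)$ immediately. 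As it happens, the pieces you already laid down also suffice without that lemma: setting $s=|\mu^*|_1-|\mu_\lambda^*|_1\ge 0$, your two inequalities read $\rho_\eta w\le -s+\norm{p}_2\sqrt{\lambda s}$, and maximizing the right-hand side over $s\ge 0$ yields $\rho_\eta w\le\norm{p}_2^2\lambda/4=\Orlvt(\lambda)$. So the gap is not conceptual; it is that step (4) of your plan is left unexecuted, and the mechanism you gesture at (``quadratic decay paired with strong-convexity slack'') is not what actually closes it — a short algebraic optimization, or the paper's self-improving bound on $L(\mu_\lambda^*)$, does.
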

\begin{proof}
    It is not hard to show $|\mu_\lambda^*|_1 \le |\mu^*|_1$ (Lemma~\ref{lem: mu lamb prop}) so we have
\begin{align*}
        |\mu_\lambda^*|_1 - |\mu^*|_1 - \langle \eta, \mu_\lambda^*-\mu^*\rangle 
        \le - \langle \eta, \mu_\lambda^*-\mu^*\rangle.
\end{align*}
Using Lemma~\ref{lem: eta prop main text} and the fact $L(\mu_\lambda^*)=\Orlvt(\lambda^2)$ from Lemma~\ref{lem: mu lamb prop},
\begin{align*}
        \lhs =& |\mu_\lambda^*|_1  - \langle \eta, \mu_\lambda^*\rangle
        \ge \rho_\eta\sum_{i\in[m_*]}
        \int_{\gT_i} \delta(\vw,\vw_i^*)^2 \dif |\mu_\lambda^*|(\vw), \quad
        \rhs \le \norm{p}_2 \sqrt{L(\mu_\lambda^*)}=\Orlvt(\lambda).
\end{align*}
We have $\sum_{i\in[m_*]}
        \int_{\gT_i} \delta(\vw,\vw_i^*)^2 \dif |\mu_\lambda^*|(\vw) =\Orlvt(\lambda)$.    
\end{proof}

\paragraph{Test function}
The idea of using test function is to identify certain properties of the target function/distribution that we are interested in. Specifically, we construct test function so that it only correlates well with the target function that has the desired property. Generally speaking, the dual certificate above can be consider as a specific case of a  test function: the correlation between dual certificate $\eta$ and distribution of neurons $\mu$ is large (reach $|\mu|_1$) only when $\mu\approx\mu_*$. 

In below, we use this test function idea to show that every ground-truth direction has close-by neuron (Lemma~\ref{lem: structure prop}(ii)). 
Denote $\gT_i(\delta):=\{j:\angle(\vw_j,\vw_i)\le \delta\}\cap\gT_i$ as the neurons that are $\delta$-close to $\vw_i^*$.
\begin{lemma}[Lemma~\ref{lem: structure prop}(ii), informal]
    Given the optimality gap $\zeta$, we have the total mass near each target direction is large, i.e., $\mu(\gT_i(\delta))\sign(a_i^*)\ge|a_i^*|/2$ for all $i\in[m_*]$ and any $\delta\ge  \Thetarlvt\left( \zeta^{1/3}\right)$.
\end{lemma}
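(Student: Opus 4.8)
The plan is to prove the lower bound $\mu(\gT_i(\delta))\sign(a_i^*)\ge |a_i^*|/2$ by constructing, for each fixed $i\in[m_*]$, a \emph{test function} $g_i(\vw)=\E_\vx[p_i(\vx)\sgmtwo(\vw^\top\vx)]$ that behaves like a smooth bump: it should equal $\sign(a_i^*)$ at $\vw_i^*$, be small (say $\le 1/4$ in absolute value) outside the $\delta$-neighborhood $\gT_i(\delta)$, and have controlled $\norm{p_i}_2$ (polynomial in the problem-dependent parameters and in $1/\delta$, which is where the $\delta\ge\Thetarlvt(\zeta^{1/3})$ threshold will come from). This is a relaxation of the non-degenerate dual certificate of Definition~\ref{def: non-dege dual cert}: instead of requiring the bump to be near $1$ at all teacher directions simultaneously, I only need one bump localized around $\vw_i^*$, which gives more freedom in the construction and a better (smaller) norm. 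The natural construction is to take $p_i$ to be a polynomial in $\vw_i^{*\top}\vx$ (and possibly the orthogonal complement) so that by Hermite orthogonality $g_i(\vw)$ becomes an explicit univariate function of $t=\bvw^\top\vw_i^*=\cos\angle(\vw,\vw_i^*)$; one then picks the polynomial coefficients so that this univariate function is $\approx 1$ near $t=1$ and decays for $t$ bounded away from $1$. A Chebyshev-type polynomial of degree $\tldTheta(1/\delta)$ does exactly this, and its $L^2$ norm against the Gaussian is $\poly(1/\delta)$, matching the claimed exponent once balanced against $\sqrt{L(\mu)}\lesssim\sqrt{\zeta}$.

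The key steps, in order, would be: (1) Reduce to the measure problem \eqref{eq: mu loss}: since we are in the regime where linear terms are fitted and norms balanced (Stage~3 hypotheses), the finite-width network corresponds to a measure $\mu$ with optimality gap $\zeta$, so it suffices to argue about $\mu$ and its optimal counterpart $\mu_\lambda^*$. (2) Establish the test-function pairing inequality: as in Lemma~\ref{lem: eta prop main text}(i), $|\langle g_i,\mu\rangle - \langle g_i,\mu^*\rangle|\le \norm{p_i}_2\sqrt{L(\mu)}$, and $\langle g_i,\mu^*\rangle = \sum_{k}g_i(\vw_k^*)a_k^*\norm{\vw_k^*}$; if $g_i$ is chosen to vanish (or be tiny) at $\vw_k^*$ for $k\ne i$ and equals $\sign(a_i^*)$ at $\vw_i^*$, then $\langle g_i,\mu^*\rangle=|a_i^*|+(\text{small})$. (3) Split $\langle g_i,\mu\rangle$ into the contribution from $\gT_i(\delta)$ and the complement; bound the complement contribution by $\frac14|\mu|_1$ using the decay of $g_i$ outside the neighborhood, and bound $|\mu|_1\le|\mu^*|_1+\Orlvt(\zeta/\lambda)$ (from Lemma~\ref{lem: structure prop}(i)-type control / $|\mu_\lambda^*|_1\le|\mu^*|_1$). (4) Combine: $\sign(a_i^*)\mu(\gT_i(\delta))\cdot 1 \ge \langle g_i,\mu\rangle|_{\gT_i(\delta)}\cdot\sign(a_i^*) \ge |a_i^*| - \norm{p_i}_2\sqrt{\zeta} - \frac14(|\mu^*|_1+\Orlvt(\zeta/\lambda)) - (\text{cross terms})$, and choose $\delta$ so that $\norm{p_i}_2\sqrt\zeta=\poly(1/\delta)\sqrt\zeta\le |a_i^*|/8$, which forces $\delta\gtrsim\zeta^{1/3}$ up to problem-dependent factors; the remaining slack has to be absorbed into the $1/4$ margin, which needs the bump to actually be close to $1$ on a slightly larger set and close to $0$ outside, handled by taking two nested neighborhoods.

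The main obstacle I expect is step (2)-(3): controlling the \emph{off-target} behavior of the test function. Making a univariate bump in $t=\bvw^\top\vw_i^*$ that is $\approx 1$ at $t=1$ and small for $t\le 1-c\delta^2$ is standard (Chebyshev/Jackson kernels), but the bump must also be small at the other teacher directions $\vw_k^*$ — these have $\bvw_k^{*\top}\vw_i^*=\cos\angle(\vw_k^*,\vw_i^*)\le\cos\Delta<1$, so the decay just needs to kick in before $\cos\Delta$, which is fine as long as $\delta$ is small relative to $\Delta$; more delicate is that $g_i(\vw)$ is really a function on the whole sphere, not just of the single coordinate $t$, so one must verify the component of $p_i$ orthogonal to $\vw_i^*$ doesn't create spurious mass — this is where restricting $p_i$ to depend only on $\vw_i^{*\top}\vx$ (or more carefully, building $p_i$ from low-degree Hermite components so the pairing collapses to a clean univariate expression) is essential. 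A secondary subtlety is the sign bookkeeping when $a_i^*<0$ and the interaction between the $|\mu|_1$ bound and the $\zeta/\lambda$ error term, which must be shown to be subdominant to $|a_i^*|$ in the relevant regime $\Omegarlvt(\lambda^2)\le\zeta\le\Orlvt(\lambda^{9/5})$. I would first nail the univariate bump lemma with explicit degree–norm tradeoff, then feed it through steps (1)-(4) mechanically.
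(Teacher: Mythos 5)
Your high-level framework — construct a localized test function $g_i(\vw)=\E_\vx[p_i(\vx)\sgmtwo(\vw^\top\vx)]$ depending only on $t=\bvw^\top\vw_i^*$, pair it against the residual via $|\langle g_i,\mu-\mu^*\rangle|\le\norm{p_i}_2\sqrt{L(\mu)}$, split $\langle g_i,\mu\rangle$ into the $\gT_i(\delta)$ contribution and its complement, and balance $\norm{p_i}_2\sqrt{\zeta}$ against $|a_i^*|$ to extract the $\zeta$-dependence of $\delta$ — is precisely the paper's strategy (cf.\ the proof of Lemma~\ref{lem: closeby neuron}). Your off-target bookkeeping (other teacher directions, $|\mu|_1$ bound via Lemma~\ref{lem: loss norm bound}, sign handling) is also correct.

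The concrete construction, however, would fail, and this is where the real content of the lemma lives. You propose to make $g_i(t)$ a Chebyshev-type bump of degree $\tldTheta(1/\delta)$ and claim $\norm{p_i}_2=\poly(1/\delta)$. The issue is that $g_i(t)=\sum_{k}c_k\hsigma_k\,t^k$ where $p_i=\sum_k c_k h_k(\vw_i^{*\top}\vx)$ and $\norm{p_i}_2^2=\sum_k c_k^2$. To make $g_i$ equal a prescribed polynomial $q(t)=\sum_k q_k t^k$, you are forced to take $c_k=q_k/\hsigma_k$, and since $|\hsigma_k|=\Theta(k^{-5/4})$ this inflates $\norm{p_i}_2^2=\sum_k q_k^2 k^{5/2}$. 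Chebyshev polynomials of degree $n$ have monomial coefficients of magnitude up to $2^{n-1}$, so $\norm{p_i}_2$ would be exponential in $n\sim 1/\delta$, not polynomial — you cannot ``push'' a Chebyshev kernel through the activation pairing. Conversely, a single monomial $t^m$ (which does localize to angle $1/\sqrt{m}$) requires $m\sim1/\delta^2$ and gives $\norm{p_i}_2\sim 1/\hsigma_m\sim\delta^{-5/2}$, which only yields $\delta\gtrsim\zeta^{1/5}$, strictly worse than $\zeta^{1/3}$. The paper's construction is the simpler and the correct one: $p(\vx)=\sum_{\ell\le k<2\ell}\sign(a_i^*)\sign(\hsigma_k)h_k(\vw_i^{*\top}\vx)$ with $\ell=\Theta(1/\delta^2)$, i.e.\ a \emph{truncated Hermite window}, not a Chebyshev kernel. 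Then $\norm{p}_2=\sqrt{\ell}$ (orthonormality) and the peak value is $\sum_{\ell\le k<2\ell}|\hsigma_k|=\Theta(\ell^{1-c_\sigma})$, so the effective normalized norm scales as $\ell^{c_\sigma-1/2}=\delta^{-(2c_\sigma-1)}$, and with $c_\sigma=5/4$ for ReLU this is $\delta^{-3/2}$, giving $\delta\gtrsim\zeta^{1/3}$. So the $1/3$ exponent is not a generic bump-norm phenomenon but is tied to the $k^{-5/4}$ decay of ReLU's Hermite coefficients, which your proposal does not account for. You should replace the Chebyshev bump with this truncated-Hermite construction; the rest of your plan then goes through essentially as in the paper.
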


\begin{wrapfigure}[5]{R}{0.25\textwidth}
    \centering
    \vspace{-20pt}
    \adjustbox{scale=0.7}{
    \begin{tikzpicture}
      \draw[thick,blue](0,2.5) -- (3.0,2.5) 
      ..controls (3.2,2.5) and (3.2,4).. (3.25,4)
      ..controls (3.3,4) and (3.3,2.5).. (3.5,2.5) -- (4.0,2.5);
      \draw[thick](0,2.5) -- (4,2.5);
      \node at (4.25,2.5) {0};
      \node[blue] at (2.25,3.5) {$g(\vx)$};
      \node at (0.5,2.25) {$\vw_1^*$};
      \node at (2,2.25) {$\vw_2^*$};
      \draw[dashed](3.25,2.5) -- (3.25,4);
      \node at (3.25, 2.25) {$\vw_3^*$};
    \end{tikzpicture}
    }
    \vspace{-10pt}
    \caption{Test function $g$.}
    \label{fig: test fun}
\end{wrapfigure}
Note that although the results in the dual certificate part (Lemma~\ref{lem: eta prop main text}(ii)) can imply that there are neurons close to teacher neurons, the bound we get here using carefully designed test function are sharper ($\zeta^{1/3}$ vs. $\zeta^{1/4}$). This is in fact important to the descent direction construction (Lemma~\ref{lem: descent dir informal}). 

In the proof, we view the residual $R(\vx)=f_\mu(\vx)-f_*(\vx)$ as the target function and construct test function that will only have large correlation if there is a teacher neuron that have no close student neurons. Specifically, the test function $g$ only consists of high-order Hermite polynomial such that it is large around the ground-truth direction and decays fast when moving away (Figure~\ref{fig: test fun}). It looks like a single spike in dual certificate $\eta$, but in fact decays much faster than $\eta$ when moving away. It is more flexible to choose test function than dual certificate, so test function $g$ can focus only on a local region of one ground-truth direction and give a better guarantee than dual certificate analysis.

\subsection{Average neuron is close to teacher neuron: residual decomposition and average neuron}\label{subsec: residual decomp}
We give the proof idea for Lemma~\ref{lem: structure prop}(iii) that shows average neuron $\vv_i$ is close to teacher neuron $\vw_i^*$ using the residual decomposition $R=R_1+R_2+R_3$.

The key is to observe that $R_1$ is an analogue to exact-parametrization case where loss is often strongly-convex, so we have $\norm{R_1}_2^2=\Omegarlvt(1)\sum_i\norm{\vv_i-\vw_i^*}_2^2$. Then the goal is to upper bound $\norm{R_1}$. Given the decomposition $R=R_1+R_2+R_3$, it is easy to bound $\norm{R_1}\le\norm{R}+\norm{R_2}+\norm{R_3}$. We focus on $\norm{R_2}$ as the other two are not hard to bound (loss is small in local regime). $R_2$ is in fact closely related with the total weighted norm bound in Lemma~\ref{lem: structure prop}: we show $\norm{R_2}=\Orlvt(1)\left(\sum_{j\in\gT_i} |a_j|\norm{\vw_j}_2 \delta_j^2\right)^{3/2}=\Orlvt((\zeta/\lambda)^{3/2})$. Thus, we get a bound for $\norm{\vv_i-\vw_i^*}$. See Appendix~\ref{appendix: subsec res decomp} for details.

\vspace{-5pt}
\section{Conclusion}\label{sec: conclusion}
\vspace{-3pt}

In this paper we showed that gradient descent converges in a large local region depending on the complexity of the teacher network, and the local convergence allows 2-layer networks to perform a strong notion of feature learning (matching the directions of ground-truth teacher networks). We hope our result gives a better understanding of why gradient-based training is important for feature learning in neural networks. Our results rely on adding standard weight decay and new constructions of dual certificate and test functions, which can be helpful in understanding local optimization landscape in other problems.  A natural but challenging next step is to understand whether the intermediate steps are also important for feature learning.

\section*{Acknowledgement}
Rong Ge and Mo Zhou are supported by NSF Award DMS-2031849 and CCF-1845171 (CAREER).

\bibliographystyle{plainnat}
\bibliography{ref.bib}

\newpage
\appendix
\section{Some properties of Hermite polynomials}\label{appendix: hermite}
In this section, we give several properties of Hermite polynomials that are useful in our analysis. See \cite{o2021analysis} for a more complete discussion on Hermite polynomials. Let $H_k$ be the probabilists’ Hermite polynomial where
\begin{align*}
    H_k(x)=(-1)^k e^{x^2/2}\frac{\dif ^k}{\dif x^k}(e^{-x^2/2})
\end{align*}
and $h_k=\frac{1}{\sqrt{k!}}H_k$ be the normalized Hermite polynomials. 

Hermite polynomials are classical orthogonal polynomials, which means $\E_{x\sim N(0,1)}[h_m(x)h_n(x)]=1$ if $m=n$ and otherwise 0. Given a function $\sigma$, we call $\sigma(x)=\sum_{k=0}^\infty \hsigma_k h_k(x)$ as the Hermit expansion of $\sigma$ and $\hsigma_k=\E_{x\sim N(0,1)}[\sigma(x)h_k(x)]$ as the $k$-th Hermite coefficient of $\sigma$.

The following is a useful property of Hermite polynomial.
\begin{claim}[\citep{o2021analysis}, Section 11.2]\label{lem: hermite product}
Let $(x,y)$ be $\rho$-correlated standard normal variables (that is,
both $x,y$ have marginal distribution $N(0,1)$ and $\E[xy] = \rho$). Then,
$\E[h_m(x)h_n(y)] = \rho^n \delta_{mn}$, where $\delta_{mn}=1$ if $m=n$ and otherwise 0.
\end{claim}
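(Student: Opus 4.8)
The plan is to derive this from the exponential generating function for Hermite polynomials together with the moment generating function of a bivariate Gaussian. Recall the generating identity $\sum_{n=0}^\infty H_n(x)\frac{t^n}{n!} = e^{tx-t^2/2}$, which follows directly from the definition $H_n(x)=(-1)^n e^{x^2/2}\frac{\dif^n}{\dif x^n}e^{-x^2/2}$ by Taylor-expanding $e^{-(x-t)^2/2}$ in $t$ around $t=0$ and multiplying by $e^{x^2/2}$. I would begin by recording this identity, valid for all $x\in\R$ and all $t$ with absolute convergence.

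Next I would compute $\E\!\big[e^{sx-s^2/2}\,e^{ty-t^2/2}\big]$ for $(x,y)$ a $\rho$-correlated standard normal pair. Since $sx+ty$ is a centered Gaussian with variance $s^2+2\rho st+t^2$, the Gaussian MGF gives $\E[e^{sx+ty}]=e^{(s^2+2\rho st+t^2)/2}$, so after multiplying by $e^{-s^2/2-t^2/2}$ the entire expression collapses to $e^{\rho st}$. Then I would expand both sides as double power series in $(s,t)$: on the left, using the generating identity and linearity of expectation, the coefficient of $\frac{s^m}{m!}\frac{t^n}{n!}$ is $\E[H_m(x)H_n(y)]$; on the right, $e^{\rho st}=\sum_{k\ge 0}\frac{\rho^k s^k t^k}{k!}$, so that coefficient vanishes unless $m=n$, in which case it equals $m!\,\rho^m$. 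Matching coefficients yields $\E[H_m(x)H_n(y)]=\delta_{mn}\,m!\,\rho^m$.

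Finally I would normalize: dividing by $\sqrt{m!\,n!}$ and using $h_k=H_k/\sqrt{k!}$ gives $\E[h_m(x)h_n(y)]=\delta_{mn}\,\rho^m=\rho^n\delta_{mn}$, where the last equality uses that $\delta_{mn}$ forces $m=n$. This is exactly the claimed identity; setting $\rho=0$ recovers the usual orthonormality $\E[h_m(x)h_n(x)]=\delta_{mn}$ as a sanity check.

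I do not anticipate a real obstacle, since this is a classical fact and the computation is short; the only points needing mild care are justifying the interchange of expectation with the (absolutely convergent for $s,t$ near $0$) double series and correctly computing the $\rho$-correlated Gaussian variance $s^2+2\rho st+t^2$. As a fallback in case a generating-function-free argument is preferred, one can instead induct on $\min(m,n)$ using the three-term recurrence $H_{n+1}(x)=xH_n(x)-nH_{n-1}(x)$ together with Gaussian integration by parts (Stein's identity $\E[x\,g(x,y)]=\E[\partial_x g]+\rho\,\E[\partial_y g]$), but the generating-function route above is the cleanest.
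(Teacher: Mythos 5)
The paper does not prove this claim; it simply cites \citet{o2021analysis} (Section~11.2) and uses the identity as a black box. Your generating-function derivation is correct and is essentially the standard textbook proof: the identity $\sum_n H_n(x)t^n/n! = e^{tx-t^2/2}$, the bivariate Gaussian MGF giving $\E[e^{sx+ty}]=e^{(s^2+2\rho st+t^2)/2}$, the collapse to $e^{\rho st}$, and coefficient matching followed by normalization by $\sqrt{m!\,n!}$ are all sound, and the interchange of expectation and series is justified by absolute convergence for small $s,t$ (which suffices since both sides are polynomials in $\rho$, or one can extend by analyticity). Nothing is missing.
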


The following lemma gives the Hermite coefficients for absolute value function and ReLU.

\begin{lemma}\label{lem: hermite coeff}
    Let $\hsigma_k=\E_{x\sim N(0,1)}[\sigma(x)h_k(x)]$ be the Hermite coefficient of $\sigma$. For $\sigma$ is ReLU or absolute function, we have $|\hsigma_k|=\Theta(k^{-5/4})$.
\end{lemma}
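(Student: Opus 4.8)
The plan is to reduce the ReLU case to the absolute-value case, compute the Hermite coefficients of $|\cdot|$ in closed form, and then extract the asymptotics with Stirling's formula.

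First I would use $\mathrm{ReLU}(x)=\tfrac12\big(x+|x|\big)$ together with $x=H_1(x)=h_1(x)$: this shows that the $k$-th Hermite coefficient of ReLU equals exactly half that of $|\cdot|$ for every $k\neq 1$ (with only an additive $\tfrac12$ at $k=1$), so it suffices to prove the claimed rate for $\sigma=|\cdot|$. Since $|x|$ is even and $h_k$ has the parity of $k$, all odd-index coefficients $\hsigma_{2m+1}$ vanish, so the statement is really about the even indices $k=2m$ (equivalently, $|\hsigma_k|=O(k^{-5/4})$ for all $k$, with a matching lower bound on the nonzero ones).

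Next I would evaluate $\E_{x\sim N(0,1)}[\,|x|\,H_{2m}(x)\,]$ directly. Writing $H_k(x)e^{-x^2/2}=(-1)^k\frac{\dif^k}{\dif x^k}e^{-x^2/2}$ and integrating by parts twice --- the only point of care being that $|x|$ has a corner at the origin, so its second distributional derivative is twice the delta function at $0$ --- the boundary terms drop and one is left with $\E[\,|x|H_{2m}(x)\,]=\tfrac{2}{\sqrt{2\pi}}\big(e^{-x^2/2}\big)^{(2m-2)}\big|_{x=0}=\tfrac{2}{\sqrt{2\pi}}H_{2m-2}(0)$. Combining this with the classical value $H_{2m-2}(0)=(-1)^{m-1}(2m-3)!!$ and the normalization $h_{2m}=H_{2m}/\sqrt{(2m)!}$ gives
\[
    |\hsigma_{2m}|=\frac{2}{\sqrt{2\pi}}\cdot\frac{(2m-3)!!}{\sqrt{(2m)!}}.
\]
(Alternatively one can quote the known Hermite expansion of $|x|$ and skip the integration by parts.)

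Finally I would estimate $\dfrac{(2m-3)!!}{\sqrt{(2m)!}}$. Using $(2m)!=2^m m!\,(2m-1)!!$ and $(2m-1)!!=(2m-1)(2m-3)!!$ turns this into $\sqrt{(2m-3)!!/\big(2^m m!\,(2m-1)\big)}$, and then $(2m-3)!!=(2m-2)!/\big(2^{m-1}(m-1)!\big)$ reduces everything to a central binomial coefficient; Stirling's formula (equivalently $\binom{2n}{n}=\Theta(4^n/\sqrt{n})$) then gives $\frac{(2m-3)!!}{\sqrt{(2m)!}}=\Theta(m^{-5/4})$, i.e.\ $\Theta(k^{-5/4})$ since $k=2m$. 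The one step that needs genuine care is tracking the powers of $m$ through Stirling so as to land on exponent $5/4$ (the outer square root converts a $\Theta(m^{-5/2})$ into $\Theta(m^{-5/4})$); beyond that bookkeeping there is no real obstacle, and this is the only place the argument could go wrong if done carelessly.
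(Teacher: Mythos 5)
Your proof is correct and reaches exactly the paper's closed form, but it gets there by a different route: the paper simply quotes the closed-form coefficients $\hsigma_{\mathrm{abs},k}=(-1)^{k/2-1}\sqrt{2/\pi}\,\tfrac{(k-2)!}{\sqrt{k!}\,2^{k/2-1}(k/2-1)!}$ (even $k\ge 2$) and the analogous ReLU formula from prior work (\citet{goel2020superpolynomial,zhou2021local}) and then applies Stirling, whereas you re-derive the formula from scratch. Your reduction $\mathrm{ReLU}=\tfrac12(x+|x|)$ and the two integrations by parts against $H_{2m}(x)e^{-x^2/2}=\frac{\dif^{2m}}{\dif x^{2m}}e^{-x^2/2}$ (picking up the $2\delta_0$ from $\mathrm{sign}'$) give $|\hsigma_{\mathrm{abs},2m}|=\tfrac{2}{\sqrt{2\pi}}\,\tfrac{(2m-3)!!}{\sqrt{(2m)!}}$, which after writing $(2m-3)!!=(2m-2)!/\big(2^{m-1}(m-1)!\big)$ is literally the paper's expression, and the central-binomial/Stirling step $\binom{2n}{n}=\Theta(4^n/\sqrt n)$ then lands on $\Theta(m^{-5/4})$ exactly as you say. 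So the two approaches differ only in that yours is self-contained (a nice pedagogical plus) while the paper outsources the closed form to a citation; the asymptotic extraction is the same. One small thing worth keeping explicit, as you already do: the rate $\Theta(k^{-5/4})$ is meant along the indices where the coefficient is nonzero (odd $k\ge3$ give $0$, and $\hsigma_{\mathrm{relu},1}=\tfrac12$ is not of this order), so the lemma should be read in that sense — the paper's statement has the same implicit caveat.
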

\begin{proof}
    From \citet{goel2020superpolynomial,zhou2021local} we have
    \begin{align*}
        \hat{\sigma}_{abs,k}
        =\left\{\begin{array}{ll}
        0 & \text{, $k$ is odd}\\
        \sqrt{2/\pi} & \text{, $k=0$}\\
        (-1)^{\frac{k}{2}-1}\sqrt{\frac{2}{\pi}}\frac{(k-2)!}{\sqrt{k!}2^{k/2-1}(k/2-1)!} & \text{, $k$ is even and $k\geq 2$}
    \end{array}\right.
    \end{align*}

    \begin{align*}
        \hat{\sigma}_{relu,k}
        =\left\{\begin{array}{ll}
        0 & \text{, $k$ is odd and $k\ge 3$}\\
        \sqrt{1/2\pi} & \text{, $k=0$}\\
        1/2 & \text{, $k=1$}\\
        (-1)^{\frac{k}{2}-1}\sqrt{\frac{1}{2\pi}}\frac{(k-2)!}{\sqrt{k!}2^{k/2-1}(k/2-1)!} & \text{, $k$ is even and $k\geq 2$}
    \end{array}\right.
    \end{align*}

    Using Stirling's formula, we get $|\hsigma_{abs,k}|, |\hsigma_{relu,k}|=\Theta(k^{-5/4})$.
\end{proof}

\section{Useful facts and proof of Theorem~\ref{thm: main}}
In this section we provide several useful facts and present the proof of Theorem~\ref{thm: main}.

The following claim shows that the square loss can be decomposed into 3 terms, where $\alpha,\vbeta$ are corresponding to 0th and 1st order of Hermite expansion. The effective activation is in fact $\sigma_{\ge2}$ as defined below.
\begin{restatable}{claim}{lemlosstensordecomp}\label{lem: loss tensor decomp}
    Denote $\halpha=-(1/\sqrt{2\pi})\sum_{i=1}^{m} a_i\norm{\vw_i}_2$, $\hvbeta = -(1/2) \sum_{i=1}^{m} a_i\vw_i$.We have square loss
    \begin{align*}
        L(\vtheta) 
        = |\alpha-\halpha|^2 + \norm{\vbeta-\hvbeta}_2^2
        + \E_\vx[(f_{\ge2}(\vx)-\tldf_*(\vx))^2]
    \end{align*}    
    where 
    $f_{\ge 2}(\vx;\vtheta)=\sum_{i\in[m]}a_i\sgmtwo(\vw_i^\top\vx)$ and $\sgmtwo(x)=\sigma(x)-1/\sqrt{2\pi}-x/2$ is the activation that after removing 0th and 1st order term in Hermite expansion.

    As a result, when $\alpha,\vbeta$ are perfectly fitted and norms are balanced we have
    \[
    L_\lambda(\vtheta)=\E_\vx[(f_{\ge2}(\vx)-\tldf_*(\vx))^2] + \lambda\sum_{i\in[m]}. |a_i|\norm{\vw_i}_2
    \]
\end{restatable}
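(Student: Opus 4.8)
The plan is to expand the square loss and use orthogonality of Hermite polynomials to separate the contributions of the constant term, the linear term, and the higher-order ($\sigma_{\ge 2}$) part. First I would write the student network as $f(\vx;\vtheta) = \alpha + \vbeta^\top\vx + \sum_i a_i \sigma(\vw_i^\top\vx)$ and expand each ReLU into its Hermite components along its own direction: for a unit vector $\bvw$, $\sigma(\norm{\vw}\,\bvw^\top\vx) = \norm{\vw}\sigma(\bvw^\top\vx)$, and $\sigma(\bvw^\top\vx) = \hsigma_0 + \hsigma_1 \bvw^\top\vx + \sigma_{\ge 2}(\bvw^\top\vx)$ where $\hsigma_0 = 1/\sqrt{2\pi}$, $\hsigma_1 = 1/2$ (from Lemma~\ref{lem: hermite coeff}), and $\sigma_{\ge 2}$ is by construction orthogonal (in $L^2(N(0,1))$) to all constants and to the linear function $\bvw^\top\vx$. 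Summing over $i$, the student network decomposes as $f(\vx;\vtheta) = \big(\alpha - \halpha\big) + \big(\vbeta - \hvbeta\big)^\top\vx + f_{\ge 2}(\vx;\vtheta) + \big(\halpha + \hvbeta^\top\vx\big)$, but more cleanly: $f(\vx;\vtheta) = (\alpha + \tfrac{1}{\sqrt{2\pi}}\sum_i a_i\norm{\vw_i}) + (\vbeta + \tfrac12\sum_i a_i\vw_i)^\top\vx + f_{\ge 2}(\vx;\vtheta)$. Recognizing $-\halpha$ and $-\hvbeta$ as exactly these correction terms, we get $f(\vx;\vtheta) = (\alpha - \halpha) + (\vbeta - \hvbeta)^\top\vx + f_{\ge 2}(\vx;\vtheta)$.

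Next I would subtract the (preprocessed) target. Since preprocessing via \eqref{eq: preprocess} removes the $0$th and $1$st Hermite order of $f_*$, the function $\tldf_*(\vx)$ has no constant or linear part in its Hermite expansion; indeed $\E_\vx[\tldf_*(\vx)] = 0$ and $\E_\vx[\tldf_*(\vx)\vx] = \vzero$. Therefore the residual $f(\vx;\vtheta) - \tldf_*(\vx)$ splits into three mutually $L^2$-orthogonal pieces: a constant $(\alpha - \halpha)$, a linear piece $(\vbeta - \hvbeta)^\top\vx$, and a piece $f_{\ge 2}(\vx;\vtheta) - \tldf_*(\vx)$ living in the span of Hermite polynomials of degree $\ge 2$ (here I use that each $\sigma_{\ge 2}(\vw_i^\top\vx)$ and $\tldf_*$ only involve degree-$\ge 2$ Hermite components, so their difference is orthogonal to both constants and linear functions, and that $\E[\vx] = \vzero$, $\E[\vx\vx^\top]=\mI_d$ makes the constant and linear pieces orthogonal to each other). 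Taking expectations and using orthogonality to kill all cross terms yields
\begin{align*}
    L(\vtheta) = |\alpha - \halpha|^2 + \norm{\vbeta - \hvbeta}_2^2 + \E_\vx[(f_{\ge 2}(\vx) - \tldf_*(\vx))^2],
\end{align*}
which is the first claimed identity.

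For the second identity, I would simply substitute the two stated conditions. When $\alpha, \vbeta$ are perfectly fitted, the first two terms vanish. When the norms are balanced, $|a_i| = \norm{\vw_i}_2$, so the $\ell_2$ regularizer satisfies $\tfrac{\lambda}{2}\norm{\va}_2^2 + \tfrac{\lambda}{2}\norm{\mW}_F^2 = \tfrac{\lambda}{2}\sum_i(a_i^2 + \norm{\vw_i}_2^2) = \lambda\sum_i |a_i|\norm{\vw_i}_2$; adding this to the square-loss term gives the effective $\ell_1$-regularized objective. I do not expect a serious obstacle here — the only points that need care are (i) verifying that preprocessing genuinely annihilates the $0$th and $1$st Hermite orders of the target so that the cross terms with the constant and linear parts of the student vanish, and (ii) correctly identifying $\halpha, \hvbeta$ with the constant and linear Hermite coefficients aggregated over all neurons (the $1/\sqrt{2\pi}$ and $1/2$ factors are exactly $\hsigma_0$ and $\hsigma_1$ for ReLU). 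Both are bookkeeping with the Hermite expansion from Appendix~\ref{appendix: hermite}.
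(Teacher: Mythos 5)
Your proposal is correct and takes essentially the same approach as the paper: expand each ReLU in Hermite components along its own unit direction, identify $\halpha,\hvbeta$ with the aggregated $0$th/$1$st order coefficients, and use orthogonality of Hermite degrees (Claim~\ref{lem: hermite product}) to kill cross terms in $L^2(N(0,\mI))$. One small slip worth noting: your first displayed decomposition $f=(\alpha-\halpha)+(\vbeta-\hvbeta)^\top\vx+f_{\ge2}+(\halpha+\hvbeta^\top\vx)$ over-counts — the extra term $(\halpha+\hvbeta^\top\vx)$ should not be there — but the clean form you write immediately afterward, $f=(\alpha-\halpha)+(\vbeta-\hvbeta)^\top\vx+f_{\ge2}$, is the correct one and what the rest of the argument uses.
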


\begin{proof}
    Following \citet{ge2018learning}, we can write the loss $L(\vtheta)$ as a sum of tensor decomposition problem using Hermite expansion as in Section~\ref{appendix: hermite} (recall $\norm{\vw_i^*}_2=1$ and preprocessing procedure removes the 0-th and 1-st order term in the Hermite expansion of $\sigma$):
    \begin{align*}
        L(\vtheta) 
        =& \E_\vx\left[\left(\sum_{i\in[m]}a_i\norm{\vw_i}_2 \sum_{k\ge0}\hsigma_k h_k(\bvw_i^\top\vx) + \alpha + h_1(\vbeta^\top\vx)
        - \sum_{i\in[m_*]}a_i^*\norm{\vw_i^*}_2 \sum_{k\ge2}\hsigma_k h_k(\vw_i^{*\top}\vx)\right)^2\right]\\
        =& \left|\alpha + \hsigma_0 \sum_{i\in[m]}a_i\norm{\vw_i}_2\right|^2
        + \norm{\vbeta + \hsigma_1 \sum_{i\in[m]}a_i\vw_i}_2^2\\
        &+ \sum_{k\ge 2}\hsigma_k^2 \norm{\sum_{i\in[m]} a_i\norm{\vw_i}_2\bvw_i^{\otimes k} - \sum_{i\in[m_*]}a_i^*\norm{\vw_i^*}_2 \vw_i^{*\otimes k}}_F^2.
    \end{align*}  
    Note that $\hsigma_0=1/\sqrt{2\pi}$, $\hsigma_1=1/2$ as in Lemma~\ref{lem: hermite coeff}, we get the result.
\end{proof}

The proof of main result Theorem~\ref{thm: main} is simply a combination of few lemmas appear in other sections. We refer the detailed proof and discussion to their corresponding sections.
\thmmain*
\begin{proof}
    Combine Lemma~\ref{lem: stage 1} (Stage 1), Lemma~\ref{lem: stage 2} (Stage 2) and Lemma~\ref{lem: stage 3} (Stage 3) together and follow the choice of $\lambda_t$ and $\eta_t$ we get the result. 

    For the student neurons' alignment, it is a direct corollary from Lemma~\ref{lem: closeby neuron} and Lemma~\ref{lem: weighted norm bound}.
\end{proof}

\section{Stage 1: first gradient step}\label{sec: stage 1}
In this section, we show that after the first gradient update the first layer weights $\vw_1,\ldots,\vw_m$ form a $\eps_0$-net of the target subspace $S_*$, given $m=(1/\eps_0)^{O(r)}$ neurons. The proof is deferred to Section~\ref{appendix: stage 1}.

\lemstageone*

The proof relies on the following lemma from \cite{damian2022neural} that shows after the first step update $\vw_i$'s are located at positions as if they are sampled within the target subspace $S_*$.
\begin{lemma}[Lemma 4, \cite{damian2022neural}]\label{lem: first step gradient}
Under Assumption~\ref{assum: non-degenerate}, we have with high probability in the $\ell_2$ norm sense
\begin{align*}
    \vw_i^\supt{1} 
    &= -\eta_0\nabla_{\vw_i} L(\va^\supt{0},\mW^\supt{0})
    = -2\eta_0 a_i^\supt{0}\left(\hsigma_2^2 \mH\bvw_i \pm \tldO(\frac{\sqrt{r}}{d})\right),
\end{align*}
where $\hsigma_k:=\E_\vx[\sigma(\vx) h_k(\vx)]$ is the $k$-th Hermite polynomial coefficient.
\end{lemma}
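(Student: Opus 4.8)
The plan is to compute the population (sub)gradient $\nabla_{\vw_i}L(\vtheta^\supt{0})$ exactly and read off the leading, second-Hermite-order term. The key simplification is that the symmetric initialization kills the student output: since $a_{i+m/2}^\supt{0}=-a_i^\supt{0}$, $\vw_{i+m/2}^\supt{0}=\vw_i^\supt{0}$, $\alpha^\supt{0}=0$ and $\vbeta^\supt{0}=\vzero$, the pairs cancel and $f(\cdot;\vtheta^\supt{0})\equiv 0$. Consequently, since the preprocessing \eqref{eq: preprocess} strips the $0$th and $1$st Hermite modes so that $\tldy=\tldf_*(\vx)=\sum_{l=1}^{m_*}a_l^*\sgmtwo(\vw_l^{*\top}\vx)$ (Claim~\ref{lem: loss tensor decomp} applied to $f_*$), and since $\sigma'$ is scale-invariant,
\[
\nabla_{\vw_i}L(\vtheta^\supt{0}) = 2\,\E_\vx\!\big[(f(\vx;\vtheta^\supt{0})-\tldy)\,a_i^\supt{0}\,\sigma'(\vw_i^{\supt{0}\top}\vx)\,\vx\big] = -2\,a_i^\supt{0}\,\E_\vx\!\big[\tldf_*(\vx)\,\sigma'(\bvw_i^{\supt{0}\top}\vx)\,\vx\big].
\]
Thus everything reduces to evaluating $\E_\vx[\tldf_*(\vx)\,\sigma'(\bvw^\top\vx)\,\vx]$ for a uniformly random direction $\bvw:=\bvw_i^\supt{0}\in\sS^{d-1}$.

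For this I would expand in normalized Hermite polynomials: $\sgmtwo(x)=\sum_{k\ge 2}\hsigma_k h_k(x)$ and $\sigma'(x)=\sum_{j\ge 0}c_j h_j(x)$, where differentiating the ReLU expansion and using $h_k'=\sqrt{k}\,h_{k-1}$ gives $c_{k-1}=\sqrt{k}\,\hsigma_k$ (in particular $c_1=\sqrt 2\,\hsigma_2$). Applying Stein's identity $\E_\vx[\vx\, g(\vx)]=\E_\vx[\nabla g(\vx)]$ termwise, together with $h_k'=\sqrt k h_{k-1}$ and the product identity of Claim~\ref{lem: hermite product}, one gets for $\rho_l:=\bvw^\top\vw_l^*$
\[
\E_\vx\!\big[h_k(\vw_l^{*\top}\vx)\,h_j(\bvw^\top\vx)\,\vx\big] = \sqrt{k}\,\rho_l^{\,k-1}\,\ind[j=k-1]\,\vw_l^* + \sqrt{j}\,\rho_l^{\,k}\,\ind[j=k+1]\,\bvw.
\]
Summing against the coefficients $a_l^*\hsigma_k c_j$, the dominant piece is $k=2,\ j=1$, which yields $\sum_l a_l^*\hsigma_2 c_1\sqrt 2\,\rho_l\vw_l^* = 2\hsigma_2^2\sum_l a_l^* (\vw_l^{*\top}\bvw)\vw_l^* = 2\hsigma_2^2\,\mH\bvw$, using $\mH=\sum_l a_l^*\vw_l^*\vw_l^{*\top}$. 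Hence $\nabla_{\vw_i}L(\vtheta^\supt{0}) = -2a_i^\supt{0}\big(2\hsigma_2^2\,\mH\bvw_i + (\text{error})\big)$, and since the update is $\vw_i^\supt{1}=\vw_i^\supt{0}-\eta_0\nabla_{\vw_i}L(\vtheta^\supt{0})$ this is exactly the stated expression $-2\eta_0 a_i^\supt{0}\big(\hsigma_2^2\mH\bvw_i\pm\tldO(\sqrt r/d)\big)$ up to the fixed normalization constant of the loss (and recalling that $a_i^\supt{0}=\pm\sqrt d$ already carries a random sign).

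It then remains to check that the error is $\tldO(\sqrt r/d)$ in $\ell_2$. There are three contributions: (i) the $k\ge 3$ terms carry a factor $\rho_l^{\,k-1}$ (or $\rho_l^{\,k}$) with coefficients obeying $|\hsigma_k|=\Theta(k^{-5/4})$, $|c_{k-1}|=\Theta(k^{-3/4})$ by Lemma~\ref{lem: hermite coeff}, so their sum is $O(\rho_l^2)$, and likewise the $j=k+1$ half of the $k=2$ term is $O(\rho_l^2)\bvw$; (ii) since $\bvw$ is uniform on $\sS^{d-1}$ and each $\vw_l^*$ is fixed, $|\rho_l|=\tldO(1/\sqrt d)$ with probability $1-\delta/(mm_*)$, and a union bound over $i\in[m]$ and $l\in[m_*]$ makes this hold simultaneously, so the accumulated error has $\ell_2$ norm $\tldO(\sqrt r/d)$ (the $\sqrt r$ reflecting that $\mH$ and the error live in the $r$-dimensional span $S_*$); (iii) the leftover $\vw_i^\supt{0}$, of norm $1/\sqrt m$, is negligible and the statement identifies the update with $-\eta_0\nabla_{\vw_i}L$.

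The main obstacle is entirely this error bookkeeping: controlling the full Hermite tail (which genuinely needs the precise polynomial decay of the Hermite coefficients of ReLU from Lemma~\ref{lem: hermite coeff}, not merely square-summability) and establishing the simultaneous smallness of all $mm_*$ correlations $\rho_l$. Conceptually there is nothing more — the identity above is the content of Lemma~4 of \citet{damian2022neural}, and the only points to verify are that our symmetric initialization forces $f(\cdot;\vtheta^\supt{0})\equiv 0$, that our preprocessing \eqref{eq: preprocess} matches theirs so that $\tldf_*=\sum_l a_l^*\sgmtwo(\vw_l^{*\top}\vx)$, and that the ReLU computation goes through unchanged.
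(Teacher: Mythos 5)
The paper itself does not prove this lemma --- it is imported verbatim as Lemma~4 of \citet{damian2022neural} --- so there is no internal proof to compare against; what you have done is reconstruct the cited argument in this paper's setting, and your reconstruction is essentially correct. The three pillars are all right: symmetric initialization forces $f(\cdot;\vtheta^{\supt{0}})\equiv 0$ so the gradient reduces to $-2a_i^{\supt{0}}\E_\vx[\tldf_*(\vx)\sigma'(\bvw_i^{\supt{0}\top}\vx)\vx]$; the preprocessing leaves $\tldf_*=\sum_l a_l^*\sgmtwo(\vw_l^{*\top}\vx)$ with only $k\ge 2$ Hermite modes; and Stein's identity plus the orthogonality in Claim~\ref{lem: hermite product} isolates the $(k,j)=(2,1)$ term as a multiple of $\mH\bvw_i$, with the tail controlled by $|\hsigma_k|=\Theta(k^{-5/4})$ and $|\bvw^\top\vw_l^*|=\tldO(1/\sqrt d)$ with high probability. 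Two small caveats you should make explicit rather than wave at: first, Algorithm~\ref{alg:alg-1} runs Stage~1 on $L_{\lambda_0}$ with $\lambda_0=1$, so the update also contains a weight-decay term $-\eta_0\lambda_0\vw_i^{\supt{0}}$ of norm $1/\sqrt m$ in addition to the retained $\vw_i^{\supt{0}}$ you mention, and both must be absorbed into the error for the displayed identity to hold as stated; second, your leading constant comes out as $2\hsigma_2^2$ rather than $\hsigma_2^2$ because the loss \eqref{eq: loss} carries no factor $1/2$ --- harmless for how the lemma is used downstream (only the direction $\mH\bvw_i$ and the order of magnitude matter in Lemma~\ref{lem: stage 1}), but worth flagging since the statement is quoted with a specific constant. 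The $\sqrt r$ versus $m_*$ bookkeeping in the error is likewise loose but immaterial given the paper's $\Orlvt$ conventions.
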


\subsection{Proofs in Section~\ref{sec: stage 1}}\label{appendix: stage 1}

We now are ready to give the proof of Lemma~\ref{lem: stage 1}.
\lemstageone*
\begin{proof}
    We show them one by one.
    \paragraph{Part (i)}
    From Lemma~\ref{lem: first step gradient} and the fact that $\bvw_i^\supt{0}$ samples uniformly from unit sphere, we know the probability of $\angle(\bvw_i^\supt{1},\vw)$ for any given $\vw$ is at least $\Omegarlvt(\eps_0^r)$. Applying union bound we get the desired result.

    \paragraph{Part (ii)}
    We have
    \begin{align*}
        \vw_i^\supt{1}
        &= -\eta_0\nabla_{\vw_i} L(\va^\supt{0},\mW^\supt{0})
        = a_i^\supt{0} \E_\vx[\tldf_*(\vx)\sigma^\prime(\vw_i^\top\vx)\vx]
    \end{align*}
    For the norm bound, using Lemma~\ref{lem: first step gradient} we know
    \begin{align*}
        \sqrt{d}\left(\norm{\mH\bvw_i^\supt{0}}_2 - \tldO(\frac{\sqrt{r}}{d})\right)
        \le \norm{\vw_i^\supt{1}}_2 
        \le \sqrt{d}\left(\norm{\mH\bvw_i^\supt{0}}_2 + \tldO(\frac{\sqrt{r}}{d})\right). 
    \end{align*}
    Since $\vw_i^\supt{0}$ initializes from Gaussian distribution, we know the desired bound hold. Similarly, one can bound $|a_i^\supt{1}|$.

    Since we use a symmetric initialization and have preprocessed the data, it is easy to see $\alpha,\vbeta$ remains at 0.
\end{proof}

\section{Stage 2: reaching low loss}\label{sec: stage 2}
In Stage 2, we show that given the features learned in Stage 1 one can adjust the norms on top of it to reach low loss that enters the local convergence regime in Stage 3. 

\paragraph{Procedure}
We first specify the procedure to solve $\min_\va \min_{\alpha,\vbeta}L(\vtheta)+\lambda\sum_i\norm{\vw_i}_2|a_i|$. For $\va$ at current point, we first solve the inner optimization problem, which is a linear regression on $\alpha,\vbeta$. From Claim~\ref{lem: loss tensor decomp} we know the global minima is $(\halpha,\hvbeta)$. For simplicity of the proof, we just directly set $(\alpha,\vbeta)=(\halpha,\hvbeta)$. Then given the $\alpha,\vbeta$, the outer optimization is a convex optimization for $\va$, which can also be solved efficiently. Specifically, we perform 1 step of (sub)gradient on the loss function. We repeat the above 2 steps until convergence.

From Claim~\ref{lem: loss tensor decomp} we know the actual objective that we optimize is \[\tldL_{1,\lambda}(\va) = \E_\vx[(\va^\top\sgmtwo(\mW\vx)-\tldy)^2] + \lambda\sum_i\norm{\vw_i}_2|a_i|.\]

The following lemma shows that after Stage 2 we reach a low loss solution given the first layer features learned after first gradient step. The proof requires $\eta$ to be small enough that depends on $1/m$, mostly due to the large gradient norm. We believe using more advance algorithm for this type of problem can alleviate this issue. However, as this is not the focus of this paper, we omit it for simplicity.

\lemstagetwo*

\begin{proof}
    Denote $\tldva_*$ as the minima of $\tldL_{1,\lambda}$.
    Then, we have
    \begin{align*}
        \norm{\va^\supt{t+1}-\tldva_*}_2^2
        =& \norm{\va^\supt{t}-\tldva_*}_2^2 - 2\eta \langle \nabla_\va \tldL_{1,\lambda}(\va^\supt{t}),\va^\supt{t}-\tldva_*\rangle + \eta^2\norm{\nabla_\va \tldL_{1,\lambda}(\va^\supt{t})}_2^2\\
        \myle{a}& \norm{\va^\supt{t}-\tldva_*}_2^2 - 2\eta (\tldL_{1,\lambda}(\va^\supt{t})-\tldL_{1,\lambda}(\tldva_*)) + \eta^2\Orlvt(m)\\
        =& \norm{\va^\supt{t}-\tldva_*}_2^2 - 2\eta (\tldL_{1,\lambda}(\va^\supt{t})-\tldL_{1,\lambda}(\tldva_*)) + \eta \eps_0/2,
    \end{align*}
    where (a) we use idea loss $\tldL_{1,\lambda}$ is convex in $\va$. 

    Iterating the above inequality over all $t$ we have
    \begin{align*}
        \norm{\va^\supt{T}-\tldva_*}_2^2
        \le& \norm{\va^\supt{1}-\tldva_*}_2^2 - 2\eta \sum_{t\le T}(\tldL_{1,\lambda}(\va^\supt{t})-\tldL_{1,\lambda}(\tldva_*)) + \eta T\eps_0/2,
    \end{align*}
    which means
    \begin{align*}
        \min_{t\le T} \tldL_{1,\lambda}(\va^\supt{t})-\tldL_{1,\lambda}(\tldva_*)
        \le 
        \frac{1}{T}\sum_{t\le T}(\tldL_{1,\lambda}(\va^\supt{t})-\tldL_{1,\lambda}(\tldva_*))
        \le& \frac{\norm{\va^\supt{1}-\tldva_*}_2^2}{\eta T}
         + \eps_0/2.
    \end{align*}
    It is easy to see $\norm{\va^\supt{1}}_2,\norm{\tldva_*}_1=\Orlvt(1)$. Thus, when $T\ge \Orlvt(1/\eta\eps_0)$ we know $\tldL_{1,\lambda}(\va^\supt{T_2})-\tldL_{1,\lambda}(\tldva_*)\le 3\eps_0/4$. 
    
    This suggests the optimality gap after balancing the norm (so that $L_\lambda(\vtheta^\supt{T_2}) = \tldL_{1,\lambda}(\va^\supt{T_2})$)
    \begin{align*}
        \zeta_{T_2}
        =& L_\lambda(\vtheta^\supt{T_2}) - \min_{\mu\in\gM(\sS^{d-1})}L_\lambda(\mu)\\
        =& \tldL_{1,\lambda}(\va^\supt{T_2})-\tldL_{1,\lambda}(\tldva_*)
        + \tldL_{1,\lambda}(\tldva_*) - \min_{\mu\in\gM(\sS^{d-1})}L_\lambda(\mu).
    \end{align*}

    For $\tldL_{1,\lambda}(\va^\supt{T_2})-\tldL_{1,\lambda}(\tldva_*)$, we just show above that it is less than $3\eps_0/4$.

    For $\tldL_{1,\lambda}(\tldva_*) - \min_{\mu\in\gM(\sS^{d-1})}L_\lambda(\mu)$, we have
    \begin{align*}
        \tldL_{1,\lambda}(\tldva_*) - \min_{\mu\in\gM(\sS^{d-1})}L_\lambda(\mu)
        \le& \tldL_{1,\lambda}(\hat{\va}_*)- \min_{\mu\in\gM(\sS^{d-1})}L_\lambda(\mu)\\
        \le& \Orlvt(\varepsilon_0^2) + \lambda\norm{\va_*}_1 - \lambda|\mu_\lambda^*|_1
        \le \Orlvt(\lambda^2),
    \end{align*}
    where in the last inequality we use Lemma~\ref{lem: mu lamb prop} and $\mu_\lambda^*=\arg\min_{\mu\in\gM(\sS^{d-1})}L_\lambda(\mu)$. Here $\hat{\va}_*$ is a rescaled version of $\va_*$ and is constructed as: for every teacher neuron $\vw_i^*$ choose the closest neuron $\vw_j$ s.t. $\angle(\vw_j,\vw_i^*)\le\varepsilon_0$ and set $\hat{\va}_{*,j}=a_i^*/\norm{\vw_j}_2$. Set all other $\hat{\va}_{*,k}=0$.

    Together with above calculations, we have $\zeta_{T_2}\le \Orlvt(\eps_0)$.
\end{proof}

\section{Stage 3: local convergence for regularized 2-layer neural networks}\label{sec: stage 3}
In this section we show the local convergence that loss eventually goes to 0 within polynomial time and recovers teacher neurons' direction.

The results in this section only need the width $m\ge m_*$ as long as its initial loss is small.
\lemstagethree*

The goal of each epoch is to minimize the loss $L_\lambda$ with a fix $\lambda$. The lemma below shows that as long as the initial optimality gap is $\Orlvt(\lambda^{9/5})$, then at the end of each epoch, $L_\lambda$ could decrease to $\Orlvt(\lambda^2)$. Therefore, using a slow decay of weight decay parameter $\lambda$ for each epoch we could stay in the local convergence regime for each epoch and eventually recovers the target network.
\begin{restatable}[Loss improve within one epoch]{lemma}{lemoneepochdecrease}\label{lem: one epoch decrease}
     Suppose $|a_i^\supt{0}|\le\norm{\vw_i^\supt{0}}_2$ for all $i\in [m]$. If $\zeta_0\le \Orlvt(\lambda^{9/5})$ and $\lambda\le \Orlvt(1)$ and $\eta\le\Orlvt(\lambda^{12}d^{-3})$, then within $\Orlvt(\lambda^{-4}\eta^{-1})$ time the optimality gap becomes $L_\lambda-L_\lambda(\mu_\lambda^*)=\Orlvt(\lambda^2)$.
\end{restatable}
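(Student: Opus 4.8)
The plan is to run a standard descent argument for gradient descent within one epoch, using the gradient lower bound (Lemma~\ref{lem: grad lower bound}) as the engine of progress. First I would establish the basic one-step descent inequality: since $L_\lambda$ is smooth on the relevant region with some smoothness parameter $\beta=\poly(d,1/\lambda)$ (the Hessian of the square-loss-plus-$\ell_2$ term is bounded once the weights have bounded norm, which we need to verify stays true along the trajectory), a step size $\eta\le 1/\beta$ gives $L_{\lambda}(\vtheta^\supt{t+1})\le L_\lambda(\vtheta^\supt{t})-\tfrac{\eta}{2}\norm{\nabla_\vtheta L_\lambda(\vtheta^\supt{t})}_F^2$. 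This is exactly where the stated step-size condition $\eta\le\Orlvt(\lambda^{12}d^{-3})$ should come from — the $\lambda^{12}$ is presumably what is needed to dominate the smoothness constant arising from the delicate local geometry near the minima. I would also need to check that the balancing condition $|a_i^\supt{0}|\le\norm{\vw_i^\supt{0}}_2$ is approximately preserved (or that gradient flow on $L_\lambda$ keeps the two-layer norms close), since the structural lemmas in Section~\ref{sec: structure} are stated under balanced or nearly-balanced norms; alternatively one argues directly that $\frac{d}{dt}(\norm{\vw_i}^2 - a_i^2)=0$ under gradient flow and controls the discretization error.

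Next, combining the descent inequality with Lemma~\ref{lem: grad lower bound}, as long as the optimality gap satisfies $\Omegarlvt(\lambda^2)\le \zeta_t\le\Orlvt(\lambda^{9/5})$ we get $\zeta_{t+1}\le \zeta_t - \Omegarlvt(\eta\,\zeta_t^4/\lambda^2)$. I would then turn this recursion into a convergence rate. The key point is that $\zeta\mapsto \zeta - c\eta\zeta^4/\lambda^2$ with $\zeta$ decreasing is governed by the ODE $\dot\zeta = -c\eta\zeta^4/\lambda^2$, whose solution has $\zeta_t^{-3}$ growing linearly, giving $\zeta_t = \Orlvt\big((\lambda^2/(\eta t))^{1/3}\big)$ up to constants; so after $t=\Orlvt(\lambda^{-4}\eta^{-1})$ steps (plugging $\zeta\sim\lambda^2$ into $t\sim \lambda^2/(\eta\zeta^3)$) the gap drops to $\Orlvt(\lambda^2)$. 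I would need to be careful near the endpoints: (a) the upper endpoint — the hypothesis only gives $\zeta_0\le\Orlvt(\lambda^{9/5})$, and I must ensure the iterates never leave this band going up; since each step only decreases $L_\lambda$ this is automatic, but I should double check that $\zeta_t$ (which involves the infinite-width optimum, a fixed quantity for fixed $\lambda$) decreases whenever $L_\lambda$ does — it does, since the subtracted term is constant in $t$; and (b) the lower endpoint — once $\zeta_t$ drops below $\Omegarlvt(\lambda^2)$ the gradient lower bound no longer applies, but that is precisely the target accuracy, so we simply stop the analysis there and note the loss cannot increase afterward within the epoch.

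The main obstacle I anticipate is \emph{not} the abstract recursion but rather making the preconditions of Lemma~\ref{lem: grad lower bound} hold along the entire trajectory: verifying that the norms stay balanced enough, that the weights stay bounded (so smoothness is controlled), and that neurons don't drift in a way that invalidates the partition-based structural results of Section~\ref{sec: structure}. Concretely I would (i) show a uniform norm bound on $\vtheta^\supt{t}$ throughout the epoch using that $L_\lambda$ is decreasing and the regularizer controls $\norm{\va},\norm{\mW}$; (ii) track the balancing deficit $\max_i|\,\norm{\vw_i^\supt{t}}^2-(a_i^\supt{t})^2|$ and show it stays small — under gradient flow it is conserved, and for gradient descent the error per step is $O(\eta^2\cdot\text{poly})$, summing to something negligible over $\Orlvt(\lambda^{-4}\eta^{-1})$ steps given the tiny $\eta$; and (iii) feed these facts into Lemma~\ref{lem: grad lower bound} (whose hypotheses allow for the ``approximately balanced'' case, per the discussion around Lemma~\ref{lem: descent dir norm bal}). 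Once those invariants are in place, the epoch statement follows by the telescoping/ODE-comparison argument above. Finally, chaining $K=\Orlvt(\log(1/\eps))$ such epochs with $\lambda_{3,k}=\lambda_{3,k-1}/2$ — checking at each handoff that the end-of-epoch gap $\Orlvt(\lambda_{3,k}^2)$ is below the start-of-next-epoch requirement $\Orlvt(\lambda_{3,k+1}^{9/5})$, which holds since $\lambda_{3,k}^2 = (\lambda_{3,k+1})^2\cdot 4 \ll \lambda_{3,k+1}^{9/5}$ for small $\lambda$ — yields Lemma~\ref{lem: stage 3} and hence Theorem~\ref{thm: main}.
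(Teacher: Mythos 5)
Your overall skeleton (descent inequality $\to$ gradient lower bound $\to$ recursion $\to$ $\Orlvt(\lambda^{-4}\eta^{-1})$ steps) matches the paper's argument, and the recursion/ODE comparison and the hand-off between epochs are handled correctly. However, there is a genuine gap at the very first step.

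You assume that $L_\lambda$ is smooth in the classical sense — Hessian bounded by some $\beta = \poly(d,1/\lambda)$ — so that $\eta \le 1/\beta$ yields the textbook descent inequality $L_\lambda(\vtheta^{(t+1)}) \le L_\lambda(\vtheta^{(t)}) - \tfrac{\eta}{2}\|\nabla L_\lambda\|_F^2$. This is false for ReLU networks: $\sigma'$ is a step function, $\nabla_\vtheta L_\lambda$ is not Lipschitz, and the second-order Taylor expansion does not hold. The paper's Lemma~\ref{lem: smoothness} is a genuinely weaker, non-standard smoothness statement: $L_\lambda(\vtheta - \eta\nabla_\vtheta L_\lambda) \le L_\lambda(\vtheta) - \eta\|\nabla_\vtheta L_\lambda\|_F^2 + \Orlvt(\eta^{3/2}d^{3/2})$, with an \emph{additive} error term of order $\eta^{3/2}$ rather than the usual $\eta^2\beta\|\nabla\|^2$. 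Establishing this bound is itself a nontrivial calculation that handles the activation-pattern-flip term separately (the $\delta^3$ bound on $\E[\cdot\, \ind_{\sign\text{ flip}}]$ is what makes the exponent $3/2$ appear). Your proposal does not account for this. Relatedly, your guess that the $\lambda^{12}d^{-3}$ step size comes from "dominating the smoothness constant" is off: the actual constraint arises from requiring the additive error $\Orlvt(\eta^{3/2}d^{3/2})$ to be dominated by the guaranteed progress $\eta\Omegarlvt(\zeta^4/\lambda^2) \ge \eta\Omegarlvt(\lambda^6)$ while $\zeta = \Omegarlvt(\lambda^2)$, which forces $\eta^{1/2} \lesssim \lambda^6 d^{-3/2}$, i.e., $\eta \lesssim \lambda^{12}d^{-3}$.

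A secondary, smaller issue: your plan (ii) for preserving near-balance asserts that $\|\vw_i\|^2 - a_i^2$ is conserved under gradient flow of $L_\lambda$ and then sums discretization errors. Under the regularized flow this quantity is not conserved — it decays as $e^{-2\lambda t}$ — and more importantly the paper does not go this route. Lemma~\ref{lem: stay a less than w} instead proves directly that $|a_i^{(t+1)}|^2 - \|\vw_i^{(t+1)}\|^2 = (|a_i^{(t)}|^2 - \|\vw_i^{(t)}\|^2)\bigl(1 + \eta^2\lambda^2 - \eta^2\|2\E_\vx[R(\vx)\sigma'(\bvw_i^\top\vx)\vx]\|_2^2\bigr)$, a multiplicative identity whose second factor is shown to be between $0$ and $1$, so the sign is preserved exactly with no error accumulation. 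This is both cleaner and avoids the unbounded-horizon issue that a summed-error argument would face over $\Orlvt(\lambda^{-4}\eta^{-1})$ steps.
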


The above result relies on the following characterization of local landscape of regularized loss. We show the gradient is large whenever the optimality gap is large. This is the main contribution of this paper, see Section~\ref{sec: local landscape} for detailed proofs.
\lemgradlowerbound*

In order to use the above landscape result with standard descent lemma, we also need certain smoothness condition on the loss function. We show below that this regularized loss indeed satisfies certain smoothness condition (though weaker than standard smoothness condition) to allow the convergence analysis.
\begin{restatable}[Smoothness]{lemma}{lemsmoothness}\label{lem: smoothness}
    Suppose $|a_i|\le \norm{\vw_i}_2$ and $\norm{\E_\vx[R(\vx)\sigma^\prime(\bvw_i^{\supt{t}\top}\vx)\vx]}_2^2=\Orlvt(d)$ for all $i\in[m]$. 
    If $\eta=\Orlvt(1/d)$, then
    \begin{align*}
        L_\lambda(\vtheta-\eta\nabla_\vtheta L_\lambda)
        \le L_\lambda(\vtheta) - \eta\norm{\nabla_\vtheta L_\lambda}_F^2+\Orlvt(\eta^{3/2}d^{3/2})
    \end{align*}
\end{restatable}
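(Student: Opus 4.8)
\textbf{Proof proposal for Lemma~\ref{lem: smoothness}.}

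The plan is to establish the descent inequality by a first-order Taylor expansion of $L_\lambda$ around $\vtheta$, controlling the remainder term using a local H\"older-type smoothness of the map $\vtheta\mapsto L_\lambda(\vtheta)$. The non-standard exponent $3/2$ in the error $\Orlvt(\eta^{3/2}d^{3/2})$ is the tell-tale sign: ReLU is only piecewise linear, so the map $\vw\mapsto \E_\vx[\sigma(\vw^\top\vx)g(\vx)]$ is not $C^2$ but is H\"older-$3/2$ in a suitable sense (this is the standard ``Gaussian smoothing'' phenomenon — integrating the kink against a Gaussian density gives one extra half-derivative beyond the naive Lipschitz bound). Concretely, I would first decompose $L_\lambda = L + \frac{\lambda}{2}\norm{\va}_2^2 + \frac{\lambda}{2}\norm{\mW}_F^2$; the regularization part is a genuine quadratic, so $\frac{\lambda}{2}\norm{\cdot}^2$ contributes exactly $-\eta\norm{\nabla(\text{reg})}^2 + \frac{\eta^2}{2}\lambda\norm{\nabla(\text{reg})}^2 \le -\eta\norm{\nabla(\text{reg})}^2 + \Orlvt(\eta^2)$ along the gradient step, which is absorbed in the error term. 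So the work is entirely in the square-loss part $L(\vtheta) = \E_\vx[(f(\vx;\vtheta)-\tldy)^2]$.

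For the square loss, I would write $f(\vx;\vtheta') - f(\vx;\vtheta)$ for $\vtheta' = \vtheta - \eta\nabla_\vtheta L$ and expand $L(\vtheta') = \E_\vx[(R(\vx) + (f(\vx;\vtheta')-f(\vx;\vtheta)))^2]$ where $R(\vx) = f(\vx;\vtheta)-\tldf_*(\vx)$ (after the $\alpha,\vbeta$ decomposition of Claim~\ref{lem: loss tensor decomp} this splits cleanly, and the linear-term coordinates are genuinely quadratic and cause no trouble). The cross term gives $2\E_\vx[R(\vx)(f(\vx;\vtheta')-f(\vx;\vtheta))]$; to first order in $\eta$ this is $-\eta\langle\nabla L,\nabla L\rangle_{\text{part}}$, and the correction is governed by how far the activation patterns $\sigma'(\vw_i^\top\vx)$ move when $\vw_i$ moves by $\eta\nabla_{\vw_i}L$. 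The key estimate is that for a fixed direction change of size $\eta\norm{\nabla_{\vw_i}L}$, the Gaussian-expected change in $\E_\vx[R(\vx)\,\vx^\top(\sigma'(\vw_i'^\top\vx) - \sigma'(\vw_i^\top\vx))\,(\cdots)]$ is $\Orlvt((\eta\norm{\nabla_{\vw_i}L})^{3/2}\cdot(\text{poly factors}))$ rather than linear — because the indicator jump $\1[\vw_i'^\top\vx \ge 0] - \1[\vw_i^\top\vx\ge 0]$ is supported on a thin slab of Gaussian mass $\Orlvt(\angle(\vw_i,\vw_i'))$, and one gains an extra half power from the $\vx^\top$ factor and the smallness of the contributing region. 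The hypotheses $|a_i|\le\norm{\vw_i}_2$ and $\norm{\E_\vx[R(\vx)\sigma'(\bvw_i^\top\vx)\vx]}_2^2 = \Orlvt(d)$ are exactly what is needed to bound $\norm{\nabla_{\vw_i}L}$ and the relevant moments; together with $\eta = \Orlvt(1/d)$ the accumulated correction over all $m$ neurons collapses to $\Orlvt(\eta^{3/2}d^{3/2})$ (the $d^{3/2}$ coming from $(\eta d)^{3/2}/\eta^{... }$ bookkeeping and the $\sqrt{d}$-scale of $\vx$).

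The second-order (pure) term $\E_\vx[(f(\vx;\vtheta')-f(\vx;\vtheta))^2]$ is $\Orlvt(\eta^2\norm{\nabla L}^2\cdot d) = \Orlvt(\eta^2 d^2)$ by the same moment bounds, which is $\le \Orlvt(\eta^{3/2}d^{3/2})$ precisely when $\eta = \Orlvt(1/d)$, so it too is absorbed. Assembling: $L_\lambda(\vtheta') \le L_\lambda(\vtheta) - \eta\norm{\nabla_\vtheta L_\lambda}_F^2 + \Orlvt(\eta^{3/2}d^{3/2})$ after recombining the regularization contribution (whose gradient cross-term with the square-loss gradient is handled automatically since we expand $L_\lambda$ as a whole).

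The main obstacle I anticipate is the H\"older-$3/2$ activation-perturbation estimate: one must carefully track, for a ReLU neuron, the Gaussian integral $\E_\vx\big[ R(\vx)\, p(\vx)\,(\1[\vw'^\top\vx\ge 0]-\1[\vw^\top\vx\ge 0])\big]$ and show it is $\Orlvt(\|\vw'-\vw\|^{3/2})$ uniformly in the relevant $p(\vx)$ (here $p$ is a low-degree polynomial in $\vx$ times bounded factors), rather than the trivial linear-in-$\|\vw'-\vw\|$ bound — this is where the ``$3/2$'' enters and where the ReLU case genuinely differs from a smooth activation. This can be done by rotating coordinates so the two half-spaces differ only in a $2$-dimensional plane, reducing to an explicit planar Gaussian integral over a wedge of angle $\angle(\vw,\vw')$, and using that $|R(\vx)|$ is itself $\Orlvt(\text{poly})$-bounded in that region with an extra factor of the slab width; I would reuse the residual decomposition / Hermite machinery already set up in the appendix to keep the moments of $R$ under control. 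Everything else (the quadratic regularizer, the linear terms, the union over $m$ neurons, plugging in $\eta = \Orlvt(1/d)$) is routine bookkeeping.
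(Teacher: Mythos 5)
Your proposal follows the same route as the paper: expand $L_\lambda(\vtheta-\eta\nabla_\vtheta L_\lambda)$ into the pure second-order term $\E[(f_{\vtheta'}-f_\vtheta)^2]=\Orlvt(\eta^2d^2)$ plus a cross term against $R$, and extract the $\eta^{3/2}$ error from the per-neuron Taylor remainder $I_i$, whose $L^2$ norm scales as $\delta^{3/2}$ because the activation-pattern flip $\sigma'((\vw_i+\tldnabla_{\vw_i})^\top\vx)-\sigma'(\vw_i^\top\vx)$ is supported on a wedge of Gaussian measure $O(\delta)$ on which the linear factor $|(\vw_i+\tldnabla_{\vw_i})^\top\vx|$ is itself $O(\delta\norm{\vx})$, with $\delta=O(\norm{\tldnabla_{\vw_i}}/\norm{\vw_i})=O(\eta d)$. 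One small over-elaboration: you do not need the residual-decomposition or Hermite machinery for $R$ here — the paper bounds the cross term by a plain Cauchy--Schwarz $\sum_i\norm{R}_2\norm{I_i}_2$ with $\norm{R}_2=\Orlvt(1)$ and $\sum_i|a_i|\norm{\vw_i}_2=\Orlvt(1)$, which already closes the argument once $\norm{I_i}_2=\Orlvt(\eta^{3/2}|a_i|\norm{\vw_i}_2d^{3/2})$ is in hand.
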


\subsection{Proofs in Section~\ref{sec: stage 3}}
We now are ready to show the convergence of Stage 3 by using Lemma~\ref{lem: one epoch decrease} to show the loss makes progress every epoch.
\lemstagethree*
\begin{proof}
    Since $|a_i^\supt{0}|\le\norm{\vw_i^\supt{0}}_2$ for all $i\in [m]$ at the beginning of Stage 3, from Lemma~\ref{lem: stay a less than w} we know they will remain hold for all epoch and all time $t$.

    From Lemma~\ref{lem: one epoch decrease} we know for epoch $k$ it finishes within $\Orlvt(\lambda_k^{-4}\eta^{-1})$ time and achieves $L_{\lambda_k}-L_{\lambda_k}(\mu_{\lambda_k}^*)=\Orlvt(\lambda_k^2)$. To proceed
    to next epoch $k+1$, we only need to show the solution at the end of epoch $k$ $\vtheta^\supt{k}$ gives the optimality gap $\zeta=\Orlvt(\lambda_{k+1}^{9/5})$ for the next $\lambda_{k+1}$. We have
    \begin{align*}
        L_{\lambda_{k+1}}(\vtheta^\supt{k}) - L_{\lambda_{k+1}}(\mu_{\lambda_{k+1}}^*)
        =& L(\vtheta^\supt{k}) - L(\mu_{\lambda_{k+1}}^*) + \frac{\lambda_{k+1}}{2} \norm{\va^\supt{k}}_2^2 + \frac{\lambda_{k+1}}{2} \norm{\mW^\supt{k}}_F^2 -  \lambda_{k+1} |\mu_{\lambda_{k+1}}^*|_1\\
        \myle{a}& \Orlvt(\lambda_k^2) + \frac{\lambda_{k+1}}{\lambda_k}\left(\frac{\lambda_{k}}{2} \norm{\va^\supt{k}}_2^2 + \frac{\lambda_{k}}{2} \norm{\mW^\supt{k}}_F^2 -  \lambda_{k} |\mu_{\lambda_{k+1}}^*|_1\right)\\
        \myle{b}& \Orlvt(\lambda_k^2) + \frac{\lambda_{k+1}}{\lambda_k}\left(\Orlvt(\lambda_k^2)
        + L(\mu_{\lambda_{k}}^*) - L(\vtheta^\supt{k}) \right)\\
        \myle{c}& \Orlvt(\lambda_k^2)
        \le \Orlvt(\lambda_{k+1}^{9/5})
    \end{align*}
    where (a) due to Lemma~\ref{lem: loss norm bound} that $L(\vtheta^\supt{k})$ is small; (b) the optimality gap at the end of epoch $k$ is $\Orlvt(\lambda_k^2)$ and $|\mu_{\lambda_k}^*|_1 - |\mu_{\lambda_{k+1}}^*|_1 = \Orlvt(\lambda_k)$ from Lemma~\ref{lem: mu lamb prop}; (c) due to Lemma~\ref{lem: mu lamb prop} that $L(\mu_{\lambda_{k}}^*)$ is small. In this way, we can apply Lemma~\ref{lem: one epoch decrease} again for epoch $k+1$.

    From Lemma~\ref{lem: loss norm bound} we know at the end of epoch $k$ the square loss $L(\vtheta^\supt{k})=\Orlvt(\lambda_k^2)$. Thus, to reach $\eps$ square loss, we need $\lambda_k=\Orlvt(\eps^{1/2})$, which means we need to take $\Orlvt(\log(1/\eps))$ epoch. Since epoch $k$ it finishes within $\Orlvt(\lambda_k^{-4}\eta^{-1})$ time, we know the total time is at most $\Orlvt(\lambda_0^{-4}\eta^{-1}\varepsilon^{-2})$ time.
\end{proof}

To show the lemma below that loss makes progress within every epoch, we rely on the gradient lower bound (Lemma~\ref{lem: grad lower bound}) and smoothness condition of loss function (Lemma~\ref{lem: smoothness}).
\lemoneepochdecrease*
\begin{proof}
    Since $|a_i^\supt{0}|\le\norm{\vw_i^\supt{0}}_2$ for all $i\in [m]$ at the beginning of current epoch, from Lemma~\ref{lem: stay a less than w} we know they will remain hold for all time $t$. Then combine Lemma~\ref{lem: gradient norm bound} and Lemma~\ref{lem: smoothness} we know
    \begin{align*}
        L_\lambda(\vtheta-\eta\nabla_\vtheta L_\lambda)
        \le L_\lambda(\vtheta) - \eta\norm{\nabla_\vtheta L_\lambda}_F^2+\Orlvt(\eta^{3/2}d^{3/2}).
    \end{align*}

    Recall $\zeta_t = L_\lambda(\vtheta^\supt{t}) - L_\lambda(\mu_\lambda^*)$. Using gradient lower bound Lemma~\ref{lem: grad lower bound} and consider the time before $\zeta_t$ reach $\Orlvt(\lambda^2)$ we have
    \begin{align*}
        \zeta_{t+1} \le \zeta_t - \eta\Omegarlvt(\zeta_t^4/\lambda^2) + \Orlvt(\eta^{3/2}d^{3/2})
        \le \zeta_t - \Omegarlvt(\eta\zeta_t^4/\lambda^2),
    \end{align*}
    where we use $\eta=\Orlvt(\lambda^{12}d^{-3})$ to be small enough.

    The above recursion implies that 
    \begin{align*}
        \zeta_t = \Orlvt((t/\lambda^2+\zeta_0^{-3})^{-1/3}).
    \end{align*}
    Thus, within $\Orlvt(1/\lambda^4)$ the optimality gap $\zeta_t$ reaches $\Orlvt(\lambda^2)$.
\end{proof}

The lemma below shows a regularity condition on the norm between two layers.
\begin{lemma}\label{lem: stay a less than w}
    If we start at $|a_i^\supt{0}|\le\norm{\vw_i^\supt{0}}_2$ and $\eta=\Orlvt(1)$, then we have $|a_i^\supt{t}|^2\le\norm{\vw_i^\supt{t}}_2^2$ for all $i\in[m_*]$ and all time $t$.
\end{lemma}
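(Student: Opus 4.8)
The plan is to prove the invariant $|a_i^\supt{t}|^2 \le \norm{\vw_i^\supt{t}}_2^2$ by induction on $t$, with the base case being exactly the hypothesis. For the inductive step it suffices to track the single scalar quantity $g_i^\supt{t} := |a_i^\supt{t}|^2 - \norm{\vw_i^\supt{t}}_2^2$ and show that one gradient‑descent step on $L_\lambda$ maps $g_i^\supt{t}\le 0$ to $g_i^\supt{t+1}\le 0$. Writing $L_\lambda = L + \frac{\lambda}{2}\norm{\va}_2^2 + \frac{\lambda}{2}\norm{\mW}_F^2$ with $L$ the square loss, the update is $a_i^\supt{t+1} = (1-\eta\lambda)a_i^\supt{t} - \eta\,\partial_{a_i}L$ and $\vw_i^\supt{t+1} = (1-\eta\lambda)\vw_i^\supt{t} - \eta\,\nabla_{\vw_i}L$, so the $\ell_2$ regularizer contributes the \emph{same} multiplicative shrinkage $(1-\eta\lambda)$ to both layers — this is why the quantity $g_i$ is the right thing to track (under gradient flow it would satisfy $\dot g_i = -2\lambda g_i$, generalizing the conservation of layer balancedness for homogeneous nets).

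The key algebraic fact is the Euler/homogeneity identity for ReLU, $t\sigma'(t)=\sigma(t)$, which gives $a_i\,\partial_{a_i}L = \vw_i^\top\nabla_{\vw_i}L = \E_\vx[2R(\vx)a_i\sigma(\vw_i^\top\vx)]$, where $R=f-\tldf_*$ is the residual (note $\partial_{a_i}R=\sigma(\vw_i^\top\vx)$ and $\nabla_{\vw_i}R=a_i\sigma'(\vw_i^\top\vx)\vx$, and the linear term $\alpha,\vbeta$ contributes nothing to these partials). Expanding the squares of the two updates and using this identity, the cross terms cancel and one obtains
\[
g_i^\supt{t+1} = (1-\eta\lambda)^2\,g_i^\supt{t} + \eta^2\Big((\partial_{a_i}L)^2 - \norm{\nabla_{\vw_i}L}_2^2\Big).
\]
Now write $\nabla_{\vw_i}L = 2a_i\vc_i$ and $\partial_{a_i}L = 2\vw_i^\top\vc_i$ with $\vc_i := \E_\vx[R(\vx)\sigma'(\vw_i^\top\vx)\vx]$. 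Cauchy–Schwarz gives $(\partial_{a_i}L)^2 = 4(\vw_i^\top\vc_i)^2 \le 4\norm{\vw_i}_2^2\norm{\vc_i}_2^2$, hence $(\partial_{a_i}L)^2 - \norm{\nabla_{\vw_i}L}_2^2 \le 4\norm{\vc_i}_2^2(\norm{\vw_i}_2^2 - a_i^2) = -4\norm{\vc_i}_2^2\,g_i^\supt{t}$. Substituting, $g_i^\supt{t+1} \le g_i^\supt{t}\big((1-\eta\lambda)^2 - 4\eta^2\norm{\vc_i}_2^2\big)$, which is $\le 0$ since $g_i^\supt{t}\le 0$ by induction, \emph{provided} $(1-\eta\lambda)^2 \ge 4\eta^2\norm{\vc_i}_2^2$.

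It then remains to bound $\norm{\vc_i}_2$ uniformly in $t$. For any unit vector $\vu$ we have $\vu^\top\vc_i = \E_\vx[R(\vx)\sigma'(\vw_i^\top\vx)(\vu^\top\vx)] \le \sqrt{\E_\vx[R(\vx)^2]}\,\sqrt{\E_\vx[(\vu^\top\vx)^2]} = \sqrt{L}$, using $|\sigma'|\le 1$ and $\E_\vx[(\vu^\top\vx)^2]=1$, so $\norm{\vc_i}_2 \le \sqrt{L}$. Since the square loss only decreases along each epoch (by the descent estimate, Lemma~\ref{lem: smoothness}) it stays $\Orlvt(1)$ throughout, and together with $\lambda=\Orlvt(1)$ the condition $(1-\eta\lambda)^2 \ge 4\eta^2\norm{\vc_i}_2^2$ holds whenever $\eta=\Orlvt(1)$ is small enough, completing the induction. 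I expect the only delicate point to be this last step: the bound on $\norm{\vc_i}_2$ requires an a priori bound on the loss that holds for all $t$, while the descent lemma providing that bound itself assumes $|a_i|\le\norm{\vw_i}_2$; the clean way to resolve this is to run the present induction jointly with the loss‑decrease estimate within each epoch (and to use subgradients for $\sigma'$, so that the homogeneity identity and all the expectations above are well defined — the non‑differentiability set of ReLU has Gaussian measure zero).
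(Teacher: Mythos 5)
Your proof is correct and follows essentially the same route as the paper's: induct on $t$, expand the squared updates, use the ReLU Euler identity $t\sigma'(t)=\sigma(t)$ to kill the cross terms coming from the square loss, apply Cauchy--Schwarz to compare $(\partial_{a_i}L)^2$ against $\|\nabla_{\vw_i}L\|_2^2$, and conclude under a mild $\eta$ smallness condition. A few places where your write-up is actually tighter than the paper's: you keep the full $(1-\eta\lambda)^2$ prefactor, whereas the paper's displayed chain silently drops the linear-in-$\eta$ cross term $-2\eta\lambda\,g_i$ (it carries only $g_i+\eta^2(\cdots)$ instead of the correct $(1-2\eta\lambda)g_i+\eta^2(\cdots)$) --- a harmless slip since both sufficient conditions hold for small $\eta$, but yours is the correct identity; your bound $\|\vc_i\|_2\le\sqrt{L}=\Orlvt(1)$ is sharper than the paper's stated $\Orlvt(d)$ (Lemma on gradient norm bound); and you explicitly flag the circular dependency between this lemma and the a priori loss bound, which the paper glosses over (it implicitly relies on the surrounding joint induction in the proof of the one-epoch descent lemma). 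None of these changes the substance; you have the right argument.
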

\begin{proof}
    Denote $R(\vx)=f(\vx)-f_*(\vx)$. Assume $|a_i^\supt{t}|^2 - \norm{\vw_i^\supt{t}}_2^2\le 0$ we show it remains at $t+1$. We have
    \begin{align*}
        &|a_i^\supt{t+1}|^2 - \norm{\vw_i^\supt{t+1}}_2^2\\
        =& |a_i^\supt{t}-\eta\nabla_{a_i}L_\lambda(\vtheta^\supt{t})|^2 - \norm{\vw_i^\supt{t}-\eta\nabla_{\vw_i}L_\lambda(\vtheta^\supt{t})}_2^2\\
        =& |a_i^\supt{t}|^2 - \norm{\vw_i^\supt{t}}_2^2
        +\eta^2 |\nabla_{a_i}L_\lambda(\vtheta^\supt{t})|^2 - \eta^2 \norm{\nabla_{\vw_i}L_\lambda(\vtheta^\supt{t})}_2^2\\
        =& |a_i^\supt{t}|^2 - \norm{\vw_i^\supt{t}}_2^2
        +\eta^2 |2\E_\vx[R(\vx)\sigma(\vw_i^{\supt{t}\top}\vx)]+\lambda a_i^\supt{t}|^2
        -\eta^2 \norm{2\E_\vx[R(\vx)a_i^\supt{t}\sigma^\prime(\bvw_i^{\supt{t}\top}\vx)\vx] + \lambda\vw_i^\supt{t}}_2^2
    \end{align*}
    We first focus on the last 2 terms. We have
    \begin{align*}
        &|2\E_\vx[R(\vx)\sigma(\vw_i^{\supt{t}\top}\vx)]+\lambda a_i^\supt{t}|^2
        -\norm{2\E_\vx[R(\vx)a_i^\supt{t}\sigma^\prime(\bvw_i^{\supt{t}\top}\vx)\vx] + \lambda\vw_i^\supt{t}}_2^2\\
        =&\norm{\vw_i^\supt{t}}_2^2 |2\E_\vx[R(\vx)\sigma(\bvw_i^{\supt{t}\top}\vx)]|^2
        + \lambda^2|a_i^\supt{t}|^2
        - |a_i^\supt{t}|^2 \norm{2\E_\vx[R(\vx)\sigma^\prime(\bvw_i^{\supt{t}\top}\vx)\vx]}_2^2
        - \lambda^2\norm{\vw_i^\supt{t}}_2^2\\
        \myle{a}& \left(|a_i^\supt{t}|^2 - \norm{\vw_i^\supt{t}}_2^2\right)
        \left(\lambda^2 - \norm{2\E_\vx[R(\vx)\sigma^\prime(\bvw_i^{\supt{t}\top}\vx)\vx]}_2^2\right),
    \end{align*}   
    where (a) due to $|2\E_\vx[R(\vx)\sigma(\bvw_i^{\supt{t}\top}\vx)]|^2\le \norm{2\E_\vx[R(\vx)\sigma^\prime(\bvw_i^{\supt{t}\top}\vx)\vx]}_2^2$.
    
    Therefore, plug it back to the above equation, we have
    \begin{align*}
        |a_i^\supt{t+1}|^2 - \norm{\vw_i^\supt{t+1}}_2^2
        \le& \left(|a_i^\supt{t}|^2 - \norm{\vw_i^\supt{t}}_2^2\right)
        \left(1+\eta^2\lambda^2-\eta^2\norm{2\E_\vx[R(\vx)\sigma^\prime(\bvw_i^{\supt{t}\top}\vx)\vx]}_2^2\right)\\
        \myle{a}&  0,
    \end{align*}
    where (a) due to $|a_i^\supt{t}|^2 - \norm{\vw_i^\supt{t}}_2^2\le 0$ and we use $\norm{2\E_\vx[R(\vx)\sigma^\prime(\bvw_i^{\supt{t}\top}\vx)]\vx}_2^2=\Orlvt(d)$ from Lemma~\ref{lem: gradient norm bound} and $\eta$ is small enough.

    Therefore, we can see that $|a_i^\supt{t}|^2 - \norm{\vw_i^\supt{t}}_2^2\le 0$ remains for all $t$.
\end{proof}

This lemma shows the smoothness of loss function. The proof requires a careful calculations to bound the error terms.
\lemsmoothness*
\begin{proof}
    Denote $R_\vtheta(\vx)=f_\vtheta(\vx)-f_*(\vx)$ to denote the dependency on $\vtheta$. For simplicity, we will use $\tldnabla_\vtheta = -\eta\nabla_\vtheta L_\lambda$ and same for others. Since $\norm{\E_\vx[R(\vx)\sigma^\prime(\bvw_i^{\supt{t}\top}\vx)\vx]}_2^2=\Orlvt(d)$, we know $|\tldnabla_{a_i}|=\Orlvt(\eta\norm{\vw_i}_2d)$ and $\norm{\tldnabla_{\vw_i}}_2 = \Orlvt(\eta|a_i|d)$

    We have
    \begin{align*}
        &L_\lambda(\vtheta-\eta\nabla_\vtheta)
        - L_\lambda(\vtheta) + \eta\norm{\nabla_\vtheta}_F^2\\
        =&     L_\lambda(\vtheta-\eta\nabla_\vtheta)
        - L_\lambda(\vtheta) - \langle\nabla_\vtheta,-\eta\nabla_\vtheta\rangle\\
        =&\E_\vx[R_{\vtheta+\tldnabla_\vtheta}(\vx)^2] + \frac{\lambda}{2}\norm{\va+\tldnabla_\va}_2^2+\frac{\lambda}{2}\norm{\mW+\tldnabla_\mW}_F^2 
        - \E_\vx[R_\vtheta(\vx)^2] - \frac{\lambda}{2}\norm{\va}_2^2-\frac{\lambda}{2}\norm{\mW}_F^2\\
        &- \sum_{i\in[m]} \E_\vx[R_\vtheta(\vx)\sigma(\vw_i^\top\vx)\tldnabla_{a_i}]
        - \sum_{i\in[m]} \E_\vx[R_\vtheta(\vx)a_i\sigma^\prime(\vw_i^\top\vx)\vx^\top\tldnabla_{\vw_i}]
        - \E_\vx[R_\vtheta(\vx)\tldnabla_{\alpha}]
        - \E_\vx[R_\vtheta(\vx)\vx^\top\tldnabla_{\vbeta}]\\
        &-\lambda \langle\va,\tldnabla_\va\rangle - \lambda \langle\mW,\tldnabla_\mW\rangle\\
        =&\underbrace{\E_\vx[(R_{\vtheta+\tldnabla_\vtheta}(\vx) - R_{\vtheta}(\vx))^2]}_{(I)}\\
        &+ 2\underbrace{\E_\vx\left[R_{\vtheta}(\vx)\left(R_{\vtheta+\tldnabla_\vtheta}(\vx) - R_{\vtheta}(\vx) 
        - \sum_{i\in[m]} \sigma(\vw_i^\top\vx)\tldnabla_{a_i}
        - \sum_{i\in[m]}a_i\sigma^\prime(\vw_i^\top\vx)\vx^\top\tldnabla_{\vw_i} 
        - \tldnabla_\alpha - \vx^\top\tldnabla_\vbeta\right)\right]}_{(II)}\\
        &+\frac{\lambda}{2}\norm{\tldnabla_\va}_2^2 + \frac{\lambda}{2}\norm{\tldnabla_\mW}_F^2.
    \end{align*}

    The last line is easy to see on $\Orlvt(\eta^2d^2)$ using norm bound in Lemma~\ref{lem: norm bound},
    so in below we are going to bound (I) and (II) one by one. The goal is to show they are small in the sense of on order $o(\eta)$.

    \paragraph{Bound (I)} For (I), we can write out the expression as
    \begin{align*}
        \E_\vx[(R_{\vtheta+\tldnabla_\vtheta}(\vx) - R_{\vtheta}(\vx))^2]
        =& \E_\vx\left[\left(\sum_{i\in[m]}(a_i+\tldnabla_{a_i})\sigma((\vw_i+\tldnabla_{\vw_i})^\top\vx)-a_i\sigma(\vw_i^\top\vx) + \tldnabla_\alpha + \vx^\top\tldnabla_\vbeta\right)^2\right]\\
        \le& 2\underbrace{\E_\vx\left[\left(\sum_{i\in[m]}(a_i+\tldnabla_{a_i})\sigma((\vw_i+\tldnabla_{\vw_i})^\top\vx)-a_i\sigma(\vw_i^\top\vx)\right)^2\right]}_{(I.i)}\\
        &+ 2\underbrace{\E_\vx\left[\left(\tldnabla_\alpha + \vx^\top\tldnabla_\vbeta\right)^2\right]}_{(I.ii)}
    \end{align*}

    For (I.i), we can split into 2 terms as
    \begin{align*}
        &\E_\vx\left[\left(\sum_{i\in[m]}(a_i+\tldnabla_{a_i})\sigma((\vw_i+\tldnabla_{\vw_i})^\top\vx)-a_i\sigma(\vw_i^\top\vx)\right)^2\right]\\
        \le& 2\E_\vx\left[\left(\sum_{i\in[m]}\tldnabla_{a_i}\sigma((\vw_i+\tldnabla_{\vw_i})^\top\vx)\right)^2\right]
        + 2 \E_\vx\left[\left(\sum_{i\in[m]}a_i\sigma((\vw_i+\tldnabla_{\vw_i})^\top\vx)-a_i\sigma(\vw_i^\top\vx)\right)^2\right]\\
        \le& 2\E_\vx\left[\left(\sum_{i\in[m]}|\tldnabla_{a_i}||(\vw_i+\tldnabla_{\vw_i})^\top\vx|\right)^2\right]
        + 2 \E_\vx\left[\left(\sum_{i\in[m]}|a_i||\tldnabla_{\vw_i}^\top\vx|\right)^2\right].
    \end{align*}
    
    We then can bound them separately as
    \begin{align*}
        (I.i)\myle{a}&
        O(1) \left(\sum_{i\in[m]}|\tldnabla_{a_i}|\norm{\vw_i+\tldnabla_{\vw_i}}_2\right)^2
        + O(1) \left(\sum_{i\in[m]}|a_i|\norm{\tldnabla_{\vw_i}}_2\right)^2\\    
        \myle{b}& \Orlvt(d^2)\left(\sum_{i\in[m]}\eta\norm{\vw_i}_2^2 + \eta^2|a_i|\norm{\vw_i}_2d\right)^2 
        + \Orlvt(d^2) \left(\sum_{i\in[m]}\eta a_i^2\right)^2\\
        \myle{c}& \Orlvt(\eta^2d^2), 
    \end{align*}
    where (a) we use Lemma~\ref{lem: calculation-4}; (b) recall $|\tldnabla_{a_i}|=\Orlvt(\eta\norm{\vw_i}_2d)$ and $\norm{\tldnabla_{\vw_i}}_2 = \Orlvt(\eta|a_i|d)$; (c) $\norm{\va},\norm{\mW}_F,\sum_{i\in[m]}|a_i|\norm{\vw_i}_2=\Orlvt(1)$ from Lemma~\ref{lem: norm bound} and Lemma~\ref{lem: loss norm bound}, as well as $\eta$ is small enough.

    For (I.ii), we have
    \begin{align*}
        \E_\vx\left[\left(\tldnabla_\alpha + \vx^\top\tldnabla_\vbeta\right)^2\right]
        \le O(|\tldnabla_\alpha|^2  + \norm{\tldnabla_\vbeta}_2^2) 
        =\Orlvt(\eta^2d^2),
    \end{align*}
    where we use Lemma~\ref{lem: loss norm bound}.
    
    Combine (I.i) and (I.ii) we know (I)=$\Orlvt(\eta^2d^2)$.

    \paragraph{Bound (II)} For (II), we have
    \begin{align*}
        &\E_\vx\left[R_{\vtheta}(\vx)\left(R_{\vtheta+\tldnabla_\vtheta}(\vx) - R_{\vtheta}(\vx) 
        - \sum_{i\in[m]} \sigma(\vw_i^\top\vx)\tldnabla_{a_i}
        - \sum_{i\in[m]}a_i\sigma^\prime(\vw_i^\top\vx)\vx^\top\tldnabla_{\vw_i} 
        - \tldnabla_\alpha - \vx^\top\tldnabla_\vbeta\right)\right]\\
        =& \E_\vx\left[R_{\vtheta}(\vx)\left(\sum_{i\in[m]} \underbrace{(a_i+\tldnabla_{a_i})\sigma((\vw_i+\tldnabla_{\vw_i})^\top\vx) 
        - a_i\sigma(\vw_i^\top\vx) 
        - \sigma(\vw_i^\top\vx)\tldnabla_{a_i}
        - a_i\sigma^\prime(\vw_i^\top\vx)\vx^\top\tldnabla_{\vw_i}}_{I_i(\vx)} 
        \right)\right]\\
        \le& \sum_{i\in[m]} \norm{R_\vtheta}\norm{I_i}
    \end{align*}

    We focus on bound $\norm{I_i}$ below. The goal is to show it is $o(\eta)$. For $I_i(\vx)$, we have
    \begin{align*}
        \norm{I_i}_2^2
        =& \E_\vx\left[\left( (a_i+\tldnabla_{a_i})\sigma((\vw_i+\tldnabla_{\vw_i})^\top\vx) 
        - a_i\sigma(\vw_i^\top\vx) 
        - \sigma(\vw_i^\top\vx)\tldnabla_{a_i}
        - a_i\sigma^\prime(\vw_i^\top\vx)\vx^\top\tldnabla_{\vw_i}
        \right)^2\right]\\
        \le& \E_\vx\left[2\left(\tldnabla_{a_i}(\sigma((\vw_i+\tldnabla_{\vw_i})^\top\vx) - \sigma(\vw_i^\top\vx))\right)^2 
        + 2\left(a_i(\sigma((\vw_i+\tldnabla_{\vw_i})^\top\vx) - \sigma(\vw_i^\top\vx) - \sigma^\prime(\vw_i^\top\vx)\vx^\top\tldnabla_{\vw_i})
        \right)^2\right]\\
        \le& 2\underbrace{\E_\vx\left[|\tldnabla_{a_i}|^2|\tldnabla_{\vw_i}^\top\vx|^2\right]}_{(II.i)} 
        + 2a_i^2\underbrace{\E_\vx\left[|(\vw_i+\tldnabla_{\vw_i})^\top\vx|^2 (\sigma^\prime((\vw_i+\tldnabla_{\vw_i})^\top\vx) - \sigma^\prime(\vw_i^\top\vx))^2\right]}_{(II.ii)}
    \end{align*}

    For (II.i), recall $|\tldnabla_{a_i}|=\Orlvt(\eta\norm{\vw_i}_2d)$ and $\norm{\tldnabla_{\vw_i}}_2 = \Orlvt(\eta|a_i|d)$ we have
    \begin{align*}
        \E_\vx\left[|\tldnabla_{a_i}|^2|\tldnabla_{\vw_i}^\top\vx|^2\right]
        \le |\tldnabla_{a_i}|^2\norm{\tldnabla_{\vw_i}}^2
        =\Orlvt(\eta^4|a_i|^2\norm{\vw_i}_2^2d^4).
    \end{align*}

    For (II.ii), we have
    \begin{align*}
        &\E_\vx\left[|\vw_i+\tldnabla_{\vw_i}^\top\vx|^2 (\sigma^\prime((\vw_i+\tldnabla_{\vw_i})^\top\vx) - \sigma^\prime(\vw_i^\top\vx))^2\right]\\
        =& \E_\vx\left[|(\vw_i+\tldnabla_{\vw_i})^\top\vx|^2 \ind_{\sign((\vw_i+\tldnabla_{\vw_i})^\top\vx) \neq \sign(\vw_i^\top\vx)}\right]\\
        \le& O(\norm{\vw_i+\tldnabla_{\vw_i}}_2^2\delta^3),
    \end{align*}
    where $\delta=\angle(\vw_i+\tldnabla_{\vw_i},\vw_i)$ is the angle between $\vw_i+\tldnabla_{\vw_i}$ and $\vw_i$. Since $\norm{\tldnabla_{\vw_i}}_2 = \Orlvt(\eta|a_i|d)=\Orlvt(\eta\norm{\vw_i}_2d)$, we know $\delta=O(\norm{\tldnabla_{\vw_i}})$ given $\eta=\Orlvt(1/d)$ to be small enough.

    Combine (II.i) and (II.ii) we have
    \begin{align*}
        \norm{I_i}_2^2
        \le \Orlvt(\eta^4 a_i^2\norm{\vw_i}_2^2d^4)
        + O(a_i^2\norm{\vw_i+\tldnabla_{\vw_i}}_2^2\norm{\tldnabla_{\vw_i}}_2^3)
        \le \Orlvt(\eta^3 a_i^2\norm{\vw_i}_2^2d^3).
    \end{align*}

    Since $\norm{R_\vtheta}=\Orlvt(1)$, this implies 
    \begin{align*}
        (II)
        \le \sum_{i\in[m]} \Orlvt(\eta^{3/2} a_i\norm{\vw_i}_2 d^{3/2})
        = \Orlvt(\eta^{3/2}d^{3/2}).
    \end{align*}

    \paragraph{Combine (I)(II)}
    Finally, combing (I) and (II) we have
    \begin{align*}
        L_\lambda(\vtheta-\eta\nabla_\vtheta)
        - L_\lambda(\vtheta) + \eta\norm{\nabla_\vtheta}_F^2
        =\Orlvt(\eta^{3/2}d^{3/2}).
    \end{align*}
    Going back to the beginning of this proof, we get the desired result.
\end{proof}

\subsection{Technical Lemma}
We present technical lemmas that are used in the proof of this section. They mostly follow from direct calculations.
\begin{lemma}\label{lem: gradient norm bound}
    We have $\norm{\E_\vx[R(\vx)\sigma^\prime(\bvw_i^{\supt{t}\top}\vx)\vx]}_2^2=\Orlvt(d)$
\end{lemma}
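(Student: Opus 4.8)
The plan is to bound the vector $\vg_i:=\E_\vx[R(\vx)\sigma^\prime(\bvw_i^{\supt{t}\top}\vx)\vx]$, where $R(\vx)=f(\vx;\vtheta^\supt{t})-f_*(\vx)$, by testing it against its own direction. Setting $\overline{\vg}_i=\vg_i/\norm{\vg_i}_2$ (the claim being trivial when $\vg_i=\vzero$) and using that the ReLU derivative satisfies $0\le\sigma^\prime\le 1$, Cauchy--Schwarz gives
\[
\norm{\vg_i}_2
=\E_\vx\!\big[R(\vx)\,\sigma^\prime(\bvw_i^{\supt{t}\top}\vx)\,(\overline{\vg}_i^\top\vx)\big]
\le\sqrt{\E_\vx[R(\vx)^2\sigma^\prime(\bvw_i^{\supt{t}\top}\vx)^2]}\;\sqrt{\E_\vx[(\overline{\vg}_i^\top\vx)^2]}
\le\sqrt{\E_\vx[R(\vx)^2]},
\]
since $\overline{\vg}_i$ is a unit vector. (A cruder coordinate-wise Cauchy--Schwarz gives only $\norm{\vg_i}_2^2\le d\,\E_\vx[R(\vx)^2]$, which is where the factor $d$ in the statement comes from; the projection argument already yields the sharper $\norm{\vg_i}_2^2\le\E_\vx[R(\vx)^2]$, and either is enough.) It therefore suffices to show $\E_\vx[R(\vx)^2]=\Orlvt(1)$.

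For this, split $\E_\vx[R^2]\le 2\,\E_\vx[f(\vx;\vtheta^\supt{t})^2]+2\,\E_\vx[f_*(\vx)^2]$. The teacher term is a fixed quantity $\E_\vx[f_*(\vx)^2]=\Orlvt(1)$ depending only on $f_*$. For the student term, $\E_\vx[f^2]\le 3\,\E_\vx[(\sum_i a_i\sigma(\vw_i^\top\vx))^2]+3\alpha^2+3\norm{\vbeta}_2^2$, and using $|\sigma(\vw_i^\top\vx)|\le\norm{\vw_i}_2\,|\bvw_i^\top\vx|$ together with $\E_\vx[|\bvw_i^\top\vx|\,|\bvw_j^\top\vx|]\le 1$ (each factor being standard normal),
\[
\E_\vx\!\Big[\Big(\sum_i a_i\sigma(\vw_i^\top\vx)\Big)^2\Big]
\le\sum_{i,j}|a_i|\norm{\vw_i}_2\,|a_j|\norm{\vw_j}_2\;\E_\vx[|\bvw_i^\top\vx|\,|\bvw_j^\top\vx|]
\le\Big(\sum_i|a_i|\norm{\vw_i}_2\Big)^2 .
\]
By the Stage~3 norm bounds (Lemma~\ref{lem: norm bound}, Lemma~\ref{lem: loss norm bound}) one has $\norm{\va}_2,\norm{\mW}_F,|\alpha|,\norm{\vbeta}_2=\Orlvt(1)$ throughout Stage~3, hence $\sum_i|a_i|\norm{\vw_i}_2\le\tfrac12(\norm{\va}_2^2+\norm{\mW}_F^2)=\Orlvt(1)$ by AM--GM. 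Substituting gives $\E_\vx[R(\vx)^2]=\Orlvt(1)$, and therefore $\norm{\vg_i}_2^2\le\E_\vx[R(\vx)^2]=\Orlvt(1)=\Orlvt(d)$ for every $i\in[m]$, which is the claim.

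I do not expect a genuine obstacle here: after the projection trick the statement reduces to standard Gaussian second-moment bookkeeping, which is why it is grouped with the ``direct calculation'' technical lemmas. The one point worth care is non-circularity --- the iterate norm bounds invoked above should be obtained from the regularized objective and the update rule themselves, not from the per-step descent lemma, since the descent lemma uses exactly this gradient bound through the smoothness estimate (Lemma~\ref{lem: smoothness}); the cited norm bounds are arranged so that they hold throughout Stage~3 independently of the descent analysis.
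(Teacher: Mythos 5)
Your proof is correct and takes the same route as the paper, whose entire argument is the one-liner ``easy to see given $\norm{R}=\Orlvt(1)$'': you simply fill in the Cauchy--Schwarz step and the $\Orlvt(1)$ bound on $\E_\vx[R^2]$ via the norm bounds, which is exactly what the paper leaves implicit. Your observation that the projection argument gives the sharper $\Orlvt(1)$ bound, and your check that the invoked norm bounds do not circularly depend on the descent lemma, are both sound but not needed beyond what the paper asserts.
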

\begin{proof}
    It is easy to see given $\norm{R}=\Orlvt(1)$.
\end{proof}

\begin{lemma}[Lemma D.4 in \cite{zhou2021local}]\label{lem: calculation-4}
Consider $\alpha_i\in\mathbb{R}^d$ for $i\in[n]$. We have
    \begin{align*}
        \E_{x\sim N(0,I)}\left[\left(\sum_{i=1}^{n}|\alpha_i^\top x|\right)^2\right]
        \leq c_0 \left(\sum_{i=1}^{n}\norm{\alpha_i}\right)^2,
    \end{align*}
    where $c_0$ is a constant.
\end{lemma}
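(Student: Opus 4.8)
The plan is to expand the square, pass to a sum of pairwise expectations, and control each pair by Cauchy--Schwarz using the exact second moment of a Gaussian linear form. First I would write
\[
  \left(\sum_{i=1}^{n}|\alpha_i^\top x|\right)^2 = \sum_{i=1}^{n}\sum_{j=1}^{n}|\alpha_i^\top x|\,|\alpha_j^\top x|,
\]
so that by linearity of expectation it suffices to bound $\E_{x\sim N(0,I)}\big[|\alpha_i^\top x|\,|\alpha_j^\top x|\big]$ for every pair $(i,j)$.

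For a fixed pair, Cauchy--Schwarz in $L^2(N(0,I_d))$ gives
\[
  \E\big[|\alpha_i^\top x|\,|\alpha_j^\top x|\big] \le \sqrt{\E\big[(\alpha_i^\top x)^2\big]}\,\sqrt{\E\big[(\alpha_j^\top x)^2\big]}.
\]
Since $x\sim N(0,I_d)$, the linear form $\alpha_i^\top x$ is a centered Gaussian with variance $\alpha_i^\top\alpha_i=\norm{\alpha_i}_2^2$, hence $\E[(\alpha_i^\top x)^2]=\norm{\alpha_i}_2^2$ and likewise for $j$. Therefore $\E[|\alpha_i^\top x|\,|\alpha_j^\top x|]\le \norm{\alpha_i}_2\norm{\alpha_j}_2$, and summing over all $i,j$ yields
\[
  \E_{x\sim N(0,I)}\!\left[\left(\sum_{i=1}^{n}|\alpha_i^\top x|\right)^2\right] \le \sum_{i=1}^{n}\sum_{j=1}^{n}\norm{\alpha_i}_2\norm{\alpha_j}_2 = \left(\sum_{i=1}^{n}\norm{\alpha_i}_2\right)^2,
\]
so the claim holds with $c_0=1$ (any larger absolute constant works as well).

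There is essentially no obstacle here: the only point worth flagging is that $c_0$ must be independent of both $n$ and $d$, and the argument above makes this transparent, because the Cauchy--Schwarz step never generates a factor of $n$ (the double sum is absorbed into the square) nor a factor of $d$ (the Gaussian second moment is exactly the squared Euclidean norm). An equally short alternative would replace Cauchy--Schwarz by the elementary inequality $|ab|\le\tfrac12(a^2+b^2)$ together with the same second-moment computation; I would prefer the Cauchy--Schwarz version since it is cleanest and already attains the optimal constant.
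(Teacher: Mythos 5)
Your proof is correct and clean: expanding the square, applying Cauchy--Schwarz termwise, and using the exact Gaussian second moment $\E[(\alpha_i^\top x)^2]=\norm{\alpha_i}_2^2$ yields the bound with the optimal constant $c_0=1$. The paper does not reproduce a proof of this lemma (it simply cites Lemma D.4 of \citet{zhou2021local}), so there is nothing in this document to compare against; your argument is the standard one and is fully rigorous.
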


\section{Local landscape of population loss}\label{sec: local landscape}
In this section, we are going to show Lemma~\ref{lem: grad lower bound} that characterizing the population local landscape with a fixed $\lambda$ by giving the lower bound of gradient. 

\paragraph{Outline}
We generally follow the high-level proof plan that outlines in Section~\ref{sec: structure}. In Section~\ref{sec: strcture of ideal loss} and Section~\ref{sec: ideal to real loss}, we characterize the local geometry as in Lemma~\ref{lem: structure prop}. Then, we use it to construct descent direction in Section~\ref{sec: descent dir}. Finally we give the proof of Lemma~\ref{lem: grad lower bound} in Section~\ref{appendix subsec: gradient lower bound proof}.

We start by identifying the structure of (approximated) solution of a closely-related problem in Section~\ref{sec: strcture of ideal loss} (rewrite \eqref{eq: mu loss}):
\begin{align}\label{eq: mu loss appendix}
    \min_{\mu\in \gM(\sS^{d-1})} L_\lambda(\mu) 
    :=& L(\mu) + \lambda|\mu|_1
    := \E_{\vx,\tldy}[(f_\mu(\vx) - \tldy)^2] + \lambda|\mu|_1\\
    =& \E_{\vx}\left[\left(\int_\vw \sgmtwo(\vw^\top\vx) \rd \mu-\mu_*\right)^2\right] + \lambda|\mu|_1,
\end{align}
where $\gM(\sS^{d-1})$ is the measure space over unit sphere $\sS^{d-1}$, $\mu_*=\sum_{i\in[m_*]}a_i^*\delta_{\vw_i^*}$ and $\sgmtwo(x)=\sigma(x)-1/\sqrt{2\pi}-x/2$ is the activation that after removing 0th and 1st order term in Hermite expansion. Note that when $\mu$ represents a finite-wdith network, we have
$\mu=\sum_{i\in[m]}a_i\norm{\vw_i}_2\delta_{\bvw_i}$ is a empirical measure over the neurons. In particular, when $\mu=\mu_*$, model $f_\mu$ recovers the target $\tldf_*$.

We call \eqref{eq: mu loss} as the ideal loss because the original problem \eqref{eq: loss} would become the above \eqref{eq: mu loss} when we balance the norms ($\norm{\vw_i}_2=|a_i|$), perfectly fit $\alpha,\beta$ and relax the finite-width constraints to allow infinite-width (see Claim~\ref{lem: loss tensor decomp}). This is why we slightly abused the notation to use $L_\lambda$ in both \eqref{eq: loss} and \eqref{eq: mu loss}.

In Section~\ref{sec: descent dir} we will use the solution structure to construct descent direction that are positively correlated with gradient and also handle the case when norms are not balanced or $\alpha,\beta$ are not fitted well.

\paragraph{Notation}
Denote the optimality gap between the loss at $\mu$ and the optimal distribution $\mu_\lambda^*$ as
    \begin{align*}
        \zeta(\mu):= L_\lambda(\mu) - L_\lambda(\mu_\lambda^*),
    \end{align*}
where $\mu_\lambda^*$ is the optimal measure that minimize \eqref{eq: mu loss}. For simplicity denote $\tlda_i=a_i\norm{\vw_i}_2$ so that $|\mu|_1=\norm{\tldva}_1$ when $\mu=\sum_{i\in[m]}a_i\norm{\vw_i}_2\delta_{\bvw_i}$. Often we use $\zeta_t=\zeta(\mu_t)$ to denote the optimality gap at time $t$ and just $\zeta$ for simplicity. We slightly abuse the notation to also use $\zeta=L_\lambda(\theta)-L_\lambda(\mu_\lambda^*)$. Finally denote $\mu^*=\sum_{i\in[m_*]}a_i^*\delta_{\vw_i^*}$ (assuming $\norm{\vw_i^*}_2=1$) so that $f_{\mu^*}(\vx)=\E_{\vw\sim\mu^*}[\sgmtwo(\vw^\top\vx)]$.

\subsection{Structure of the ideal loss solution}\label{sec: strcture of ideal loss}
In this section, we will focus on the structure of approximated solution for the $\ell_1$ regularized regression problem \eqref{eq: mu loss}.

In the rest of this section, we will first introduce the idea of non-degenerate dual certificate and then use it as a tool to characterize the structure of the solutions. The proofs are deferred to Section~\ref{appendix: local landscape}.

\subsubsection{Non-degenerate dual certificate}
We first recall the definition of non-degenerate dual certificate, which is similar as in \citep{poon2023geometry} but slightly adapted for fit our need. 
\defnondegedual*

We first show that there exist such non-degenerate dual certificate. More discussion and a detailed proof are deferred to Section~\ref{sec: dual cert}.
\begin{restatable}{lemma}{lemdualcert}\label{lem: dual cert}
    There exists a non-degenerate dual certificate $\eta=\E_\vx[p(\vx)\sgmtwo(\vw^\top\vx)]$ with $\rho_\eta=\Theta(1)$ and $\norm{p}_2\le \poly(m_*,\Delta)$
\end{restatable}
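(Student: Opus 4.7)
The plan is to construct $\eta(\vw)$ explicitly as a finite polynomial in the inner products $\{\vw_i^{*\top}\vw\}$, then realize it in the required form via a Hermite-polynomial-based choice of $p$, and finally bound $\norm{p}_2$ using Hermite orthogonality. The ansatz I would take is
\[
\eta(\vw) \;=\; \sum_{i=1}^{m_*} \beta_i\,(\vw_i^{*\top}\vw)^{2K}
\]
for a large even exponent $2K$ to be chosen, with coefficients $\beta_i\in\R$ determined by the $m_*\times m_*$ linear system $\sum_i\beta_i(\vw_j^{*\top}\vw_i^*)^{2K}=\sign(a_j^*)$, which is condition (i). By Assumption~\ref{assum: delta seperation} the off-diagonal entries satisfy $|\vw_j^{*\top}\vw_i^*|^{2K}\le \cos^{2K}\Delta$, so taking $K=\Theta(\log m_*/\Delta^2)$ (with a sufficiently large implicit constant) makes the system strictly diagonally dominant, yielding a solution with $|\beta_i|=1+O(1/m_*)$.

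For condition (ii), fix $\vw\in\gT_i$ and let $\delta=\angle(\vw,\vw_i^*)$. The triangle inequality for the projective metric $\arccos|\vw^\top\vv|$, combined with the $\Delta$-separation and the definition of $\gT_i$, gives $\angle(\vw,\vw_j^*)\ge \max(\delta,\Delta-\delta)\ge\Delta/2$ for every $j\ne i$. Hence $|\eta(\vw)|\le|\beta_i|\cos^{2K}\delta + O(m_*\cos^{2K}(\Delta/2))$; the second term is $m_*^{-\Omega(1)}$ by the choice of $K$, and $\cos^{2K}\delta$ obeys a Taylor bound $1-\Theta(K\delta^2)$ near $\delta=0$ and exponential decay in $K\delta^2$ for larger $\delta$. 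A two-regime analysis then produces $|\eta(\vw)|\le 1-\rho_\eta\delta^2$ with $\rho_\eta=\Theta(1)$. The subtle point is that the cross-terms induce a small but nonzero tangential gradient of $\eta$ at each $\vw_i^*$ (of magnitude $O(K\cos^{2K-1}\Delta)$), which could in principle push $|\eta|$ slightly above $1$ at infinitesimal $\delta$; this is remedied by augmenting the ansatz with extra terms (e.g., multiple powers $(\vw_i^{*\top}\vw)^{2K_\ell}$ for $\ell=1,\dots,r+1$) and including the constraints $\nabla_T\eta(\vw_j^*)=\vzero$ in the linear system, which remains well-conditioned under $\Delta$-separation.

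To realize $\eta$ in the form $\E_\vx[p(\vx)\sgmtwo(\vw^\top\vx)]$, I would set
\[
p(\vx) \;=\; \sum_{i=1}^{m_*} \frac{\beta_i}{\hsigma_{2K}}\,h_{2K}(\vw_i^{*\top}\vx)
\]
(with analogous lower-order terms for the gradient-vanishing augmentation). By Lemma~\ref{lem: hermite coeff}, $\hsigma_{2K}\neq 0$ since $2K$ is even. Using Lemma~\ref{lem: hermite product} together with the Hermite expansion of $\sgmtwo$, one verifies $\E_\vx[p(\vx)\sgmtwo(\vw^\top\vx)]=\sum_i\beta_i(\vw_i^{*\top}\vw)^{2K}=\eta(\vw)$ directly by term-by-term orthogonality. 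The norm bound follows from Hermite orthogonality: $\norm{p}_2^2=\hsigma_{2K}^{-2}\vbeta^\top G\vbeta$, where $G_{ij}=(\vw_i^{*\top}\vw_j^*)^{2K}$ has operator norm at most $2$ by diagonal dominance, so $\norm{p}_2^2\le 2m_*\max_i\beta_i^2/\hsigma_{2K}^2$. Combined with $1/\hsigma_{2K}^2=O(K^{5/2})$ from Lemma~\ref{lem: hermite coeff} and $K=\poly(m_*,1/\Delta)$, this yields $\norm{p}_2=\poly(m_*,1/\Delta)$ as required.

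The main obstacle is Step 2: the two competing requirements---sharp localization around each $\vw_i^*$ (which pushes $K$ up, so that cross-terms become negligible) and controlled blow-up of $1/\hsigma_{2K}$ in Step 4 (which pushes $K$ down)---must be balanced. The quadratic upper bound $1-\rho_\eta\delta^2$ has to be verified uniformly on $[0,\pi/2]$ via separate Taylor and exponential-decay regimes, and the gradient-vanishing augmentation is essential for preventing the first-order cross-term perturbation from spoiling the bound at infinitesimal $\delta$. The $\Delta$-separation assumption enters crucially both to well-condition the linear system and to make cross-term magnitudes of order $\cos^{2K}\Delta$ polynomially small in $1/m_*$.
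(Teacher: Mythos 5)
Your overall strategy is the same as the paper's: build $\eta$ as a linear combination of sharply localized bumps centered at the $\vw_i^*$, determine the coefficients from an interpolation system that is a small perturbation of the identity thanks to $\Delta$-separation, realize $\eta$ as $\E_\vx[p(\vx)\sgmtwo(\vw^\top\vx)]$ via Hermite orthogonality, and read off $\norm{p}_2$ from the Gram matrix. The only structural difference is your choice of bump: a single Hermite degree $2K$ (i.e.\ $\cos^{2K}\theta$) instead of the paper's tail kernel $K_{\ge\ell}(\vw,\vu)\propto\sum_{k\ge\ell}\hsigma_k^2\cos^k\theta$; both localize equally well and give comparable $\poly(m_*,1/\Delta)$ norm bounds, so this part is a legitimate simplification.

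The one step that does not work as written is your fix for the nonvanishing tangential gradient at the $\vw_j^*$. You correctly diagnose that without $\nabla_T\eta(\vw_j^*)=\vzero$ the first-order cross-term perturbation defeats $|\eta(\vw)|\le 1-\rho_\eta\delta^2$ at infinitesimal $\delta$. But augmenting with extra powers $(\vw_i^{*\top}\vw)^{2K_\ell}$ cannot in general enforce these constraints: every such term is a function of the single scalar $\vw_i^{*\top}\vw$, so its tangential gradient at $\vw_j^*$ is parallel to $\mP_{\vw_j^*}\vw_i^*$. The total gradient at $\vw_j^*$ is then $\sum_{i\ne j}c_{ij}\,\mP_{\vw_j^*}\vw_i^*$ with the scalars $c_{ij}$ tied together across all $j$ through the same coefficients $\beta_{i,\ell}$; when the projected directions are independent you must kill all $m_*(m_*-1)$ scalars, far more equations than your $O(m_* r)$ extra unknowns, and there is no diagonal-dominance argument for this enlarged system, so the claim that it ``remains well-conditioned'' is unsupported. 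The remedy the paper uses (following \citet{poon2023geometry}) is to add, for each $j$, a directional-derivative term $\valpha_{2,j}^\top\nabla_1K(\vw_j^*,\vw)$ with a \emph{free vector} coefficient $\valpha_{2,j}$: the value-plus-gradient interpolation system becomes square and, after normalization, an $O(h)$ perturbation of the projector $\mD\mD$, so Weyl's inequality gives $\valpha_1\approx\sign(\va_*)$ and $\valpha_2\approx\vzero$, and the quadratic decay near each $\vw_i^*$ then follows from a Hessian comparison (Lemma~\ref{lem: connect eta and K}). These derivative terms are still realizable in the required form --- $p$ acquires pieces of the type $\valpha_{2,j}^\top(\mI-\vw_j^*\vw_j^{*\top})\vx\, h_{2K-1}(\vw_j^{*\top}\vx)$ --- so your Hermite-orthogonality norm bound survives essentially unchanged. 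With that replacement your argument goes through.
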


The following lemma (restate of Lemma~\ref{lem: eta prop main text}) gives the properties that will be used in the later proofs: the non-degenerate dual certificate $\eta$ allows us to capture the gap between the current position $\mu$ and the target $\mu^*$.

\begin{restatable}{lemma}{lemetaprop}\label{lem: eta prop}
    Given a non-degenerate dual certificate $\eta$, then
    \begin{enumerate}[label = (\roman*), leftmargin=2em]
        \item $\langle\eta,\mu^*\rangle = |\mu^*|_1$
        \item For any measure $\mu\in\gM(\sS^{d-1})$, $|\langle \eta,\mu\rangle|\le |\mu|_1-\rho_\eta\sum_{i\in[m_*]}
        \int_{\gT_i} \delta(\vw,\vw_i^*)^2 \dif |\mu|(\vw)$. 
        \item $\langle \eta,\mu-\mu^*\rangle=\langle p, f_\mu-f_{\mu^*}\rangle$, where $f_\mu(\vx)=\E_{\vw\sim\mu}[\sgmtwo(\vw^\top\vx)]$. Then $|\langle \eta,\mu-\mu^*\rangle|\le \norm{p}_2 \sqrt{L(\mu)}$.
    \end{enumerate}
    
\end{restatable}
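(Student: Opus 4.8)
The plan is to read off all three claims directly from Definition~\ref{def: non-dege dual cert}, using only the pairing structure, the quadratic-decay property, Fubini, and Cauchy--Schwarz; none of the three parts will need the explicit construction of $\eta$, which is handled separately in Lemma~\ref{lem: dual cert}.

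For part (i) I would simply expand the pairing against the atomic measure $\mu^*=\sum_{i\in[m_*]}a_i^*\delta_{\vw_i^*}$. Since each $\vw_i^*\in\sS^{d-1}$ and property (i) of Definition~\ref{def: non-dege dual cert} gives $\eta(\vw_i^*)=\sign(a_i^*)$, we get $\langle\eta,\mu^*\rangle=\sum_i a_i^*\sign(a_i^*)=\sum_i|a_i^*|=|\mu^*|_1$. For part (ii) I would split the pairing over the partition $\sS^{d-1}=\bigcup_i\gT_i$, pass to the total variation $|\mu|$, and on each $\gT_i$ invoke the pointwise bound $|\eta(\vw)|\le 1-\rho_\eta\,\delta(\vw,\vw_i^*)^2$ from property (ii). Summing and using $\sum_i\int_{\gT_i}\dif|\mu|=|\mu|_1$ then yields exactly $|\langle\eta,\mu\rangle|\le|\mu|_1-\rho_\eta\sum_i\int_{\gT_i}\delta(\vw,\vw_i^*)^2\dif|\mu|(\vw)$.

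For part (iii) I would write $\langle\eta,\mu-\mu^*\rangle=\int\E_\vx[p(\vx)\sgmtwo(\vw^\top\vx)]\dif(\mu-\mu^*)(\vw)$ and exchange the two integrals by Fubini (legitimate since $p$ is square-integrable against the Gaussian, $\sgmtwo$ has at most linear growth, and $\mu,\mu^*$ are finite measures), obtaining $\E_\vx[p(\vx)(f_\mu(\vx)-f_{\mu^*}(\vx))]=\langle p,f_\mu-f_{\mu^*}\rangle$. Cauchy--Schwarz in $L^2(N(\vzero,\mI_d))$ then bounds this by $\norm{p}_2\norm{f_\mu-f_{\mu^*}}_2$, and since the preprocessed label satisfies $\tldy=f_{\mu^*}(\vx)$ (Claim~\ref{lem: loss tensor decomp}), $\norm{f_\mu-f_{\mu^*}}_2^2$ is precisely $L(\mu)$, which gives the stated inequality.

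I do not expect a genuine obstacle inside this lemma: all three parts are short consequences of the definition, plus two standard analytic moves. The real difficulty of this circle of ideas sits in Lemma~\ref{lem: dual cert} — constructing a $p$ with $\norm{p}_2=\poly(m_*,\Delta)$ whose induced certificate $\eta$ interpolates the signs $\sign(a_i^*)$ at the teacher directions while decaying quadratically with constant $\rho_\eta=\Theta(1)$ — and, downstream, in using these properties to control $\langle R,h\rangle$ in the descent-direction argument. The only things to be careful about within the present proof are the integrability conditions justifying Fubini and the identification $\tldy=f_{\mu^*}$ coming from the preprocessing step.
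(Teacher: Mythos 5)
Your proposal is correct and matches the paper's proof essentially step for step: part (ii) via the partition $\cup_i\gT_i$ and the pointwise quadratic-decay bound, part (i) by evaluating the pairing on the atomic $\mu^*$, and part (iii) by exchanging the $\vw$-integral with the Gaussian expectation and applying Cauchy--Schwarz. The extra care you flag about Fubini and the identification $\tldy=f_{\mu^*}$ is reasonable but routine; no gaps.
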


\subsubsection{Properties of $\mu_\lambda^*$}
Given the non-degenerate dual certificate $\eta$, we now are ready to identify several useful properties of $\mu_\lambda^*$. The lemma below essentially says that $\mu_\lambda^*$ is similar to $\mu^*$ in the sense that most of the norm are concentrated in the ground-truth direction and the square loss is small. The proof relies on comparing $\mu_\lambda^*$ with $\mu^*$ using the optimality conditions.
\begin{restatable}{lemma}{lemmulambprop}\label{lem: mu lamb prop}
    We have the following hold
    \begin{enumerate}[label = (\roman*), leftmargin=2em]
        \item $|\mu_*|_1 - \lambda\norm{p}_2^2
        \le|\mu_\lambda^*|_1\le|\mu^*|_1=\norm{\va^*}_1$
        \item $L(\mu_\lambda^*)\le\lambda^2\norm{p}_2^2=\Orlvt(\lambda^2)$
        \item $\sum_{i\in[m_*]}
        \int_{\gT_i} \delta(\vw,\vw_i^*)^2 \dif |\mu_\lambda^*|(\vw) \le \lambda\norm{p}_2^2/\rho_\eta=\Orlvt(\lambda)$
    \end{enumerate}
\end{restatable}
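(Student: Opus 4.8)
The idea is to compare the minimizer $\mu_\lambda^*$ of $L_\lambda$ against the ground-truth measure $\mu^*=\sum_{i\in[m_*]}a_i^*\delta_{\vw_i^*}$, which satisfies $f_{\mu^*}=\tldf_*$ and hence $L(\mu^*)=0$, and then to route this comparison through the three properties of the non-degenerate dual certificate $\eta$ in Lemma~\ref{lem: eta prop} (such an $\eta=\E_\vx[p(\vx)\sgmtwo(\vw^\top\vx)]$ exists with $\rho_\eta=\Theta(1)$ and $\norm{p}_2\le\poly(m_*,\Delta)=\Orlvt(1)$ by Lemma~\ref{lem: dual cert}). All three claims will then fall out of a single self-bounding inequality in $\sqrt{L(\mu_\lambda^*)}$.

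\emph{Step A (optimality).} Since $L_\lambda(\mu_\lambda^*)\le L_\lambda(\mu^*)=L(\mu^*)+\lambda|\mu^*|_1=\lambda|\mu^*|_1$, we obtain
\[
L(\mu_\lambda^*)\le\lambda\bigl(|\mu^*|_1-|\mu_\lambda^*|_1\bigr).
\]
As $L(\mu_\lambda^*)\ge 0$, this already gives the upper bound $|\mu_\lambda^*|_1\le|\mu^*|_1=\norm{\va^*}_1$ in (i).

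\emph{Step B (dual certificate).} By Lemma~\ref{lem: eta prop}(i),
\[
|\mu^*|_1=\langle\eta,\mu^*\rangle=\langle\eta,\mu_\lambda^*\rangle+\langle\eta,\mu^*-\mu_\lambda^*\rangle.
\]
Bounding the first term via Lemma~\ref{lem: eta prop}(ii) and the second via Lemma~\ref{lem: eta prop}(iii) (i.e.\ $|\langle\eta,\mu^*-\mu_\lambda^*\rangle|\le\norm{p}_2\sqrt{L(\mu_\lambda^*)}$, valid because $L(\mu_\lambda^*)=\norm{f_{\mu_\lambda^*}-f_{\mu^*}}_{L^2}^2$) yields the master inequality
\[
\rho_\eta\sum_{i\in[m_*]}\int_{\gT_i}\delta(\vw,\vw_i^*)^2\dif|\mu_\lambda^*|(\vw)\;+\;\bigl(|\mu^*|_1-|\mu_\lambda^*|_1\bigr)\;\le\;\norm{p}_2\sqrt{L(\mu_\lambda^*)}.
\]
Both left-hand terms are nonnegative — the first trivially, the second by Step A — so each is at most $\norm{p}_2\sqrt{L(\mu_\lambda^*)}$. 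Feeding $|\mu^*|_1-|\mu_\lambda^*|_1\le\norm{p}_2\sqrt{L(\mu_\lambda^*)}$ back into Step A gives $L(\mu_\lambda^*)\le\lambda\norm{p}_2\sqrt{L(\mu_\lambda^*)}$, hence $L(\mu_\lambda^*)\le\lambda^2\norm{p}_2^2=\Orlvt(\lambda^2)$, which is (ii). Substituting $\sqrt{L(\mu_\lambda^*)}\le\lambda\norm{p}_2$ back into the two consequences of the master inequality then yields $|\mu^*|_1-|\mu_\lambda^*|_1\le\lambda\norm{p}_2^2$ (the lower bound in (i)) and $\sum_{i\in[m_*]}\int_{\gT_i}\delta(\vw,\vw_i^*)^2\dif|\mu_\lambda^*|(\vw)\le\lambda\norm{p}_2^2/\rho_\eta=\Orlvt(\lambda)$ (claim (iii)).

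\textbf{Main obstacle.} There is essentially no obstacle internal to this lemma; the genuine work is upstream, in Lemma~\ref{lem: dual cert} (existence and construction of the non-degenerate dual certificate) and in Lemma~\ref{lem: eta prop}. Within this proof the only points requiring care are: verifying $|\mu^*|_1-|\mu_\lambda^*|_1\ge 0$ before splitting the master inequality into its two nonnegative pieces; identifying $L(\mu_\lambda^*)$ with $\norm{f_{\mu_\lambda^*}-f_{\mu^*}}_{L^2}^2$ (valid since the preprocessed target satisfies $\tldy=\tldf_*(\vx)=f_{\mu^*}(\vx)$) so that Lemma~\ref{lem: eta prop}(iii) applies; and invoking $\norm{p}_2\le\poly(m_*,\Delta)=\Orlvt(1)$ to convert the $\norm{p}_2$-dependent bounds into the stated $\Orlvt(\cdot)$ form.
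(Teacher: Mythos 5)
Your proof is correct and follows essentially the same route as the paper: compare $\mu_\lambda^*$ against $\mu^*$ via the optimality of $\mu_\lambda^*$, apply the three properties of the non-degenerate dual certificate (Lemma~\ref{lem: eta prop}), and close a self-bounding inequality in $\sqrt{L(\mu_\lambda^*)}$. The only difference is organizational: you package the dual-certificate information into a single ``master inequality'' from which (i)--(iii) all drop out, whereas the paper derives each part with a separate (and slightly looser intermediate) application of Lemma~\ref{lem: eta prop} — for instance in its proof of (ii) the term $(I)\ge 0$ discards the $\rho_\eta$-weighted integral that your master inequality keeps.
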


\subsubsection{Properties of $\mu$ with optimality gap $\zeta$}
We now characterize the structure of $\mu$ when the optimality gap is $\zeta$. The proof mostly relies on comparing $\mu$ with $\mu_\lambda^*$ and the structure of $\mu_\lambda^*$ in previous section.

The following lemma shows the square loss is bounded by the optimality gap and norms are always bounded. Note that the conditions are true under Lemma~\ref{lem: grad lower bound}.
\begin{restatable}{lemma}{lemlossnormbound}\label{lem: loss norm bound}
    Recall the optimality gap $\zeta = L_\lambda(\mu)-L_\lambda(\mu_\lambda^*)$. Then, the following holds:
    \begin{enumerate}[label = (\roman*), leftmargin=2em]
        \item $L(\mu)\le 5\lambda^2\norm{p}^2 + 4\zeta=\Orlvt(\lambda^2+\zeta)$.
        \item if $\zeta\le \lambda|\mu^*|_1$ and $\lambda\le |\mu^*|_1/\norm{p}_2^2$, then $|\mu|_1\le 3|\mu^*|_1=3\norm{\va^*}_1$.
    \end{enumerate}
\end{restatable}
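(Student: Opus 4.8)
The plan is to deduce both parts directly from the definition $L_\lambda(\mu)=L(\mu)+\lambda|\mu|_1$, the optimal‑measure estimates of Lemma~\ref{lem: mu lamb prop}, and the dual‑certificate inequalities of Lemma~\ref{lem: eta prop}. Fix the non‑degenerate dual certificate $\eta=\E_\vx[p(\vx)\sgmtwo(\vw^\top\vx)]$ from Lemma~\ref{lem: dual cert}, and recall $L(\mu_\lambda^*)\le \lambda^2\norm{p}_2^2$ and $|\mu_\lambda^*|_1\le|\mu^*|_1$, so that $L_\lambda(\mu_\lambda^*)\le \lambda^2\norm{p}_2^2+\lambda|\mu^*|_1$. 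The only genuinely non‑trivial maneuver is that the first bound one writes down for $L(\mu)$ is self‑referential and has to be closed off by solving a quadratic; everything else is bookkeeping.

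For part (i), the first step is to use $\eta$ to control $|\mu^*|_1-|\mu|_1$. Writing $|\mu^*|_1=\langle\eta,\mu^*\rangle=\langle\eta,\mu\rangle-\langle\eta,\mu-\mu^*\rangle$ and applying Lemma~\ref{lem: eta prop}(ii),(iii) yields $|\mu^*|_1\le|\mu|_1+\norm{p}_2\sqrt{L(\mu)}$. Then, since $L(\mu)=L_\lambda(\mu)-\lambda|\mu|_1=\zeta+L_\lambda(\mu_\lambda^*)-\lambda|\mu|_1$,
\[
  L(\mu)\;\le\;\zeta+\lambda^2\norm{p}_2^2+\lambda\big(|\mu^*|_1-|\mu|_1\big)\;\le\;\zeta+\lambda^2\norm{p}_2^2+\lambda\norm{p}_2\sqrt{L(\mu)}.
\]
Setting $x=\sqrt{L(\mu)}$ this reads $x^2-\lambda\norm{p}_2\,x-(\zeta+\lambda^2\norm{p}_2^2)\le 0$, so $x\le \tfrac12\big(\lambda\norm{p}_2+\sqrt{5\lambda^2\norm{p}_2^2+4\zeta}\big)$, and using $(u+v)^2\le 2u^2+2v^2$ gives $L(\mu)\le 3\lambda^2\norm{p}_2^2+2\zeta\le 5\lambda^2\norm{p}_2^2+4\zeta$. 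The $\Orlvt$ form follows because $\norm{p}_2=\poly(m_*,\Delta)=\Orlvt(1)$ by Lemma~\ref{lem: dual cert}.

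For part (ii), since $L(\mu)\ge 0$ we have $\lambda|\mu|_1\le L_\lambda(\mu)=\zeta+L_\lambda(\mu_\lambda^*)\le \zeta+\lambda^2\norm{p}_2^2+\lambda|\mu^*|_1$, hence $|\mu|_1\le \zeta/\lambda+\lambda\norm{p}_2^2+|\mu^*|_1$. Under the hypotheses $\zeta\le\lambda|\mu^*|_1$ and $\lambda\le|\mu^*|_1/\norm{p}_2^2$, each of the first two terms is at most $|\mu^*|_1$, so $|\mu|_1\le 3|\mu^*|_1=3\norm{\va^*}_1$.

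I do not anticipate a real obstacle; the subtle point worth flagging is simply that one cannot bound $L(\mu)$ in one shot (the term $\lambda(|\mu^*|_1-|\mu|_1)$ still carries a $\sqrt{L(\mu)}$), so the self‑bounding inequality must be turned into a quadratic in $\sqrt{L(\mu)}$. I would also add a sentence noting that the statement is used in Section~\ref{sec: stage 3} with $\mu$ replaced by a finite‑width parameter $\vtheta$ and $\zeta=L_\lambda(\vtheta)-L_\lambda(\mu_\lambda^*)$: by Claim~\ref{lem: loss tensor decomp} the nonnegativity of the linear‑term contribution gives $\lambda|\mu_\vtheta|_1\le \tfrac{\lambda}{2}(\norm{\va}_2^2+\norm{\mW}_F^2)\le L_\lambda(\vtheta)-L(\mu_\vtheta)$ together with $L(\mu_\vtheta)\le L(\vtheta)$, so both displayed inequalities above go through verbatim with $L(\vtheta)$ in place of $L(\mu)$.
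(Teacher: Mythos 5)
Your proof is correct and follows essentially the same route as the paper: both use the dual-certificate inequalities $\langle\eta,\mu^*\rangle=|\mu^*|_1$, $|\langle\eta,\mu\rangle|\le|\mu|_1$, $|\langle\eta,\mu-\mu^*\rangle|\le\norm{p}_2\sqrt{L(\mu)}$ together with the $\mu_\lambda^*$ estimates to reach the self-bounding inequality $L(\mu)\le\zeta+\lambda^2\norm{p}_2^2+\lambda\norm{p}_2\sqrt{L(\mu)}$ and then solve the quadratic in $\sqrt{L(\mu)}$, and part (ii) is identical. The only cosmetic difference is that you bound $|\mu^*|_1-|\mu|_1$ directly and substitute, while the paper adds $-\lambda\langle\eta,\mu-\mu^*\rangle$ to both sides and drops a grouped nonnegative term; your concluding remark on the finite-width case is a correct bonus but is handled in the paper by a separate lemma (Lemma~\ref{lem: approx balance loss norm}).
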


The following two lemma characterize the structure of $\mu$ using the fact that the square loss is small in previous lemma. The lemma below says that the total norm of far away neuron is small.
\begin{restatable}{lemma}{lemweightednormbound}\label{lem: weighted norm bound}
    Recall the optimality gap $\zeta = L_\lambda(\mu)-L_\lambda(\mu_\lambda^*)$. Then, we have 
    \begin{align*}
        \sum_{i\in[m_*]}
        \int_{\gT_i} \delta(\vw,\vw_i^*)^2 \dif |\mu|(\vw) 
        \le (\zeta/\lambda + 2\lambda\norm{p}_2^2)/\rho_\eta
        =\Orlvt(\zeta/\lambda+\lambda).
    \end{align*}
    In particular, when $\mu=\sum_{i\in[m]}a_i\norm{\vw_i}_2\delta_{\bvw_i}$ represents finite number of neurons, we have 
    \begin{align*}
        \sum_{i\in[m_*]}\sum_{j\in\gT_i}|a_j|\norm{\vw_j}_2\delta_j^2
        \le (\zeta/\lambda + 2\lambda\norm{p}_2^2)/\rho_\eta
        =\Orlvt(\zeta/\lambda+\lambda),
    \end{align*}
    where $\delta_j =\angle(\vw_j,\vw_i^*)$ for $j\in\gT_i$.    
\end{restatable}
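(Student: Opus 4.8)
The plan is to mirror the short argument already given for the optimal measure $\mu_\lambda^*$ (the claim following Lemma~\ref{lem: eta prop main text}), but carrying the optimality gap $\zeta$ through every inequality. First I would invoke the non-degenerate dual certificate: rearranging Lemma~\ref{lem: eta prop}(ii) gives
\[
\rho_\eta \sum_{i\in[m_*]}\int_{\gT_i}\delta(\vw,\vw_i^*)^2\dif|\mu|(\vw) \le |\mu|_1 - |\langle\eta,\mu\rangle|,
\]
so it suffices to upper bound $|\mu|_1 - |\langle\eta,\mu\rangle|$. By Lemma~\ref{lem: eta prop}(i), $\langle\eta,\mu^*\rangle=|\mu^*|_1$, and by Lemma~\ref{lem: eta prop}(iii), $|\langle\eta,\mu-\mu^*\rangle|\le\norm{p}_2\sqrt{L(\mu)}$; combining these with the reverse triangle inequality yields $|\langle\eta,\mu\rangle|\ge|\mu^*|_1-\norm{p}_2\sqrt{L(\mu)}$, hence
\[
\rho_\eta \sum_{i\in[m_*]}\int_{\gT_i}\delta(\vw,\vw_i^*)^2\dif|\mu|(\vw) \le |\mu|_1 - |\mu^*|_1 + \norm{p}_2\sqrt{L(\mu)}.
\]

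Next I would convert the right-hand side into a bound in terms of $\zeta$ and $\lambda$. Since $L_\lambda(\mu)=L(\mu)+\lambda|\mu|_1$ and $L_\lambda(\mu)=L_\lambda(\mu_\lambda^*)+\zeta$, we get the identity $\lambda|\mu|_1=\zeta+L(\mu_\lambda^*)+\lambda|\mu_\lambda^*|_1-L(\mu)$. Plugging this in and using $|\mu_\lambda^*|_1\le|\mu^*|_1$ and $L(\mu_\lambda^*)\le\lambda^2\norm{p}_2^2$ from Lemma~\ref{lem: mu lamb prop}, the quantity $\lambda\big(|\mu|_1-|\mu^*|_1+\norm{p}_2\sqrt{L(\mu)}\big)$ is at most $\zeta+\lambda^2\norm{p}_2^2+\big(-L(\mu)+\lambda\norm{p}_2\sqrt{L(\mu)}\big)$. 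The last parenthesized term is a downward parabola in $\sqrt{L(\mu)}$ with maximum $\tfrac14\lambda^2\norm{p}_2^2$, so dividing by $\lambda$ gives $|\mu|_1-|\mu^*|_1+\norm{p}_2\sqrt{L(\mu)}\le\zeta/\lambda+\tfrac54\lambda\norm{p}_2^2\le\zeta/\lambda+2\lambda\norm{p}_2^2$, which after dividing by $\rho_\eta$ is exactly the claimed bound. The $\Orlvt$ form then follows from Lemma~\ref{lem: dual cert}, since $\rho_\eta=\Theta(1)$ and $\norm{p}_2=\poly(m_*,\Delta)$. The finite-width statement is immediate: for $\mu=\sum_{i\in[m]}a_i\norm{\vw_i}_2\delta_{\bvw_i}$ one has $\int_{\gT_i}\delta(\vw,\vw_i^*)^2\dif|\mu|(\vw)=\sum_{j\in\gT_i}|a_j|\norm{\vw_j}_2\delta_j^2$ by definition of $|\mu|$ and of $\gT_i$.

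The step I expect to require the most care is the conversion in the second paragraph: unlike $\mu_\lambda^*$, a general near-optimal $\mu$ need not satisfy $|\mu|_1\le|\mu^*|_1$, so we cannot simply drop $|\mu|_1-|\mu^*|_1$ and must track it together with the $\norm{p}_2\sqrt{L(\mu)}$ term. Completing the square on $-L(\mu)+\lambda\norm{p}_2\sqrt{L(\mu)}$ is the clean way to absorb the square-loss dependence without invoking a separate a priori bound on $L(\mu)$ (Lemma~\ref{lem: loss norm bound}(i) would give an alternative but slightly lossier route). Everything else is bookkeeping with the three dual-certificate inequalities of Lemma~\ref{lem: eta prop}.
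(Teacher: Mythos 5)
Your proof is correct and follows essentially the same route as the paper's: both combine the dual-certificate inequalities of Lemma~\ref{lem: eta prop} with the identity $\lambda|\mu|_1=\zeta+L(\mu_\lambda^*)+\lambda|\mu_\lambda^*|_1-L(\mu)$ and the bounds $|\mu_\lambda^*|_1\le|\mu^*|_1$, $L(\mu_\lambda^*)\le\lambda^2\norm{p}_2^2$ from Lemma~\ref{lem: mu lamb prop}. The only cosmetic difference is that you absorb the term $-L(\mu)/\lambda+\norm{p}_2\sqrt{L(\mu)}$ by completing the square (getting $\tfrac14\lambda\norm{p}_2^2$), whereas the paper uses a two-case comparison of $L(\mu)$ against $\lambda^2\norm{p}_2^2$; both land within the stated $2\lambda\norm{p}_2^2$ slack.
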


The lemma below shows there are neurons close to the teacher neurons once the gap is small. The proof idea is similar to Section~5.3 in \citet{zhou2021local} that use test function to lower bound the loss, but now we can handle almost all activation.
\begin{restatable}{lemma}{lemclosebyneuron}\label{lem: closeby neuron}
    Under Lemma~\ref{lem: grad lower bound}, if the Hermite coefficient of $\sigma$ decays as $|\hsigma_k| = \Theta(k^{-c_\sigma})$ with some constant $c_\sigma>0$, then the total mass near each target direction is large, i.e., $\mu(\gT_i(\delta))\sign(a_i^*)\ge|a_i^*|/2$ for all $i\in[m_*]$ and any $\delta_{close}\ge \tldOmega\left( (\frac{L(\mu)}{\amin^2})^{1/(4c_\sigma-2)}\right)$ with large enough hidden constant. In particular, for $\sigma$ is ReLU or absolute function, $\delta_{close}\ge \tldOmega\left( (\frac{L(\mu)}{\amin^2})^{1/3}\right)$. Here $a_{\min}=\min |a_i|$ is the smallest entry of $\va_*$ in absolute value.

    As a corollary, if the optimality gap $\zeta = L_\lambda(\mu)-L_\lambda(\mu_\lambda^*)$, then $\delta_{close}\ge \tldOmegarlvt\left( (\zeta+\lambda^2)^{1/(4c_\sigma-2)}\right)$ and for ReLU or absolute $\delta_{close}\ge \tldOmegarlvt\left( (\zeta+\lambda^2)^{1/3}\right)$.
\end{restatable}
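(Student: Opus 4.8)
The plan is to argue by contradiction through a carefully designed \emph{test function}. Fix a teacher neuron $\vw_i^*$, and assume WLOG that $a_i^*>0$ and $\delta\le\Delta/2$ (the conclusion is monotone in $\delta$, so $\delta>\Delta/2$ reduces to $\delta=\Delta/2$). The goal is to build a function $g$, orthogonal to constants and linear functions, so that the correlation $\langle g,R\rangle$ with the residual $R:=f_\mu-f_{\mu^*}$, $R(\vx)=\int\sgmtwo(\vw^\top\vx)\dif(\mu-\mu^*)(\vw)$, is large unless $\mu$ places mass $\ge|a_i^*|/2$ (with the sign of $a_i^*$) on $\gT_i(\delta)$; since $g$ kills the $0$th and $1$st Hermite levels, Cauchy--Schwarz gives $\langle g,R\rangle^2\le\|g\|_2^2\,\|R\|_2^2\le\|g\|_2^2\,L(\mu)$, forcing $L(\mu)$ to be large.

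The core of the argument is the choice of $g$. By the Hermite product formula (Claim~\ref{lem: hermite product}), $\E_\vx[h_k(\vw_i^{*\top}\vx)\,\sgmtwo(\vw^\top\vx)]=\hsigma_k(\vw^\top\vw_i^*)^k$, so for $g(\vx)=\sum_k c_k h_k(\vw_i^{*\top}\vx)$ one has $\E_\vx[g(\vx)\sgmtwo(\vw^\top\vx)]=\psi(\vw^\top\vw_i^*)$ with $\psi(\rho):=\sum_k c_k\hsigma_k\rho^k$ and $\|g\|_2^2=\sum_k c_k^2$. I would take a dyadic band $c_k=\sign(\hsigma_k)$ over the indices $k\in[K,2K]$ with $|\hsigma_k|=\Theta(k^{-c_\sigma})$, for $K=\tldThetarlvt(\delta^{-2})$ (with a sufficiently large logarithmic-in-$1/\delta$ factor). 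Then $c_k\hsigma_k=|\hsigma_k|=\Theta(k^{-c_\sigma})$, so $\psi\ge0$ with the crucial \emph{non-cancellation} estimate $\psi(1)=\Theta(K^{1-c_\sigma})$ — this is exactly why the matched-sign band beats a single $h_k$, whose coefficient's sign oscillates with $k$ — while $\|g\|_2^2=\Theta(K)$, and $\psi(\rho)\le|\rho|^K\psi(1)$ so $\psi$ decays like $e^{-K\delta^2/2}$ once $|\rho|\le\cos\delta$. Choosing the hidden factor in $K$ large enough makes $(\cos\Delta)^K\norm{\va^*}_1$ and $(\cos\delta)^K|\mu|_1$ both $\le\amin\psi(1)/8$, using $|\mu|_1=\Orlvt(1)$ (Lemma~\ref{lem: loss norm bound}) and the fact that every $j\notin\gT_i(\delta)$ has $\angle(\bvw_j,\vw_i^*)\ge\min(\delta,\Delta/2)=\delta$ (triangle inequality on $\sS^{d-1}$ modulo sign, plus $\Delta$-separation).

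Putting this together, writing $\mu=\sum_j\tlda_j\delta_{\bvw_j}$ and $\rho_{ij}=\bvw_j^\top\vw_i^*$, the identity $\langle g,R\rangle=\sum_j\tlda_j\psi(\rho_{ij})-a_i^*\psi(1)-\sum_{l\ne i}a_l^*\psi(\rho_{il})$ together with the tail bounds gives $\sum_{j\in\gT_i(\delta)}\tlda_j\psi(\rho_{ij})\ge a_i^*\psi(1)-\|g\|_2\sqrt{L(\mu)}-\tfrac14\amin\psi(1)$. Since $0\le\psi(\rho_{ij})\le\psi(1)$ on $\gT_i(\delta)$, the left side is at most $\psi(1)$ times the total positive mass of $\mu$ on $\gT_i(\delta)$; hence either that positive mass is $\ge|a_i^*|/2$, or $L(\mu)\ge\Omega\!\big(\amin^2\,\psi(1)^2/\|g\|_2^2\big)=\Omega(\amin^2 K^{1-2c_\sigma})=\tldOmegarlvt(\amin^2\,\delta^{4c_\sigma-2})$. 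In the ``no norm cancellation'' regime in which this lemma is used — the wrong-sign mass near $\vw_i^*$ being negligible, as enforced by weight decay and the dual-certificate analysis of the previous subsections — the positive-mass bound is the desired $\mu(\gT_i(\delta))\sign(a_i^*)\ge|a_i^*|/2$. Contrapositively, $\delta\ge\tldOmega\big((L(\mu)/\amin^2)^{1/(4c_\sigma-2)}\big)$ forces the mass bound; plugging $c_\sigma=5/4$ (Lemma~\ref{lem: hermite coeff}) gives the ReLU/absolute exponent $1/3$, and the corollary follows from $L(\mu)=\Orlvt(\zeta+\lambda^2)$ (Lemma~\ref{lem: loss norm bound}(i)).

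The main obstacle is obtaining the \emph{sharp} exponent: the straightforward single-Hermite choice $g=h_k$ (optimally $k\asymp\delta^{-2}$) only yields $\delta_{close}\sim(L(\mu)/\amin^2)^{1/(4c_\sigma)}$, i.e. $1/5$ rather than $1/3$ for ReLU, and recovering the missing factor requires precisely the dyadic band with signs matched to $\hsigma_k$, so that $\psi(1)$ grows like $K^{1-c_\sigma}$ while $\|g\|_2$ grows only like $K^{1/2}$ — this is where the exact decay rate $|\hsigma_k|=\Theta(k^{-c_\sigma})$ and its sign pattern (Lemma~\ref{lem: hermite coeff}) genuinely enter. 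A secondary delicate point, as noted above, is the passage from the positive-part mass bound to the signed-mass statement, which leans on the limited cancellation near ground-truth directions that is the running theme of Section~\ref{sec: local proof sketch}.
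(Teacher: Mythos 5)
Your proposal is essentially the paper's own argument. The paper uses exactly the test function you describe — $g(\vx)=\sum_{\ell\le k<2\ell}\sign(a_i^*)\sign(\hsigma_k)\,h_k(\vw_i^{*\top}\vx)$, a dyadic Hermite band with signs matched to $\hsigma_k$ and $\ell=\tldTheta(\delta^{-2})$ — and the same $\norm{g}_2=\Theta(\ell^{1/2})$ vs.\ $\sum_k|\hsigma_k|=\Theta(\ell^{1-c_\sigma})$ trade-off to reach the $1/(4c_\sigma-2)$ exponent via Cauchy--Schwarz and contradiction; your observation about why a single $h_k$ only gives $1/(4c_\sigma)$ is exactly the point of the construction. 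One caveat: the positive-part vs.\ signed-mass distinction you flag is also left implicit in the paper's proof (the step bounding the first term tacitly needs $\sign(a_i^*)\int_{\gT_i(\delta)}(\vw^\top\vw_i^*)^k\dif\mu\le|a_i^*|/2$, which the contradiction hypothesis on $\mu(\gT_i(\delta))$ does not directly give when wrong-sign mass sits inside $\gT_i(\delta)$); your suggested fix of importing ``no cancellation'' from Section~\ref{sec: local proof sketch} would be circular, since Lemma~\ref{lem: descent dir norm cancel} itself invokes this lemma, so what the test-function argument strictly certifies is a lower bound on the positive-part mass $\mu_+(\gT_i(\delta))\sign(a_i^*)$, and one should verify that this weaker conclusion suffices at each point of use.
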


\subsubsection{Residual decomposition and average neuron}\label{appendix: subsec res decomp}
In this section, we introduce the residual decomposition and average neuron as in \citep{zhou2021local} that will be used when proving the existence of descent direction.

Denote the decomposition $R(\vx)=f_\mu(\vx)-f_{\mu^*}(\vx)=R_1(\vx) + R_2(\vx)+R_3(\vx)$ (this can be directly verified noticing that $\sgmtwo(x)=|x|/2 - 1/\sqrt{2\pi}$),
\begin{equation}\label{eq: residual decomp}
\begin{aligned}
    R_1(\vx) &= \frac{1}{2}\sum_{i\in[m_*]}\left(\sum_{j\in\gT_i} a_j\vw_j-\vw_i^*\right)^\top\vx \sign(\vw_i^{*\top}\vx),\\
    R_2(\vx) &= \frac{1}{2}\sum_{i\in[m_*]}\sum_{j\in\gT_i} a_j\vw_j^\top\vx (\sign(\vw_j^\top\vx) - \sign(\vw_i^{*\top}\vx)),\\
    R_3(\vx) &=  \frac{1}{\sqrt{2\pi}} \left(\sum_{i\in[m_*]}a_i^*\norm{\vw_i^*}_2 - \sum_{i\in[m]}a_i\norm{\vw_i}_2\right).
\end{aligned}    
\end{equation}

In the following we characterize $R_1,R_2,R_3$ separately. In Lemma~\ref{lem: R1} we relate $R_1$ with the average neuron. In Lemma~\ref{lem: R2} and Lemma~\ref{lem: 0th order bound} we bound $R_2$ and $R_3$ respectively.

\begin{lemma}[\citet{zhou2021local}, Lemma 11]\label{lem: R1}
    $\norm{R_1}_2^2
    =\Omega(\Delta^3/m_*^3)
    \sum_{i\in[m_*]}\norm{\sum_{j\in\gT_i} a_j\vw_j-\vw_i^*}_2^2$.
\end{lemma}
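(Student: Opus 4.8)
\emph{Remark on provenance.} The statement is \citet[Lemma~11]{zhou2021local}; below is how I would reconstruct the proof (the argument only ever sees $\sign(\vw_i^{*\top}\vx)$, so the ReLU-versus-absolute-value distinction that complicates the rest of the paper plays no role here).

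The plan is to read $\|R_1\|_2^2$ as a quadratic form in the vectors $\hat{\vv}_i := \sum_{j\in\gT_i}a_j\vw_j-\vw_i^*$ and to lower-bound it by a quantitative linear-independence estimate for the functions $\phi_i(\vx):=\hat{\vv}_i^\top\vx\,\sign(\vw_i^{*\top}\vx)\in L^2(N(\vzero,\mI_d))$. Since $R_1=\tfrac12\sum_{i\in[m_*]}\phi_i$, if $\Phi$ denotes the linear map $(\hat{\vv}_1,\dots,\hat{\vv}_{m_*})\mapsto\tfrac12\sum_i\phi_i$ from $(\R^d)^{m_*}$ into $L^2$, then $\|R_1\|_2^2=\|\Phi(\hat{\vv}_1,\dots)\|_{L^2}^2$ while $\sum_i\|\hat{\vv}_i\|_2^2$ is the squared Euclidean norm of the input, so the claim is equivalent to $\lambda_{\min}(\Phi^*\Phi)\ge\Omega(\Delta^3/m_*^3)$. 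The Gram operator $\Phi^*\Phi$ is block-structured with $(i,j)$ block $\tfrac14\,\E_\vx[\sign(\vw_i^{*\top}\vx)\sign(\vw_j^{*\top}\vx)\,\vx\vx^\top]$; its diagonal blocks are $\tfrac14\mI_d$ (because $\sign^2\equiv1$ and $\vx$ is isotropic), which is exactly the target term, so everything reduces to controlling the off-diagonal blocks.

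First I would analyze one off-diagonal block $B_{ij}:=\E_\vx[\sign(\vw_i^{*\top}\vx)\sign(\vw_j^{*\top}\vx)\,\vx\vx^\top]$ with $i\ne j$, splitting $\R^d=P_{ij}\oplus P_{ij}^\perp$ for $P_{ij}:=\spn\{\vw_i^*,\vw_j^*\}$. On $P_{ij}^\perp$ the coordinates of $\vx$ are independent of both signs, so $B_{ij}$ acts there as the scalar $\E_\vx[\sign(\vw_i^{*\top}\vx)\sign(\vw_j^{*\top}\vx)]=\tfrac2\pi\arcsin(\vw_i^{*\top}\vw_j^*)$ (Sheppard's formula), which Assumption~\ref{assum: delta seperation} confines to $[-(1-2\Delta/\pi),\,1-2\Delta/\pi]$. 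On the $2$-dimensional block $P_{ij}$, $B_{ij}$ is an explicit function of $\theta_{ij}=\angle(\vw_i^*,\vw_j^*)$ whose operator norm is $1-g(\theta_{ij})$ for a fixed increasing $g$ vanishing only at $0$; the extremal configuration has $\hat{\vv}_i,\hat{\vv}_j$ essentially aligned with $\vw_i^*,\vw_j^*$, where the relevant scalar is $\E_\vx[\,|\vw_i^{*\top}\vx|\,|\vw_j^{*\top}\vx|\,]=\tfrac2\pi(\sqrt{1-\rho^2}+\rho\arcsin\rho)<1$ for $|\rho|=|\vw_i^{*\top}\vw_j^*|<1$. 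Thus each $\|B_{ij}\|$ is bounded away from $1$ by a quantity depending only on $\Delta$.

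Then I would assemble these pairwise gaps into a bound on $\lambda_{\min}(\Phi^*\Phi)$, and this is where the main obstacle lies. Plain block-Gershgorin is insufficient: each $\|B_{ij}\|$ can be as close to $1$ as $1-\Theta(\Delta^2)$, so for many teacher neurons the diagonal does not dominate the row sums. Instead one must exploit that $\vw_1^*,\dots,\vw_{m_*}^*$ form a $\Delta$-separated subset of $\sS^{d-1}$ --- so the angles $\theta_{ij}$ cannot all be simultaneously tiny --- together with the per-pair gap, to extract a uniform spectral lower bound $\Omega(\Delta^3/m_*^3)$. Converting pairwise near-orthogonality plus the packing structure of a $\Delta$-net into an explicit $\poly(\Delta^{-1},m_*)$ eigenvalue estimate is precisely the technical content of \citet[Lemma~11]{zhou2021local}, and I would invoke it after checking that its hypotheses (unit-norm teacher neurons, $\Delta$-separation) match ours.
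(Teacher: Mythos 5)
The paper does not supply a proof of this lemma; it is imported verbatim from \citet[Lemma~11]{zhou2021local}, so there is no in-paper argument to compare your reconstruction against. Your framing is sound and, in fact, more explicit than anything the paper provides: $\norm{R_1}_2^2 = \tfrac14\,\hat{\vv}^\top B\,\hat{\vv}$ with block Gram matrix $B_{ij}=\E_\vx[\sign(\vw_i^{*\top}\vx)\sign(\vw_j^{*\top}\vx)\,\vx\vx^\top]$, diagonal blocks $\mI_d$, and the claim is equivalent to $\lambda_{\min}(B)\ge\Omega(\Delta^3/m_*^3)$. Your observation that only the sign of $\vw_i^{*\top}\vx$ enters (so ReLU-vs-absolute-value is irrelevant here) is correct, and the Sheppard-formula computation of the perpendicular part of $B_{ij}$ is right. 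But your proof never actually derives the eigenvalue bound: you explicitly fall back to invoking \citet[Lemma~11]{zhou2021local}, which is what the paper does. So as a self-contained proof this is incomplete; as a reconstruction of the paper's provenance it is accurate.

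One concrete correction to your block analysis, which matters because it explains the exponent in the statement. You claim $\norm{B_{ij}}$ can be as close to $1$ as $1-\Theta(\Delta^2)$. In fact the restriction of $B_{ij}$ to the two-dimensional subspace $P_{ij}=\spn\{\vw_i^*,\vw_j^*\}$ has eigenvalues $1-\tfrac{2}{\pi}\bigl(\theta_{ij}\mp\sin\theta_{ij}\bigr)$, where $\theta_{ij}=\angle(\vw_i^*,\vw_j^*)$ (this follows from an elementary polar-coordinate calculation of the $2\times2$ block). Hence the top eigenvalue is $1-\tfrac{2}{\pi}(\theta_{ij}-\sin\theta_{ij})=1-\Theta(\theta_{ij}^3)$, while on $P_{ij}^\perp$ the block acts as the scalar $1-\tfrac{2\theta_{ij}}{\pi}=1-\Theta(\theta_{ij})$. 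So the \emph{hard} direction lives inside $P_{ij}$, with a cubic gap $\Theta(\Delta^3)$ --- not $\Theta(\Delta^2)$ --- and this is precisely the source of the $\Delta^3$ in the stated lower bound. Your remark that plain block-Gershgorin is insufficient is therefore even more true than you estimate, and any honest reconstruction must confront this cubic scale when assembling the pairwise gaps into a uniform $\lambda_{\min}$ bound.
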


\begin{restatable}{lemma}{lemRtwo}\label{lem: R2}
    Under Lemma~\ref{lem: grad lower bound}, recall the optimality gap $\zeta = L_\lambda(\mu)-L_\lambda(\mu_\lambda^*)$. Then 
    \begin{align*}
        \norm{R_2}_2^2
        =\Orlvt((\zeta/\lambda + \lambda)^{3/2}).
    \end{align*}
\end{restatable}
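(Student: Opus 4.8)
The plan is to bound $\norm{R_2}_2$ neuron by neuron and then combine the per-neuron bounds via Hölder's inequality. Write $R_2=\tfrac12\sum_{i\in[m_*]}\sum_{j\in\gT_i}g_{ij}$ with $g_{ij}(\vx)=a_j\vw_j^\top\vx\,\big(\sign(\vw_j^\top\vx)-\sign(\vw_i^{*\top}\vx)\big)$, so that the triangle inequality in $L^2(N(\vzero,\mI_d))$ gives $\norm{R_2}_2\le\tfrac12\sum_{i\in[m_*]}\sum_{j\in\gT_i}\norm{g_{ij}}_2$. Since $\sgmtwo$ is even, $\bvw_j$ and $-\bvw_j$ represent the same neuron in the ideal loss, so I would first fix the representative with $\bvw_j^\top\vw_i^*\ge 0$; then the geometric angle between $\vw_j$ and $\vw_i^*$ equals $\delta_j:=\angle(\vw_j,\vw_i^*)\le\pi/2$.

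The key per-neuron estimate I would establish is
\[
    \norm{g_{ij}}_2^2=\E_\vx\big[(a_j\vw_j^\top\vx)^2\,\ind\{\sign(\vw_j^\top\vx)\neq\sign(\vw_i^{*\top}\vx)\}\big]=\Orlvt\big(a_j^2\norm{\vw_j}_2^2\,\delta_j^3\big).
\]
To prove it, reduce to the two–dimensional plane $\spn\{\bvw_j,\bvw_i^*\}$: both signs depend only on the projection of $\vx$ onto this plane, and in the plane the radius of $\vx$ is independent of its angle. The event $\{\sign(\bvw_j^\top\vx)\neq\sign(\vw_i^{*\top}\vx)\}$ is the union of two opposite sectors of angular width $\delta_j$, so it has probability $\delta_j/\pi$; moreover a short trigonometric check shows that on this event $|\bvw_j^\top\vx|\le r_{\mathrm{pl}}\sin\delta_j$, where $r_{\mathrm{pl}}$ is the norm of $\vx$ restricted to the plane. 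Combining with $\E[r_{\mathrm{pl}}^2]=2$ and the independence of angle and radius yields the displayed bound, hence $\norm{g_{ij}}_2=\Orlvt(|a_j|\norm{\vw_j}_2\delta_j^{3/2})$ and therefore $\norm{R_2}_2=\Orlvt\big(\sum_{i\in[m_*]}\sum_{j\in\gT_i}|a_j|\norm{\vw_j}_2\,\delta_j^{3/2}\big)$.

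Finally I would convert this to the claimed bound by Hölder. Writing $c_j:=|a_j|\norm{\vw_j}_2$,
\[
    \sum_{i,j}c_j\delta_j^{3/2}=\sum_{i,j}\big(c_j\delta_j^2\big)^{3/4}\,c_j^{1/4}\le\Big(\sum_{i,j}c_j\delta_j^2\Big)^{3/4}\Big(\sum_{i,j}c_j\Big)^{1/4}.
\]
The hypotheses of Lemma~\ref{lem: grad lower bound} force $\zeta\le\lambda|\mu^*|_1$ and $\lambda$ small, so Lemma~\ref{lem: loss norm bound}(ii) gives $\sum_{i,j}c_j=|\mu|_1=\Orlvt(1)$, and Lemma~\ref{lem: weighted norm bound} gives $\sum_{i,j}c_j\delta_j^2=\Orlvt(\zeta/\lambda+\lambda)$. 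Hence $\norm{R_2}_2=\Orlvt\big((\zeta/\lambda+\lambda)^{3/4}\big)$, and squaring yields $\norm{R_2}_2^2=\Orlvt\big((\zeta/\lambda+\lambda)^{3/2}\big)$.

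The main obstacle is the per-neuron geometric estimate: one must be careful that $\gT_i$ is defined via the \emph{unsigned} angle, so (using that $\sgmtwo$ is even) one has to first pick the aligned representative before the ``thin wedge with small component'' picture becomes correct, and then control the correlation between the indicator of the wedge and the magnitude $|\vw_j^\top\vx|$ on it. Everything after that is a one-line Hölder argument that simply plugs in the structural bounds already established (Lemmas~\ref{lem: loss norm bound} and \ref{lem: weighted norm bound}).
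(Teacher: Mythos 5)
Your proof is correct and follows essentially the same route as the paper: the paper's version defers the per-neuron estimate $\E_\vx\big[(\bvw_j^\top\vx)^2\ind\{\sign(\vw_j^\top\vx)\ne\sign(\vw_i^{*\top}\vx)\}\big]=\Orlvt(\delta_j^3)$ to Lemma~12 of \citet{zhou2021local} and then aggregates with a Cauchy--Schwarz over $i$ followed by a H\"older within each $\gT_i$, arriving at $\norm{R_2}_2^2\le O(m_*)\sum_{i}\big(\sum_{j\in\gT_i}c_j\big)^{1/2}\big(\sum_{j\in\gT_i}c_j\delta_j^2\big)^{3/2}$ before plugging in Lemmas~\ref{lem: loss norm bound} and~\ref{lem: weighted norm bound}, whereas you do a single H\"older over all $(i,j)$ after the triangle inequality. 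The net result is the same bound $\Orlvt\big((\zeta/\lambda+\lambda)^{3/2}\big)$ (your aggregation in fact sheds the extraneous $m_*$ factors, though these vanish under $\Orlvt$), and you correctly flag the sign-representative issue that the paper handles via Assumption~\ref{assum: sign}.
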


\begin{restatable}{lemma}{lemzeroorder}\label{lem: 0th order bound}
    Under Lemma~\ref{lem: grad lower bound} and recall the optimality gap $\zeta = L_\lambda(\mu)-L_\lambda(\mu_\lambda^*)$. If $\hsigma_0=0$ and $\hsigma_k>0$ with some $k=\Theta((1/\Delta^2)\log(\zeta/\norm{\va_*}_1))$, then
    \begin{align*}
        \norm{R_3}_2
        =& \tldOrlvt((\zeta+\lambda^2)^{1/2}/\hsigma_k + (\zeta/\lambda+\lambda)
        + \zeta).
    \end{align*}
\end{restatable}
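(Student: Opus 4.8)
The plan is to exploit that $R_3$ is a constant function of $\vx$: by \eqref{eq: residual decomp}, $\norm{R_3}_2$ equals $\tfrac{1}{\sqrt{2\pi}}\big|\sum_{i\in[m_*]}a_i^*\norm{\vw_i^*}_2-\sum_{i\in[m]}a_i\norm{\vw_i}_2\big|$, i.e.\ (up to the factor $1/\sqrt{2\pi}$) the discrepancy between the signed total masses of $\mu^*$ and $\mu$. Since the effective activation $\sgmtwo$ has $\hsigma_0=\hsigma_1=0$, this scalar is invisible to $\E_\vx[R(\vx)]$, so I would recover it by correlating $R$ against a test function built from a single even Hermite level $k$. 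Throughout write $\tlda_j=a_j\norm{\vw_j}_2$, let $i(j)$ be the index with $j\in\gT_{i(j)}$, and put $\delta_j=\angle(\vw_j,\vw_{i(j)}^*)$; we work in the regime of Lemma~\ref{lem: grad lower bound}, so in particular $L(\mu)=\Orlvt(\zeta+\lambda^2)$ and $\norm{\tldva}_1=\Orlvt(1)$ by Lemma~\ref{lem: loss norm bound}.

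Concretely, choose an even $k=\Theta\big((1/\Delta^2)\log(\norm{\va_*}_1/\zeta)\big)$ with $\hsigma_k\neq0$ (for ReLU one can take $k\equiv2\pmod{4}$, so that $\hsigma_k>0$ by Lemma~\ref{lem: hermite coeff}, matching the hypothesis), and set $g(\vx)=\tfrac{1}{\hsigma_k}\sum_{i\in[m_*]}h_k(\vw_i^{*\top}\vx)$. By the Hermite product identity (Claim~\ref{lem: hermite product}) and $\sgmtwo=\sum_{\ell\ge2}\hsigma_\ell h_\ell$ we get $\E_\vx[h_k(\vw_i^{*\top}\vx)\sgmtwo(\vw^\top\vx)]=\hsigma_k(\vw_i^{*\top}\vw)^k$ for unit $\vw$, hence
\[
    \E_\vx[g(\vx)R(\vx)]
    =\sum_{i\in[m_*]}\sum_{j\in[m]}\tlda_j(\vw_i^{*\top}\bvw_j)^k
     -\sum_{i\in[m_*]}\sum_{i'\in[m_*]}a_{i'}^*(\vw_i^{*\top}\vw_{i'}^*)^k .
\]
I would then split both double sums into diagonal ($i=i(j)$, resp.\ $i=i'$) and off-diagonal parts. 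Because $k$ is even, the diagonal parts are $\sum_{j\in[m]}\tlda_j(\cos\delta_j)^k$ and $\sum_{i\in[m_*]}a_i^*$, which differ from $\sum_j\tlda_j-\sum_i a_i^*$ by $\sum_j\tlda_j\big(1-(\cos\delta_j)^k\big)$; using $1-x^k\le k(1-x)$ and $1-\cos\delta\le\delta^2/2$, this is at most $\tfrac{k}{2}\sum_{j\in[m]}|a_j|\norm{\vw_j}_2\delta_j^2=\tldOrlvt(\zeta/\lambda+\lambda)$ by Lemma~\ref{lem: weighted norm bound} (recall $k=\tldThetarlvt(1)$). Every off-diagonal term pairs $h_k(\vw_i^{*\top}\cdot)$ with a neuron $\bvw_j$, $j\notin\gT_i$, or a teacher direction $\vw_{i'}^*$, $i'\neq i$, whose angle to $\vw_i^*$ is at least $\Delta/2$ (by $\Delta$-separation and the triangle inequality for $\angle(\cdot,\cdot)$ on directions), so each is at most $(\cos(\Delta/2))^k\le e^{-\Omega(k\Delta^2)}$; with the stated choice of $k$ and $\norm{\tldva}_1,\norm{\va_*}_1=\Orlvt(1)$ the whole off-diagonal part is $\tldOrlvt(\zeta)$.

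Finally, I would bound $|\E_\vx[g(\vx)R(\vx)]|\le\norm{g}_2\norm{R}_2=\norm{g}_2\sqrt{L(\mu)}$; the same separation estimate applied to $\norm{g}_2^2=\tfrac{1}{\hsigma_k^2}\sum_{i,i'\in[m_*]}(\vw_i^{*\top}\vw_{i'}^*)^k$ gives $\norm{g}_2=\tldOrlvt(1/|\hsigma_k|)$, and $L(\mu)=\Orlvt(\zeta+\lambda^2)$, so $|\E_\vx[g(\vx)R(\vx)]|=\tldOrlvt\big((\zeta+\lambda^2)^{1/2}/\hsigma_k\big)$. Collecting the three bounds yields $\norm{R_3}_2=\tfrac{1}{\sqrt{2\pi}}\big|\sum_j\tlda_j-\sum_i a_i^*\big|=\tldOrlvt\big((\zeta+\lambda^2)^{1/2}/\hsigma_k+\zeta/\lambda+\lambda+\zeta\big)$, as claimed. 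The main obstacle I anticipate is the bookkeeping in the middle step: choosing $k$ large enough that every $e^{-\Omega(k\Delta^2)}$ contribution is absorbed into the $\Orlvt(\zeta)$ slack while keeping $k=\tldThetarlvt(1)$, and checking that the triangle inequality for $\angle(\cdot,\cdot)$ genuinely forces every $\vw_j$ with $j\notin\gT_i$ to be $\Omega(\Delta)$-far from $\vw_i^*$ so those cross terms are negligible.
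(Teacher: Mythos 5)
Your proposal is correct and is essentially the paper's own argument in different packaging: correlating $R$ with the test function $\frac{1}{\hsigma_k}\sum_i h_k(\vw_i^{*\top}\vx)$ is exactly the paper's step of extracting the $k$-th term of the tensor decomposition $L(\mu)=\sum_k\hsigma_k^2\|T_k(\mu)-T_k(\mu^*)\|_F^2$ and pairing it with $\vw_i^{*\otimes k}$, and your diagonal/off-diagonal error terms (controlled by Lemma~\ref{lem: weighted norm bound} and by $\Delta$-separation with the stated choice of $k$, respectively) match the paper's. The one detail you flag as a worry — that $j\notin\gT_i$ forces $\angle(\vw_j,\vw_i^*)\ge\Delta/2$ — does go through by the triangle inequality exactly as you sketch, and the paper relies on the same fact.
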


Now we are ready to bound the difference between average neuron with its corresponding ground-truth neuron.
\begin{restatable}{lemma}{lemavgneuronbound}\label{lem: avg neuron bound}
     Under Lemma~\ref{lem: grad lower bound}, recall the optimality gap $\zeta = L_\lambda(\mu)-L_\lambda(\mu_\lambda^*)$. Then for any $i\in[m_*]$, $\zeta=\Omega(\lambda^2)$ and $\zeta,\lambda\le 1/\poly(m_*,\Delta,\norm{\va_*}_1)$
     \begin{align*}
         \norm{\sum_{j\in\gT_i} a_j\vw_j-\vw_i^*}_2
        \le
        \left(\sum_{i\in[m_*]}\norm{\sum_{j\in\gT_i} a_j\vw_j-\vw_i^*}_2^2\right)^{1/2}
        = \Orlvt((\zeta/\lambda)^{3/4}).
     \end{align*}
\end{restatable}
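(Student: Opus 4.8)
The plan is to run the ``bias--variance'' residual decomposition $R(\vx)=f_\mu(\vx)-f_{\mu^*}(\vx)=R_1(\vx)+R_2(\vx)+R_3(\vx)$ of \eqref{eq: residual decomp} through Lemma~\ref{lem: R1}, which lower bounds $\norm{R_1}_2^2$ by (a problem-dependent multiple of) $\sum_{i\in[m_*]}\norm{\sum_{j\in\gT_i}a_j\vw_j-\vw_i^*}_2^2$. So it suffices to \emph{upper} bound $\norm{R_1}_2$, and by the triangle inequality $\norm{R_1}_2\le\norm{R}_2+\norm{R_2}_2+\norm{R_3}_2$. The whole argument then reduces to showing each of these three terms is $\Orlvt((\zeta/\lambda)^{3/4})$, and reading off the result: since $\norm{R_1}_2^2\ge\Omegarlvt(1)\sum_i\norm{\sum_{j\in\gT_i}a_j\vw_j-\vw_i^*}_2^2$, each individual summand (and hence a fortiori the one for a given $i$) is at most $\Orlvt(\norm{R_1}_2^2)=\Orlvt((\zeta/\lambda)^{3/2})$, and taking square roots gives the claim.

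For the three terms: first, $\norm{R}_2^2=L(\mu)=\Orlvt(\lambda^2+\zeta)$ by Lemma~\ref{lem: loss norm bound}(i), so $\norm{R}_2=\Orlvt(\sqrt{\zeta})$ using $\zeta=\Omegarlvt(\lambda^2)$. Second, Lemma~\ref{lem: R2} (itself a consequence of the weighted-angle-mass bound of Lemma~\ref{lem: weighted norm bound}) gives $\norm{R_2}_2^2=\Orlvt((\zeta/\lambda+\lambda)^{3/2})=\Orlvt((\zeta/\lambda)^{3/2})$, again since $\zeta/\lambda=\Omegarlvt(\lambda)$. Third, the effective activation $\sgmtwo$ has $\hsigma_0=0$ by construction, and by Lemma~\ref{lem: hermite coeff} one can pick $k=\Theta((1/\Delta^2)\log(\zeta/\norm{\va_*}_1))$ with $|\hsigma_k|=\Theta(k^{-5/4})\ne 0$, so that $1/\hsigma_k=\tldO(1)$ is only a polylog factor; Lemma~\ref{lem: 0th order bound} then gives $\norm{R_3}_2=\tldOrlvt(\sqrt{\zeta+\lambda^2})+\tldOrlvt(\zeta/\lambda+\lambda+\zeta)=\tldOrlvt(\sqrt{\zeta}+\zeta/\lambda)$.

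It remains to check the exponents line up, which is exactly where the two-sided hypothesis $\Omegarlvt(\lambda^2)\le\zeta\le\Orlvt(\lambda^{9/5})$ (in force under Lemma~\ref{lem: grad lower bound}) is used. From $\zeta=\Omegarlvt(\lambda^2)$ and $\lambda\le\Orlvt(1)$ we get $\lambda^3\le\Orlvt(\zeta)$, hence $\sqrt{\zeta}=\Orlvt((\zeta/\lambda)^{3/4})$. From $\zeta\le\Orlvt(\lambda^{9/5})$ we get $\zeta/\lambda\le\Orlvt(\lambda^{4/5})\le\Orlvt(1)$, so $\zeta/\lambda\le\Orlvt((\zeta/\lambda)^{3/4})$ and similarly $\lambda$ and $\zeta$ are each $\Orlvt((\zeta/\lambda)^{3/4})$; the polylog factors in $\norm{R_3}_2$ are harmless because $\zeta$ and $\lambda$ are polynomially related throughout this regime. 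Combining, $\norm{R_1}_2=\Orlvt((\zeta/\lambda)^{3/4})$, finishing the proof.

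I expect the final assembly above to be routine bookkeeping; the genuine obstacles are the inputs it invokes and that are granted here --- in particular Lemma~\ref{lem: R2}, which must convert the weighted-angle mass bound $\sum_j|a_j|\norm{\vw_j}_2\delta_j^2=\Orlvt(\zeta/\lambda+\lambda)$ into an $L^2$ bound on the nonlinear activation-pattern mismatch $R_2$ (this is the place where the overparametrized degeneracy shows up, and where the $3/2$ power is produced), and Lemma~\ref{lem: 0th order bound}, whose test-function argument controls the zeroth-order mismatch $R_3$ uniformly over activations with well-defined Hermite expansion. Given those, the residual-decomposition step here is essentially the clean, self-contained part of the argument.
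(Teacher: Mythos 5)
Your proof is correct and follows essentially the same route as the paper: lower-bound $\norm{R_1}_2$ via Lemma~\ref{lem: R1}, upper-bound it by $\norm{R}_2+\norm{R_2}_2+\norm{R_3}_2$ using the residual decomposition, then invoke Lemmas~\ref{lem: loss norm bound}, \ref{lem: R2}, \ref{lem: 0th order bound} and the regime $\Omegarlvt(\lambda^2)\le\zeta\le\Orlvt(\lambda^{9/5})$ to simplify the exponents. Your exponent bookkeeping is in fact spelled out more explicitly than in the paper's one-line "rearranging the terms."
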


\subsection{From ideal loss solution to real loss solution}\label{sec: ideal to real loss}
In previous section, we consider the ideal loss solution that assumes the norms are perfectly balanced ($|a_i|=\norm{\vw_i}_2$) and $\alpha,\vbeta$ are perfectly fitted. However, during the training we are not able to guarantee achieve these exactly but only approximately. This section is devoted to show that the results in previous section still hold though the conditions are only approximately satisfied. Recall that the original loss
\begin{align*}
    L_\lambda(\vtheta)=L(\vtheta)+\frac{\lambda}{2}\norm{\va}_2^2 + \frac{\lambda}{2}\norm{\mW}_F^2
\end{align*}
so that when norm are balanced and $\alpha,\vbeta$ are perfectly fitted, $L_\lambda(\vtheta)=L(\vtheta)+\lambda\sum_i|a_i|\norm{\vw_i}_2=L_\lambda(\mu)$.

The lemma below shows that the properties of ideal loss solution in previous section still hold for the solution of original loss, when $\alpha,\vbeta$ are approximately fitted.
\begin{restatable}{lemma}{lemapproxbalancelossnorm}\label{lem: approx balance loss norm}
    Given any $\vtheta=(\va,\mW,\alpha,\vbeta)$ satisfying $|\alpha-\halpha|^2=O(\zeta)$, $\norm{\vbeta-\hvbeta}_2^2=O(\zeta)$, where $\halpha=-(1/\sqrt{2\pi})\sum_{i=1}^{m} a_i\norm{\vw_i}_2$ and $\hvbeta = -(1/2) \sum_{i=1}^{m} a_i\vw_i$. Let its corresponding balanced version $\vtheta_{bal}=(\va_{bal},\mW_{bal},\alpha_{bal},\vbeta_{bal})$ as $a_{bal,i}=\sign(a_i)\sqrt{|a_i|\norm{\vw_i}_2}$, $\vw_{bal,i} = \bvw_i \sqrt{|a_i|\norm{\vw_i}_2}$, $\alpha_{bal}=\halpha$ and $\vbeta_{bal}=\hvbeta$. Then, we have
    \begin{align*}
        L_\lambda(\vtheta)-L_\lambda(\vtheta_{bal})=
        |\alpha-\halpha|^2 + \norm{\vbeta-\hvbeta}_2^2 +
        \frac{\lambda}{2}\sum_{i\in[m]} (|a_i|-\norm{\vw_i}_2)^2
        \ge 0.
    \end{align*}
    Moreover, let the optimality gap $\zeta=L_\lambda(\vtheta)-L_\lambda(\mu_\lambda^*)$, we have results in Lemma~\ref{lem: loss norm bound}, Lemma~\ref{lem: weighted norm bound}, Lemma~\ref{lem: closeby neuron}, Lemma~\ref{lem: R1}, Lemma~\ref{lem: R2}, Lemma~\ref{lem: 0th order bound} and Lemma~\ref{lem: avg neuron bound} still hold for $L_\lambda(\vtheta)$, with the change of $R_3$ in \eqref{eq: residual decomp} as
    \begin{align*}
        R_3(\vx) &=  \frac{1}{\sqrt{2\pi}} \left(\sum_{i\in[m_*]}a_i^*\norm{\vw_i^*}_2 - \sum_{i\in[m]}a_i\norm{\vw_i}_2\right)
        + \alpha -\halpha + (\vbeta-\hvbeta)^\top\vx. 
    \end{align*}
\end{restatable}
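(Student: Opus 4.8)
The plan splits as in the statement. For the exact identity I would use the tensor-sum form of $L(\vtheta)$ from the proof of Claim~\ref{lem: loss tensor decomp}, namely $L(\vtheta)=|\alpha-\halpha|^2+\norm{\vbeta-\hvbeta}_2^2+\sum_{k\ge2}\hsigma_k^2\norm{\sum_{i\in[m]}(a_i\norm{\vw_i}_2)\bvw_i^{\otimes k}-\sum_{i\in[m_*]}a_i^*\vw_i^{*\otimes k}}_F^2$. The third term, together with the defining formulas for $\halpha,\hvbeta$, depends on $\va,\mW$ only through the ``profile'' $\{(a_i\norm{\vw_i}_2,\bvw_i)\}_{i\in[m]}$. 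Since $\vtheta_{bal}$ has the same profile (one checks $a_{bal,i}\norm{\vw_{bal,i}}_2=a_i\norm{\vw_i}_2$, $\bvw_{bal,i}=\bvw_i$), that third term is unchanged and $\alpha_{bal}-\halpha=\vbeta_{bal}-\hvbeta=0$, while in the $\ell_2$ penalty $a_{bal,i}^2=\norm{\vw_{bal,i}}_2^2=|a_i|\norm{\vw_i}_2$; subtracting gives $L_\lambda(\vtheta)-L_\lambda(\vtheta_{bal})=|\alpha-\halpha|^2+\norm{\vbeta-\hvbeta}_2^2+\sum_i\big(\tfrac{\lambda}{2}(a_i^2+\norm{\vw_i}_2^2)-\lambda|a_i|\norm{\vw_i}_2\big)=|\alpha-\halpha|^2+\norm{\vbeta-\hvbeta}_2^2+\tfrac{\lambda}{2}\sum_i(|a_i|-\norm{\vw_i}_2)^2\ge0$.

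For the ``moreover'' part, I would reduce to the measure-level structural lemmas of Section~\ref{sec: strcture of ideal loss} through the balanced proxy. View $\vtheta_{bal}$ as the signed measure $\mu_{bal}:=\sum_{i\in[m]}(a_i\norm{\vw_i}_2)\delta_{\bvw_i}\in\gM(\sS^{d-1})$. Because $\mu_{bal}$ is balanced, $\alpha,\vbeta$ are exactly fitted at $\vtheta_{bal}$, and the effective activation is $\sgmtwo$, we have $L_\lambda(\vtheta_{bal})=L_\lambda(\mu_{bal})$, so by the first part the optimality gap of $\mu_{bal}$ is $\zeta_{bal}:=L_\lambda(\mu_{bal})-L_\lambda(\mu_\lambda^*)\le L_\lambda(\vtheta)-L_\lambda(\mu_\lambda^*)=\zeta$. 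Apply Lemmas~\ref{lem: loss norm bound},\ref{lem: weighted norm bound},\ref{lem: closeby neuron},\ref{lem: R1},\ref{lem: R2},\ref{lem: 0th order bound},\ref{lem: avg neuron bound} to $\mu_{bal}$ with gap $\zeta_{bal}$. Every quantity in their conclusions is a profile quantity — $\sum_{j\in\gT_i}|a_j|\norm{\vw_j}_2\delta_j^2$, the average neuron $\vv_i=\sum_{j\in\gT_i}a_j\vw_j=\sum_{j\in\gT_i}(a_j\norm{\vw_j}_2)\bvw_j$, the tensor loss $L(\mu_{bal};\sgmtwo)$, the mass $\sum_{j\in\gT_i(\delta)}a_j\norm{\vw_j}_2$ near $\vw_i^*$ — hence identical for $\vtheta$ and $\mu_{bal}$; and the stated bounds are monotone in the optimality gap and in $L$, so using the larger $\zeta$ in place of $\zeta_{bal}$ weakens them in the right direction.

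The single genuinely new ingredient is the residual of the \emph{real} network. Using $\sgmtwo(x)=|x|/2-1/\sqrt{2\pi}$ and the definitions of $\halpha,\hvbeta$ one verifies $f(\vx;\vtheta)-\tldf_*(\vx)=R_1(\vx)+R_2(\vx)+R_3(\vx)$ with $R_1,R_2$ exactly as in \eqref{eq: residual decomp} (profile quantities) and the modified $R_3$ of the lemma, which is the old $R_3$ plus the order-$\le1$ term $(\alpha-\halpha)+(\vbeta-\hvbeta)^\top\vx$. The part $R_1+R_2+R_3^{\mathrm{old}}$ equals $f_{\mu_{bal}}-f_{\mu^*}$, a combination of $\sgmtwo(\bvw^\top\vx)$'s, hence purely order $\ge2$ and orthogonal to the correction; therefore $\E_\vx[(f(\vx;\vtheta)-\tldf_*(\vx))^2]=L(\mu_{bal};\sgmtwo)+|\alpha-\halpha|^2+\norm{\vbeta-\hvbeta}_2^2=\Orlvt(\lambda^2+\zeta)$, which is precisely the ``square loss small'' input used by those lemmas; and $\norm{(\alpha-\halpha)+(\vbeta-\hvbeta)^\top\vx}_2=(|\alpha-\halpha|^2+\norm{\vbeta-\hvbeta}_2^2)^{1/2}=\Orlvt(\sqrt\zeta)$ is dominated by (or of the same order as) the $R_3$ bound in Lemma~\ref{lem: 0th order bound}, so pushing the modified $R_3$ through Lemma~\ref{lem: 0th order bound} and then Lemma~\ref{lem: avg neuron bound} (via $\norm{R_1}\le\norm{R}+\norm{R_2}+\norm{R_3}$) leaves every conclusion unchanged.

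I expect the main obstacle to be bookkeeping rather than a hard estimate: one must verify that each of the seven cited lemmas uses only profile quantities and loss-smallness, that the map $\vtheta\mapsto\vtheta_{bal}$ indeed lands inside their scope (in particular, since $\zeta_{bal}$ may be strictly smaller than $\zeta$, one should keep the explicit $\lambda^2$ terms rather than folding them in through a $\zeta=\Omegarlvt(\lambda^2)$ hypothesis), and that the $\Orlvt(\sqrt\zeta)$ correction to $R_3$ never dominates the bounds of Lemmas~\ref{lem: R2},\ref{lem: 0th order bound},\ref{lem: avg neuron bound}. Nothing here is deep, but this auditing is where the care is needed.
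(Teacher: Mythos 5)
Your proposal is correct and follows essentially the same route as the paper's proof: the exact identity via the Hermite/tensor form of $L(\vtheta)$ and profile invariance of $a_i\norm{\vw_i}_2,\bvw_i$, then reducing the ``moreover'' part to the balanced proxy $\vtheta_{bal}$ (equivalently $\mu_{bal}$) whose optimality gap is at most $\zeta$, with the modified $R_3$ absorbing the $(\alpha-\halpha)+(\vbeta-\hvbeta)^\top\vx$ correction. Your write-up is in fact somewhat more explicit than the paper's (which compresses the transfer step into ``we can easily check''), but there is no substantive difference in approach.
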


The following lemma shows the norm remains bounded.
\begin{restatable}{lemma}{lemnormbound}\label{lem: norm bound}
    Under Lemma~\ref{lem: grad lower bound}, suppose optimality gap $\zeta = L_\lambda(\vtheta) - L_\lambda(\mu_\lambda^*)$. Then $\norm{\va}_2^2 + \norm{\mW}_F^2\le3\norm{\va_*}_1$.
\end{restatable}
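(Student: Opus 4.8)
The plan is to read the bound off directly from the nonnegativity of the square loss together with the control of the optimal $\ell_1$-regularized measure $\mu_\lambda^*$ already established in Lemma~\ref{lem: mu lamb prop}. First I would note that $L(\vtheta)\ge 0$, so from $L_\lambda(\vtheta)=L(\vtheta)+\tfrac{\lambda}{2}\norm{\va}_2^2+\tfrac{\lambda}{2}\norm{\mW}_F^2$ we immediately get $\tfrac{\lambda}{2}\bigl(\norm{\va}_2^2+\norm{\mW}_F^2\bigr)\le L_\lambda(\vtheta)$. By the (abused) definition of the optimality gap, $L_\lambda(\vtheta)=\zeta+L_\lambda(\mu_\lambda^*)$, so everything reduces to bounding $L_\lambda(\mu_\lambda^*)=L(\mu_\lambda^*)+\lambda|\mu_\lambda^*|_1$.

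For that I would invoke Lemma~\ref{lem: mu lamb prop}: part (ii) gives $L(\mu_\lambda^*)\le\lambda^2\norm{p}_2^2=\Orlvt(\lambda^2)$, and part (i) gives $|\mu_\lambda^*|_1\le|\mu^*|_1=\norm{\va_*}_1$, hence $L_\lambda(\mu_\lambda^*)\le\Orlvt(\lambda^2)+\lambda\norm{\va_*}_1$. Combining and dividing by $\lambda/2$,
\[
\norm{\va}_2^2+\norm{\mW}_F^2\le\frac{2\zeta}{\lambda}+\Orlvt(\lambda)+2\norm{\va_*}_1 .
\]
To finish, I would use the standing hypotheses inherited from Lemma~\ref{lem: grad lower bound}, namely $\zeta\le\Orlvt(\lambda^{9/5})$ and $\lambda\le\Orlvt(1)$: then $\tfrac{2\zeta}{\lambda}\le\Orlvt(\lambda^{4/5})$ and $\Orlvt(\lambda)\le\Orlvt(\lambda^{4/5})$, so the first two terms sum to $\Orlvt(\lambda^{4/5})$, which is at most $\norm{\va_*}_1$ once $\lambda$ is below an appropriate problem-dependent threshold — which is exactly what $\lambda\le\Orlvt(1)$ is chosen to guarantee. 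This yields $\norm{\va}_2^2+\norm{\mW}_F^2\le 3\norm{\va_*}_1$.

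Since this is just a rearrangement of the regularized-loss identity plus the already-proven estimates on $\mu_\lambda^*$, I do not expect a genuine obstacle. The only point that needs care is the bookkeeping of problem-dependent constants: one should check that the threshold on $\lambda$ required to absorb $\tfrac{2\zeta}{\lambda}+\Orlvt(\lambda)$ into $\norm{\va_*}_1$ depends only on the target network (through $\norm{\va_*}_1$, $\norm{p}_2$, etc.) and not on the student width $m$ or the target accuracy $\varepsilon$, so that it is consistent with the schedule of $\lambda_t$ used in Theorem~\ref{thm: main}.
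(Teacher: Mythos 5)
Your argument is correct and is essentially identical to the paper's proof: both rearrange $L_\lambda(\vtheta)=L(\vtheta)+\tfrac{\lambda}{2}\norm{\va}_2^2+\tfrac{\lambda}{2}\norm{\mW}_F^2$, drop the nonnegative square loss, write $L_\lambda(\vtheta)=\zeta+L_\lambda(\mu_\lambda^*)$, and invoke Lemma~\ref{lem: mu lamb prop}(i)--(ii) to bound $L(\mu_\lambda^*)$ and $|\mu_\lambda^*|_1$, after which the standing smallness conditions on $\zeta$ and $\lambda$ absorb the leftover terms. The bookkeeping of problem-dependent thresholds you flag is handled exactly as you expect.
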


\subsection{Descent direction}\label{sec: descent dir}
In this section, we show that there is a descent direction as long as the optimality gap is small until it reaches $O(\lambda^2)$. We will assume $\zeta = \Omega(\lambda^2)$ in this section for simplicity.

We first show gradient is always large whenever $\alpha,\vbeta$ are not fitted well. This is a direct corollary of Claim~\ref{lem: loss tensor decomp}.

\begin{restatable}[Descent direction, $\alpha$ and $\vbeta$]{lemma}{lemdescentdiralphabeta}\label{lem: descent dir alpha beta}
    We have
    \begin{align*}
        |\nabla_\alpha L_\lambda|^2 =  4(\alpha-\halpha)^2,\quad
        \norm{\nabla_\vbeta L_\lambda}_2^2 = 4 \norm{\vbeta-\hvbeta}_2^2.
    \end{align*}
\end{restatable}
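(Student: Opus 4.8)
The plan is to obtain both identities by simply differentiating the loss decomposition established in Claim~\ref{lem: loss tensor decomp}. Recall that claim rewrites the square loss as $L(\vtheta)=|\alpha-\halpha|^2+\norm{\vbeta-\hvbeta}_2^2+\E_\vx[(f_{\ge2}(\vx)-\tldf_*(\vx))^2]$, so that
\[
L_\lambda(\vtheta)=|\alpha-\halpha|^2+\norm{\vbeta-\hvbeta}_2^2+\E_\vx[(f_{\ge2}(\vx)-\tldf_*(\vx))^2]+\frac{\lambda}{2}\norm{\va}_2^2+\frac{\lambda}{2}\norm{\mW}_F^2,
\]
where $\halpha=-(1/\sqrt{2\pi})\sum_{i\in[m]} a_i\norm{\vw_i}_2$ and $\hvbeta=-(1/2)\sum_{i\in[m]} a_i\vw_i$ depend only on $(\va,\mW)$, the reduced network $f_{\ge2}(\vx)=\sum_i a_i\sgmtwo(\vw_i^\top\vx)$ does not involve $\alpha$ or $\vbeta$, and the weight-decay terms are likewise independent of $\alpha,\vbeta$. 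Consequently, with all other coordinates of $\vtheta$ held fixed, the map $\alpha\mapsto L_\lambda(\vtheta)$ equals $(\alpha-\halpha)^2$ up to an additive constant, so $\nabla_\alpha L_\lambda=2(\alpha-\halpha)$ and $|\nabla_\alpha L_\lambda|^2=4(\alpha-\halpha)^2$. Differentiating coordinatewise in $\vbeta$ in the same way gives $\nabla_\vbeta L_\lambda=2(\vbeta-\hvbeta)$ and hence $\norm{\nabla_\vbeta L_\lambda}_2^2=4\norm{\vbeta-\hvbeta}_2^2$.

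The only point that needs care is the justification that the $\E_\vx$ term carries no $\alpha,\vbeta$ dependence, and this is exactly the Hermite bookkeeping already carried out in Claim~\ref{lem: loss tensor decomp}: the preprocessing step \eqref{eq: preprocess} strips the $0$th and $1$st Hermite components from the teacher output, while the constant and linear parts of each student ReLU unit are absorbed into $\halpha$ and $\hvbeta$, leaving $f_{\ge2}$ supported only on Hermite degrees $\ge 2$. Granting this, there is no genuine obstacle — the lemma is purely a restatement of the $(\alpha,\vbeta)$-dependence that Claim~\ref{lem: loss tensor decomp} has already isolated, which is why it is stated as a direct corollary.

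In the broader argument this identity is what lets us treat the linear head as essentially free: whenever $\alpha,\vbeta$ are far from their optimal values $\halpha,\hvbeta$ the gradient norm is already large (quantitatively, $|\alpha-\halpha|^2+\norm{\vbeta-\hvbeta}_2^2$ controls a matching chunk of the optimality gap via Lemma~\ref{lem: approx balance loss norm}), so in the descent-direction analysis of Section~\ref{sec: descent dir} one may safely reduce to the case where $\alpha,\vbeta$ are near-optimally fitted and concentrate on the genuinely hard first-layer geometry.
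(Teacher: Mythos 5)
Your proof is correct and takes the same route the paper intends: the paper gives no explicit proof, declaring the lemma ``a direct corollary of Claim~\ref{lem: loss tensor decomp},'' and your argument is exactly that corollary spelled out (differentiate the orthogonal Hermite decomposition in $\alpha$ and $\vbeta$, noting that $\halpha$, $\hvbeta$, $f_{\ge 2}$, and the weight-decay terms are independent of $\alpha,\vbeta$).
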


Before proceeding to the following descent direction, we first make a simplification assumption that 
\begin{assumption}\label{assum: sign}
    For every $\gT_i$, for all neuron $\vw_j\in\gT_i$, assume $\vw_j^\top\vw_i^*\ge 0$.
\end{assumption}
This is because due to the linear term $\vbeta$, the effective activation is symmetry $\sigma_{\ge2}(x)=\sigma_{\ge2}(-x)$. This introduce the ambiguity of the sign of neurons. Such assumption clarifies the ambiguity of neurons' direction.

As the lemma below shows, there always exists a set of parameter (by flipping the sign of neurons) that satisfy the assumption and gives almost same gradient norm. Thus, making such assumption will not cause any issue when $\alpha,\beta$ are perfectly fitted.
\begin{lemma}\label{lem: sign}
    Suppose $(\alpha-\halpha)^2,\norm{\vbeta-\hvbeta}_2^2\le \tau$ to be small enough and $\norm{\va}_2,\norm{\mW}_F=\Orlvt(1)$. Then, given any parameter $\vtheta$, there exists another set of parameter $\tldvtheta$ that satisfies Assumption~\ref{assum: sign} such that $f_\vtheta=f_{\tldvtheta}$ and
    $|\norm{\nabla_\vtheta L_\lambda} - \norm{\nabla_{\tldvtheta} L_\lambda}_F|\le \Orlvt(\sqrt{\tau})$.
\end{lemma}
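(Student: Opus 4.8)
The plan is to build $\tldvtheta$ by an explicit ``flip-and-compensate'' operation on the neurons that violate Assumption~\ref{assum: sign}, and then compare $\norm{\nabla_\vtheta L_\lambda}_F$ with $\norm{\nabla_{\tldvtheta} L_\lambda}_F$ block by block. The algebraic engine is that the effective activation $\sigma_{\ge 2}(x)=\sigma(x)-1/\sqrt{2\pi}-x/2=|x|/2-1/\sqrt{2\pi}$ is even whereas $\sigma(-x)=\sigma(x)-x$, so negating a neuron's direction changes the network output only by a term linear in $\vx$, which can be absorbed into $\vbeta$. Concretely, I would set $\mathcal{F}=\{j : j\in\gT_i,\ \vw_j^\top\vw_i^*<0\}$ and define $\tldvw_j=-\vw_j$ for $j\in\mathcal{F}$, $\tldvw_j=\vw_j$ otherwise, $\tlda_j=a_j$ for all $j$, $\tilde\alpha=\alpha$, and $\tldvbeta=\vbeta+\sum_{j\in\mathcal{F}}a_j\vw_j$. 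Since membership in $\gT_i$ is determined by the (sign-invariant) angles $\angle(\cdot,\vw_k^*)$, the set $\tldvtheta$ satisfies Assumption~\ref{assum: sign}, and a one-line computation using $\sigma(-\vw_j^\top\vx)=\sigma(\vw_j^\top\vx)-\vw_j^\top\vx$ shows $f_{\tldvtheta}=f_\vtheta$; in particular the residual $R_{\tldvtheta}=R_\vtheta=:R$ is unchanged.

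Next I would record the cheap invariances. Because $\norm{-\vw_j}_2=\norm{\vw_j}_2$, the flip leaves $\halpha$ unchanged, so the new $\alpha-\halpha$ equals the old one; and since $\hvbeta=-\tfrac12\sum_j a_j\vw_j$ and $\vbeta$ both pick up the same shift $\sum_{j\in\mathcal{F}}a_j\vw_j$, the new $\vbeta-\hvbeta$ also equals the old one. By Lemma~\ref{lem: descent dir alpha beta} the $\alpha$- and $\vbeta$-blocks of the gradient therefore have the same norm at $\vtheta$ and $\tldvtheta$, and the $a_j$- and $\vw_j$-blocks for $j\notin\mathcal{F}$ are literally unchanged (same residual, same $(a_j,\vw_j)$). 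For a flipped neuron $i\in\mathcal{F}$, using $\sigma(-x)=\sigma(x)-x$, $\sigma'(-x)=1-\sigma'(x)$, the weight-decay terms, and the moment identity $\E_\vx[R(\vx)\vx]=\vbeta-\hvbeta$ — which holds because $\E_\vx[\sigma(\vw^\top\vx)\vx]=\tfrac12\vw$ and preprocessing kills the first-order Hermite part of $\tldf_*$, so $\E_\vx[\tldf_*(\vx)\vx]=\vzero$ — I would derive
\begin{align*}
    \nabla_{\tlda_i}L_\lambda &= \nabla_{a_i}L_\lambda - 2\,\vw_i^\top(\vbeta-\hvbeta),\\
    \nabla_{\tldvw_i}L_\lambda &= 2a_i(\vbeta-\hvbeta) - \nabla_{\vw_i}L_\lambda.
\end{align*}
Hence the $a_i$-block norm moves by at most $2\norm{\vw_i}_2\norm{\vbeta-\hvbeta}_2\le 2\norm{\vw_i}_2\sqrt{\tau}$, and — comparing $\nabla_{\tldvw_i}L_\lambda$ with $-\nabla_{\vw_i}L_\lambda$, which has the same norm as $\nabla_{\vw_i}L_\lambda$ — the $\vw_i$-block norm moves by at most $2|a_i|\norm{\vbeta-\hvbeta}_2\le 2|a_i|\sqrt{\tau}$.

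Finally, viewing $\norm{\nabla_\vtheta L_\lambda}_F$ and $\norm{\nabla_{\tldvtheta}L_\lambda}_F$ as Euclidean norms of the vectors of per-block norms and applying the reverse triangle inequality,
\[
    \bigl|\,\norm{\nabla_{\tldvtheta}L_\lambda}_F-\norm{\nabla_\vtheta L_\lambda}_F\,\bigr|
    \le \Bigl(\sum_{i\in\mathcal{F}}\bigl(4\norm{\vw_i}_2^2+4|a_i|^2\bigr)\tau\Bigr)^{1/2}
    \le 2\sqrt{\tau}\,\bigl(\norm{\mW}_F^2+\norm{\va}_2^2\bigr)^{1/2}=\Orlvt(\sqrt{\tau}),
\]
using the hypothesis $\norm{\va}_2,\norm{\mW}_F=\Orlvt(1)$. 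The main obstacle is the bookkeeping in the two gradient identities above: keeping the $\pm\lambda$ weight-decay contributions straight, realizing that the $\vw_i$-block of $\nabla_{\tldvtheta}L_\lambda$ must be compared against $-\nabla_{\vw_i}L_\lambda$ (the raw difference $\nabla_{\tldvw_i}L_\lambda-\nabla_{\vw_i}L_\lambda$ is \emph{not} small), and checking the moment identity $\E_\vx[R(\vx)\vx]=\vbeta-\hvbeta$ that makes every perturbation term proportional to the small quantity $\vbeta-\hvbeta$.
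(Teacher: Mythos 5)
Your proposal is correct and follows essentially the same route as the paper's proof: flip the offending neurons, absorb the resulting linear term into $\vbeta$, derive the two gradient identities $\nabla_{\tlda_i}L_\lambda=\nabla_{a_i}L_\lambda-2\vw_i^\top(\vbeta-\hvbeta)$ and $\nabla_{\tldvw_i}L_\lambda+\nabla_{\vw_i}L_\lambda=2a_i(\vbeta-\hvbeta)$, and conclude via the norm bound. If anything you are more explicit than the paper on the two subtle points — that the $\vw_i$-block of $\nabla_{\tldvtheta}L_\lambda$ must be compared against $-\nabla_{\vw_i}L_\lambda$ rather than $\nabla_{\vw_i}L_\lambda$, and that the moment identity $\E_\vx[R(\vx)\vx]=\vbeta-\hvbeta$ underlies both perturbation terms.
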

\begin{proof}
    Denote $\vtheta=(\va,\vw_1,\ldots,\vw_m,\alpha,\vbeta)$. We first construct $\tldvtheta=(\tldva,\tldvw_1,\ldots,\tldvw_m,\tldalpha,\tldvbeta)$. 

    Let $\tldva=\va$. 
    For $\tldvw_i$, there exists such sign vector $\vs=(s_1,\ldots,s_m)\in\{\pm1\}^m$ so that by flipping the sign of neurons we have $\tldvw_i=s_i\vw_i$ satisfies Assumption~\ref{assum: sign}. Let $\tldalpha=\alpha$ and $\tldvbeta=\vbeta++\sum_{i:s_i=-1}a_i\vw_i$. 

    One can verify that $f_\vtheta=f_{\tldvtheta}$. Moreover, for the gradient of $\alpha,\vbeta$ we have
    \begin{align*}
        &\nabla_\alpha L_\lambda = \nabla_{\tldalpha} L_\lambda,
        \nabla_\vbeta L_\lambda = \nabla_{\tldvbeta} L_\lambda,
    \end{align*}

    For gradient of $\va,\vw_i$, when $s_i=1$ we know they are the same. When $s_i=-1$, note that
    \begin{align*}
        &\nabla_{a_i} L_\lambda - \nabla_{\tlda_i} L_\lambda
        =2\E_\vx[R(\vx)(\sigma(\vw_i^\top\vx)-\sigma(\tldvw_i^\top\vx))]
        = 2(\vbeta-\hvbeta)^\top\vw_i \\
        &\nabla_{\vw_i} L_\lambda + \nabla_{\tldvw_i} L_\lambda
        =2a_i\E_\vx[R(\vx)(\sigma^\prime(\vw_i^\top\vx)+\sigma^\prime(\tldvw_i^\top\vx))\vx]
        = 2a_i(\vbeta-\hvbeta).
    \end{align*}
    Therefore, we get the desired result by noting the norm bound.

\end{proof}

We then show that if norms are not balanced or norm cancellation happens for neurons with similar direction, then one can always adjust the norm to decrease the loss due to the regularization term.

\begin{restatable}[Descent direction, norm balance]{lemma}{lemdescentdirnormbal}\label{lem: descent dir norm bal}
    We have
    \begin{align*}
        \sum_i\sum_{j\in T_i}\left| 
        \langle\nabla_{a_j}L_\lambda,-a_j\rangle
        +
        \langle\nabla_{\vw_j}L_\lambda,\vw_j\rangle \right|
        =& \lambda \sum_{i\in[m_*]}\left| a_i^2 - \norm{\vw_i}_2^2\right|\\
        \ge& \max\left\{
        \lambda |\norm{\va}_2^2 - \norm{\mW}_F^2|,
        \lambda \sum_{i\in[m_*]}( |a_i| - \norm{\vw_i}_2)^2\right\}
    \end{align*}
\end{restatable}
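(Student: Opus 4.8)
The statement to prove is Lemma~\ref{lem: descent dir norm bal}, which is a direct computation of inner products between the gradient of $L_\lambda$ and the ``norm-balancing'' direction $(-a_j, \vw_j)$ for each neuron. The plan is to write out $\nabla_{a_j} L_\lambda$ and $\nabla_{\vw_j} L_\lambda$ explicitly and observe a cancellation.

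\textbf{Step 1: Write out the gradients.} Recall $L_\lambda(\vtheta) = \E_\vx[(f(\vx;\vtheta)-\tldy)^2] + \tfrac{\lambda}{2}\norm{\va}_2^2 + \tfrac{\lambda}{2}\norm{\mW}_F^2$, and let $R(\vx) = f(\vx;\vtheta)-\tldf_*(\vx)$ denote the residual. Then
\begin{align*}
    \nabla_{a_j} L_\lambda &= 2\E_\vx[R(\vx)\sigma(\vw_j^\top\vx)] + \lambda a_j,\\
    \nabla_{\vw_j} L_\lambda &= 2 a_j \E_\vx[R(\vx)\sigma'(\vw_j^\top\vx)\vx] + \lambda \vw_j.
\end{align*}

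\textbf{Step 2: Take inner products and use homogeneity of ReLU.} Compute
\begin{align*}
    \langle \nabla_{a_j}L_\lambda, -a_j\rangle + \langle \nabla_{\vw_j}L_\lambda, \vw_j\rangle
    &= -2 a_j \E_\vx[R(\vx)\sigma(\vw_j^\top\vx)] - \lambda a_j^2
    + 2 a_j \E_\vx[R(\vx)\sigma'(\vw_j^\top\vx)\vw_j^\top\vx] + \lambda \norm{\vw_j}_2^2.
\end{align*}
The key observation is that ReLU is $1$-homogeneous, so $\sigma'(t)\,t = \sigma(t)$ for all $t$ (using the convention $\sigma'(0)=0$, this holds since both sides vanish at $t=0$). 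Hence $\E_\vx[R(\vx)\sigma'(\vw_j^\top\vx)\vw_j^\top\vx] = \E_\vx[R(\vx)\sigma(\vw_j^\top\vx)]$, and the two data-dependent terms cancel exactly. This leaves
\begin{align*}
    \langle \nabla_{a_j}L_\lambda, -a_j\rangle + \langle \nabla_{\vw_j}L_\lambda, \vw_j\rangle = \lambda(\norm{\vw_j}_2^2 - a_j^2).
\end{align*}

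\textbf{Step 3: Sum over the grouping and extract the two lower bounds.} Since $\{\gT_i\}_{i\in[m_*]}$ partitions the set of (normalized) neuron directions, summing over $i$ and $j\in\gT_i$ is the same as summing over all $m$ neurons. Taking absolute values inside the sum,
\begin{align*}
    \sum_i\sum_{j\in\gT_i}\left|\langle\nabla_{a_j}L_\lambda,-a_j\rangle + \langle\nabla_{\vw_j}L_\lambda,\vw_j\rangle\right|
    = \lambda\sum_{i\in[m_*]}\left|a_i^2-\norm{\vw_i}_2^2\right|.
\end{align*}
For the first lower bound, the triangle inequality gives $\sum_i |a_i^2 - \norm{\vw_i}_2^2| \ge |\sum_i (a_i^2 - \norm{\vw_i}_2^2)| = |\norm{\va}_2^2 - \norm{\mW}_F^2|$. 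For the second lower bound, factor $|a_i^2-\norm{\vw_i}_2^2| = (|a_i|+\norm{\vw_i}_2)\cdot||a_i|-\norm{\vw_i}_2|$; under the regularity condition $|a_i|\le\norm{\vw_i}_2$ maintained throughout Stage~3 (Lemma~\ref{lem: stay a less than w}), we have $|a_i|+\norm{\vw_i}_2 \ge ||a_i|-\norm{\vw_i}_2|$, so $|a_i^2-\norm{\vw_i}_2^2| \ge (|a_i|-\norm{\vw_i}_2)^2$, and summing gives the claim. Taking the maximum of the two bounds finishes the proof.

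\textbf{Main obstacle.} This lemma is essentially a one-line identity, so there is no real obstacle — the only subtlety is the bookkeeping convention for $\sigma'(0)$ and confirming that the homogeneity identity $\sigma'(t)t=\sigma(t)$ holds almost everywhere with respect to the Gaussian measure (it does, since the bad set $\{\vw_j^\top\vx=0\}$ has measure zero), together with invoking the sign-constraint $|a_i|\le\norm{\vw_i}_2$ from Lemma~\ref{lem: stay a less than w} for the second bound. The notation hides that the grouping $\gT_i$ is over \emph{directions} while the sum is effectively over all neurons; one should note in passing that each neuron lies in exactly one $\gT_i$.
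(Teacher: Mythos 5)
Your proof is correct and matches the paper's argument: write out the gradients, cancel the data-dependent terms via the homogeneity identity $\sigma'(t)t=\sigma(t)$, and use $|a_i^2-\norm{\vw_i}_2^2|=(|a_i|+\norm{\vw_i}_2)\,\bigl||a_i|-\norm{\vw_i}_2\bigr|$ together with the triangle inequality for the two lower bounds. One small remark: you do not need the condition $|a_i|\le\norm{\vw_i}_2$ from Lemma~\ref{lem: stay a less than w}, since $x+y\ge|x-y|$ holds unconditionally for nonnegative $x,y$, which is all the paper uses.
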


\begin{restatable}[Descent direction, norm cancellation]{lemma}{lemdescentdirnormcancel}\label{lem: descent dir norm cancel}
    Under Lemma~\ref{lem: grad lower bound} and Assumption~\ref{assum: sign}, suppose the optimality gap $\zeta = L_\lambda(\vtheta)-L_\lambda(\mu_\lambda^*)$.
    For any $\vw_i^*$, consider $\delta_{\sign}$ such that $\delta_{close}<\delta_{\sign}=O(\lambda/\zeta^{1/2})$ with small enough hidden constant ($\delta_{close}$ defined in Lemma~\ref{lem: closeby neuron}),
    then 
    \begin{align*}
        \sum_{s\in\{+,-\}}
        \sum_{j \in T_{i,s}(\delta_{\sign})} \left\langle\nabla_{a_j}L_\lambda, 
        \frac{a_j}{\sum_{j\in T_{i,s}(\delta_{\sign})}|a_j|\norm{\vw_j}_2}\right\rangle
        +
        \left\langle\nabla_{\vw_j}L_\lambda,
        \frac{\vw_j}{\sum_{j\in T_{i,s}(\delta_{\sign})}|a_j|\norm{\vw_j}_2}\right\rangle 
        =\Omega(\lambda).
    \end{align*}
    where
    $T_{i,+}(\delta_{\sign})=\{j\in T_i:\delta(\vw_j,\vw_i^*)\le \delta_{\sign}, \sign(a_j)=\sign(a_i^*)\}$,
    $T_{i,-}(\delta_{\sign})=\{j\in T_i:\delta(\vw_j,\vw_i^*)\le \delta_{\sign}, \sign(a_j)\neq\sign(a_i^*)\}$ are the set of neurons that close to $\vw_i^*$ with/without same sign of $a_i^*$.  

    As a result,
    \begin{align*}
        \norm{\nabla_{\va}L_\lambda}_2^2 + \norm{\nabla_{\mW}L_\lambda}_F^2 
        \ge& \lambda^2 \sum_{j\in T_{i,-}(\delta_{\sign})}|a_j|\norm{\vw_j}_2
    \end{align*}
\end{restatable}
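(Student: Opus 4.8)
The plan is to build a single descent direction $\vd$ that scales down, toward the origin, every student neuron lying within angle $\delta_{\sign}$ of $\vw_i^*$, but at rates chosen so that the two sign-classes $T_{i,+}(\delta_{\sign})$ and $T_{i,-}(\delta_{\sign})$ cancel each other's first-order effect on the fitted function while the $\ell_2$ regularizer still ``charges'' for the removed mass. Write $N_s:=\sum_{j\in T_{i,s}(\delta_{\sign})}|a_j|\norm{\vw_j}_2$ and let $\vd$ have $d_{a_j}=a_j/N_s$, $d_{\vw_j}=\vw_j/N_s$ for $j\in T_{i,s}(\delta_{\sign})$ and $0$ otherwise, so the quantity in the lemma is exactly $\langle\nabla_\vtheta L_\lambda,\vd\rangle$. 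First I would record the first-order identity coming from $1$-homogeneity of ReLU ($\sigma'(z)z=\sigma(z)$): for any neuron $j$,
\[
\langle\nabla_{a_j}L_\lambda,a_j\rangle+\langle\nabla_{\vw_j}L_\lambda,\vw_j\rangle
=4a_j\,\E_\vx[R(\vx)\sigma(\vw_j^\top\vx)]+\lambda\bigl(a_j^2+\norm{\vw_j}_2^2\bigr),
\]
where $R=f(\cdot;\vtheta)-\tldf_*$ is the residual. Summing over $j\in T_{i,s}(\delta_{\sign})$, dividing by $N_s$, using $a_j^2+\norm{\vw_j}_2^2\ge2|a_j|\norm{\vw_j}_2$, and summing over $s\in\{+,-\}$, the regularization part contributes at least $4\lambda$ and the data-fitting part equals $4\,\E_\vx[R(\vx)g(\vx)]$ with $g:=\sum_s N_s^{-1}\sum_{j\in T_{i,s}(\delta_{\sign})}a_j\sigma(\vw_j^\top\vx)$. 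So it suffices to show $4|\E_\vx[Rg]|\le2\lambda$.

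\textbf{The key estimate.} The heart of the argument is $\norm{g}_{L^2}=\Orlvt(\delta_{\sign})$. Rewriting $a_j\sigma(\vw_j^\top\vx)=\sign(a_j)|a_j|\norm{\vw_j}_2\,\sigma(\bvw_j^\top\vx)$ and using that $\sign(a_j)$ equals $+\sign(a_i^*)$ on $T_{i,+}(\delta_{\sign})$ and $-\sign(a_i^*)$ on $T_{i,-}(\delta_{\sign})$, each inner sum is $\pm\sign(a_i^*)$ times a convex combination (weights $|a_j|\norm{\vw_j}_2/N_s$) of the functions $\sigma(\bvw_j^\top\vx)$ with $\angle(\bvw_j,\vw_i^*)=\delta_j\le\delta_{\sign}$. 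Assumption~\ref{assum: sign} forces $\bvw_j$ onto the $+\vw_i^*$ side, so $\norm{\sigma(\bvw_j^\top\vx)-\sigma(\vw_i^{*\top}\vx)}_{L^2}\le\norm{\bvw_j-\vw_i^*}_2\le\delta_j\le\delta_{\sign}$, whence each inner sum agrees with $\pm\sign(a_i^*)\sigma(\vw_i^{*\top}\vx)$ up to $L^2$-error $\delta_{\sign}$. When $T_{i,-}(\delta_{\sign})\neq\emptyset$ the two leading terms have opposite signs and cancel, leaving $\norm{g}_{L^2}\le2\delta_{\sign}$, hence $|\E_\vx[Rg]|\le\sqrt{L(\vtheta)}\cdot2\delta_{\sign}$. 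With $L(\vtheta)=\Orlvt(\lambda^2+\zeta)=\Orlvt(\zeta)$ (Lemma~\ref{lem: loss norm bound}, extended to $\vtheta$ via Lemma~\ref{lem: approx balance loss norm}, using $\zeta\ge\Omegarlvt(\lambda^2)$ from Lemma~\ref{lem: grad lower bound}) and $\delta_{\sign}=O(\lambda/\zeta^{1/2})$ with a sufficiently small hidden constant, we get $4|\E_\vx[Rg]|\le2\lambda$, so $\langle\nabla_\vtheta L_\lambda,\vd\rangle\ge4\lambda-2\lambda=2\lambda=\Omega(\lambda)$. I would also invoke Lemma~\ref{lem: closeby neuron}: since $\delta_{close}<\delta_{\sign}$ it yields a correct-sign neuron within $\delta_{close}$ (so $T_{i,+}(\delta_{\sign})\neq\emptyset$, $N_+>0$, and the convex combinations are well defined) and moreover $N_+-N_-=\mu(\gT_i(\delta_{\sign}))\sign(a_i^*)\ge|a_i^*|/2$, in particular $N_+>N_-$. (If $T_{i,-}(\delta_{\sign})=\emptyset$ the ``as a result'' bound below is vacuous, so that case may be excluded.)

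\textbf{From the directional derivative to the gradient bound.} For the second assertion I would use Cauchy--Schwarz, $\norm{\nabla_\va L_\lambda}_2^2+\norm{\nabla_\mW L_\lambda}_F^2\ge\langle\nabla_\vtheta L_\lambda,\vd\rangle^2/\norm{\vd}_2^2\ge(2\lambda)^2/\norm{\vd}_2^2$. Under (approximate) norm balance $\sum_{j\in T_{i,s}(\delta_{\sign})}(a_j^2+\norm{\vw_j}_2^2)=2N_s$, so $\norm{\vd}_2^2=2/N_++2/N_-\le4/N_-$ by $N_+>N_-$; therefore $\norm{\nabla_\va L_\lambda}_2^2+\norm{\nabla_\mW L_\lambda}_F^2\ge 4\lambda^2\cdot(N_-/4)=\lambda^2\sum_{j\in T_{i,-}(\delta_{\sign})}|a_j|\norm{\vw_j}_2$. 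The case where the two layers' norms are only approximately balanced is handled jointly with Lemma~\ref{lem: descent dir norm bal}, which already supplies a descent direction whenever the imbalance is non-negligible.

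\textbf{Main obstacle.} The delicate point is the $L^2$ bound on $g$ together with the calibration of $\delta_{\sign}$: it must be taken \emph{above} $\delta_{close}$ so that Lemma~\ref{lem: closeby neuron} can be invoked (guaranteeing $N_+>N_-$ and a well-defined $\vd$), yet \emph{below} $O(\lambda/\zeta^{1/2})$ so that the activation-pattern mismatch error $\sqrt{L(\vtheta)}\,\delta_{\sign}$ is dominated by the $\Theta(\lambda)$ gain from the regularizer — which is exactly where the regime $\Omegarlvt(\lambda^2)\le\zeta\le\Orlvt(\lambda^{9/5})$ of Lemma~\ref{lem: grad lower bound} enters, since it both makes $L=\Orlvt(\zeta)$ and keeps the window $(\delta_{close},\,O(\lambda/\zeta^{1/2}))$ nonempty. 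A secondary nuisance is keeping track of the two-layer norm (im)balance throughout the computation of $\norm{\vd}_2^2$.
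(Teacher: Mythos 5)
Your proof of the main claim is essentially the paper's: the same direction $(a_j/N_s,\vw_j/N_s)$, the same homogeneity identity splitting the directional derivative into a data term $4\E_\vx[Rg]$ and a regularization term $\ge 4\lambda$ via AM--GM, and the same cancellation of the leading $\pm\sign(a_i^*)\,\sigma(\vw_i^{*\top}\vx)$ between the two sign classes followed by a Lipschitz/Cauchy--Schwarz bound $|\E_\vx[Rg]|\le O(\delta_{\sign})\sqrt{L(\vtheta)}=\Orlvt(\delta_{\sign}\zeta^{1/2})$, with Lemma~\ref{lem: closeby neuron} supplying $N_+>N_-$. The one place you deviate is the ``as a result'' step, and there your route is slightly lossy: the lemma has no norm-balance hypothesis, and the plain bound $G\ge\langle\nabla L_\lambda,\vd\rangle^2/\norm{\vd}_2^2$ with $\langle\nabla L_\lambda,\vd\rangle\ge 2\lambda$ fails when some $a_j^2+\norm{\vw_j}_2^2\gg 2|a_j|\norm{\vw_j}_2$ in $T_{i,-}$, since then $\norm{\vd}_2^2$ can greatly exceed $4/N_-$. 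The paper avoids any balance assumption by \emph{not} rounding the regularization term down to $4\lambda$: it keeps the lower bound as $\tfrac{\lambda}{2}S$ with $S=\sum_s N_s^{-1}\sum_{j\in T_{i,s}}(a_j^2+\norm{\vw_j}_2^2)$, and on the Cauchy--Schwarz side uses $\sum_s X_s/N_s^2\le S/N_-$ (valid since $N_-\le N_+$), so the possibly large factor $S$ cancels and $G\ge\lambda^2 N_-$ follows unconditionally. Your proposed fix --- deferring the unbalanced case to Lemma~\ref{lem: descent dir norm bal} --- does make the downstream combination in Lemma~\ref{lem: grad lower bound} go through (one must check that the residual imbalance $\Orlvt(\zeta^2/\lambda^2)$ is dominated by the threshold $\tau=\Omegarlvt(\zeta^{5/6}/\lambda)$ at which the cancellation bound is invoked), but it proves a weaker statement than the lemma as written; carrying $S$ through both sides is the cleaner repair.
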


Now given the above lemmas, it suffices to consider the remaining case that $\alpha,\vbeta$ are well fitted, norms are balanced and no cancellation. In this case, the loss landscape is roughly the same as the ideal loss \eqref{eq: mu loss} from Lemma~\ref{lem: approx balance loss norm}. Thus, we could leverage these detailed characterization of the solution (far-away neurons are small and average neuron is close to corresponding ground-truth neuron) to construct descent direction. 

\begin{restatable}[Descent direction]{lemma}{lemdescentdirtangent}\label{lem: descent dir tangent}
    Under Lemma~\ref{lem: grad lower bound} and Assumption~\ref{assum: sign}, suppose the optimality gap $\zeta = L_\lambda(\vtheta)-L_\lambda(\mu_\lambda^*)$.
    Suppose 
    \begin{enumerate}[label = (\roman*), leftmargin=2em]
        \item norms are (almost) balanced:
        $|\norm{\mW}_F^2-\norm{\va}_2^2|\le \zeta/\lambda$,
        $\sum_{i\in[m]}(|a_j| - \norm{\vw_j}_2)^2=\Orlvt(\zeta^2/\lambda^2)$
        \item (almost) no norm cancellation: consider all neurons $\vw_j$ that are $\delta_{\sign}$-close w.r.t. teacher neuron $\vw_i^*$ but has a different sign, i.e., $\sign(a_j)\neq\sign(a_i^*)$ with $\delta_{\sign}=\Thetarlvt(\lambda/\zeta^{1/2})$, we have $\sum_{j\in T_{i,-}(\delta_{\sign})} |a_j|\norm{\vw_j}_2\le \tau = \Orlvt(\zeta^{5/6}/\lambda)$ with small enough hidden constant, where $T_{i,-}(\delta)$ defined in Lemma~\ref{lem: descent dir norm cancel}.
        \item $\alpha,\vbeta$ are well fitted: $|\alpha-\halpha|^2=\Orlvt(\zeta)$, $\norm{\vbeta-\hvbeta}_2^2=\Orlvt(\zeta)$ with small enough hidden factor.
    \end{enumerate}
    Then, we can construct the following descent direction
    \begin{align*}
        (\alpha+\alpha_*)\nabla_\alpha L_\lambda + \langle \nabla_\vbeta L_\lambda,\vbeta+\vbeta_* \rangle 
        + \sum_{i\in [m_*]}\sum_{j\in \gT_i} \langle\nabla_{\vw_i}L_\lambda,\vw_j-q_{ij}\vw_i^*\rangle =\Omega(\zeta),
    \end{align*}
    where $q_{ij}$ satisfy the following conditions with $\delta_{close}<\delta_{\sign}$ and $\delta_{close}=\Orlvt(\zeta^{1/3})$: 
    (1) $\sum_{j\in \gT_i}a_jq_{ij} = a_i^*$; 
    (2) $q_{ij}\ge 0$; 
    (3) $q_{ij}=0$ when $\sign(a_j)\neq \sign(a_i^*)$ or $\delta_j>\delta_{close}$.    
    (4) $\sum_{i\in [m_*]}\sum_{j\in \gT_i} q_{ij}^2=O_*(1)$.
\end{restatable}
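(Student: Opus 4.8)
The plan is to compute the inner product of the gradient with the proposed direction explicitly and then bound all error terms using the structural results of Section~\ref{sec: strcture of ideal loss}. Writing $R(\vx)=f_\vtheta(\vx)-f_*(\vx)$, a direct computation gives $\langle\nabla_{a_i,\vw_i}L_\lambda, (\text{direction})\rangle = 2\E_\vx[R(\vx)g(\vx)] + \lambda(\text{regularization inner product})$, where $g$ is the function obtained by differentiating $f$ along the direction. The first key step is to choose $g$ so that $\E_\vx[R(\vx)g(\vx)]$ contains a leading term equal to $\|R\|_2^2$ (up to the constants coming from $\alpha,\vbeta$) plus lower-order corrections; this is why the direction moves $\vw_j$ toward $q_{ij}\vw_i^*$ and moves $\alpha,\vbeta$ toward their ground-truth values $-\alpha_*,-\vbeta_*$. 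After this step, the gradient-direction inner product looks like $\Omega(\zeta)$ (coming from $\|R\|_2^2$ plus the regularization term, using the near-balance and no-cancellation hypotheses so that the regularizer behaves like the $\ell_1$ penalty) minus an error term of the form $2\langle R, h\rangle$ with $h(\vx)=\sum_{i,j\in\gT_i}a_jq_{ij}\vw_i^{*\top}\vx(\sigma'(\vw_i^{*\top}\vx)-\sigma'(\vw_j^\top\vx))$, exactly as sketched in Section~\ref{sec: structure}.

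The second key step is to bound $|\langle R,h\rangle|$ using the residual decomposition $R=R_1+R_2+R_3$ from \eqref{eq: residual decomp}. For $\langle R_1,h\rangle$ I would use that $R_1$ is controlled by $\sum_i\|\vv_i-\vw_i^*\|_2$, which by Lemma~\ref{lem: avg neuron bound} is $\tldOrlvt((\zeta/\lambda)^{3/4})$, combined with a bound $\|h\|_2=\Orlvt((\zeta/\lambda)^{1/2})$ coming from the weighted-norm bound Lemma~\ref{lem: weighted norm bound} and property (4) $\sum q_{ij}^2=O_*(1)$; this gives $\langle R_1,h\rangle = \Orlvt((\zeta/\lambda)^{5/4})\ll\zeta$ in the regime $\zeta=\Omega(\lambda^2)$. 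For $\langle R_2,h\rangle$ I would use $\|R_2\|_2^2=\Orlvt((\zeta/\lambda+\lambda)^{3/2})$ from Lemma~\ref{lem: R2} together with the same bound on $\|h\|_2$. For $\langle R_3,h\rangle$ (and the genuine cross terms involving $\alpha-\halpha$, $\vbeta-\hvbeta$), I would use Lemma~\ref{lem: 0th order bound}, hypothesis (iii), and Claim~\ref{lem: loss tensor decomp} so that these are directly $\Orlvt(\sqrt\zeta)\cdot\Orlvt(\sqrt\zeta)=\Orlvt(\zeta)$ with a small hidden constant. A separate, more delicate piece is the contribution of the close-by but wrong-sign neurons ($T_{i,-}(\delta_{\sign})$): these are the neurons for which $q_{ij}=0$ by construction, so they contribute an extra term to $\langle R,h\rangle$; here I would use hypothesis (ii), $\sum_{j\in T_{i,-}}|a_j|\|\vw_j\|_2\le\tau=\Orlvt(\zeta^{5/6}/\lambda)$, and the fact that for such neurons $\delta_j\le\delta_{\sign}=\Thetarlvt(\lambda/\zeta^{1/2})$, so their total contribution is $\Orlvt(\tau\cdot\delta_{\sign})\cdot(\text{something})$ which works out to $o(\zeta)$. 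Finally, I would verify that the weights $q_{ij}$ satisfying (1)--(4) actually exist: property (ii) of Lemma~\ref{lem: structure prop} (restated via Lemma~\ref{lem: closeby neuron}) guarantees enough mass of the correct sign within $\delta_{close}$ of each $\vw_i^*$, so one can distribute the target coefficient $a_i^*$ among these neurons with nonnegative, bounded weights.

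The main obstacle I expect is controlling $\langle R_2, h\rangle$ and the wrong-sign-neuron contribution simultaneously with the correct powers of $\zeta$ and $\lambda$: $R_2$ captures the degenerate activation-pattern mismatch in the overparametrized regime and is only $\Orlvt((\zeta/\lambda)^{3/4})$ in $L^2$, which is the dominant error and is exactly why the lemma requires $\zeta\le\Orlvt(\lambda^{9/5})$ rather than a cleaner threshold — one needs $(\zeta/\lambda)^{5/4}=o(\zeta)$ and $(\zeta/\lambda+\lambda)^{3/2}$-type terms to be dominated by $\zeta$, and chasing these exponents through is the crux. A secondary subtlety is that the regularization inner product $\lambda(\text{stuff})$ is only approximately the $\ell_1$ subgradient pairing because norms are only \emph{almost} balanced; hypothesis (i) is used precisely to absorb the discrepancy $\sum_i(|a_i|-\|\vw_i\|_2)^2=\Orlvt(\zeta^2/\lambda^2)$ into the lower-order error, and one has to check this does not eat the $\Omega(\zeta)$ main term. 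Once all error terms are shown to be $o(\zeta)$ (or $\Orlvt(\zeta)$ with small constant), the lemma follows. Detailed calculations are deferred to Appendix~\ref{appendix: descent dir}.
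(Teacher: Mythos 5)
Your skeleton matches the paper's: you pair the gradient with the same direction, extract the main term $2\norm{R}_2^2+\lambda(\text{regularizer pairing})\ge\zeta/4$ using near-balance and Lemma~\ref{lem: mu lamb prop}, and reduce everything to bounding $\langle R,h\rangle$ with $h(\vx)=\sum_{i,j} a_jq_{ij}\vw_i^{*\top}\vx(\sigma'(\vw_i^{*\top}\vx)-\sigma'(\vw_j^\top\vx))$ via the decomposition $R=R_1+R_2+R_3$. The gap is in how you bound $\langle R_1,h\rangle$ and $\langle R_2,h\rangle$: Cauchy--Schwarz with $\norm{h}_2$ does not close in the operative regime. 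Your claimed $\langle R_1,h\rangle=\Orlvt((\zeta/\lambda)^{5/4})\ll\zeta$ is backwards: $(\zeta/\lambda)^{5/4}\le\zeta$ requires $\zeta\le\lambda^{5}$, which is incompatible with the window $\Omegarlvt(\lambda^2)\le\zeta\le\Orlvt(\lambda^{9/5})$ in which Lemma~\ref{lem: grad lower bound} must operate (at $\zeta=\lambda^2$ you get $(\zeta/\lambda)^{5/4}=\lambda^{5/4}\gg\lambda^2=\zeta$). Even the sharper estimate $\norm{h}_2=\Orlvt(\delta_{close}^{3/2})=\Orlvt(\zeta^{1/2})$ only yields $(\zeta/\lambda)^{3/4}\zeta^{1/2}\le\zeta \Leftrightarrow \zeta\le\lambda^{3}$, still empty against $\zeta\ge\Omegarlvt(\lambda^2)$.

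The paper instead exploits the pointwise structure of $h$: each summand is supported on the wedge $\{\sign(\vw_j^\top\vx)\ne\sign(\vw_i^{*\top}\vx)\}$, on which $|\vw_i^{*\top}\vx|\le\delta_j\norm{\vx}_2$, so reducing to a low-dimensional Gaussian gives $\E[|\vv_k^\top\vx||\vw_i^{*\top}\vx|\ind]=O(\norm{\vv_k}_2\delta_j^2)$ --- a factor $\delta_j^{1/2}$ better than Cauchy--Schwarz --- whence $(I)=\Orlvt((\zeta/\lambda)^{3/4}\delta_{close}^2)$ with $\delta_{close}^2=\zeta^{2/3}$, which is exactly tight at $\zeta=\lambda^{9/5}$. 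For $\langle R_2,h\rangle$ no $\norm{R_2}_2$-based bound suffices either; the paper performs a sign-and-distance case split over $\gT_{i,\pm}(\delta_{\sign})$ versus $\gT_i\setminus\gT_i(\delta_{\sign})$ and over $k\ne i$, producing the two competing terms $\tau\,\delta_{\sign}\delta_{close}^2$ and $\delta_{close}^2\zeta\lambda^{-1}\delta_{\sign}^{-1}$ whose balance is precisely why $\delta_{\sign}=\Thetarlvt(\lambda/\zeta^{1/2})$ and $\tau=\Orlvt(\zeta^{5/6}/\lambda)$ appear in hypothesis (ii). You gesture at the wrong-sign neurons as "a separate, more delicate piece," but without this wedge-based per-term accounting the exponents do not work out, so the proof as proposed does not go through.
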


\subsection{Proof of Lemma~\ref{lem: grad lower bound}}\label{appendix subsec: gradient lower bound proof}
Now we are ready to prove the gradient lower bound (Lemma~\ref{lem: grad lower bound}) by combining all descent direction lemma in the previous section together.

\lemgradlowerbound*
\begin{proof}
    We check the assumption of Lemma~\ref{lem: descent dir tangent} one by one. We first assume Assumption~\ref{assum: sign} holds to get a gradient lower bound.

    For assumption (i) (norm balance) in Lemma~\ref{lem: descent dir tangent}, whenever $\sum_{i\in[m_*]}\left| a_i^2 - \norm{\vw_i}_2^2\right|=\Omegarlvt(\zeta^2/\lambda^2)$, by Lemma~\ref{lem: descent dir norm bal} we know
    \begin{align*}
        \sum_i\sum_{j\in T_i}\left| 
        \langle\nabla_{a_j}L_\lambda,-a_j\rangle
        +
        \langle\nabla_{\vw_j}L_\lambda,\vw_j\rangle \right|
        \ge& \Omegarlvt(\zeta^2/\lambda).
    \end{align*}
    With Lemma~\ref{lem: norm bound}, this implies
    \begin{align*}
        \sqrt{\norm{\nabla_\va L_\lambda}_2^2 + \norm{\nabla_\mW L_\lambda}_F^2} \cdot O(\norm{\va_*}_1)
        \ge
        \sqrt{\norm{\nabla_\va L_\lambda}_2^2 + \norm{\nabla_\mW L_\lambda}_F^2}
        \sqrt{\norm{\va}_2^2 + \norm{\mW}_F^2}=\Omegarlvt(\zeta^2/\lambda),
    \end{align*}
    which means 
    \begin{align*}
        \norm{\nabla_\vtheta L_\lambda}_F^2
        \ge \norm{\nabla_\va L_\lambda}_2^2 + \norm{\nabla_\mW L_\lambda}_F^2
        \ge \Omegarlvt(\zeta^4/\lambda^2)
    \end{align*}

    For assumption (ii) (norm cancellation) in Lemma~\ref{lem: descent dir tangent}, whenever it does not hold, by Lemma~\ref{lem: descent dir norm cancel} we know
    \begin{align*}
        \norm{\nabla_\vtheta L_\lambda}_F^2
        \ge \norm{\nabla_{\va}L_\lambda}_2^2 + \norm{\nabla_{\mW}L_\lambda}_F^2 
        \ge& \lambda^2 \sum_{j\in T_{i,-}(\delta_{\sign})}|a_j|\norm{\vw_j}_2
        \ge \Omegarlvt(\zeta^{5/6}\lambda).
    \end{align*}

    For assumption (iii) ($\alpha,\vbeta$) in Lemma~\ref{lem: descent dir tangent}, whenever it does not hold, by Lemma~\ref{lem: descent dir alpha beta} we know
    \begin{align*}
        |\nabla_\alpha L_\lambda|^2 =  (\alpha-\halpha)^2=\Omegarlvt(\zeta^2),\quad
        \norm{\nabla_\vbeta L_\lambda}_2^2 = 4 \norm{\vbeta-\hvbeta}_2^2=\Omegarlvt(\zeta^2),
    \end{align*}
    which implies
    \begin{align*}
        \norm{\nabla_\vtheta L_\lambda}_F^2
        \ge |\nabla_\alpha L_\lambda|^2 + \norm{\nabla_\vbeta L_\lambda}_2^2
        =\Omegarlvt(\zeta^2).
    \end{align*}
    
    Thus, the remaining case is the one that all assumption (i)-(iii) in Lemma~\ref{lem: descent dir tangent} hold and also $\sum_{i\in[m_*]}\left| a_i^2 - \norm{\vw_i}_2^2\right|=\Orlvt(\zeta^2/\lambda^2)$, we choose 
    \begin{align*}
        q_{ij}=\left\{
        \begin{array}{ll}
        \frac{a_ja_i^*}{\sum_{j\in T_{i,+}(\delta_{close})}a_j^2}     & \text{, if $j\in T_{i,+}(\delta_{close})$} \\
        0     &  \text{, otherwise}
        \end{array}\right.
    \end{align*}
    so that condition (1)-(4) on $q_{ij}$ all hold: condition (1)-(3) are easy to check, Lemma~\ref{lem: qij calculation} shows condition (4) holds. Now we know from Lemma~\ref{lem: descent dir tangent} that
    \begin{align*}
        (\alpha+\alpha_*)\nabla_\alpha L_\lambda + \langle \nabla_\vbeta L_\lambda,\vbeta+\vbeta_* \rangle 
        + \sum_{i\in [m_*]}\sum_{j\in \gT_i} \langle\nabla_{\vw_i}L_\lambda,\vw_j-q_{ij}\vw_i^*\rangle =\Omega(\zeta).
    \end{align*}

    Note that 
    \begin{align*}
        &(\alpha+\alpha_*)\nabla_\alpha L_\lambda + \langle \nabla_\vbeta L_\lambda,\vbeta+\vbeta_* \rangle 
        + \sum_{i\in [m_*]}\sum_{j\in \gT_i} \langle\nabla_{\vw_i}L_\lambda,\vw_j-q_{ij}\vw_i^*\rangle\\
        \le& \sqrt{|\nabla_\alpha L_\lambda|^2 + \norm{\nabla_\vbeta L_\lambda}_2^2
        + \norm{\nabla_\va L_\lambda}_2^2 + \norm{\nabla_\mW L_\lambda}_F^2}
        \sqrt{(\alpha+\alpha_*)^2 + \norm{\vbeta+\vbeta_*}_2^2 + \sum_{i\in [m_*]}\sum_{j\in \gT_i} \norm{\vw_j-q_{ij}\vw_i^*}_2^2}
    \end{align*}
    and 
    \begin{align*}
        &|\alpha+\alpha_*|
        \le |\halpha| + |\alpha_*| + \Orlvt(\zeta)\myle{a} \Orlvt(1)\\
        &\norm{\vbeta+\vbeta_*}_2
        \le \norm{\hvbeta}_2 +\norm{\vbeta_*}_2+ \Orlvt(\zeta)\myle{b} \Orlvt(1)\\
        &\sum_{i\in [m_*]}\sum_{j\in \gT_i} \norm{\vw_j-q_{ij}\vw_i^*}_2^2
        \le
        2\sum_{i\in [m_*]}\sum_{j\in \gT_i} \norm{\vw_j}_2^2+q_{ij}^2\norm{\vw_i^*}_2^2
        \myle{c} \Orlvt(1),
    \end{align*}
    where (a)(b) by Lemma~\ref{lem: loss norm bound}; (c) we use Lemma~\ref{lem: norm bound} and condition (4) on $q_{ij}$.

    Therefore, we get
    \begin{align*}
        \norm{\nabla_\vtheta L_\lambda}_F^2
        = 
        |\nabla_\alpha L_\lambda|^2 + \norm{\nabla_\vbeta L_\lambda}_2^2
        + \norm{\nabla_\va L_\lambda}_2^2 + \norm{\nabla_\mW L_\lambda}_F^2
        =\Omegarlvt(\zeta^2).
    \end{align*}

    Combine all cases above, we know
    \begin{align*}
        \norm{\nabla_\va L_\lambda}_2^2 + \norm{\nabla_\mW L_\lambda}_F^2
        =\Omegarlvt(\min\{\zeta^4/\lambda^2,\zeta^{5/6}\lambda,\zeta^2\})
        =\Omegarlvt(\zeta^4/\lambda^2),
    \end{align*}
    as long as $\zeta=O(\lambda^{9/5}/\poly(r,m_*,\Delta,\norm{\va_*}_1,\amin))$.

    We now use Lemma~\ref{lem: sign} to show when Assumption~\ref{assum: sign} is not true, we can get similar gradient lower bound. Denote the above gradient lower bound as $\tau_0=\Omegarlvt(\zeta^4/\lambda^2)$. Let $\tau=\tau_0/2$. 
    
    When $(\alpha-\halpha)^2\ge \tau$ or $\norm{\vbeta-\hvbeta}_2^2\ge\tau$, from Lemma~\ref{lem: descent dir alpha beta} we know $\norm{\nabla_\vtheta L_\lambda}_F^2\ge\tau$.

    When $(\alpha-\halpha)^2,\norm{\vbeta-\hvbeta}_2^2\le\tau$, using Lemma~\ref{lem: sign} we know there exists $\tldvtheta$ such that $\norm{\nabla_{\tldvtheta}L_\lambda}_F^2\ge\tau_0$ and $|\norm{\nabla_{\tldvtheta}L_\lambda}_F - \norm{\nabla_{\vtheta}L_\lambda}_F|\le \sqrt{\tau}$. Thus, we know $\norm{\nabla_{\vtheta}L_\lambda}_F^2\ge 0.1\tau$.

    Therefore, combine above we can show $\norm{\nabla_{\theta}L_\lambda}_F^2=\Omegarlvt(\zeta^4/\lambda^2)$.
\end{proof}

\section{Non-degenerate dual certificate}\label{sec: dual cert}
In this section, we show that there indeed exists a non-degenerate dual certificate that satisfies Definition~\ref{def: non-dege dual cert} and therefore proving Lemma~\ref{lem: dual cert}.

\lemdualcert*

Recall that we want to use the dual certificate $\eta$ to characterize the (approximate) solution for the following regression problem:
\begin{align*}
    \min_{\mu\in \gM(\sS^{d-1})} L_\lambda(\mu) 
    = \E_{\vx,\tldy}[(f_\mu(\vx) - \tldy)^2] + \lambda|\mu|_1
    = \E_{\vx}\left[\left(\int_\vw \sgmtwo(\vw^\top\vx) \rd \mu-\mu_*\right)^2\right] + \lambda|\mu|_1,
\end{align*}
where $\sgmtwo$ is the ReLU activation after removing 0th and 1st order (corresponding to $\alpha$ and $\beta$ terms) and $\mu_*=\sum_{i\in[m_*]}a_i^*\delta_{\vw_i^*}$ is the ground-truth.

\paragraph{Notation}
We need to first introduce few notations before proceeding to the proof. Denote the kernel $K_{\ge \ell}(\vw,\vu)=\E_{\vx\sim N(0,\mI)}[\bsgml(\bvw^\top\vx)\bsgml(\bvu^\top\vx)]$ as the kernel induced by activation $\sgml(x)$, where $\bsgml(x)=\sum_{k\ge\ell}\hsigma_k h_k(x)/Z_\sigma$, $Z_\sigma=\norm{\sgml}_2=\sqrt{\sum_{k\ge\ell}\hsigma_k^2}=\Theta(\ell^{-3/4})$ is the normalizing factor, $h_k(x)$ is the normalized $k$-th (probabilistic) Hermite polynomial and $\hsigma_k$ is the corresponding Hermite coefficient. We will specify the value of $\ell$ later and use $K$ instead of $K_{\ge \ell}$ for simplicity.

We will construct the dual certificate $\eta$ following the proof strategy in \citet{poon2023geometry} with the form below (the difference is that we now only keep high order terms that are at least $\ell$):
\begin{align*}
    \eta(\vw)
    = 
    \sum_{j\in[m_*]}\alpha_{1,j}K(\vw_j^*,\vw) 
    + \sum_{j\in[m_*]}\valpha_{2,j}^\top \nabla_1 K(\vw_j^*,\vw)
\end{align*}
such that it satisfies 
\begin{align}\label{eq: eta constraint}
    \eta(\vw_i^*)=\sign(a_i^*)\text{ and }\nabla\eta(\vw_i^*)=0\text{ for all $i\in[m_*]$.}    
\end{align}
 Here $\valpha_1=(\alpha_1,\ldots,\alpha_{m_*})^\top\in\R^{m_*}, \valpha_2=(\valpha_{2,1}^\top,\ldots,\valpha_{2,m_*}^\top)^\top\in\R^{m_*d}$ are the parameters that we are going to solve and $\nabla_i$ means the gradient w.r.t. $i$-th variable (for example, $\nabla_1 K(\vx,\vy)$ means gradient with respect to $\vx$). 

One can rewrite the above constraints \eqref{eq: eta constraint} into the matrix form:
\begin{align}\label{eq: dual cert matrix}
    \mUpsilon
    \begin{pmatrix} \valpha_1 \\ \valpha_2 \end{pmatrix} 
    =\vb,
\end{align}
where $\vb=(\sign(a_1^*),\ldots,\sign(a_{m_*}^*),\vzero_{m^*d}^\top)^\top \in \R^{m_*(d+1)}$, $\mUpsilon = \E_\vx[\vgamma(\vx)\vgamma(\vx)^\top]\in\R^{m_*(d+1)\times m_*(d+1)}$,
\begin{align*}
    \vgamma(\vx) = (\bsgml(\vw_1^{*\top}\vx),\ldots,\bsgml(\vw_{m_*}^{*\top}\vx),\nabla_\vw\bsgml(\bvw_1^{*\top}\vx)^\top,\ldots,\nabla_\vw\bsgml(\bvw_{m_*}^{*\top}\vx)^\top)^\top
    \in\R^{m_*(d+1)}.
\end{align*}
Here $\nabla_\vw\bsgml(\bvw_i^{*\top}\vx)=\mP_{\vw_i^*}\bsgml^\prime(\vw_i^{*\top}\vx)\vx\in\R^d$, where $\mP_{\vw_i^*}$ is the projection matrix defined below.

\paragraph{Notions on the unit sphere}
As we could see, the kernel $K$ is invariant under the change of norms, so it suffices to focus on the input on the unit sphere $\sS^{d-1}$. On the unite sphere, we could compute the gradient and hessian of a function $f(\vw)$ on the sphere (e.g., \citet{absil2013extrinsic})
\begin{align*}
    \grad f(\vw) &= \proj\nabla f(\vw),\\
    \hessian f(\vw)[\vz] &= \proj (\nabla^2 f(\vw) - \bvw^\top\nabla f(\vw)\mI)\vz \quad\text{for all tangent vector $\vz$ that $\vz^\top \vw=0$},
\end{align*}
where $\proj=\mI-\vw\vw^\top$ is the projection matrix.

Then, we could define the derivative as in \citet{poon2023geometry,absil2008optimization}: for tangent vectors $\vz,\vzp$
\begin{align*}
    \D_0 f(\vw) &:= f(\vw)\\
    \D_1 f(\vw)[\vz] &:= \langle\vz,\grad f(\vw)\rangle = \vz^\top\proj\nabla f(\vw)\\
    \D_2 f(\vw)[\vz,\vzp] &:= \langle \hessian f(\vw)[\vz],\vzp\rangle
    =\vz^\top \proj (\nabla^2 f(\vw) - \bvw^\top\nabla f(\vw)\mI)\proj\vzp,
\end{align*}
and their associated norms
\begin{align*}
    \norm{\D_1 f(\vw)}_\vw &:= \sup_{\norm{\vz}_\vw=1} \D_1 f(\vw)[\vz] = \norm{\proj\nabla f(\vw)}_2,\\
    \norm{\D_2 f(\vw)}_\vw &:= \sup_{\norm{\vz}_\vw,\norm{\vzp}_\vw=1} \D_2 f(\vw)[\vz,\vzp] = \norm{\proj\hessian f(\vw)\proj}_2,    
\end{align*}
where $\norm{\vz}_\vw=\norm{\proj\vz}_2$.

For simplicity, we will use $\kij{ij}(\vw,\vu)$ to denote $\nabla_1^i\nabla_2^j K(\vw,\vu)$. One can check that this is in fact the same as the one defined \citet{poon2023geometry} under our specific kernel $K$, $i+j\le 3$ and $i,j\le2$. Let \begin{align*}
    \norm{\kij{ij}(\vw,\vu)}_{\vw,\vu}
    :=\sup_{\substack{ \norm{\vz_\vw^{(p)}}_\vw=\norm{\vz_\vu^{(q)}}_\vu=1,\\
    \vw^\top\vz_\vw^{(p)}=\vu^\top\vz_\vu^{(q)}=0\ \forall p\in[i],q\in[j]}}
    \kij{ij}(\vw,\vu)[\vz_\vw^{(1)},\ldots,\vz_\vu^{(j)}],
\end{align*}
where $\vz_\vw^{(p)}$ applies to the dimension corresponding to $\vw$ and similarly $\vz_\vu^{(q)}$ for $\vu$.

Before solving \eqref{eq: dual cert matrix}, we first present some useful proprieties of kernel $K$ that will be used later (see Section~\ref{appendix: dual cert} for the proofs). The lemma below shows that kernel $K(\vw,\vu)$ is non-degenerate in the sense that it decays at least quadratic at each ground-truth direction ($\vw\approx\vu\approx\vw_i^*$) and contributes almost nothing when $\vw,\vu$ are away. 
\begin{restatable}[Non-degeneracy of kernel $K$]{lemma}{lemkernelnondegen}\label{lem: kernel nondegenerate}
    For any $h>0$, let $\ell\ge\Theta(\Delta^{-2}\log(m_*\ell/h\Delta))$, kernel $K_{\ge \ell}$ is non-degenerate in the sense that there exists $r=\Theta(\ell^{-1/2}),\rho_1=\Theta(1),\rho_2=\Theta(\ell)$ such that following hold:
    \begin{enumerate}[label = (\roman*), leftmargin=2em]
        \item $K(\vw,\vu)\le 1-\rho_1$ for all $\delta(\vw,\vu):=\angle(\vw,\vu)\ge r$.
        \item $\kij{20}(\vw,\vu)[\vz,\vz]\le -\rho_2 \norm{\vz}^2$ for tangent vector $\vz$ that $\vz^\top\vw=0$ and $\delta(\vw,\vu)\le r$.
        \item $\norm{\kij{ij}(\vw_1^*,\vw_k^*)}_{\vw_i^*,\vw_k^*}\le h/m_*^2$ for $(i,j)\in\{0,1\}\times\{0,1,2\}$
    \end{enumerate}
\end{restatable}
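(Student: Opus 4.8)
The plan is to analyze the kernel $K_{\ge\ell}(\vw,\vu)=\E_\vx[\bsgml(\bvw^\top\vx)\bsgml(\bvu^\top\vx)]$ purely through its Hermite expansion. By Claim~\ref{lem: hermite product}, if $\rho=\bvw^\top\bvu$ then $K_{\ge\ell}(\vw,\vu)=\sum_{k\ge\ell}\hsigma_k^2\rho^k / Z_\sigma^2 =: \phi(\rho)$, where $Z_\sigma^2=\sum_{k\ge\ell}\hsigma_k^2=\Theta(\ell^{-3/2})$ by Lemma~\ref{lem: hermite coeff}. So $K_{\ge\ell}$ is really a one-variable function $\phi$ of the correlation $\rho\in[-1,1]$, with $\phi(1)=1$, $\phi$ is increasing on $[0,1]$, and all derivatives are controlled by the moments $\sum_{k\ge\ell}\hsigma_k^2 k(k-1)\cdots/Z_\sigma^2$. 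First I would record the relevant estimates on $\phi$ and its first two derivatives at $\rho=1$ and the tail bounds on $\phi(\rho)$ for $\rho$ bounded away from $1$; everything in (i)--(iii) then reduces to chasing these scalar estimates through the spherical-derivative formalism set up just before the lemma.

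For part (i): since $\hsigma_\ell\ne 0$ and the coefficients decay polynomially, $\phi(\rho)$ for $\rho=\cos(r)$ with $r$ a fixed small angle is dominated by the $k=\ell$ term up to a constant factor, but $\cos(r)^\ell\le e^{-\ell r^2/3}$; more carefully, for \emph{any} $\delta(\vw,\vu)\ge r$ we have $|\rho|=|\cos\delta|\le\cos r$, and since $\phi$ is convex and increasing on $[0,1]$ with $\phi(1)=1$ and $\phi'(1)=\Theta(\ell)$ (the normalized second moment), we get $\phi(\cos r)\le 1-\Theta(\ell)(1-\cos r)+\ldots\le 1-\rho_1$ for $\rho_1=\Theta(1)$ once $r=\Theta(\ell^{-1/2})$ is chosen so that $\ell(1-\cos r)=\ell\cdot\Theta(r^2)=\Theta(1)$; I also need to handle negative $\rho$, but $\phi(\rho)\le\phi(|\rho|)\le\phi(\cos r)$ there too (even-index terms dominate for ReLU/abs as in Lemma~\ref{lem: hermite coeff}). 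For part (ii): $\kij{20}(\vw,\vu)[\vz,\vz]$ is, via the formulas $\hessian f(\vw)[\vz]=\proj(\nabla^2 f-\bvw^\top\nabla f\,\mI)\vz$, a combination of $\phi''(\rho)$, $\phi'(\rho)$ times geometric factors depending on $\vz,\vw,\vu$; when $\delta(\vw,\vu)\le r$ so $\rho$ is close to $1$, the dominant term is $-\phi'(\rho)\norm{\proj\vz}^2$ coming from the curvature of the sphere (the $-\bvw^\top\nabla f\,\mI$ piece), and $\phi'(\rho)=\Theta(\ell)$ for $\rho\ge\cos r$, while the $\phi''$ contributions are subleading after using $\norm{\proj_\vu\vw}=\sin\delta=O(r)=O(\ell^{-1/2})$ and $\phi''(\rho)=\Theta(\ell^2)$ — so $\phi''\cdot(\sin\delta)^2=O(\ell)$, same order, and I must track constants to land $\rho_2=\Theta(\ell)$. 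For part (iii): at $\vw_1^*,\vw_k^*$ with $i\ne k$ the correlation is $\rho_{ik}=\cos\angle(\vw_i^*,\vw_k^*)$ with $|\rho_{ik}|\le\cos\Delta$ by Assumption~\ref{assum: delta seperation}; the derivatives $\kij{ij}$ up to total order $3$ are bounded by $\phi^{(a)}(\rho_{ik})$ for $a\le 3$ times $O(\ell^{-1/2})$-type geometric factors (or constants), and $|\phi^{(a)}(\rho)|\le C\ell^{a}\,|\rho|^{\ell-a}\cdot O(1)\le C\ell^3 (\cos\Delta)^{\ell-3}$ for $|\rho|\le\cos\Delta$; choosing $\ell\ge\Theta(\Delta^{-2}\log(m_*\ell/h\Delta))$ makes $(\cos\Delta)^\ell\le e^{-\Theta(\ell\Delta^2)}$ beat the $\poly(\ell,m_*,1/\Delta)$ prefactor, giving $\le h/m_*^2$.

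I expect the main obstacle to be part (ii): carefully expanding $\kij{20}(\vw,\vu)[\vz,\vz]$ in the spherical (intrinsic) derivative notation and verifying that the positive $\phi''(\rho)$-contributions, which a priori scale like $\ell^2$, are actually suppressed by the geometric factors $(\sin\delta)^2=O(r^2)=O(1/\ell)$ down to $O(\ell)$, and then checking the sign so that the $-\phi'(\rho)\norm{\vz}^2$ curvature term dominates with a definite negative constant. This requires being precise about which directions $\vz$ (tangent at $\vw$) do to both $\nabla_1$ slots, decomposing $\vz$ into its components along $\proj_\vu$ directions, and using $\phi'>0$, $\phi''>0$, $\phi'''$ bounded appropriately. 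Parts (i) and (iii) are, by comparison, routine tail estimates on the Hermite series of $\bsgml$ once the scalar function $\phi$ and its derivatives are understood; I would state those as small helper claims and defer their proofs to the appendix, mirroring how the kernel estimates are handled in \citet{poon2023geometry,zhou2021local}.
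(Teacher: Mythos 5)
Your high-level plan---reducing $K_{\ge\ell}(\vw,\vu)$ to a one-variable Hermite series $\phi(\rho)=\frac{1}{Z_\sigma^2}\sum_{k\ge\ell}\hsigma_k^2\rho^k$ of the correlation $\rho=\bvw^\top\bvu$ and estimating $\phi$ and its derivatives---is exactly the paper's approach (see the explicit expressions \eqref{eq: kernel form}). However there are two flaws in the details, one cosmetic and one substantive.

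For part (i), your convexity argument runs the wrong direction. Since $\phi$ is convex with $\phi(1)=1$, the first-order Taylor bound at $\rho=1$ gives $\phi(\cos r)\ge 1-\phi'(1)(1-\cos r)$, a \emph{lower} bound, not the upper bound you need. The paper instead uses $\cos^k r\le\cos^\ell r\le(1-r^2/5)^\ell$ for all $k\ge\ell$, so $\phi(\cos r)\le(1-r^2/5)^\ell\le c<1$ once $r=\Theta(\ell^{-1/2})$. Your alternate route via $\cos^\ell r\le e^{-\Theta(\ell r^2)}$ is essentially this and is fine; just drop the convexity framing.

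For part (ii), the step that would fail is the claim that $\phi''(\rho)=\Theta(\ell^2)$ uniformly for $\rho\ge\cos r$. This is false: $\phi''(1)=\frac{1}{Z_\sigma^2}\sum_{k\ge\ell}\hsigma_k^2 k(k-1)=+\infty$, because $\hsigma_k^2 k(k-1)=\Theta(k^{-1/2})$ is not summable. More quantitatively, $\phi''(\cos\theta)=\Theta(\ell^{3/2}/\theta)$ for $\theta\lesssim\ell^{-1/2}$, which blows up as $\theta\to0$; in particular it is much larger than $\Theta(\ell^2)$ for $\theta\ll r$. You therefore cannot bound $\phi''(\rho)$ and $\sin^2\delta$ separately and multiply. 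What is actually bounded is the product: $\sum_{k\ge\ell}\hsigma_k^2 k(k-1)\cos^{k-2}\theta\,\sin^2\theta=\Theta(\theta)$ uniformly over $\theta\le r$ (the paper proves this by splitting the sum at $k\sim1/\theta^2$ and invoking Lemma~\ref{lem: cosk sin bound}, $\max_\theta|\cos^k\theta\sin^2\theta|=\Theta(1/k)$). After dividing by $Z_\sigma^2=\Theta(\ell^{-3/2})$ this is $\Theta(\ell^{3/2}\theta)\le\Theta(c_1\ell)$ for $r=c_1\ell^{-1/2}$, while the curvature term $\frac{1}{Z_\sigma^2}\sum_{\ell\le k\le2\ell}\hsigma_k^2 k\cos^k\theta\ge\Theta(\ell)(1-r^2)^{2\ell}\approx\Theta(\ell)e^{-2c_1^2}$, so taking $c_1$ small makes the negative term dominate---which is precisely the constant-chasing you correctly flagged as the crux. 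But to carry it out you must bound the $\phi''\cdot\sin^2$ product directly rather than the factors, since the factor bound you proposed is simply not true on $[0,r]$. Part (iii) is fine as you describe.
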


The following lemma shows that $K$ and its derivatives are bounded.
\begin{restatable}[Regularity conditions on kernel $K$]{lemma}{lemkernelregcond}\label{lem: kernel reg cond}
    Let $B_{ij}:=\sup_{\vw,\vu}\norm{\kij{ij}(\vw,\vu)}_{\vw,\vu}$ and $B_0 = B_{00}+B_{10}+1$, $B_2 = B_{20}+B_{21}+1$. We have $B_{00}=O(1)$, $B_{10}=O(\ell^{1/2})$, $B_{11}=O(\ell)$, $B_{20}=O(\ell)$, $B_{21}=O(\ell^{3/2})$, and therefore $B_0=O(\ell^{1/2})$, $B_2=O(\ell^{3/2})$.
\end{restatable}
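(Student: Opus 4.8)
The plan is to use that $K_{\ge\ell}$ is a dot-product kernel on the sphere, so that every quantity $\kij{ij}$ reduces to derivatives of a single scalar function of the correlation $\rho=\langle\bvw,\bvu\rangle$, and then to estimate those scalar derivatives using the coefficient decay $|\hsigma_k|=\Theta(k^{-5/4})$ from Lemma~\ref{lem: hermite coeff}. By scale invariance it suffices to take $\vw,\vu\in\sS^{d-1}$, and by the Hermite product identity (Claim~\ref{lem: hermite product}),
\[
    K_{\ge\ell}(\vw,\vu)=\frac{1}{Z_\sigma^2}\sum_{k\ge\ell}\hsigma_k^2\,\rho^k=:\phi(\rho),\qquad \rho:=\vw^\top\vu\in[-1,1],
\]
a smooth even function with $\phi(1)=1$.

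Next I would expand each covariant derivative $\kij{ij}=\nabla_1^i\nabla_2^j K$ using the Riemannian gradient/Hessian formulas on $\sS^{d-1}$ recorded just before the lemma. Since $K=\phi(\rho)$ with $\grad_\vw\rho=\mP_\vw\vu$ and $\grad_\vu\rho=\mP_\vu\vw$, every such derivative expands into a bounded sum of terms of the form
\[
    \phi^{(s)}(\rho)\cdot\big(\text{tensor product of at most }i+j\text{ vectors, each one of } \mP_\vw\vu,\ \mP_\vu\vw,\ \vw,\ \vu,\ \text{or a tangent metric factor}\big),\quad 0\le s\le i+j,
\]
where the sub-top-order terms ($s<i+j$) arise only from the Hessian-on-sphere correction $-\,\bvw^\top\nabla f\,\mI$ and from differentiating the projectors $\mP_\vw,\mP_\vu$. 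Contracting against unit tangent vectors $\vz_\vw\perp\vw$ and $\vz_\vu\perp\vu$: each factor $\mP_\vw\vu$ or $\mP_\vu\vw$ contributes at most $\sqrt{1-\rho^2}=\sin\angle(\vw,\vu)$, each tangent metric factor contributes $O(1)$, and any factor $\vw^\top\vz_\vw$ vanishes. Keeping track of how many $\sqrt{1-\rho^2}$ factors survive, this yields
\[
    \norm{\kij{00}}_{\vw,\vu}\le|\phi(\rho)|,\qquad
    \norm{\kij{10}}_{\vw,\vu}\le|\phi'(\rho)|\sqrt{1-\rho^2},
\]
\[
    \norm{\kij{11}}_{\vw,\vu},\ \norm{\kij{20}}_{\vw,\vu}\le|\phi''(\rho)|(1-\rho^2)+|\phi'(\rho)|,
\]
\[
    \norm{\kij{21}}_{\vw,\vu}\le|\phi'''(\rho)|(1-\rho^2)^{3/2}+|\phi''(\rho)|\sqrt{1-\rho^2}+|\phi'(\rho)|.
\]

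It then remains to bound $\sup_{\rho\in[-1,1]}|\phi^{(s)}(\rho)|(1-\rho^2)^{t}$ for the pairs $(s,t)$ above. Differentiating the series termwise, $\phi^{(s)}(\rho)=Z_\sigma^{-2}\sum_{k\ge\ell}\frac{k!}{(k-s)!}\hsigma_k^2\,\rho^{k-s}$, and combining the elementary inequality $\max_{\rho\in[0,1]}\rho^{m}(1-\rho^2)^{t}\le (2t/m)^{t}$ with $\hsigma_k^2=\Theta(k^{-5/2})$ and $Z_\sigma^{-2}=\Theta(\ell^{3/2})$ gives, for any $t>s-\tfrac32$,
\[
    |\phi^{(s)}(\rho)|(1-\rho^2)^{t}\ \le\ O(\ell^{3/2})\sum_{k\ge\ell}O\big(k^{\,s-t-5/2}\big)\ =\ O(\ell^{3/2})\cdot O(\ell^{\,s-t-3/2})\ =\ O(\ell^{\,s-t}).
\]
Plugging in the relevant exponents yields $B_{00}=O(1)$, $B_{10}=O(\ell^{1/2})$, $B_{11}=B_{20}=O(\ell)+O(\ell)=O(\ell)$, and $B_{21}=O(\ell^{3/2})$; the conclusions $B_0=B_{00}+B_{10}+1=O(\ell^{1/2})$ and $B_2=B_{20}+B_{21}+1=O(\ell^{3/2})$ are then immediate.

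The step demanding the most care (though still bookkeeping rather than anything conceptual) is the covariant-derivative expansion: one must track the Hessian correction term and the differentiated projectors exactly, so as to be sure no term carries fewer $\sqrt{1-\rho^2}$ damping factors than claimed, matching the derivative calculus of \citet{poon2023geometry} under our specific kernel. A second minor point is that two of the $(s,t)$ needed for $\kij{21}$ are borderline, $t=s-\tfrac32$ (namely $(3,\tfrac32)$ and $(2,\tfrac12)$), where the crude "sum of termwise suprema" above loses a factor $\log\ell$; replacing it by an integral comparison — the term $k^{s}\hsigma_k^2\rho^{k-s}(1-\rho^2)^{t}$ peaks near $\rho\approx 1-t/k$, and $\int_0^\infty u^{\,s-t-3/2}e^{-u}\dif u<\infty$ in these cases — removes the logarithm and restores the clean $O(\ell^{\,s-t})=O(\ell^{3/2})$ bound.
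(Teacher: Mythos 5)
Your proposal takes essentially the same route as the paper. The paper's equation~\eqref{eq: kernel form} is exactly the covariant-derivative expansion of $K=\phi(\rho)$ you describe, and the paper then bounds each series using $\hsigma_k^2=\Theta(k^{-5/2})$, $Z_\sigma^{-2}=\Theta(\ell^{3/2})$, and elementary maxima of $\cos^m\theta\sin^j\theta$ (Lemma~\ref{lem: cosk sin bound}), exactly as you do with the $|\phi^{(s)}(\rho)|(1-\rho^2)^t\le O(\ell^{s-t})$ bookkeeping; for the borderline $B_{21}$ terms the paper likewise abandons the termwise sup and evaluates the geometric series directly (via Lemma~\ref{lem: calculation k beta}), matching your proposed integral comparison. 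One small inaccuracy in your commentary: for $t=s-\tfrac32$ the per-term sup is $\Theta(k^{-1})$, so the crude ``sum of termwise suprema'' does not merely lose a $\log\ell$ --- it diverges over $k\ge\ell$; but you correctly recognize the step needs a finer argument and sketch the right fix, so the proof as a whole is sound.
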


The following lemma from \citet{poon2023geometry} connects the non-degeneracy of kernel $K$ to the dual certificate $\eta$ that we are interested in.
\begin{lemma}[Lemma 2, \citet{poon2023geometry}, adapted in our setting]\label{lem: connect eta and K}
    Let $a \in \{\pm 1\}$. Suppose that for some $\rho > 0$, $B> 0$ and $0 < r \le B^{- 1/2}$ we have: for all $\delta(\vw,\vw_0)$ and $\vz \in\R^d$ with $\vz^\top\vw=0$, it holds that $-\kij{02}(\vw_0, \vw)[\vz, \vz] > \rho \norm{\vz}_2^2$ and $\norm{\kij{02}(\vw_0, \vw)}_{\vw}\le B$. Let $\eta$ be a smooth function. If $\eta(\vw_0)=a$, $\nabla\eta(\vw_0)=0$ and $\norm{a\D_2\eta(\vw)-\kij{02}(\vw_0,\vw)}_\vw\le \tau$ for all $\delta(\vw,\vw_0)\le r$ with $\tau<\rho/2$, then we have $|\eta(\vw)|\le 1- ((\rho-2\tau)/2)\delta(\vw,\vw_0)^2$ for all $\delta(\vw,\vw_0)\le r$.
\end{lemma}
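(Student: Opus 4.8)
The plan is to reduce the bound to a one-variable Taylor expansion of $\eta$ along the great-circle geodesic from $\vw_0$ to $\vw$. Fix $\vw$ with $\theta:=\delta(\vw,\vw_0)\le r$, let $\gamma:[0,\theta]\to\sS^{d-1}$ be the unit-speed minimizing geodesic with $\gamma(0)=\vw_0$, $\gamma(\theta)=\vw$, and set $g(t):=a\,\eta(\gamma(t))$. Since $a^2=1$, the assertion $|\eta(\vw)|\le 1-\tfrac{\rho-2\tau}{2}\theta^2$ is equivalent to $0\le g(\theta)\le 1-\tfrac{\rho-2\tau}{2}\theta^2$, so it suffices to establish these two inequalities.

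First I would record the data at $t=0$: $\eta(\vw_0)=a$ gives $g(0)=1$, and $\nabla\eta(\vw_0)=\vzero$ gives $\grad\eta(\vw_0)=\mP_{\vw_0}\nabla\eta(\vw_0)=\vzero$, hence $g'(0)=a\,\D_1\eta(\vw_0)[\dot\gamma(0)]=0$. The key structural identity is that, because $\gamma$ is a geodesic of the round sphere, the ordinary second derivative of $\eta$ along $\gamma$ equals the intrinsic second-order form on the unit velocity:
\begin{align*}
    g''(t)=a\,\D_2\eta(\gamma(t))[\dot\gamma(t),\dot\gamma(t)],\qquad \norm{\dot\gamma(t)}_{\gamma(t)}=1.
\end{align*}
Indeed, differentiating $g'(t)=a\,\dot\gamma(t)^\top\mP_{\gamma(t)}\nabla\eta(\gamma(t))$ once more, using $\ddot\gamma(t)=-\gamma(t)$ and $\tfrac{d}{dt}\mP_{\gamma(t)}=-\dot\gamma(t)\gamma(t)^\top-\gamma(t)\dot\gamma(t)^\top$ and collecting terms, reproduces exactly $\dot\gamma(t)^\top\mP_{\gamma(t)}\bigl(\nabla^2\eta(\gamma(t))-\gamma(t)^\top\nabla\eta(\gamma(t))\,\mI\bigr)\mP_{\gamma(t)}\dot\gamma(t)$, which is the paper's $\D_2\eta(\gamma(t))[\dot\gamma(t),\dot\gamma(t)]$ — the $-\bvw^\top\nabla\eta(\vw)\,\mI$ correction in $\hessian\eta$ is precisely the curvature contribution.

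Now I would insert the kernel comparison. Write $g''(t)=\kij{02}(\vw_0,\gamma(t))[\dot\gamma(t),\dot\gamma(t)]+\bigl(a\,\D_2\eta(\gamma(t))-\kij{02}(\vw_0,\gamma(t))\bigr)[\dot\gamma(t),\dot\gamma(t)]$. By hypothesis the first term is $<-\rho$ and, since $\dot\gamma(t)$ is a unit tangent vector and $\norm{a\,\D_2\eta(\gamma(t))-\kij{02}(\vw_0,\gamma(t))}_{\gamma(t)}\le\tau$, the second has absolute value $\le\tau$; also $\norm{\kij{02}(\vw_0,\gamma(t))}_{\gamma(t)}\le B$ and $\tau<\rho/2\le B/2$ (the relation $\rho\le B$ is automatic, as $\rho\norm{\vz}_2^2<-\kij{02}(\vw_0,\vw)[\vz,\vz]=|\kij{02}(\vw_0,\vw)[\vz,\vz]|\le B\norm{\vz}_2^2$). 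Hence $-\tfrac32 B\le g''(t)\le-(\rho-2\tau)$ on $[0,\theta]$. Plugging this into
\begin{align*}
    g(\theta)=g(0)+g'(0)\,\theta+\int_0^\theta(\theta-s)\,g''(s)\,\dif s,
\end{align*}
the upper bound gives $g(\theta)\le 1-\tfrac{\rho-2\tau}{2}\theta^2$, and the lower bound gives $g(\theta)\ge 1-\tfrac34 B\theta^2\ge 1-\tfrac34 Br^2\ge\tfrac14>0$, using $r\le B^{-1/2}$. Thus $\eta(\gamma(t))$ never changes sign on $[0,\theta]$, so $|\eta(\vw)|=g(\theta)\le 1-\tfrac{\rho-2\tau}{2}\theta^2$; as $\vw$ was an arbitrary point with $\delta(\vw,\vw_0)\le r$, this is the claim.

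The step I expect to be the main obstacle is the identity $g''(t)=a\,\D_2\eta(\gamma(t))[\dot\gamma(t),\dot\gamma(t)]$: one has to check carefully that the plain second derivative of $\eta$ restricted to the great circle coincides with the paper's intrinsic second-order form $\D_2\eta$ built from $\hessian\eta=\mP_\vw(\nabla^2\eta-\bvw^\top\nabla\eta\,\mI)\mP_\vw$, with no leftover first-order term — this is exactly where the geometry $\ddot\gamma=-\gamma$ of great circles enters and where the embedded-versus-intrinsic bookkeeping must be done right. Everything after that (the two-sided bound on $g''$, the double integration, and the sign-positivity of $g$ that lets one pass from $\eta$ to $|\eta|$) is a routine one-variable computation using only $r\le B^{-1/2}$ and $\tau<\rho/2$.
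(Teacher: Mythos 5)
The paper does not actually prove Lemma~\ref{lem: connect eta and K}; it is quoted from \citet{poon2023geometry} and used as a black box in the proof of Lemma~\ref{lem: dual cert}. Your proof therefore fills a gap rather than duplicates one, and it is correct: the reduction to a one-variable Taylor expansion along the minimizing great-circle geodesic, together with the identity $g''(t)=a\,\D_2\eta(\gamma(t))[\dot\gamma(t),\dot\gamma(t)]$ (where the $-\bvw^\top\nabla\eta\,\mI$ correction in the paper's $\hessian$ exactly absorbs the $\ddot\gamma=-\gamma$ term), is the standard argument behind such non-degeneracy transfers and is almost certainly what Poon et al.\ do. Two small remarks. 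First, the opening claim of ``equivalence'' is loose: $|\eta(\vw)|\le c$ is equivalent to $|g(\theta)|\le c$, not to $0\le g(\theta)\le c$; what you actually establish is the sufficient pair of inequalities, and that is all that is needed. Second, your own computation gives the sharper bound $g''(t)<-\rho+\tau$ rather than $g''(t)\le-(\rho-2\tau)$; the weaker inequality you quote is still valid (since $-\rho+\tau\le-\rho+2\tau$ for $\tau\ge 0$) and is what matches the cited constant, but you are implicitly proving a slightly stronger estimate $|\eta(\vw)|\le 1-\tfrac{\rho-\tau}{2}\delta(\vw,\vw_0)^2$. Everything else, including the role of $r\le B^{-1/2}$ in keeping $g$ positive (so that one may pass from $g(\theta)$ to $|\eta(\vw)|$) and the check that $\rho<B$ follows from the two kernel hypotheses, is carefully justified.
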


We now are ready to proof the main result in this section Lemma~\ref{lem: dual cert} that shows the non-degenerate dual certificate exists. Roughly speaking, following the same proof as in \citet{poon2023geometry}, we can show that $\valpha\approx\sign(\va_*)$ and $\valpha_2\approx\vzero$ and therefore we can transfer the non-degeneracy of kernel $K$ to the dual certificate $\eta$ with Lemma~\ref{lem: connect eta and K}.
\lemdualcert*
\begin{proof}
    Note that $\mUpsilon=\mS\mD\tldmUpsilon\mD\mS$, where 
    \begin{align*}
        \mD = 
        \begin{pmatrix}
            \mI_{m_*} &                          & &\\
                      & \mP_{\vw_1^*}    \\
                      &                          &\ddots&\\
                      &                          &&\mP_{\vw_{m_*}^*}\\
        \end{pmatrix},\quad
        \mS = 
        \begin{pmatrix}
            \mI_{m_*} &                          & &\\
                      & (Z_{\sgmp}/Z_\sigma)\mI_{m_*}    \\
                      &                          &\ddots&\\
                      &                          &&(Z_{\sgmp}/Z_\sigma)\mI_{m_*}\\
        \end{pmatrix}
    \end{align*}
    are block diagonal matrices, $\tldmUpsilon = \E_\vx[\tldvgamma(\vx)\tldvgamma(\vx)^\top]\in\R^{m_*(d+1)\times m_*(d+1)}$,
    \begin{align*}
        \tldvgamma(\vx) = (\bsgml(\vw_1^{*\top}\vx),\ldots,\bsgml(\vw_{m_*}^{*\top}\vx),(Z_\sigma/Z_{\sgmp})\bsgml^\prime(\vw_1^{*\top}\vx)\vx^\top,\ldots,(Z_\sigma/Z_{\sgmp})\bsgml^\prime(\vw_{m_*}^{*\top}\vx)\vx^\top)^\top
        \in\R^{m_*(d+1)},
    \end{align*}    
    $Z_{\sgmp} =\sqrt{\sum_{k\ge\ell}\hsigma_k^2k}=\Theta(\ell^{-1/4})$ is the normalizing factor so that the diagonal of $\tldmUpsilon$ are all 1.    

    Thus, to solve \eqref{eq: dual cert matrix}, it is sufficient to solve the following: denote $\tldmK=\mD\tldmUpsilon\mD$
    \begin{align}\label{eq: dual cert matrix 2}
        \tldmK
        \begin{pmatrix}
            \tldvalpha_1\\ \tldvalpha_2
        \end{pmatrix}
        =\vb,
    \end{align}
    and let $\valpha_1=\tldvalpha_1$, $\valpha_{2,i} = (Z_\sigma/Z_{\sgmp})\tldvalpha_{2,i}$ to get the solution of \eqref{eq: dual cert matrix}.

    In the following, we are going to first show that $\tldmK\approx \mD\mD$ because all the off-diagonal terms of $\tldmUpsilon$ are small due to Lemma~\ref{lem: kernel nondegenerate} (iii) (we can choose $h$ to be small enough, and we will choose it later). Specifically, we have
    \begin{align*}
        &\norm{\tldmK-\mD\mD}_2
        =\sup_{\norm{\vz}_2=1} |\vz^\top(\tldmK-\mD\mD)\vz|\\
        =&\sup_{\norm{\vz}_2=1} \left|
        \sum_{i,j}z_{1,i}K(\vw_i^*,\vw_j^*)z_{1,j}
        + 2(Z_\sigma/Z_{\sgmp})\sum_{i,j}z_{1,i}\nabla_1 K(\vw_i^*,\vw_j^*)^\top\vz_{2,j}\right.\\
        &+ \left.(Z_\sigma/Z_{\sgmp})^2\sum_{i,j}\vz_{2,i}^\top\nabla_1\nabla_2 K(\vw_i^*,\vw_j^*)^\top\vz_{2,j} \right|\\
        \le& \sum_{i,j}|K(\vw_i^*,\vw_j^*)| + \Theta(\ell^{-1/2})\norm{\kij{10}(\vw_i^*,\vw_j^*)}_{\vw_i^*} + \Theta(\ell^{-1})\norm{\kij{11}(\vw_i^*,\vw_j^*)}_{\vw_i^*,\vw_j^*}
        \le 2h,
    \end{align*}
    where $\vz=(\vz_1^\top,\vz_2^\top)^\top$, $\vz_1=(\vz_{1,1},\ldots,\vz_{1,m_*})^\top$ and $\vz_2=(\vz_{2,1}^\top,\ldots,\vz_{2,m_*}^\top)^\top$ has the same block structure as $(\valpha_1,\valpha_2)$ and we use Lemma~\ref{lem: kernel nondegenerate} and Lemma~\ref{lem: kernel reg cond} in the last line.
    
    Note that $\mD\mD$ has exactly $m_*d$ eigenvalues of 1 and $m_*$ eigenvalues of 0, and $\tldmK$ also has $m_*$ eigenvalues of 0. By Weyl's inequality, we know $|\gamma_i-1|\le 2h$ where $\tldmK=\sum_{i\in[m_*d]}\gamma_i\vv_i\vv_i^\top$ is its eigendecomposition. Here $\vv_i^\top\vv_\perp=0$ for all $\vv_\perp\in V_\perp =\spn\{ (\vzero,\vw_1^*,\vzero,\ldots,\vzero)^\top,\ldots (\vzero,\ldots,\vzero,\vw_{m_*}^*)^\top\}$ in the null space of $\mD$. Since $\vb^\top\vv_\perp=0$ for all $\vv_\perp\in V_\perp$, we have
    \begin{align*}
        \begin{pmatrix}
            \tldvalpha_1\\ \tldvalpha_2
        \end{pmatrix}
        =&\tldmK^\dagger\vb
        =\sum_{i\in[m_*d]}\gamma_i^{-1}\vv_i\vv_i^\top\vb
        =\sum_{i\in[m_*d]}(\gamma_i^{-1}-1)\vv_i\vv_i^\top\vb
        + \sum_{i\in[m_*d]}\vv_i\vv_i^\top\vb\\
        =&\sum_{i\in[m_*d]}(\gamma_i^{-1}-1)\vv_i\vv_i^\top\vb
        + \vb.
    \end{align*}
    Therefore,
    \begin{align*}
        \norm{
        \begin{pmatrix}
            \tldvalpha_1\\ \tldvalpha_2
        \end{pmatrix}
        - \vb}_2
        \le \norm{\sum_{i\in[m_*d]}(\gamma_i^{-1}-1)\vv_i\vv_i^\top\vb}_2
        \le\max_i |\gamma_i^{-1}-1| \sqrt{m_*}
        =O(h\sqrt{m_*})
        =:\hp.
    \end{align*}
    This implies $\norm{\valpha_1-\sign(\va_*)}_\infty=\norm{\tldvalpha_1-\sign(\va_*)}_\infty\le \hp$, $\norm{\valpha_1}_\infty=\norm{\tldvalpha_1}_\infty\le 1+\hp$ and $\norm{\valpha_2}_2=(Z_\sigma/Z_{\sgmp})\norm{\tldvalpha_{2,i}}_2\le \Theta(\hp\ell^{-1/2})$. 

    Now, given the $\valpha_1$, $\valpha_2$, we can show the corresponding $\eta$ is non-degenerate. Choosing $h=O(m_*^{-1/2})$ and $\ell=\Theta(\Delta^{-2}\log(m_*/\Delta))$ so that the condition in Lemma~\ref{lem: kernel nondegenerate} holds. 

    Consider $\vw\in \gT_i$, when $\delta(\vw,\vw_i^*)\ge r=\Theta(\ell^{-1/2})$, using Lemma~\ref{lem: kernel nondegenerate} and Lemma~\ref{lem: kernel reg cond} we have
    \begin{align*}
        |\eta(\vw)| 
        =& 
        \left|\sum_{j\in[m_*]}\alpha_{1,j}K(\vw_j^*,\vw) 
        + \sum_{j\in[m_*]}\valpha_{2,j}^\top \nabla_1 K(\vw_j^*,\vw)\right|\\
        \le& \sum_{j\in[m_*]}|\alpha_{1,j}||K(\vw_j^*,\vw)| + \sum_{j\in[m_*]}\norm{\valpha_{2,j}}_{\vw_j^*} \norm{\nabla_1 K(\vw_j^*,\vw)}_{\vw_j^*}\\
        \le& (1+\hp)(1-\rho_1 + h) + \Theta(\hp\ell^{-1/2})(B_{10}+h)
        \le 1-\rho_1/2
        \le 1-\Theta(\rho_1)\delta(\vw,\vw_i^*)^2,
    \end{align*}
    where we choose $h=O(m_*^{-1/2})$ to be small enough.

    When $\delta(\vw,\vw_i^*)\le r=\Theta(\ell^{-1/2})$, again using Lemma~\ref{lem: kernel nondegenerate} and Lemma~\ref{lem: kernel reg cond} we have
    \begin{align*}
        &\norm{a_i^*\D_2 \eta(\vw) - \kij{02}(\vw_i^*,\vw)}_{\vw}\\
        \le& \norm{\alpha_{1,i}\kij{02}(\vw_i^*,\vw) - \kij{02}(\vw_i^*,\vw)}_\vw
        + \sum_{j\ne i}\norm{\alpha_{1,j}\kij{02}(\vw_j^*,\vw) }_\vw
        + \sum_{j\in[m_*]}\norm{\valpha_{2,j}}_{\vw_j^*} \norm{\kij{12}(\vw_j^*,\vw)}_{\vw_j^*,\vw}\\
        \le& \hp B_{02} + (1+\hp) h + \Theta(\hp\ell^{-1/2}) (B_{21}+h)
        \le \rho_2/16,
    \end{align*}
    where again due to our choice of small $h$. Using Lemma~\ref{lem: connect eta and K} we know that $|\eta(\vw)|\le 1- (\rho_2/4)\delta(\vw,\vw_i^*)^2$.

    Combine the above two cases, we have $|\eta(\vw)|\le 1-\Theta(1)\delta(\vw,\vw_i^*)^2$ and $\eta(\vw)=\E_\vx[p(\vx)\sigma(\vw^\top\vx)]$ with 
    \[
        p(\vx)=\frac{1}{Z_{\sigma}^2}
    \left(\sum_{j\in[m_*]}\alpha_{1,j}\sgml(\vw_j^{*\top}\vx) 
        + \sum_{j\in[m_*]}\valpha_{2,j}^\top(\mI-\vw_i^*\vw_i^{*\top})\vx\sgml^\prime(\vw_i^{*\top}\vx)\right).
    \]
    We have $\norm{p}=O(\ell^{3/4}m_*+m_*\hp\ell^{-1/2}\ell^{5/4})=\tldO(\Delta^{-3/2}m_*)$.
\end{proof}

\section{Proofs in Section~\ref{sec: local landscape}}\label{appendix: local landscape}
In this section, we give the omitted proofs in Section~\ref{sec: local landscape}.

\subsection{Omitted proofs in Section~\ref{sec: strcture of ideal loss}}\label{appendix: structure of ideal loss}
We give the proofs for these results that characterize the structure of ideal loss solution.

The following proof follows from the definition of non-degenerate dual certificate $\eta$.
\lemetaprop*
\begin{proof}
    We show the results one by one.
    \paragraph{Part (i)(ii)}
    We have
    \begin{align*}
        |\langle \eta,\mu\rangle|
        \le& \int_{\sS^{d-1}} |\eta(\vw)| \dif |\mu|(\vw)
        = \sum_{i\in[m_*]} \int_{\gT_i} |\eta(\vw)|\dif |\mu|(\vw)
        \le |\mu|_1-\rho_\eta\sum_{i\in[m_*]}
        \int_{\gT_i} \delta(\vw,\vw_i^*)^2 \dif |\mu|(\vw).
    \end{align*}
    where the last inequality follows the property of non-degenerate dual certificate (Definition~\ref{def: non-dege dual cert}).
    The other part then follows directly by the definition of $\mu^*$.

    \paragraph{Part (iii)}
    We have
    \begin{align*}
        \langle \eta,\mu-\mu^*\rangle
        =& \int_{\sS^{d-1}} \eta(\vw) \dif (\mu-\mu^*)(\vw)
        = \int_{\sS^{d-1}} \E_\vx [p(\vx)\sgmtwo(\vw^\top\vx)] \dif (\mu-\mu^*)(\vw)\\
        =& \E_\vx \left[p(\vx)\int_{\sS^{d-1}} \sgmtwo(\vw^\top\vx) \dif (\mu-\mu^*)(\vw)\right]\\
        =& \E_\vx [p(\vx)(f_\mu(\vx) - f_{\mu^*}(\vx))].
    \end{align*}
    Note that $L(\mu) = \norm{f_\mu-f_{\mu^*}}_2^2$, this leads to $|\langle \eta,\mu-\mu^*\rangle|\le \norm{p}_2 \sqrt{L(\mu)}$.
\end{proof}

Given the above lemma and the optimality of $\mu_\lambda^*$, we are able to characterize the structure of $\mu_\lambda^*$ as below: norm is bounded, square loss is small and far-away neurons are small.
\lemmulambprop*
\begin{proof}
    We show the results one by one.
    \paragraph{Part (i)}
    Due to the optimality of $\mu_\lambda^*$, we have
    \begin{align*}
        L(\mu_\lambda^*) + \lambda |\mu_\lambda^*|_1
        =L_\lambda(\mu_\lambda^*)
        \le L_\lambda(\mu^*)
        =L(\mu^*) + \lambda |\mu^*|_1.
    \end{align*}
    Rearranging the terms, we have
    \begin{align*}
        \lambda |\mu_\lambda^*|_1 - \lambda |\mu^*|_1
        \le L(\mu^*) - L(\mu_\lambda^*)
        = -L(\mu_\lambda^*)\le 0.
    \end{align*}

    For the lower bound, with Lemma~\ref{lem: eta prop} we have
    \begin{align*}
        0\le |\mu_\lambda^*|_1-|\mu^*|_1-\langle\eta,\mu_\lambda^*-\mu^*\rangle
        \le |\mu_\lambda^*|_1-|\mu^*|_1 + \norm{p}_2\sqrt{L(\mu_\lambda^*)}.
    \end{align*}
    Using part (ii) we get the desired lower bound.
    
    \paragraph{Part (ii)}
    We first have the following inequality due to the optimality of $\mu_\lambda^*$ and adding $\lambda\langle \eta, \mu_\lambda^*-\mu^*\rangle$ on both side:
    \begin{align*}
        L(\mu_\lambda^*) + \underbrace{\lambda (|\mu_\lambda^*|_1 - |\mu^*|_1)
        -\lambda\langle \eta, \mu_\lambda^*-\mu^*\rangle}_{(I)}
        \le L(\mu^*) - \lambda\langle \eta, \mu_\lambda^*-\mu^*\rangle.
    \end{align*}

    For $(I)$, we have
    \begin{align*}
        (I) 
        = \lambda (|\mu_\lambda^*|_1 - \langle \eta, \mu_\lambda^*\rangle )
        +\lambda(\langle \eta, \mu^*\rangle - |\mu^*|_1)
        \ge 0,
    \end{align*}
    where we use Lemma~\ref{lem: eta prop} in the last inequality.

    Therefore, the above inequality leads to
    \begin{align*}
        L(\mu_\lambda^*) 
        \le L(\mu^*) - \lambda\langle \eta, \mu_\lambda^*-\mu^*\rangle
        \le \lambda \norm{p}_2\sqrt{L(\mu_\lambda^*)}, 
    \end{align*}
    where we again use Lemma~\ref{lem: eta prop}.
    This further leads to $L(\mu_\lambda^*)\le \lambda^2\norm{p}_2^2$.

    \paragraph{Part (iii)}
    Using part (i) we have
    \begin{align*}
        |\mu_\lambda^*|_1 - |\mu^*|_1 - \langle \eta, \mu_\lambda^*-\mu^*\rangle 
        \le - \langle \eta, \mu_\lambda^*-\mu^*\rangle.
    \end{align*}
    With Lemma~\ref{lem: eta prop}, LHS and RHS become
    \begin{align*}
        \lhs =& |\mu_\lambda^*|_1  - \langle \eta, \mu_\lambda^*\rangle
        \ge \rho_\eta\sum_{i\in[m_*]}
        \int_{\gT_i} \delta(\vw,\vw_i^*)^2 \dif |\mu_\lambda^*|(\vw)\\
        \rhs \le& \norm{p}_2 \sqrt{L(\mu_\lambda^*)}.
    \end{align*}
    Then using part (ii) we have the desired result.
\end{proof}

We are now ready to characterize the approximated solution by comparing $\mu$ and $\mu_\lambda^*$.
\lemlossnormbound*
\begin{proof}
    We show the results one by one.
    \paragraph{Part (i)}
    By the definition of the optimality gap $\zeta$ and adding $-\lambda\langle \eta,\mu-\mu^*\rangle$ on both side, we have
    \begin{align*}
        L(\mu) + \lambda (|\mu|_1-|\mu_\lambda^*|_1) - \lambda\langle \eta,\mu-\mu^*\rangle
        \le 
        L(\mu_\lambda^*) + \zeta - \lambda\langle \eta,\mu-\mu^*\rangle. 
    \end{align*}
    Note that on LHS,
    \begin{align*}
        \lambda (|\mu|_1-|\mu_\lambda^*|_1) - \lambda\langle \eta,\mu-\mu^*\rangle
        = \lambda (|\mu|_1 - \langle \eta,\mu\rangle)
        + \lambda(|\mu^*|_1-|\mu_\lambda^*|_1)
        \ge 0,
    \end{align*}
    where we use Lemma~\ref{lem: eta prop} and Lemma~\ref{lem: mu lamb prop}.

    Therefore, with Lemma~\ref{lem: eta prop} and Lemma~\ref{lem: mu lamb prop} we get
    \begin{align*}
        L(\mu) 
        \le 
        L(\mu_\lambda^*) + \zeta - \lambda\langle \eta,\mu-\mu^*\rangle
        \le \lambda^2\norm{p}_2^2 + \zeta + \lambda\norm{p}_2 \sqrt{L(\mu)}.
    \end{align*}
    Solving the above inequality on $L(\mu)$ gives $L(\mu)\le 5\lambda^2\norm{p}_2^2 + 4\zeta$.

    \paragraph{Part (ii)}
    Again from the definition of the optimality gap $\zeta$, we have
    \begin{align*}
        \lambda |\mu|_1
        \le 
        L(\mu_\lambda^*) + \lambda|\mu_\lambda^*|_1 + \zeta - L(\mu)
        \le \lambda^2\norm{p}_2^2 + \lambda|\mu^*|_1 + \zeta,
    \end{align*}
    where we use Lemma~\ref{lem: mu lamb prop}.
    Thus, $|\mu|_1\le \lambda\norm{p}_2^2+|\mu^*|_1+\zeta/\lambda\le 3|\mu^*|_1$.
\end{proof}

The lemma below shows that far-away neurons are still small even for the approximated solution. Intutively, we use the non-degenerate dual certificate to certify the gap between $\mu$ and $\mu_\lambda^*$ and give a bound for it.
\lemweightednormbound*
\begin{proof}
    By the definition of the optimality gap $\zeta$, we have
    \begin{align*}
        L(\mu) + \lambda |\mu|_1 
        = 
        L(\mu_\lambda^*) + \lambda |\mu_\lambda^*|_1 + \zeta. 
    \end{align*}
    Rearranging the terms and adding $-\langle \eta,\mu-\mu^*\rangle$ on both side, we get
    \begin{align*}
        |\mu|_1 - |\mu_\lambda^*|_1 - \langle \eta,\mu-\mu^*\rangle  
        = 
        \frac{1}{\lambda}(L(\mu_\lambda^*) - L(\mu) + \zeta)
        - \langle \eta,\mu-\mu^*\rangle.
    \end{align*}
    
    For LHS, with Lemma~\ref{lem: eta prop} and Lemma~\ref{lem: mu lamb prop} we have
    \begin{align*}
        \lhs = |\mu|_1 - \langle \eta,\mu\rangle -|\mu_\lambda^*|_1 + |\mu^*|_1
        \ge \rho_\eta \sum_{i\in[m_*]}
        \int_{\gT_i} \delta(\vw,\vw_i^*)^2 \dif |\mu|(\vw).
    \end{align*}

    For RHS, with Lemma~\ref{lem: eta prop} and Lemma~\ref{lem: mu lamb prop} we have
    \begin{align*}
        \rhs 
        \le \frac{1}{\lambda}(\lambda^2\norm{p}_2^2 - L(\mu) + \zeta) + \norm{p}_2 \sqrt{L(\mu)}
        = \frac{\zeta}{\lambda} + \lambda\norm{p}_2^2 - \frac{L(\mu)}{\lambda} + \norm{p}_2 \sqrt{L(\mu)}.
    \end{align*}
    When $L(\mu)\ge \lambda^2\norm{p}_2^2$, we have $\rhs\le \zeta/\lambda + \lambda\norm{p}_2^2$. When $L(\mu)\le \lambda^2\norm{p}_2^2$, we have $\rhs\le \zeta/\lambda + 2\lambda\norm{p}_2^2$. Thus, in summary $\rhs\le \zeta/\lambda + 2\lambda\norm{p}_2^2$.

    Combine the bounds on LHS and RHS we have
    \begin{align*}
        \rho_\eta \sum_{i\in[m_*]}
        \int_{\gT_i} \delta(\vw,\vw_i^*)^2 \dif |\mu|(\vw)
        \le \zeta/\lambda + 2\lambda\norm{p}_2^2.
    \end{align*}
\end{proof}

The following lemma shows that every teacher neuron must have at least one close-by student neuron within angle $\Orlvt(\zeta^{1/3})$. This generalize and greatly simplify the previous results Lemma 9 in \cite{zhou2021local}. In particular, we design a new test function using the Hermite expansion to achieve this.
\lemclosebyneuron*
\begin{proof}
    Assume towards contradiction that there exists some $i\in[m_*]$ with some $\delta_{close}\ge \tldOmega\left( (\frac{L(\mu)}{\amin^2})^{1/(4c_\sigma-2)}\right)$ with large enough hidden constant such that $\mu(\gT_i(\delta))\sign(a_i^*)\le |a_i^*|/2$. For simplicity, we will use $\delta$ for $\delta_{close}$ in the following.
    
    Let $g(x)=\sum_{\ell\le k < 2\ell} \sign(a_i^*) \sign(\hsigma_k) h_k(\vw_i^{*\top}\vx)$ be a test function, where $h_k(x)$ is the $k$-th normalized probabilistic Hermite polynomial and $\ell$ will be chosen later.

    Denote $R(\vx)=f_\mu(\vx)-f_{\mu^*}(\vx)$ so that $\norm{R}_2^2 = L(\mu)$. We have
    \begin{align*}
        \sqrt{L(\mu)}\norm{g}_2 
        \ge& \langle -R,g \rangle\\
        =& \E_\vx
        \left[\left(a_i^*\sigma(\vw_i^{*\top}\vx) - \int_{\gT_i(\delta)} \sigma(\vw^\top\vx)\dif \mu(\vw)\right)g(\vx)
        \right]\\
        &+ \E_\vx
        \left[\left(\sum_{j\neq i} a_j^*\sigma(\vw_j^{*\top}\vx)-\int_{\sS^{d-1}\setminus\gT_i(\delta)} \sigma(\vw^\top\vx)\dif \mu(\vw)\right)g(\vx)
        \right].
    \end{align*}

    Recall the Hermite expansion of $\sigma(x)=\sum_{k\ge 0} \hsigma_k h_k(x)$ and its property in Claim~\ref{lem: hermite product}.
    For the first term, it becomes
    \begin{align*}
        \sum_{\ell\le k < 2\ell}\left(
        |a_i^*| |\hsigma_k|
        -\int_{\gT_i(\delta)} |\hsigma_k| \sign(a_i^*) (\vw^\top\vw_i^*)^k\dif \mu(\vw)
        \right)
        \ge \frac{1}{2}|a_i^*|\sum_{\ell\le k < 2\ell}|\hsigma_k|.
    \end{align*}

    For the second term, it becomes
    \begin{align*}
        &\sum_{\ell\le k < 2\ell}\left(
        \sum_{j\neq i}a_j^* |\hsigma_k| \sign(a_i^*) (\vw_j^{*\top}\vw_i^*)^k
        - \int_{\sS^{d-1}\setminus\gT_i(\delta)} |\hsigma_k| \sign(a_i^*)(\vw^\top\vw_i^*)^k\dif \mu(\vw)
        \right)\\
        \le& (\norm{\va^*}_1 + |\mu|_1)\sum_{\ell\le k \le 2\ell} |\hsigma_k|
        \max_{\angle(\vw,\vw_i^*)\ge\delta}(\vw^\top\vw_i^*)^k\\
        \le& (\norm{\va^*}_1 + |\mu|_1)\sum_{\ell\le k < 2\ell} |\hsigma_k|
        (1-\delta^2/5)^\ell\\
        \le& 4\norm{\va^*}_1 (1-\delta^2/5)^\ell \sum_{\ell\le k < 2\ell} |\hsigma_k|
        \le \frac{1}{4}|a_i^*| \sum_{\ell\le k < 2\ell} |\hsigma_k|,
    \end{align*}
    where (i) in the third line we use $\cos\delta\le 1-\delta^2/5$ for $\delta\in[0,\pi/2]$ and (ii) in the last line we use Lemma~\ref{lem: loss norm bound} and choose $\ell = \lceil(5/\delta^2)\log(16\norm{\va^*}_1/|a_i^*|)\rceil$.

    Thus, given $|\hsigma_k| = \Theta(k^{-c_\sigma})$ we have
    \begin{align*}
        \sqrt{L(\mu)}\sqrt{\ell}=\sqrt{L(\mu)}\norm{g}_2
        \ge \frac{1}{4}|a_i^*| \sum_{\ell\le k < 2\ell}|\hsigma_k|
        =\frac{1}{4}|a_i^*| \sum_{\ell\le k < 2\ell}\Theta(k^{-c_\sigma})
        =|a_i^*| \Theta(\ell^{1-c_\sigma}).
    \end{align*}
    With the choice of $\ell=\tldTheta(1/\delta^2)$, we have $\delta=\tldO\left(\left(\frac{L(\mu)}{|a_i^*|^2}\right)^{1/(4c_\sigma-2)}\right)$. Since $\delta\ge \tldOmega\left( (\frac{L(\mu)}{\amin^2})^{1/(4c_\sigma-2)}\right)$ with a large enough hidden constant, we know this is a contradiction.

    As a corollary, with Lemma~\ref{lem: loss norm bound} that $L(\mu)=4\zeta+5\lambda^2\norm{p}_2^2$, we have $\delta\ge \tldOmega\left( (\frac{4\zeta+5\lambda^2\norm{p}_2^2}{\amin^2})^{1/(4c_\sigma-2)}\right)$. 
    
    For the activation $\sigma$ is ReLU or absolute function, by Lemma~\ref{lem: hermite coeff} we know $c_\sigma=5/4$, which gives the desired result.
\end{proof}

The lemma below bounds $R_2$ using the fact that it is spiky (has small non-zero support).
\lemRtwo*
\begin{proof}
    Using the same calculation as in Lemma 12 in \citet{zhou2021local}, we have
    \begin{align*}
        \norm{R_2}_2^2
        \le& O(m_*) \sum_{i\in[m_*]} 
        \left(\sum_{j\in\gT_i} |a_j|\norm{\vw_j}_2\right)^{1/2}
        \left(\sum_{j\in\gT_i} |a_j|\norm{\vw_j}_2 \delta_j^2\right)^{3/2}
    \end{align*}

    With Lemma~\ref{lem: loss norm bound} and Lemma~\ref{lem: weighted norm bound}, we have $\norm{R_2}_2^2=O(m_*^2|\mu^*|^{1/2}(\zeta/\lambda + \lambda)^{3/2})$.
\end{proof}

The following lemma bounds $R_3$. In fact, in the view of expressing the loss as a sum of tensor decomposition problem, $R_3$ corresponds to the 0-th order term in the expansion. It would become small when high-order terms become small, as shown in the proof below.
\lemzeroorder*
\begin{proof}
   As shown in \citet{ge2018learning,li2020learning}, we can write the loss $L(\mu)$ as sum of tensor decomposition problem (recall $\norm{\vw_i^*}_2=1$):
    \begin{align*}
        L(\mu) = \sum_{k\ge 0}\hsigma_k^2 \norm{\int_{\vw\in \sS^{d-1}} \vw^{\otimes k} \dif\mu(\vw) 
        - \sum_{i\in[m_*]}a_i^*\norm{\vw_i^*}_2 \vw_i^{*\otimes k}}_F^2.
    \end{align*}
    Thus, we know for any $k\ge 1$,
    \begin{align*}
        \norm{\int_{\vw\in \sS^{d-1}} \vw^{\otimes k} \dif\mu(\vw) 
        - \sum_{i\in[m_*]}a_i^*\norm{\vw_i^*}_2 \vw_i^{*\otimes k}}_F^2 \le L(\mu)/\hsigma_k^2.
    \end{align*}

    Given any $\vw_j^*$ and even $k$, we have
    \begin{align*}
        &\norm{\int_{\vw\in \sS^{d-1}} \vw^{\otimes k} \dif\mu(\vw) 
        - \sum_{i\in[m_*]}a_i^*\norm{\vw_i^*}_2 \vw_i^{*\otimes k}}_F \nonumber\\
        \ge& 
        \left|\left\langle
        \sum_{i\in[m_*]}a_i^*\norm{\vw_i^*}_2 \vw_i^{*\otimes k}
        - \int_{\vw\in \sS^{d-1}} \vw^{\otimes k} \dif\mu(\vw),
        \vw_j^{*\otimes k}
        \right\rangle\right| \\
        \ge& \left|a_j^*\norm{\vw_j^*}_2 
        - \int_{\gT_j} \langle\vw,\vw_j^*\rangle^k \dif\mu(\vw)\right|
        - \left| \sum_{i\neq j}a_i^*\norm{\vw_i^*}_2 \langle\vw_i^*,\vw_j^*\rangle^k
        -\int_{\sS^{d-1}\setminus\gT_j} \langle\vw,\vw_j^*\rangle^k \dif\mu(\vw)\right| \nonumber\\
        \ge&  \left|a_j^*\norm{\vw_j^*}_2 
        - \int_{\gT_j} \dif\mu(\vw)\right|
        - \left|
        \int_{\gT_j}  \dif\mu(\vw)
        - \int_{\gT_j} \langle\vw,\vw_j^*\rangle^k \dif\mu(\vw)\right|\\
        &- \left| \sum_{i\neq j}a_i^*\norm{\vw_i^*}_2 \langle\vw_i^*,\vw_j^*\rangle^k
        -\int_{\sS^{d-1}\setminus\gT_j} \langle\vw,\vw_j^*\rangle^k \dif\mu(\vw)\right| \nonumber
    \end{align*}

    We show the last 2 terms are small.
    
    For the second term on RHS, we have
    \begin{align*}
        \left|
        \int_{\gT_j}  \dif\mu(\vw)
        - \int_{\gT_j} \langle\vw,\vw_j^*\rangle^k \dif\mu(\vw)\right|
        \le& 
        \int_{\gT_j} \left(1-\langle\vw,\vw_j^*\rangle^k\right) \dif|\mu|(\vw)
        \myle{a} \int_{\gT_j} 1-(1-\delta(\vw,\vw_j^*)^2/2)^k \dif|\mu|(\vw)\\
        \myle{b}& \int_{\gT_j,\delta(\vw,\vw_j^*)^2\le 1} O(k)\cdot\delta(\vw,\vw_j^*)^2 \dif|\mu|(\vw)
        + \int_{\gT_j,\delta(\vw,\vw_j^*)^2> 1}  \dif|\mu|(\vw)\\
        \le& O(k)\int_{\gT_j}\delta(\vw,\vw_j^*)^2 \dif|\mu|(\vw),
    \end{align*}
    where (a) $\cos\delta\ge 1-\delta^2/2$ for $\delta\in[0,\pi/2]$;
    (b) $(1-x)^k\ge 1-kx$ for $x\in[0,1]$.
    
    For the third term on RHS, we have
    \begin{align*}
        \left| \sum_{i\neq j}a_i^*\norm{\vw_i^*}_2 \langle\vw_i^*,\vw_j^*\rangle^k
        -\int_{\sS^{d-1}\setminus\gT_j} \langle\vw,\vw_j^*\rangle^k \dif\mu(\vw)\right|
        \le&
        (\norm{\va_*}_1+|\mu|_1)  \max_{\angle(\vw,\vw_j^*)\ge\Delta/2}(\vw^\top\vw_j^*)^k\\
        \myle{a}& (\norm{\va_*}_1+|\mu|_1) (1-\Delta^2/10)^k
        \myle{b} O(\zeta),
    \end{align*}
    where (a) $\cos\delta\le 1-\delta^2/5$ for $\delta\in[0,\pi/2]$;
    (b) we choose $k=\Theta((1/\Delta^2)\log(\zeta/\norm{\va_*}_1))$ and Lemma~\ref{lem: loss norm bound}.

    Therefore, we have
    \begin{align*}
        &\norm{\int_{\vw\in \sS^{d-1}} \vw^{\otimes k} \mu(\vw) 
        - \sum_{i\in[m_*]}a_i^*\norm{\vw_i^*}_2 \vw_i^{*\otimes k}}_F\\
        \ge&  \left|a_j^*\norm{\vw_j^*}_2 
        - \int_{\gT_j} \mu(\vw)\right|
        - O(k)\int_{\gT_j}\delta(\vw,\vw_j^*)^2 |\mu|(\vw)
        - O(\zeta).
    \end{align*}
    This implies that
    \begin{align*}
        m_* \sqrt{L(\mu)}/\hsigma_k
        \ge& \sum_{j\in[m_*]}\left|a_j^*\norm{\vw_j^*}_2 
        - \int_{\gT_j} \mu(\vw)\right|
        - O(k)\sum_{j\in[m_*]}\int_{\gT_j}\delta(\vw,\vw_j^*)^2 |\mu|(\vw)
        - O(m_*\zeta)\\
        \ge& \left|\sum_{i\in[m_*]}a_i^*\norm{\vw_i^*}_2 
        - \int_{\sS^{d-1}} \mu(\vw)\right|
        - \tldOrlvt(\zeta/\lambda+\lambda)
        - O(m_*\zeta),
    \end{align*}
    where we use Lemma~\ref{lem: weighted norm bound}. Rearranging the terms and recalling $L(\mu)=\Orlvt(\zeta+\lambda^2)$ from Lemma~\ref{lem: loss norm bound}, we get the bound.

\end{proof}

The following lemma gives the bound on the average neuron to its corresponding teacher neuron. It follows directly from the residual decomposition and previous lemmas that characterize $R_1,R_2,R_3$ respectively.
\lemavgneuronbound*
\begin{proof}
    With the relation of residual decomposition, Lemma~\ref{lem: R1}, Lemma~\ref{lem: R2} and Lemma~\ref{lem: 0th order bound}, we have for any $i\in[m_*]$
    \begin{align*}
        &\Omega(\Delta^{3/2}/m_*^{3/2})
        \left(\sum_{i\in[m_*]}\norm{\sum_{j\in\gT_i} a_j\vw_j-\vw_i^*}_2^2\right)^{1/2}
        \le \norm{R_1}_2
        \le \norm{R}_2 + \norm{R_2}_2 + \norm{R_3}_2\\
        =& \Orlvt((\zeta+\lambda^2)^{1/2} + (\zeta/\lambda + \lambda)^{3/4})
        + \tldOrlvt((\zeta+\lambda^2)^{1/2} + (\zeta/\lambda+\lambda)+\zeta).
    \end{align*}    
    Rearranging the terms, we get the result.
\end{proof}

\subsection{Omitted proofs in Section~\ref{sec: ideal to real loss}}\label{appendix: ideal to real loss}
In this section, we give the omitted proofs in Section~\ref{sec: ideal to real loss}. The key observation used in the proofs is that balancing the norm and setting $\alpha,\vbeta$ perfectly to their target values only decrease the optimality gap.
\lemapproxbalancelossnorm*
\begin{proof}
    Recall in Claim~\ref{lem: loss tensor decomp} we have
    \begin{align*}
        L(\vtheta) = |\alpha-\halpha|^2 + \norm{\vbeta-\hvbeta}_2^2
        +\sum_{k\ge 2}\hsigma_k^2 \norm{\sum_{i\in[m]} a_i\norm{\vw_i}_2\bvw_i^{\otimes k} - \sum_{i\in[m_*]}a_i^*\norm{\vw_i^*}_2 \vw_i^{*\otimes k}}_F^2.
    \end{align*}    
    Note that $|a_i|\norm{\vw_i}_2=|a_{bal,i}|\norm{\vw_{bal,i}}_2$ so that $L(\vtheta)=L(\vtheta_{bal})+|\alpha-\halpha|^2 + \norm{\vbeta-\hvbeta}_2^2$.
    We then have
    \begin{align*}
        L_\lambda(\vtheta)-L_\lambda(\vtheta_{bal})
        =& |\alpha-\halpha|^2 + \norm{\vbeta-\hvbeta}_2^2 + \frac{\lambda}{2}\norm{\va}_2^2 
        + \frac{\lambda}{2}\norm{\mW}_2^2
        -\frac{\lambda}{2}\norm{\va_{bal}}_2^2 
        - \frac{\lambda}{2}\norm{\mW_{bal}}_2^2\\
        =& |\alpha-\halpha|^2 + \norm{\vbeta-\hvbeta}_2^2 + \frac{\lambda}{2}\sum_{i\in[m]} (|a_i|-\norm{\vw_i}_2)^2.
    \end{align*}

    Therefore, we have the optimality gap $\zeta=L_\lambda(\vtheta)-L_\lambda(\mu_\lambda^*) \ge 
    L_\lambda(\vtheta_{bal})-L_\lambda(\mu_\lambda^*)=\zeta_{bal}$. Note that $\vtheta_{bal}$ corresponds to a network that has perfect balanced norms and fitted $\alpha,\vbeta$, thus all results in Lemma~\ref{lem: loss norm bound}, Lemma~\ref{lem: weighted norm bound}, Lemma~\ref{lem: closeby neuron}, Lemma~\ref{lem: R1}, Lemma~\ref{lem: R2}, Lemma~\ref{lem: 0th order bound} and Lemma~\ref{lem: avg neuron bound} hold for $\vtheta_{bal}$. Since $\zeta\ge\zeta_{bal}$, $|a_i|\norm{\vw_i}_2=|a_{bal,i}|\norm{\vw_{bal,i}}_2$ and $L(\vtheta)=L(\vtheta_{bal})+O(\zeta)$, we can easily check that all of them also hold for $\vtheta$. For the bound on $R_3$, note that
    \begin{align*}
        \norm{R_3}_2 
        &\le  \frac{1}{\sqrt{2\pi}} \left|\sum_{i\in[m_*]}a_i^*\norm{\vw_i^*}_2 - \sum_{i\in[m]}a_i\norm{\vw_i}_2\right|
        + |\alpha -\halpha| + \norm{\vbeta-\hvbeta}_2
    \end{align*}
    so that the same bound still hold for $R_3$.
\end{proof}

\lemnormbound*
\begin{proof}
    We have
    \begin{align*}
        \frac{\lambda}{2}\norm{\va}_2^2 + \frac{\lambda}{2}\norm{\mW}_F^2
        = \zeta + L(\mu_\lambda^*) + \lambda|\mu_\lambda^*|_1 - L(\vtheta)
        \le \zeta + \lambda^2\norm{p}_2^2 + \lambda|\mu_\lambda^*|_1,
    \end{align*}
    where we use Lemma~\ref{lem: mu lamb prop}. Rearranging the terms, we get the result by noting that $|\mu_\lambda^*|_1\le\norm{\va_*}_1$.
\end{proof}

\subsection{Omitted proofs in Section~\ref{sec: descent dir}}\label{appendix: descent dir}
In this section, we give the omitted proofs in Section~\ref{sec: descent dir}. We will consider them case by case.

The lemma below says that one can always decrease the loss if norms are not balanced.
\lemdescentdirnormbal*
\begin{proof}
    We have
    \begin{align*}
        &\sum_{i\in[m]} \left|\langle\nabla_{a_j}L_\lambda,-a_j\rangle
        + \langle\nabla_{\vw_j}L_\lambda,\vw_j\rangle\right|\\
        =& \sum_{i\in[m]} \left|
        -2\E_\vx[(f(\vx)-f_*(\vx))a_j\sigma(\vw_j^\top\vx)]
        - \lambda a_j^2
        + 2\E_\vx[(f(\vx)-f_*(\vx))a_j\sigma(\vw_j^\top\vx)]
        + \lambda \norm{\vw_i}_2^2
        \right|\\
        =& \lambda \sum_{i\in[m]}\left| a_i^2 - \norm{\vw_i}_2^2\right|
    \end{align*}
    Note that $|a_i|+\norm{\vw_i}_2\ge ||a_i|-\norm{\vw_i}_2|$, we get the result.
\end{proof}

The following lemma shows that one can always decrease the loss if there are close-by neurons that cancels with others. Intuitively, reducing such norm cancellation decrease the regularization term while keeping the square loss term, which decreasing the total loss as a whole.
\lemdescentdirnormcancel*
\begin{proof}
    Denote $R(\vx) = f(\vx)-\tldf_*(\vx)$.
    We have
    \begin{align*}
        &\sum_{s\in\{+,-\}}
        \sum_{j \in T_{i,s}(\delta_{\sign})} \left\langle\nabla_{a_j}L_\lambda, 
        \frac{a_j}{\sum_{j\in T_{i,s}(\delta_{\sign})}|a_j|\norm{\vw_j}_2}\right\rangle
        +
        \left\langle\nabla_{\vw_j}L_\lambda,
        \frac{\vw_j}{\sum_{j\in T_{i,s}(\delta_{\sign})}|a_j|\norm{\vw_j}_2}\right\rangle \\ 
        =&
        \sum_{s\in\{+,-\}}
        \sum_{j \in T_{i,s}(\delta_{\sign})} 
        \frac{a_j\norm{\vw_j}_2}{\sum_{j\in T_{i,s}(\delta_{\sign})}|a_j|\norm{\vw_j}_2}
        \cdot 2\E_\vx[R(\vx)\sigma(\bvw_j^\top\vx)]
        +
        \frac{\lambda a_j^2}{\sum_{j\in T_{i,s}(\delta_{\sign})}|a_j|\norm{\vw_j}_2}\\
        &+
        \sum_{s\in\{+,-\}}
        \sum_{j \in T_{i,s}(\delta_{\sign})} 
        \frac{a_j\norm{\vw_j}_2}{\sum_{j\in T_{i,s}(\delta_{\sign})}|a_j|\norm{\vw_j}_2}
        \cdot 2\E_\vx[R(\vx)\sigma(\bvw_j^\top\vx)]
        +
        \frac{\lambda \norm{\vw_j}_2^2}{\sum_{j\in T_{i,s}(\delta_{\sign})}|a_j|\norm{\vw_j}_2}
    \end{align*}

    We split the above into two terms (depending on square loss or regularization). WLOG, assume $\sign(a_i^*)=1$. For the first term that depends on gradient on square loss,
    \begin{align*}
        (I)
        =&4\sum_{s\in\{+,-\}}
        \sum_{j \in T_{i,s}(\delta_{\sign})} 
        \frac{a_j\norm{\vw_j}_2}{\sum_{j\in T_{i,s}(\delta_{\sign})}|a_j|\norm{\vw_j}_2}
        \cdot \E_\vx[R(\vx)\sigma(\bvw_j^\top\vx)]\\
        =&
        4\sum_{j\in T_{i,+}(\delta_{\sign})}
        \frac{|a_j|\norm{\vw_j}_2}{\sum_{j\in T_{i,+}(\delta_{\sign})}|a_j|\norm{\vw_j}_2}
        \E_\vx[R(\vx)\sigma(\bvw_j^\top\vx)]\\
        &-4\sum_{j\in T_{i,-}(\delta_{\sign})}
        \frac{|a_j|\norm{\vw_j}_2}{\sum_{j\in T_{i,-}(\delta_{\sign})}|a_j|\norm{\vw_j}_2}
        \E_\vx[R(\vx)\sigma(\bvw_j^\top\vx)]\\
        =&
        4\sum_{j\in T_{i,+}(\delta_{\sign})}
        \frac{|a_j|\norm{\vw_j}_2}{\sum_{j\in T_{i,+}(\delta_{\sign})}|a_j|\norm{\vw_j}_2}
        \E_\vx[R(\vx)(\sigma(\bvw_j^\top\vx)-\sigma(\bvw_i^{*\top}\vx))]\\
        &-4\sum_{j\in T_{i,-}(\delta_{\sign})}
        \frac{|a_j|\norm{\vw_j}_2}{\sum_{j\in T_{i,-}(\delta_{\sign})}|a_j|\norm{\vw_j}_2}
        \E_\vx[R(\vx)(\sigma(\bvw_j^\top\vx)-\sigma(\bvw_i^{*\top}\vx))]
    \end{align*}
    Since $\bvw_j$ is $\delta_{\sign}$-close to $\vw_i^*$ and $\norm{R}_2^2=L(\vtheta)$, we have
    \begin{align*}
        |(I)|
        \le O(\delta_{\sign})\norm{R}_2
        =\Orlvt(\delta_{\sign}\zeta^{1/2}),
    \end{align*}
    where we use Lemma~\ref{lem: approx balance loss norm} that $L(\vtheta)=\Orlvt(\zeta)$.

    For the second term that depends on regularization, we have
    \begin{align*}
        (II)
        =&
        \lambda\sum_{s\in\{+,-\}}
        \frac{ 
        \sum_{j \in T_{i,s}(\delta_{\sign})} a_j^2+\norm{\vw_j}_2^2}{\sum_{j\in T_{i,s}(\delta_{\sign})}|a_j|\norm{\vw_j}_2}
        \ge 2\lambda + 2\lambda=4\lambda.
    \end{align*}

    Therefore, when $(I)\le 2\lambda$, i.e., $\delta_{\sign} = \Orlvt(\lambda/\zeta^{1/2})$, we have
    \begin{align*}
        &\sum_{s\in\{+,-\}}
        \sum_{j \in T_{i,s}(\delta_{\sign})} \left\langle\nabla_{a_j}L_\lambda, 
        \frac{\sign(a_j)|a_j|}{\sum_{j\in T_{i,s}(\delta_{\sign})}|a_j|\norm{\vw_j}_2}\right\rangle
        +
        \left\langle\nabla_{\vw_j}L_\lambda,
        \frac{\vw_j}{\sum_{j\in T_{i,s}(\delta_{\sign})}|a_j|\norm{\vw_j}_2}\right\rangle\\ 
        \ge& \frac{\lambda}{2}\sum_{s\in\{+,-\}}
        \frac{ 
        \sum_{j \in T_{i,s}(\delta_{\sign})} a_j^2+\norm{\vw_j}_2^2}{\sum_{j\in T_{i,s}(\delta_{\sign})}|a_j|\norm{\vw_j}_2}.
    \end{align*}

    We compute a upper bound for LHS. Note that
    \begin{align*}    
        &\sum_{s\in\{+,-\}}
        \sum_{j \in T_{i,s}(\delta_{\sign})} \left\langle\nabla_{a_j}L_\lambda, 
        \frac{a_j}{\sum_{j\in T_{i,s}(\delta_{\sign})}|a_j|\norm{\vw_j}_2}\right\rangle
        +
        \left\langle\nabla_{\vw_j}L_\lambda,
        \frac{\vw_j}{\sum_{j\in T_{i,s}(\delta_{\sign})}|a_j|\norm{\vw_j}_2}\right\rangle\\ 
        \le& 
        \sqrt{\sum_{s\in\{+,-\}}
        \sum_{j \in T_{i,s}(\delta_{\sign})} (\nabla_{a_j}L_\lambda)^2 + \norm{\nabla_{\vw_j}L_\lambda}_2^2}
        \sqrt{\sum_{s\in\{+,-\}}
        \sum_{j \in T_{i,s}(\delta_{\sign})} 
        \frac{a_j^2+\norm{\vw_j}_2^2}{(\sum_{j\in T_{i,s}(\delta_{\sign})}|a_j|\norm{\vw_j}_2)^2}}\\
        \le& \sqrt{\norm{\nabla_{\va}L_\lambda}_2^2 + \norm{\nabla_{\mW}L_\lambda}_F^2}
        \sqrt{\sum_{s\in\{+,-\}} 
        \frac{\sum_{j \in T_{i,s}(\delta_{\sign})} a_j^2+\norm{\vw_j}_2^2}{(\sum_{j\in T_{i,s}(\delta_{\sign})}|a_j|\norm{\vw_j}_2)^2}}\\
        \le& \sqrt{\norm{\nabla_{\va}L_\lambda}_2^2 + \norm{\nabla_{\mW}L_\lambda}_F^2}
        \sqrt{\sum_{s\in\{+,-\}} 
        \frac{\sum_{j \in T_{i,s}(\delta_{\sign})} a_j^2+\norm{\vw_j}_2^2}{\sum_{j\in T_{i,s}(\delta_{\sign})}|a_j|\norm{\vw_j}_2}}
        \frac{1}{\sqrt{\sum_{j\in T_{i,-}(\delta_{\sign})}|a_j|\norm{\vw_j}_2}},\\
    \end{align*}
    where the last line we use Lemma~\ref{lem: closeby neuron}: $\sum_{j\in T_{i,-}(\delta_{\sign})}|a_j|\norm{\vw_j}_2 < \sum_{j\in T_{i,+}(\delta_{\sign})}|a_j|\norm{\vw_j}_2$ because $\mu(T_i(\delta))=\sum_{j\in T_{i}(\delta_{\sign})}a_j\norm{\vw_j}_2 > 0$.

    Combine with the above descent direction, we have
    \begin{align*}
        &\sqrt{\norm{\nabla_{\va}L_\lambda}_2^2 + \norm{\nabla_{\mW}L_\lambda}_F^2}
        \sqrt{\sum_{s\in\{+,-\}} 
        \frac{\sum_{j \in T_{i,s}(\delta_{\sign})} a_j^2+\norm{\vw_j}_2^2}{\sum_{j\in T_{i,s}(\delta_{\sign})}|a_j|\norm{\vw_j}_2}}
        \frac{1}{\sqrt{\sum_{j\in T_{i,-}(\delta_{\sign})}|a_j|\norm{\vw_j}_2}}\\ 
        \ge& \frac{\lambda}{2}\sum_{s\in\{+,-\}}
        \frac{ 
        \sum_{j \in T_{i,s}(\delta_{\sign})} a_j^2+\norm{\vw_j}_2^2}{\sum_{j\in T_{i,s}(\delta_{\sign})}|a_j|\norm{\vw_j}_2},
    \end{align*}
    which implies
    \begin{align*}
        \norm{\nabla_{\va}L_\lambda}_2^2 + \norm{\nabla_{\mW}L_\lambda}_F^2 
        \ge& \lambda^2 \sum_{j\in T_{i,-}(\delta_{\sign})}|a_j|\norm{\vw_j}_2
    \end{align*}
\end{proof}

The lemma below shows that when all previous cases are not hold, then there is a descent direction that move all close-by neurons towards their corresponding teacher neuron. The proof relies on calculations that generalize Lemma 8 in \cite{zhou2021local}.
\lemdescentdirtangent*
\begin{proof}
    Recall residual $R(\vx)=f(\vx)-\tldf_*(\vx)$. We have
    \begin{align}\label{eq: descent dir 1}
        &(\alpha+\alpha_*)\nabla_\alpha L_\lambda + \langle \nabla_\vbeta L_\lambda,\vbeta+\vbeta_* \rangle 
        + \sum_{i\in [m_*]}\sum_{j\in \gT_i} \langle\nabla_{\vw_i}L_\lambda,\vw_j-q_{ij}\vw_i^*\rangle\nonumber\\ 
        \myeq{a}& 
        2 \E_\vx[R(\vx)(\alpha+\alpha_*)] + 2\E_\vx[R(\vx)(\vbeta+\vbeta_*)^\top\vx] \nonumber\\
        &+ 2\sum_{i\in [m_*]}\sum_{j\in \gT_i} \E_\vx[R(\vx)a_j\sigma(\vw_j^{\top}\vx)]
        -2\sum_{i\in [m_*]}\sum_{j\in \gT_i} \E_\vx[R(\vx)a_jq_{ij}\sigma(\vw_i^{*\top}\vx)] \nonumber\\
        &+2\sum_{i\in [m_*]}\sum_{j\in \gT_i} \E_\vx[R(\vx)a_jq_{ij}\vw_i^{*\top}\vx(\sigma^\prime(\vw_i^{*\top}\vx)-\sigma^\prime(\vw_i^\top\vx))] \nonumber\\
        &+\lambda\sum_{i\in[m]} \norm{\vw_j}_2^2
        -\lambda\sum_{i\in [m_*]}\sum_{j\in \gT_i} q_{ij}\vw_j^\top\vw_i^* \nonumber\\
        \myeq{b}& 2\norm{R}_2^2 +\lambda\norm{\mW}_F^2
        -\lambda\sum_{i\in [m_*]}\sum_{j\in \gT_i} q_{ij}\vw_j^\top\vw_i^* \nonumber\\
        &+ 2\sum_{i\in [m_*]}\sum_{j\in \gT_i} \E_\vx[R(\vx)a_jq_{ij}\vw_i^{*\top}\vx(\sigma^\prime(\vw_i^{*\top}\vx)-\sigma^\prime(\vw_j^\top \vx))] \nonumber\\
        \myge{c}& L_\lambda(\mu_\lambda^*) + \zeta +\frac{\lambda}{2}(\norm{\mW}_F^2-\norm{\va}_2^2)
        -\lambda\sum_{i\in [m_*]}\sum_{j\in \gT_i} q_{ij}\norm{\vw_j}_2 \nonumber\\
        &+ 2\sum_{i\in [m_*]}\sum_{j\in \gT_i} \E_\vx[R(\vx)a_jq_{ij}\vw_i^{*\top}\vx(\sigma^\prime(\vw_i^{*\top}\vx)-\sigma^\prime(\vw_j^\top \vx))],
    \end{align}
    where (a) we plug in the gradient expression and add and minus the term $2\sum_{i\in [m_*]}\sum_{j\in \gT_i} \E_\vx[R(\vx)a_jq_{ij}\sigma(\vw_i^{*\top}\vx)]$;
    (b) rearranging the terms;
    (c) using $L_\lambda(\vtheta)=\norm{R}_2^2+(\lambda/2)\norm{\mW}_F^2+(\lambda/2)\norm{\va}_2^2=L_\lambda(\mu_\lambda^*)+\zeta$.
    
    For the first line on RHS of \eqref{eq: descent dir 1}, we have
    \begin{align*}
        &L_\lambda(\mu_\lambda^*) + \zeta +\frac{\lambda}{2}(\norm{\mW}_F^2-\norm{\va}_2^2)
        -\lambda\sum_{i\in [m_*]}\sum_{j\in \gT_i} q_{ij}\norm{\vw_j}_2\\
        \myge{a}& \zeta/2 + L(\mu_\lambda^*) + \lambda|\mu_\lambda^*| 
        - \lambda\sum_{i\in [m_*]}\sum_{j\in \gT_i} q_{ij} \norm{\vw_j}_2\\
        \myge{b}& \zeta/2 + \lambda|\mu_\lambda^*| - \lambda\norm{\va_*}_1 
        + \lambda\sum_{i\in [m_*]}\sum_{j\in \gT_i} q_{ij}(|a_j| - \norm{\vw_j}_2)\\
        \myge{c}& \zeta/2 - \Orlvt(\lambda^2) 
        - \lambda \left(\sum_{i\in [m_*]}\sum_{j\in \gT_i} q_{ij}^2\right)^{1/2} 
        \left(\sum_{i\in[m]}(|a_j| - \norm{\vw_j}_2)^2\right)^{1/2}
        \myge{d} \zeta/4,
    \end{align*}
    where (a) due to assumption that norms are balanced;
    (b) we ignore $L(\mu_\lambda^*)$ and add and minus $\lambda\norm{\va_*}_1$;
    (c) due to Lemma~\ref{lem: mu lamb prop};
    (d) due to assumption that norms are balanced and the choice of $q_{ij}$.

    In the following, we will lower bound the last term of \eqref{eq: descent dir 1} to show it is no smaller than $-\zeta/8$ so that we get the desired lower bound. Recall the residual decomposition \eqref{eq: residual decomp} that $R(\vx)=R_1(\vx)+R_2(\vx)+R_3(\vx)$, we have
    \begin{align*}
        &\sum_{i\in [m_*]}\sum_{j\in \gT_i} \E_\vx[R(\vx)a_jq_{ij}\vw_i^{*\top}\vx(\sigma^\prime(\vw_j^{\top}\vx)-\sigma^\prime(\vw_i^{*\top}\vx))]\\
        =&\underbrace{\sum_{i\in [m_*]}\sum_{j\in \gT_i} \E_\vx[R_1(\vx)a_jq_{ij}\vw_i^{*\top}\vx(\sigma^\prime(\vw_i^{*\top}\vx)-\sigma^\prime(\vw_j^\top\vx))]}_{(I)}\\
        &+\underbrace{\sum_{i\in [m_*]}\sum_{j\in \gT_i} \E_\vx[R_2(\vx)a_j q_{ij}\vw_i^{*\top}\vx(\sigma^\prime(\vw_i^{*\top}\vx)-\sigma^\prime(\vw_j^\top\vx))]}_{(II)}\\
        &+ \underbrace{\sum_{i\in [m_*]}\sum_{j\in \gT_i} \E_\vx[R_3(\vx)a_j q_{ij}\vw_i^{*\top}\vx(\sigma^\prime(\vw_i^{*\top}\vx)-\sigma^\prime(\vw_j^\top\vx))]}_{(III)}
    \end{align*}

    \paragraph{Bound (I)} For (I), recall $R_1(\vx) = (1/2)\sum_{i\in[m_*]}\vv_i^\top\vx \sign(\vw_i^{*\top}\vx)$, where $\vv_i=\sum_{j\in\gT_i} a_j\vw_j-\vw_i^*$ is the difference between average neuron and corresponding ground-truth
    and $(\sum_{i\in[m_*]}\norm{\vv_i}_2^2)^{1/2}=\Orlvt((\zeta/\lambda)^{3/4})$ from Lemma~\ref{lem: avg neuron bound} and Lemma~\ref{lem: approx balance loss norm}. We have
    \begin{align*}
        &\sum_{i\in [m_*]}\sum_{j\in \gT_i} \E_\vx[R_1(\vx)a_jq_{ij}\vw_i^{*\top}\vx(\sigma^\prime(\vw_i^{*\top}\vx)-\sigma^\prime(\vw_j^\top \vx))]\\
        \myge{a}& -\frac{1}{2}\sum_{i\in [m_*]}\sum_{j\in \gT_i} \sum_{k\in[m_*]}
        \E_\vx[|\vv_k^\top\vx||a_j q_{ij}||\vw_i^{*\top}\vx|
        \ind_{\sign(\vw_j^\top\vx)\neq \sign(\vw_i^{*\top}\vx)}]\\
        \myeq{b}& -\frac{1}{2} \sum_{i\in [m_*]}\sum_{j\in \gT_i} \sum_{k\in[m_*]}
        |a_j q_{ij}|\norm{\vv_k}_2 \E_{\tldvx}[|\bvv_k^\top\tldvx||\vw_i^{*\top}\tldvx|
        \ind_{\sign(\vw_j^\top\tldvx)\neq \sign(\vw_i^{*\top}\tldvx)}]\\
        \myge{c}& -\frac{1}{2}\sum_{i\in [m_*]}\sum_{j\in \gT_i} \sum_{k\in[m_*]}
        |a_j q_{ij}|\norm{\vv_k}_2 \delta_j 
        \E_{\tldvx}[\norm{\tldvx}_2^2
        \ind_{\sign(\vw_j^\top\tldvx)\neq \sign(\vw_i^{*\top}\tldvx)}]\\
        \myge{d}& -\frac{1}{2}\sum_{i\in [m_*]}\sum_{j\in \gT_i} \sum_{k\in[m_*]}
        |a_j q_{ij}|\norm{\vv_k}_2 \Theta(\delta_j^2)\\
        \myge{e}& -\Thetarlvt((\zeta/\lambda)^{3/4}\delta_{close}^2)
        \sum_{i\in [m_*]}\sum_{j\in \gT_i} 
        |a_j q_{ij}|
        =-\Thetarlvt((\zeta/\lambda)^{3/4}\delta_{close}^2),
    \end{align*}
    where in 
    (a) we plug in the definition of $R_1$ and using the fact that $\vw_i^{*\top}\vx(\sigma^\prime(\vw_i^{*\top}\vx)-\sigma^\prime(\vw_j^\top\vx)) = |\vw_i^{*\top}\vx|\ind_{\sign(\vw_j^\top\vx)\neq \sign(\vw_i^{*\top}\vx)}$; 
    (b) $\tldvx$ is a 3-dimensional Gaussian since the expectation only depends on $\vv_k,\vw_i^*,\vw_j$; 
    (c) $|\vw_i^{*\top}\tldvx|\le \delta_j\norm{\tldvx}_2$ when $\sign(\vw_j^\top\tldvx)\neq \sign(\vw_i^{*\top}\tldvx)$; 
    (d) a direct calculation bound as Lemma~\ref{lem: calculation-2}; 
    (e) definition of $q_{ij}$.

    \paragraph{Bound (II)} For (II), recall 
    \[
    R_2(\vx) = \frac{1}{2}\sum_{i\in[m_*]}\sum_{j\in\gT_i} a_j\vw_j^\top\vx (\sign(\vw_j^\top\vx) - \sign(\vw_i^{*\top}\vx))
    =\sum_{i\in[m_*]}\sum_{j\in\gT_i} a_j|\vw_j^\top\vx| \ind_{\sign(\vw_j^\top\vx) \neq \sign(\vw_i^{*\top}\vx))}.
    \]

    For each term in (II) with $j\in\gT_i$, we can split it into two terms that corresponding to $\gT_i$ and other $\gT_k$'s.
    \begin{align}\label{eq: descent dir II}
        &\E_\vx[R_2(\vx)a_j q_{ij}\vw_i^{*\top}\vx(\sigma^\prime(\vw_i^{*\top}\vx)-\sigma^\prime(\vw_j^\top\vx))] \nonumber\\
        =&
        \sum_{k\in[m_*]}\sum_{\ell\in\gT_k} \E_\vx[a_\ell|\vw_\ell^\top\vx| \ind_{\sign(\vw_\ell^\top\vx) \neq \sign(\vw_k^{*\top}\vx)}
        \cdot
        a_j q_{ij} \vw_i^{*\top}\vx (\sigma^\prime(\vw_i^{*\top}\vx)-\sigma^\prime(\vw_j^\top\vx)] \nonumber\\
        =&
        \sum_{k\in[m_*]}\sum_{\ell\in\gT_k} a_\ell a_j q_{ij} \E_\vx[|\vw_\ell^\top\vx| \ind_{\sign(\vw_\ell^\top\vx) \neq \sign(\vw_k^{*\top}\vx)}
        \cdot
        |\vw_i^{*\top}\vx| \ind_{\sign(\vw_i^{*\top}\vx)\neq\sign(\vw_j^\top\vx)}]\nonumber\\
        =&
        \underbrace{
        \sum_{\ell\in\gT_i} a_\ell a_j q_{ij} \E_\vx[|\vw_\ell^\top\vx| |\vw_i^{*\top}\vx| \ind_{\sign(\vw_\ell^\top\vx) \neq \sign(\vw_i^{*\top}\vx)}
        \cdot
        \ind_{\sign(\vw_i^{*\top}\vx) \neq \sign(\vw_j^\top\vx)}]}_{(II.i)} \nonumber\\
        &+\underbrace{
        \sum_{k\neq i}\sum_{\ell\in\gT_k} a_\ell a_j q_{ij} \E_\vx[|\vw_\ell^\top\vx| 
        |\vw_i^{*\top}\vx| 
        \ind_{\sign(\vw_\ell^\top\vx) \neq \sign(\vw_k^{*\top}\vx)}
        \cdot\ind_{\sign(\vw_i^{*\top}\vx) \neq \sign(\vw_j^\top\vx)}]}_{(II.ii)}.
    \end{align}

    For (II.i), we further split neurons into $\gT_i(\delta_{\sign})$ and others:
    \begin{align}\label{eq: descent dir III}
        (II.i)
        =&
        \sum_{\ell\in\gT_i(\delta_{\sign})} a_\ell a_j q_{ij} \E_\vx[|\vw_\ell^\top\vx| |\vw_i^{*\top}\vx| \ind_{\sign(\vw_\ell^\top\vx) \neq \sign(\vw_i^{*\top}\vx)}
        \cdot
        \ind_{\sign(\vw_i^{*\top}\vx) \neq \sign(\vw_j^\top\vx)}] \nonumber\\
        &+
        \sum_{\ell\in\gT_i\setminus\gT_i(\delta_{\sign})} a_\ell a_j q_{ij} \E_\vx[|\vw_\ell^\top\vx| |\vw_i^{*\top}\vx| \ind_{\sign(\vw_\ell^\top\vx) \neq \sign(\vw_i^{*\top}\vx)}
        \cdot
        \ind_{\sign(\vw_i^{*\top}\vx) \neq \sign(\vw_j^\top\vx)}]
    \end{align}
    Consider the first line of \eqref{eq: descent dir III},
    from the choice of $q_{ij}$ we know $a_jq_{ij}a_i^*\ge 0$. For $\ell\in\gT_{i,+}(\delta_{\sign})$, we know $\sign(a_\ell)=\sign(a_i^*)$, which implies $a_\ell a_jq_{ij}\ge 0$ for these terms. We thus only need to deal with neurons in $T_{i,-}(\delta_{\sign})$, we have the first line is bounded as
    \begin{align*}
        &\sum_{\ell\in\gT_i(\delta_{\sign})} a_\ell a_j q_{ij} \E_\vx[|\vw_\ell^\top\vx| |\vw_i^{*\top}\vx| \ind_{\sign(\vw_\ell^\top\vx) \neq \sign(\vw_i^{*\top}\vx)}
        \cdot
        \ind_{\sign(\vw_i^{*\top}\vx) \neq \sign(\vw_j^\top\vx)}] \\
        \ge& 
        \sum_{\ell\in\gT_{i,-}(\delta_{\sign})} a_\ell a_j q_{ij} \E_\vx[|\vw_\ell^\top\vx| |\vw_i^{*\top}\vx| \ind_{\sign(\vw_\ell^\top\vx) \neq \sign(\vw_i^{*\top}\vx)}
        \cdot
        \ind_{\sign(\vw_i^{*\top}\vx) \neq \sign(\vw_j^\top\vx)}] \\
        \myge{a}& 
        -|a_j q_{ij}| 
        \sum_{\ell\in\gT_{i,-}(\delta_{\sign})} |a_\ell|  \norm{\vw_\ell}_2\E_\vx[|\bvw_\ell^\top\tldvx| |\vw_i^{*\top}\tldvx| \ind_{\sign(\vw_\ell^\top\tldvx) \neq \sign(\vw_i^{*\top}\tldvx)}
        \cdot
        \ind_{\sign(\vw_i^{*\top}\tldvx) \neq \sign(\vw_j^\top\tldvx)}] \\
        \myge{b}& 
        -|a_j q_{ij}| 
        \sum_{\ell\in\gT_{i,-}(\delta_{\sign})} |a_\ell|  \norm{\vw_\ell}_2 \delta_\ell\delta_j\E_\vx[\norm{\tldvx}_2^2 \ind_{\sign(\vw_\ell^\top\tldvx) \neq \sign(\vw_i^{*\top}\tldvx)}
        \cdot
        \ind_{\sign(\vw_i^{*\top}\tldvx) \neq \sign(\vw_j^\top\tldvx)}] \\
        \myge{c}& 
        -|a_j q_{ij}| 
        \sum_{\ell\in\gT_{i,-}(\delta_{\sign})} |a_\ell|  \norm{\vw_\ell}_2 O(\delta_\ell\delta_j^2) \\
        \myge{d}& 
        -|a_j q_{ij}|  O(\tau \delta_{\sign}\delta_{close}^2),
    \end{align*}
    where (a) $\tldvx$ is a 3-dimensional Gaussian since the expectation only depends on $\vw_\ell,\vw_j,\vw_i^*$;
    (b) $|\bvw_\ell^{\top}\tldvx|\le\delta_\ell\norm{\tldvx}_2$ when $\sign(\vw_i^{*\top}\tldvx) \neq \sign(\vw_\ell^\top\tldvx)$ and $|\vw_i^{*\top}\tldvx|\le\delta_j\norm{\tldvx}_2$ when $\sign(\vw_i^{*\top}\tldvx) \neq \sign(\vw_j^\top\tldvx)$;
    (c) a direct calculation as in Lemma~\ref{lem: calculation-2};
    (d) assumption that norm cancellation is small.
    
    For the second term of \eqref{eq: descent dir III}, similar as above, we have
    \begin{align*}
        &2\sum_{\ell\in\gT_i\setminus\gT_i(\delta_{\sign})} a_\ell a_j q_{ij} \E_\vx[|\vw_\ell^\top\vx| |\vw_i^{*\top}\vx| \ind_{\sign(\vw_\ell^\top\vx) \neq \sign(\vw_i^{*\top}\vx)}
        \cdot
        \ind_{\sign(\vw_i^{*\top}\vx) \neq \sign(\vw_j^\top\vx)}]\\
        \myge{a}&
        -2|a_j q_{ij}| \sum_{\ell\in\gT_i\setminus\gT_i(\delta_{\sign})} |a_\ell|\norm{\vw_\ell}_2 
        \E_{\tldvx}[|\bvw_\ell^\top\tldvx| |\vw_i^{*\top}\tldvx| \ind_{\sign(\vw_\ell^\top\tldvx) \neq \sign(\vw_i^{*\top}\tldvx)}
        \cdot
        \ind_{\sign(\vw_i^{*\top}\tldvx) \neq \sign(\vw_j^\top\tldvx)}]\\
        \myge{b}&
        -2|a_j q_{ij}| \sum_{\ell\in\gT_i\setminus\gT_i(\delta_{\sign})} |a_\ell|\norm{\vw_\ell}_2 \delta_\ell \delta_j
        \E_{\tldvx}[\norm{\tldvx}_2^2 \ind_{\sign(\vw_i^{*\top}\tldvx) \neq \sign(\vw_j^\top\tldvx)}]\\
        \myge{c}&
        -2|a_j q_{ij}| O(\delta_j^2) \sum_{\ell\in\gT_i\setminus\gT_i(\delta_{\sign})} |a_\ell|\norm{\vw_\ell}_2 \delta_\ell\\
        \myge{d}&
        -2|a_j q_{ij}| \Orlvt(\delta_{close}^2 \zeta \lambda^{-1} \delta_{\sign}^{-1}),
    \end{align*}
    where 
    (a) $\tldvx$ is 3-dimensional Gaussian vector since the expectation only depends on $\vw_\ell,\vw_j,\vw_i^*$;
    (b) $|\bvw_\ell^{\top}\tldvx|\le\delta_\ell\norm{\tldvx}_2$ when $\sign(\vw_i^{*\top}\tldvx) \neq \sign(\vw_\ell^\top\tldvx)$ and $|\vw_i^{*\top}\tldvx|\le\delta_j\norm{\tldvx}_2$ when $\sign(\vw_i^{*\top}\tldvx) \neq \sign(\vw_j^\top\tldvx)$;
    (c) a direct calculation as in Lemma~\ref{lem: calculation-2};
    (d) choice of $q_{ij}$ and Lemma~\ref{lem: weighted norm bound} and Lemma~\ref{lem: approx balance loss norm} that far-away neurons are small.

    Thus, for (II.i) we have
    \begin{align*}
        (II.i) 
        \ge -2|a_j q_{ij}|  \Orlvt(\tau \delta_{\sign}\delta_{close}^2+\delta_{close}^2 \zeta \lambda^{-1} \delta_{\sign}^{-1}).
    \end{align*}

    For (II.ii), we have
    \begin{align*}
        |(II.ii)|
        \le& 
        2\sum_{k\neq i}\sum_{\ell\in\gT_k} |a_\ell| |a_j q_{ij}| \E_\vx[|\vw_\ell^\top\vx| 
        |\vw_i^{*\top}\vx| 
        \ind_{\sign(\vw_\ell^\top\vx) \neq \sign(\vw_k^{*\top}\vx)}
        \cdot\ind_{\sign(\vw_i^{*\top}\vx) \neq \sign(\vw_j^\top\vx)}]\\
        \myle{a}&
        2\sum_{k\neq i}\sum_{\ell\in\gT_k} |a_\ell| |a_j q_{ij}|
        \norm{\vw_\ell}_2 \delta_\ell \delta_j 
        \E_{\tldvx}[\norm{\tldvx}_2^2 
        \ind_{\sign(\vw_\ell^\top\tldvx) \neq \sign(\vw_k^{*\top}\tldvx)}
        \cdot\ind_{\sign(\vw_i^{*\top}\tldvx) \neq \sign(\vw_j^\top\tldvx)}]\\
        \myle{b}&
        2\sum_{k\neq i}\sum_{\ell\in\gT_k} |a_\ell| |a_j q_{ij}|
        \norm{\vw_\ell}_2 \delta_\ell \delta_j 
        \E_{\tldvx}[\norm{\tldvx}_2^2 
        \ind_{|\vw_k^{*\top}\tldvx|\le \delta_\ell\norm{\tldvx}_2}
        \cdot\ind_{|\vw_i^{*\top}\tldvx|\le \delta_j\norm{\tldvx}_2}]\\
        \myle{c}&
        2|a_j q_{ij}| \delta_j
        \sum_{k\neq i}\sum_{\ell\in\gT_k} |a_\ell| 
        \norm{\vw_\ell}_2 \delta_\ell  
        \cdot O(\delta_\ell \delta_j/\Delta)\\
        \myeq{d}&
        2|a_j q_{ij}| \Orlvt(\delta_{close}^2 \zeta\lambda^{-1}\Delta^{-1}),
    \end{align*}
    where (a)(b) $\tldvx$ is a 4-dimensional Gaussian vector, $|\bvw_\ell^{\top}\tldvx|\le\delta_\ell\norm{\tldvx}_2$ when $\sign(\vw_i^{*\top}\tldvx) \neq \sign(\vw_\ell^\top\tldvx)$ and $|\vw_i^{*\top}\tldvx|\le\delta_j\norm{\tldvx}_2$ when $\sign(\vw_i^{*\top}\tldvx) \neq \sign(\vw_j^\top\tldvx)$;
    (c) by Lemma~\ref{lem: calculation-1};
    (d) choice of $q_{ij}$ and Lemma~\ref{lem: weighted norm bound} and Lemma~\ref{lem: approx balance loss norm} that far-away neurons are small.

    Combine (II.i) (II.ii), we have for \eqref{eq: descent dir II}
    \begin{align*}
        &\E_\vx[R_2(\vx)a_j q_{ij}\vw_i^{*\top}\vx(\sigma^\prime(\vw_i^{*\top}\vx)-\sigma^\prime(\vw_j^\top\vx))]
        \ge -2|a_j q_{ij}| O(\tau \delta_{\sign}\delta_{close}^2+\delta_{close}^2 \zeta \lambda^{-1} \delta_{\sign}^{-1}).
    \end{align*}
    This further gives the lower bound on (II):
    \begin{align*}
        &\sum_{i\in [m_*]}\sum_{j\in \gT_i} \E_\vx[R_2(\vx)a_j q_{ij}\vw_i^{*\top}\vx(\sigma^\prime(\vw_i^{*\top}\vx)-\sigma^\prime(\vw_j^\top\vx))]\\
        \ge& -2\sum_{i\in [m_*]}\sum_{j\in \gT_i} |a_j q_{ij}| O(\tau \delta_{\sign}\delta_{close}^2+\delta_{close}^2 \zeta \lambda^{-1} \delta_{\sign}^{-1})\\
        =& - \Orlvt( \tau \delta_{\sign}\delta_{close}^2+\delta_{close}^2 \zeta \lambda^{-1} \delta_{\sign}^{-1})
    \end{align*}

    \paragraph{Bound (III)}
    For (III), recall $R_3(\vx) =  \frac{1}{\sqrt{2\pi}} \left(\sum_{i\in[m_*]}a_i^*\norm{\vw_i^*}_2 - \sum_{i\in[m]}a_i\norm{\vw_i}_2\right)
    + \alpha -\halpha + (\vbeta-\hvbeta)^\top\vx$. We have
    \begin{align*}
        &\sum_{i\in [m_*]}\sum_{j\in \gT_i} \E_\vx[R_3(\vx)a_j q_{ij}\vw_i^{*\top}\vx(\sigma^\prime(\vw_i^{*\top}\vx)-\sigma^\prime(\vw_j^\top\vx))]\\        
        \myge{a}&
        -\Orlvt(\zeta/\lambda)\sum_{i\in [m_*]}\sum_{j\in \gT_i}
        |a_j q_{ij}|\E_\vx[|\vw_i^{*\top}\vx|
        \ind_{\sign(\vw_j^\top\vx)\neq \sign(\vw_i^{*\top}\vx)}]\\
        &-\sum_{i\in [m_*]}\sum_{j\in \gT_i}
        |a_j q_{ij}|\E_\vx[|(\vbeta-\hvbeta)^\top\vx||\vw_i^{*\top}\vx|
        \ind_{\sign(\vw_j^\top\vx)\neq \sign(\vw_i^{*\top}\vx)}]\\
        \myge{b}& -\Orlvt(\zeta/\lambda)\sum_{i\in [m_*]}\sum_{j\in \gT_i}
        |a_j q_{ij}|O(\delta_j^2)\\
        &-O(\zeta^{1/2})\sum_{i\in [m_*]}\sum_{j\in \gT_i}
        |a_j q_{ij}|\delta_j\E_\vx[\norm{\tldvx}_2^2
        \ind_{\sign(\vw_j^\top\tldvx)\neq \sign(\vw_i^{*\top}\tldvx)}]\\
        \myge{c}& -\Orlvt(\zeta/\lambda)\sum_{i\in [m_*]}\sum_{j\in \gT_i}
        |a_j q_{ij}|O(\delta_j^2)\\
        \myge{d}& -\Orlvt(\delta_{close}^2\zeta/\lambda),
    \end{align*}
    where (a) plugging in the expression of $R_3$ and using Lemma~\ref{lem: 0th order bound} and Lemma~\ref{lem: approx balance loss norm}; (b) using Lemma~\ref{lem: calculation-3} and the fact that $\tldvx$ is a 3-dimensional Gaussian vector and $|\vw_i^{*\top}\tldvx|\le\delta_j\norm{\tldvx}_2$ when $\sign(\vw_i^{*\top}\tldvx) \neq \sign(\vw_j^\top\tldvx)$; (c) Lemma~\ref{lem: calculation-2}; (d) choice of $q_{ij}$.

    \paragraph{Combine all bounds} Combine (I) (II) (III) we now get the last term of \eqref{eq: descent dir 1}
    \begin{align*}
        &\sum_{i\in [m_*]}\sum_{j\in \gT_i} \E_\vx[R(\vx)a_jq_{ij}\vw_i^{*\top}\vx(\sigma^\prime(\vw_j^{\top}\vx)-\sigma^\prime(\vw_i^{*\top}\vx))]
        \ge - \Orlvt((\zeta/\lambda)^{3/4}\delta_{close}^2
        +\tau \delta_{\sign}\delta_{close}^2+\delta_{close}^2 \zeta \lambda^{-1} \delta_{\sign}^{-1})
    \end{align*}

    From Lemma~\ref{lem: closeby neuron} we can choose $\delta_{close}=\Orlvt(\zeta^{1/3})$ and from Lemma~\ref{lem: descent dir norm cancel} we can choose $\delta_{\sign}=\Thetarlvt(\lambda/\zeta^{1/2})$. Also with $\tau=O(\zeta^{5/6}/\lambda)$, we finally get
    \begin{align*}
        \sum_{i\in [m_*]}\sum_{j\in \gT_i} \E_\vx[R(\vx)a_jq_{ij}\vw_i^{*\top}\vx(\sigma^\prime(\vw_j^{\top}\vx)-\sigma^\prime(\vw_i^{*\top}\vx))]
        \ge \zeta/8,
    \end{align*}
    as long as $\zeta=O(\lambda^{9/5}/\poly(r,m_*,\Delta,\norm{\va_*}_1,\amin))$ with small enough hidden constant.

    Thus, we eventually get the lower bound of \eqref{eq: descent dir 1}
    \begin{align*}
        (\alpha+\alpha_*)\nabla_\alpha L_\lambda + \langle \nabla_\vbeta L_\lambda,\vbeta+\vbeta_* \rangle 
        + \sum_{i\in [m_*]}\sum_{j\in \gT_i} \langle\nabla_{\vw_i}L_\lambda,\vw_j-q_{ij}\vw_i^*\rangle
        \ge \zeta/4-\zeta/8=\zeta/8.
    \end{align*}

\end{proof}

\subsection{Technical Lemma}
In this section, we collect several technical lemmas that are useful in the proof.
\begin{lemma}\label{lem: calculation-1}
    Consider $\valpha,\vbeta\in\R^4$ with $\phi=\angle(\valpha,\vbeta)\in[0,\pi]$ and $\norm{\valpha}_2=\norm{\vbeta}_2=1$ and $\vx\sim N(\vzero,\mI)$. Then, for any $0<\delta_1,\delta_2\le \phi$ we have
    \begin{align*}
        \E_\vx[\norm{\vx}_2^2 \ind_{|\valpha^\top\vx|\le \delta_1\norm{\vx}_2, |\vbeta^\top\vx|\le\delta_2\norm{\vx}_2}]
        =O(\delta_1\delta_2/\sin\phi).
    \end{align*}
\end{lemma}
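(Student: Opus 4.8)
The plan is to reduce this geometric-probability estimate to a two-dimensional computation in the plane spanned by $\valpha$ and $\vbeta$. First I would write $\vx = \vx_\parallel + \vx_\perp$, where $\vx_\parallel$ is the projection of $\vx$ onto $V := \spn\{\valpha,\vbeta\}$ (a $2$-dimensional subspace, since $\phi\in(0,\pi)$ forces $\valpha,\vbeta$ linearly independent) and $\vx_\perp$ is the orthogonal component in the remaining $2$-dimensional complement. These are independent, with $\vx_\parallel\sim N(\vzero,\mI_2)$ and $\vx_\perp\sim N(\vzero,\mI_2)$. The indicator depends only on $\valpha^\top\vx = \valpha^\top\vx_\parallel$, $\vbeta^\top\vx = \vbeta^\top\vx_\parallel$, and $\norm{\vx}_2^2 = \norm{\vx_\parallel}_2^2 + \norm{\vx_\perp}_2^2$, so after conditioning on $\vx_\parallel$ the remaining expectation over $\vx_\perp$ is elementary.

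The key steps, in order: (1) Set up the decomposition above and write the target expectation as $\E_{\vx_\parallel}\big[\ind_{A(\vx_\parallel)}\,\E_{\vx_\perp}[\norm{\vx_\parallel}_2^2 + \norm{\vx_\perp}_2^2]\big]$, but note that the event $A$ itself involves $\norm{\vx}_2$, so more carefully I would first switch to polar coordinates in all of $\R^4$: write $\vx = \rho \vu$ with $\rho = \norm{\vx}_2$ and $\vu$ uniform on $\sS^3$, independent of $\rho$, with $\E[\rho^2] = 4$. The event becomes $\{|\valpha^\top\vu|\le\delta_1,\ |\vbeta^\top\vu|\le\delta_2\}$, depending only on $\vu$, so the expectation factorizes as $\E[\rho^2]\cdot \Pr_{\vu}[|\valpha^\top\vu|\le\delta_1,\ |\vbeta^\top\vu|\le\delta_2] = 4\Pr[\cdots]$. (2) Bound the probability: project $\vu$ onto $V$; in an orthonormal basis of $V$ the pair $(\valpha^\top\vu,\vbeta^\top\vu)$ is an invertible linear image of the $V$-components of $\vu$, with the linear map having determinant $\sin\phi$ (since $\valpha,\vbeta$ are unit vectors at angle $\phi$). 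So the constraint region in the $(\valpha^\top\vu,\vbeta^\top\vu)$ plane, which is a box of area $4\delta_1\delta_2$, pulls back to a parallelogram of area $4\delta_1\delta_2/\sin\phi$ in the $V$-coordinate plane. (3) Bound the density of the $V$-marginal of the uniform distribution on $\sS^3$: for $\vv$ the projection of $\vu\sim\mathrm{Unif}(\sS^3)$ onto a $2$-plane, the density at $\vv$ is $\frac{1}{\pi}$ on the unit disk $\norm{\vv}_2\le 1$ (this is the standard fact that projecting uniform measure on $\sS^{n-1}$ to $\R^2$ gives density $\propto (1-\norm{\vv}_2^2)^{(n-4)/2}$, which for $n=4$ is constant $\frac1\pi$). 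Hence the probability is at most $\frac{1}{\pi}\cdot\frac{4\delta_1\delta_2}{\sin\phi}$, giving the expectation bound $O(\delta_1\delta_2/\sin\phi)$.

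The main obstacle is getting the change-of-variables constant right and handling the interplay between the $\norm{\vx}_2$ factor inside the indicator and the $\norm{\vx}_2^2$ weight — the polar decomposition $\vx=\rho\vu$ cleanly separates these, which is why I would lead with it rather than the Cartesian split. A secondary care point is the constraint $0<\delta_1,\delta_2\le\phi$: this ensures the parallelogram is genuinely contained in (or at least not pathologically larger than) the region where the density bound is being applied, and also keeps $\sin\phi$ bounded away from degeneracy relative to $\delta_1,\delta_2$ so the stated bound is meaningful; I would remark that when $\delta_i$ are this small the box-times-density estimate is tight up to constants. Everything else is routine: the density of a $2$-dimensional projection of the uniform sphere measure is classical, and the determinant computation is a one-line Gram-matrix calculation.
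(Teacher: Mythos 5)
Your proof is correct, and it takes a genuinely different route from the paper's. The paper splits into two cases: when one of $\delta_1,\delta_2\ge c\phi$ it simply drops one indicator and invokes the one-constraint bound $O(\delta_1)$ (noting $\delta_2/\sin\phi=\Omega(1)$), and in the main case $\delta_1,\delta_2\le c\phi$ it writes out the full $\R^4$ spherical-coordinate integral $\int r^5 e^{-r^2/2}\,\mathrm{d}r\int\sin^2\theta_1\int\sin\theta_2\,\mathrm{d}\theta_2\,\mathrm{d}\theta_1\,\mathrm{d}\theta_3$ and bounds the inner angular integrals by hand, using $\valpha=(1,0,0,0)$ and $\vbeta=(\cos\phi,\sin\phi,0,0)$ with a WLOG reduction to $\phi\in[0,\pi/2]$. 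You instead (i) factor out $\E[\rho^2]=4$ via the polar decomposition $\vx=\rho\vu$, which cleanly handles the $\norm{\vx}_2^2$ weight together with the $\norm{\vx}_2$-normalized indicators; (ii) invoke the Archimedes-type fact that the $2$-plane marginal of $\mathrm{Unif}(\sS^3)$ is the constant $\tfrac{1}{\pi}$ on the unit disk (the exponent $(n-k-2)/2$ vanishes for $n=4,k=2$, which is exactly why the lemma was stated in $\R^4$); and (iii) pull back the box $[-\delta_1,\delta_1]\times[-\delta_2,\delta_2]$ through the linear map with Jacobian $\sin\phi$. This gives $\E\le 4\cdot\tfrac{1}{\pi}\cdot\tfrac{4\delta_1\delta_2}{\sin\phi}$ uniformly, with no case split, no WLOG on $\phi$, and no explicit trigonometric integration; the uniform density bound also makes it clear that the constraint $\delta_i\le\phi$ is not actually needed for the \emph{upper} bound (the estimate just becomes vacuous when it fails). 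Your route is shorter and more conceptual; the paper's is more elementary in the sense of not needing the sphere-marginal fact, but pays for it in casework and integral bookkeeping. Both are valid.
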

\begin{proof}
    We first consider the case when at least one of $\delta_1,\delta_2\ge c\phi$ for a fixed small enough constant. WLOG, suppose $\delta_2\ge c\phi$. In this case, it suffices to show a bound $O(\delta_1)$. We have
    \begin{align*}
        \E_\vx[\norm{\vx}_2^2 \ind_{|\valpha^\top\vx|\le \delta_1\norm{\vx}_2, |\vbeta^\top\vx|\le\delta_2\norm{\vx}_2}]
        \le \E_\vx[\norm{\vx}_2^2 \ind_{|\valpha^\top\vx|\le \delta_1\norm{\vx}_2}]=O(\delta_1).
    \end{align*}
    
    Then, we focus on the case when $\delta_1,\delta_2\le c\phi$ for a fixed small enough constant.
    WLOG, assume $\valpha=(1,0,0,0)^\top$, $\vbeta=(\cos\phi,\sin\phi,0,0)$ and $\phi\in[0,\pi/2]$. Then we have
    \begin{align*}
        &\E_\vx[\norm{\vx}_2^2 \ind_{|\valpha^\top\vx|\le \delta_1\norm{\vx}_2, |\vbeta^\top\vx|\le\delta_2\norm{\vx}_2}]\\
        =&\frac{1}{(2\pi)^{2}}\int_0^\infty r^5 e^{-r^2/2} \dif r\\
        &\int_{0\le \theta_1\le \pi, |\cos\theta_1|\le\delta_1}\sin^2\theta_1
        \int_{0\le \theta_2\le \pi, |\cos\theta_1\cos\phi+\sin\theta_1\cos\theta_2\sin\phi|\le \delta_2} \sin\theta_2 \dif \theta_2 \dif \theta_1
        \int_0^{2\pi}1\dif \theta_3\\\
        =& O(1) \cdot
        \int_{0\le \theta_1\le \pi, |\cos\theta_1|\le\delta_1}\sin^2\theta_1
        \int_{0\le \theta_2\le \pi, 
        \frac{-\delta_2-\cos\theta_1\cos\phi}{\sin\theta_1\sin\phi} \le \cos\theta_2
        \le \frac{\delta_2-\cos\theta_1\cos\phi}{\sin\theta_1\sin\phi}} \sin\theta_2 \dif \theta_2 \dif \theta_1\\
        =& 
        \int_{0\le \theta_1\le \pi, |\cos\theta_1|\le\delta_1}\sin^2\theta_1
        \cdot O\left(\frac{\delta_2}{\sin\theta_1\sin\phi}\right) \dif \theta_1\\
        =& O\left(\frac{\delta_1\delta_2}{\sin\phi}\right).
    \end{align*}
\end{proof}

\begin{lemma}[Lemma C.9 in \cite{zhou2021local}]\label{lem: calculation-2}
Consider $\valpha,\vbeta\in\R^3$ with $\angle(\valpha,\vbeta)=\phi$ and $\valpha^\top\vbeta\ge 0$. We have
\begin{align*}
    \E_\vx[\norm{\vx}^2 \ind_{\sign(\valpha^\top \vx)\neq \sign(\vbeta^\top \vx)}]=O(\phi).
\end{align*}
\end{lemma}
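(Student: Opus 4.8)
The plan is to reduce the statement to the classical ``random hyperplane'' computation together with the independence of the radial and angular parts of a spherical Gaussian. First I would observe that $\ind_{\sign(\valpha^\top\vx)\neq\sign(\vbeta^\top\vx)}$ is unchanged under rescaling $\valpha,\vbeta$ by positive constants, so without loss of generality $\norm{\valpha}_2=\norm{\vbeta}_2=1$. Writing $\vx=r\vu$ with $r=\norm{\vx}_2$ and $\vu=\vx/\norm{\vx}_2$, recall that for $\vx\sim N(\vzero,\mI_3)$ the pair $(r,\vu)$ is independent, $\vu$ is uniform on $\sS^2$, and $\E[r^2]=3$. Since the event depends only on the direction $\vu$,
\begin{align*}
    \E_\vx\!\left[\norm{\vx}_2^2\,\ind_{\sign(\valpha^\top\vx)\neq\sign(\vbeta^\top\vx)}\right]
    = 3\,\Pr_{\vu}\!\left[\sign(\valpha^\top\vu)\neq\sign(\vbeta^\top\vu)\right],
\end{align*}
where $\vu$ is uniform on $\sS^2$.

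The second step is to evaluate this probability. The sign pattern of $(\valpha^\top\vu,\vbeta^\top\vu)$ depends only on the projection of $\vu$ onto the plane $\spn\{\valpha,\vbeta\}$, and by rotational symmetry that projection points in a uniformly random direction of the plane. Drawing the two lines through the origin perpendicular to $\valpha$ and $\vbeta$ (which make angle $\phi$ with each other), the directions for which $\valpha^\top\vu$ and $\vbeta^\top\vu$ have opposite signs form exactly two antipodal wedges of angular width $\phi$ each, so the probability equals $2\phi/(2\pi)=\phi/\pi$. The hypothesis $\valpha^\top\vbeta\ge0$ only serves to place $\phi\in[0,\pi/2]$; in fact the identity $\phi/\pi$ holds for every $\phi\in[0,\pi]$. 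Combining the two steps gives $3\phi/\pi=O(\phi)$.

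I do not expect a genuine obstacle here: this is a routine geometric-probability fact, and the only points requiring care are (i) checking that the sign-disagreement region has angular width $\phi$ (not $2\phi$) per wedge, which is immediate from taking the symmetric difference of the half-circles $\{\valpha^\top\vu>0\}$ and $\{\vbeta^\top\vu>0\}$, and (ii) invoking the radial/angular independence of a Gaussian, which is standard. If one prefers to avoid the symmetry argument, the same bound drops out of a direct spherical-coordinate integral on $\R^3$ exactly as in the proof of Lemma~\ref{lem: calculation-1}, but the reduction above is shorter.
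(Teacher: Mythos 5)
Your proof is correct. Note first that the paper does not prove this lemma itself — it cites it verbatim as Lemma C.9 of \citet{zhou2021local} — so there is no in-paper argument to match against; the closest analogues are the neighboring Lemma~\ref{lem: calculation-1} and Lemma~\ref{lem: calculation-3}, both of which are handled by direct spherical-coordinate integration. Your route is cleaner and more conceptual: after normalizing $\valpha,\vbeta$, you factor the Gaussian into its radial and angular parts, use $\E[\norm{\vx}_2^2]=3$ in $\R^3$, and reduce the problem to the sign-disagreement probability for a uniform direction, which is the classic Grothendieck/random-hyperplane identity $\Pr[\sign(\valpha^\top\vu)\neq\sign(\vbeta^\top\vu)]=\theta/\pi$ for unit vectors at ordinary angle $\theta$. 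This gives the exact constant $3\phi/\pi$ rather than just a bound, and it avoids any coordinate computation. One small point worth tightening: the paper's $\angle(\cdot,\cdot)$ is defined with an absolute value so it always lies in $[0,\pi/2]$, and the hypothesis $\valpha^\top\vbeta\ge0$ is what guarantees that the paper's $\phi$ coincides with the ordinary angle appearing in the Grothendieck identity; without it, the answer would be $1-\phi/\pi$ rather than $\phi/\pi$. Your aside that the hypothesis ``only serves to place $\phi\in[0,\pi/2]$'' is self-consistent under your reading of $\phi$ as the unsigned ordinary angle, but it slightly misstates the hypothesis's role relative to the paper's notational convention. This does not affect the correctness of your argument.
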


\begin{lemma}\label{lem: calculation-3}
Consider $\valpha,\vbeta\in\R^d$ with $\angle(\valpha,\vbeta)=\phi$, $\norm{\valpha}_2=\norm{\vbeta}_2=1$ and $\valpha^\top\vbeta\ge 0$. We have
\begin{align*}
    \E_\vx[|\valpha^\top\vx| \ind_{\sign(\valpha^\top \vx)\neq \sign(\vbeta^\top \vx)}]=O(\phi^2).
\end{align*}
\end{lemma}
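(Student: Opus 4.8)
The plan is to reduce everything to a two-dimensional computation and then evaluate it directly in polar coordinates. Since the integrand $|\valpha^\top\vx|\,\ind_{\sign(\valpha^\top\vx)\neq\sign(\vbeta^\top\vx)}$ depends on $\vx$ only through the pair $(\valpha^\top\vx,\vbeta^\top\vx)$, rotational invariance of $N(\vzero,\mI)$ lets me assume $d=2$ with $\valpha=(1,0)^\top$ and $\vbeta=(\cos\phi,\sin\phi)^\top$; the hypothesis $\valpha^\top\vbeta\ge 0$ means $\phi\in[0,\pi/2]$. Writing $\vx=r(\cos\theta,\sin\theta)$ gives $\valpha^\top\vx=r\cos\theta$ and $\vbeta^\top\vx=r\cos(\theta-\phi)$.

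Next I would pin down the bad event. For $\phi\in[0,\pi/2]$ the signs of $\cos\theta$ and $\cos(\theta-\phi)$ disagree exactly when $\theta$ lies in $W:=(\pi/2,\pi/2+\phi)\cup(3\pi/2,3\pi/2+\phi)$, a union of two arcs of length $\phi$. On the first arc, with $u:=\theta-\pi/2\in(0,\phi)$, one has $|\cos\theta|=\sin u$, and the second arc behaves identically under $\theta\mapsto\theta-\pi$. Thus on $W$ the factor $|\valpha^\top\vx|=r|\cos\theta|$ is uniformly of order $r\phi$ — this is the crucial point, and the reason the bound is $O(\phi^2)$ rather than $O(\phi^{3/2})$: a plain Cauchy--Schwarz estimate using only $\Pr[\text{event}]=\phi/\pi$ and $\E[(\valpha^\top\vx)^2]=1$ would give $O(\phi^{3/2})$, so one must exploit that $|\valpha^\top\vx|$ itself is small on the event, not merely that the event is rare.

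Then I would just integrate, factoring the Gaussian integral into radial and angular parts:
\[
\E_\vx\bigl[|\valpha^\top\vx|\,\ind_{\text{event}}\bigr]
=\frac{1}{2\pi}\Bigl(\int_0^\infty r^2 e^{-r^2/2}\dif r\Bigr)\Bigl(\int_W|\cos\theta|\dif\theta\Bigr).
\]
The radial integral equals $\sqrt{\pi/2}=O(1)$, and $\int_W|\cos\theta|\dif\theta=2\int_0^\phi\sin u\dif u=2(1-\cos\phi)\le\phi^2$ (using $1-\cos\phi\le\phi^2/2$, as elsewhere in the paper, e.g. in Lemma~\ref{lem: 0th order bound}), which yields the claimed $O(\phi^2)$ bound.

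There is no real obstacle here; the only things to be careful about are getting the arc set $W$ exactly right for $\phi\in[0,\pi/2]$ and resisting bounding the indicator by $1$ too early, which costs a factor of $\phi$. The argument is parallel to, but one power of $\phi$ sharper than, Lemma~\ref{lem: calculation-2} (Lemma C.9 of \citet{zhou2021local}), which handles the analogous integral with $\norm{\vx}^2$ in place of $|\valpha^\top\vx|$; the improvement comes precisely from replacing the crude angular integrand $1$ on $W$ by the genuine factor $|\cos\theta|$, which is $O(\phi)$ there.
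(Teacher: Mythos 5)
Your proof is correct and follows essentially the same route as the paper's: reduce to $\R^2$, pass to polar coordinates, and bound the angular integral over the sign-disagreement arcs. You have simply filled in the details the paper leaves implicit (the explicit arc set $W$ and the computation $\int_W|\cos\theta|\,\dif\theta=2(1-\cos\phi)\le\phi^2$), and in doing so you also fix the paper's minor typos (the radial integral should carry $r^2$ and the angular integrand should be $|\cos\theta|$).
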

\begin{proof}
    It suffices to consider $\valpha,\vbeta,\vx\in\R^2$. WLOG, assume $\valpha=(1,0)^\top$ and $\vbeta=(\cos\phi,\sin\phi)^\top$ We have
    \begin{align*}
        \E_\vx[|\valpha^\top\vx| \ind_{\sign(\valpha^\top \vx)\neq \sign(\vbeta^\top \vx)}]
        =& \frac{1}{2\pi}\int_0^\infty r e^{-r^2/2}\dif r
        \int_0^{2\pi} \cos\theta \ind_{\sign(\cos\theta)\neq \sign(\cos(\theta-\phi))}\dif\theta\\
        =& O(\phi^2).
    \end{align*}
\end{proof}

\begin{lemma}\label{lem: qij calculation}
    Under Lemma~\ref{lem: grad lower bound}, let
    \begin{align*}
        q_{ij}=\left\{
        \begin{array}{ll}
        \frac{a_ja_i^*}{\sum_{j\in T_{i,+}(\delta_{close})}a_j^2}     & \text{, if $j\in T_{i,+}(\delta_{close})$} \\
        0     &  \text{, otherwise}
        \end{array}\right.
    \end{align*}
    If $\sum_{i\in[m_*]}\left| a_i^2 - \norm{\vw_i}_2^2\right|\le \amin/2$,
    then $\sum_{i\in[m_*]}\sum_{j\in\gT_i}q_{ij}^2=O(\norm{\va_*}_1)$.
\end{lemma}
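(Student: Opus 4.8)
The statement is a normalization bookkeeping fact, so the plan is short: first evaluate $\sum_{i\in[m_*]}\sum_{j\in\gT_i}q_{ij}^2$ in closed form, and then show the denominators that appear are bounded away from zero by a fixed fraction of $|a_i^*|$, using the ``enough close-by mass'' guarantee of Lemma~\ref{lem: closeby neuron}. Write $S_i:=\sum_{j\in T_{i,+}(\delta_{close})}a_j^2$. Since $q_{ij}=0$ for $j\notin T_{i,+}(\delta_{close})$, for each fixed teacher index $i$ a one-line computation gives
\begin{align*}
    \sum_{j\in\gT_i}q_{ij}^2
    = \sum_{j\in T_{i,+}(\delta_{close})}\frac{a_j^2\,(a_i^*)^2}{\left(\sum_{j'\in T_{i,+}(\delta_{close})}a_{j'}^2\right)^{2}}
    = \frac{(a_i^*)^2}{S_i}.
\end{align*}
It therefore suffices to prove $S_i\ge |a_i^*|/4>0$ for every $i$ (this also legitimizes the definition of $q_{ij}$, since it shows $T_{i,+}(\delta_{close})\neq\emptyset$): indeed then $\sum_{i\in[m_*]}\sum_{j\in\gT_i}q_{ij}^2=\sum_{i\in[m_*]}(a_i^*)^2/S_i\le 4\sum_{i\in[m_*]}|a_i^*|=4\norm{\va_*}_1=O(\norm{\va_*}_1)$.

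To lower-bound $S_i$, I would first invoke Lemma~\ref{lem: closeby neuron} — its hypotheses are exactly those of Lemma~\ref{lem: grad lower bound}, which we are assuming — to get $\mu(\gT_i(\delta_{close}))\,\sign(a_i^*)\ge |a_i^*|/2$, i.e.\ $\sum_{j\in\gT_i(\delta_{close})}a_j\norm{\vw_j}_2\,\sign(a_i^*)\ge |a_i^*|/2$. Splitting $\gT_i(\delta_{close})=T_{i,+}(\delta_{close})\cup T_{i,-}(\delta_{close})$ by the sign of $a_j$ and discarding the (nonpositive) $T_{i,-}$-contribution yields $\sum_{j\in T_{i,+}(\delta_{close})}|a_j|\norm{\vw_j}_2\ge |a_i^*|/2$. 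Now apply $2|a_j|\norm{\vw_j}_2\le a_j^2+\norm{\vw_j}_2^2$ termwise and then substitute $\norm{\vw_j}_2^2=a_j^2+(\norm{\vw_j}_2^2-a_j^2)$ to obtain $2S_i\ge |a_i^*|-\sum_j|a_j^2-\norm{\vw_j}_2^2|$. Since the hypothesis bounds the aggregate norm imbalance $\sum_j|a_j^2-\norm{\vw_j}_2^2|$ by $\amin/2\le |a_i^*|/2$, we conclude $S_i\ge |a_i^*|/4$, which finishes the proof when combined with the first paragraph.

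I do not expect any genuine obstacle here — the whole argument is a closed-form identity plus two elementary inequalities. The only point deserving care is that Lemma~\ref{lem: closeby neuron} controls the \emph{product} mass $\sum_{j}|a_j|\norm{\vw_j}_2$ concentrated near $\vw_i^*$, not $\sum_j a_j^2$ directly, so one must spend the assumed near-balancedness of the two layers' norms to pass between the two quantities; the explicit threshold $\amin/2$ on the imbalance is precisely what is needed to keep $S_i$ at least a constant fraction of $|a_i^*|$. (In the sole application of this lemma, inside the proof of Lemma~\ref{lem: grad lower bound}, the imbalance is $\Orlvt(\zeta^2/\lambda^2)$, which lies below $\amin/2$ in the regime of small $\zeta,\lambda$ considered there.)
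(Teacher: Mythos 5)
Your proof is correct and follows essentially the same route as the paper's: both evaluate the double sum in closed form as $\sum_i (a_i^*)^2/S_i$, then lower-bound $S_i$ by a constant fraction of $|a_i^*|$ via AM--GM, the assumed layer-norm balance, and Lemma~\ref{lem: closeby neuron}'s mass-concentration guarantee. The only difference is expository (you obtain $\sum_{j\in T_{i,+}}|a_j|\norm{\vw_j}_2\ge|a_i^*|/2$ by explicitly discarding the nonpositive $T_{i,-}$ contribution, while the paper phrases the same step through an absolute-value inequality), which is immaterial.
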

\begin{proof}
    We have
    \begin{align*}
        \sum_{i\in[m_*]}\sum_{j\in\gT_i}q_{ij}^2
        =\sum_{i\in[m_*]}\sum_{j\in\gT_{i,+}(\delta_{close})}
        \frac{a_j^2a_i^{*2}}{(\sum_{j\in T_{i,+}(\delta_{close})}a_j^2)^2}
        =\sum_{i\in[m_*]}
        \frac{a_i^{*2}}{\sum_{j\in T_{i,+}(\delta_{close})}a_j^2}.
    \end{align*}

    In the following, we aim to lower bound $\sum_{j\in T_{i,+}(\delta_{close})}a_j^2$.
    Given $\sum_{j\in T_{i,+}(\delta_{close})}|a_j^2-\norm{\vw_j}_2^2|\le |a_i^*|/2$, we have
    \begin{align*}
        2\sum_{j\in T_{i,+}(\delta_{close})}a_j^2
        \ge \sum_{j\in T_{i,+}(\delta_{close})}a_j^2 + \norm{\vw_j}_2^2 -|a_i^*|/2
        \ge 2\sum_{j\in T_{i,+}(\delta_{close})}|a_j|\norm{\vw_j}_2 - |a_i^*|/2
        \ge |a_i^*|/2,
    \end{align*}
    where the last inequality is due to Lemma~\ref{lem: closeby neuron}: $\sum_{j\in T_{i,+}(\delta_{close})}|a_j|\norm{\vw_j}_2
    \ge|\sum_{j\in \gT_i(\delta_{close})}a_j\norm{\vw_j}_2|\ge |a_i^*|/2$. Thus, we have $\sum_{i\in[m_*]}\sum_{j\in\gT_i}q_{ij}^2=O(\norm{\va_*}_1)$.
\end{proof}
\section{Proofs in Section~\ref{sec: dual cert} (non-degenerate dual certificate)}\label{appendix: dual cert}
In this section, we give the omitted proofs in Section~\ref{sec: dual cert}. The proofs are mostly direct computations with the properties of Hermite polynomials in Claim~\ref{lem: hermite product}.

\lemkernelnondegen*
\begin{proof}
    With the property of Hermite polynomials in Claim~\ref{lem: hermite product}, we have
    \begin{equation}\label{eq: kernel form}
    \begin{aligned}
        K(\vw,\vu)
        =&\E_\vx[\bsgml(\bvw^\top\vx)\bsgml(\bvu^\top\vx)]
        = \frac{1}{Z_\sigma^2}\sum_{k\ge\ell}\hsigma^2_k \cos^k\theta,\\
        \kij{10}(\vw,\vu)
        =& \frac{1}{Z_\sigma^2}\sum_{k\ge\ell}\hsigma^2_k k \cos^{k-1}\theta \frac{1}{\norm{\vw}_2}(\mI-\bvw\bvw^\top)\bvu,\\
        \kij{11}(\vw,\vu)
        =& \frac{1}{Z_\sigma^2}\sum_{k\ge\ell} \hsigma_k^2 k(k-1) \cos^{k-2}\theta \frac{1}{\norm{\vw}_2\norm{\vu}_2}(\mI-\bvw\bvw^\top)\bvu\bvw^\top(\mI-\bvu\bvu^\top)\\
        &+ \frac{1}{Z_\sigma^2}\sum_{k\ge\ell} \hsigma_k^2 k \cos^{k-1}\theta \frac{1}{\norm{\vw}_2\norm{\vu}_2}(\mI-\bvw\bvw^\top)(\mI-\bvu\bvu^\top)\\
        \kij{20}(\vw,\vu) 
        =& \frac{1}{Z_\sigma^2}\sum_{k\ge\ell} \hsigma_k^2 k(k-1) \cos^{k-2}\theta \frac{1}{\norm{\vw}_2^2}(\mI-\bvw\bvw^\top)\bvu\bvu^\top(\mI-\bvw\bvw^\top)\\
        &- \frac{1}{Z_\sigma^2}\sum_{k\ge\ell} \hsigma_k^2 k \cos^{k-1}\theta \frac{1}{\norm{\vw}_2^2}
        \bvw^\top\bvu(\mI-\bvw\bvw^\top)\\
        \kij{21}(\vw,\vu)_i 
        =& \partial_{u_i}\kij{20}(\vw,\vu)\\
        =& \frac{1}{Z_\sigma^2}\sum_{k\ge\ell} \hsigma_k^2 k(k-1)(k-2) \cos^{k-3}\theta \frac{1}{\norm{\vw}_2^2\norm{\vu}_2} \ve_i^\top(\mI-\bvu\bvu^\top)\bvw\cdot (\mI-\bvw\bvw^\top)\bvu\bvu^\top(\mI-\bvw\bvw^\top)\\
        &+ \frac{1}{Z_\sigma^2}\sum_{k\ge\ell} \hsigma_k^2 k(k-1) \cos^{k-2}\theta \frac{1}{\norm{\vw}_2^2\norm{\vu}_2} 
        (\mI-\bvw\bvw^\top)\left((\mI-\bvu\bvu^\top)\ve_i\bvu^\top+\bvu\ve_i^\top(\mI-\bvu\bvu^\top)\right)(\mI-\bvw\bvw^\top)\\
        &- \frac{1}{Z_\sigma^2}\sum_{k\ge\ell} \hsigma_k^2 k(k-1) \cos^{k-2}\theta \frac{1}{\norm{\vw}_2^2\norm{\vu}_2}\ve_i^\top(\mI-\bvu\bvu^\top)\bvw \cdot
        \bvw^\top\bvu(\mI-\bvw\bvw^\top)\\
        &- \frac{1}{Z_\sigma^2}\sum_{k\ge\ell} \hsigma_k^2 k \cos^{k-1}\theta \frac{1}{\norm{\vw}_2^2}
        \bvw^\top(\mI-\bvu\bvu^\top)\ve_i(\mI-\bvw\bvw^\top),
    \end{aligned}    
    \end{equation}
    where $\theta=\arccos(\bvw^\top\bvu)$.

    \paragraph{Part (i)}
    Given that $r = \Theta(1/\sqrt{\ell})$ with a small enough hidden constant, we know for $\delta(\vw,\vu)\ge r$
    \begin{align*}
        K(\vw,\vu)= \frac{1}{Z_\sigma^2}\sum_{k\ge\ell}\hsigma^2_k \cos^k\theta
        \le \frac{1}{Z_\sigma^2}\sum_{k\ge\ell}\hsigma^2_k \cdot (1-r^2/5)^\ell
        = c<1,
    \end{align*}
    where $c$ is a constant less than 1. Thus, $\rho_1=\Theta(1)$.

    \paragraph{Part (ii)}
    For tangent vector $\vz$ that $\vz^\top\vw=0$, we have ($\norm{\vw}_2=\norm{\vu}_2=1$, $\delta(\vw,\vu)\le r$)
    \begin{align*}
        \kij{20}(\vw,\vu)[\vz,\vz] 
        =& \frac{1}{Z_\sigma^2}\sum_{k\ge\ell} \hsigma_k^2 k(k-1) \cos^{k-2}\theta \cdot (\bvu^\top\vz)^2
        - \frac{1}{Z_\sigma^2}\sum_{k\ge\ell} \hsigma_k^2 k \cos^{k-1}\theta \cdot \bvw^\top\bvu\norm{\vz}_2^2\\
        =&\frac{\norm{\vz}_2^2}{Z_\sigma^2}
        \left(\sum_{k\ge\ell} \hsigma_k^2 k(k-1) \cos^{k-2}\theta \cdot (\bvu^\top\bvz)^2
        - \sum_{k\ge\ell} \hsigma_k^2 k \cos^{k-1}\theta \cdot \bvw^\top\bvu\right)\\
        \le&\frac{\norm{\vz}_2^2}{Z_\sigma^2}
        \left(\sum_{k\ge\ell} \hsigma_k^2 k(k-1) \cos^{k-2}\theta \sin^2\theta
        - \sum_{\ell\le k\le2\ell} \hsigma_k^2 k \cos^{k}\theta \right).
    \end{align*}

    For the first term, we have
    \begin{align*}
        &\sum_{k\ge\ell} \hsigma_k^2 k(k-1) \cos^{k-2}\theta \sin^2\theta\\
        \le& \sum_{k\ge1/r^2} \hsigma_k^2 k(k-1) \cdot \Theta(1/k)
        + \sum_{\ell\le k\le1/r^2} \Theta(k^{-1/2}) r^2\\
        \le& \sum_{k\ge1/r^2}\Theta(k^{-3/2}) + \Theta(r)
        =\Theta(r),
    \end{align*}
    where we use Lemma~\ref{lem: cosk sin bound} and $\hsigma_k^2 = \Theta(k^{-5/2})$ in Lemma~\ref{lem: hermite coeff}.

    For the second term, we have
    \begin{align*}
        \sum_{\ell\le k\le2\ell} \hsigma_k^2 k \cos^{k}\theta
        \ge \Theta(\ell^{-1/2}) (1-r^2)^{2\ell}.
    \end{align*}

    Given that $r = \Theta(1/\sqrt{\ell})$ with a small enough hidden constant, we know
    \begin{align*}
        \kij{20}(\vw,\vu)[\vz,\vz] 
        \le -\frac{\norm{\vz}_2^2}{Z_\sigma^2}\Theta(\ell^{-1/2})
        = -\Theta(\ell)\norm{\vz}_2^2,
    \end{align*}
    since $Z_\sigma^2=\Theta(\ell^{-3/2})$.

    \paragraph{Part (iii)}
    Recall that $\delta(\vw_i^*,\vw_j^*)\ge \Delta$ for $i\ne j$. It suffices to bound $\norm{\kij{ij}(\vw,\vu)}_2\le h/m_*^2$ for $\theta=\delta(\vw,\vu)\ge\Delta$. Given that $\ell\ge\Theta(\Delta^{-2}\log(m_*\ell/h\Delta))$ with large enough hidden constant, from \eqref{eq: kernel form} we have for $\norm{\vw}=\norm{\vu}=1$
    \begin{align*}
        K(\vw,\vu)
        \le& \frac{1}{Z_\sigma^2}\sum_{k\ge\ell}\hsigma^2_k (1-\Delta^2/5)^\ell
        \le h/m_*^2,\\
        \norm{\kij{10}(\vw,\vu)}_\vw
        \le& \frac{1}{Z_\sigma^2}\sum_{k\ge\ell}\hsigma^2_k k \cos^{k-1}\theta \sin\theta
        \le \Theta(\ell)(1-\Delta^2/5)^{\ell-1}
        \le h/m_*^2,\\
        \norm{\kij{11}(\vw,\vu)}_{\vw,\vu}
        =& \frac{1}{Z_\sigma^2}\sup_{\substack{\vz_1^\top\vw=\vz_2^\top\vu=0,\\\norm{\vz_1}_2=\norm{\vz_2}_2=1}}
        \sum_{k\ge\ell} \hsigma_k^2 k(k-1) \cos^{k-2}\theta \bvu^\top\vz_1\cdot \bvw^\top\vz_2
        + \sum_{k\ge\ell} \hsigma_k^2 k \cos^{k-1}\theta \vz_1^\top\vz_2\\
        \le& \frac{1}{Z_\sigma^2} 
        \sum_{k\ge\ell} \hsigma_k^2 k(k-1) \cos^{k-2}\theta \sin^2\theta
        + \frac{1}{Z_\sigma^2}\sum_{k\ge\ell} \hsigma_k^2 k \cos^{k-1}\theta\\
        \le& \Theta(\ell^{3/2})\sum_{k\ge\ell}\Theta(k^{-1/2}) (1-\Delta^2/5)^{k-2}
        + \Theta(\ell)(1-\Delta^2/5)^{\ell-1}
        \le h/m_*^2,
    \end{align*}
    
    \begin{align*}    
        \norm{\kij{20}(\vw,\vu) }_\vw
        =&\frac{1}{Z_\sigma^2}\sup_{\substack{\vz_1^\top\vw=\vz_2^\top\vw=0,\\\norm{\vz_1}_2=\norm{\vz_2}_2=1}}
        \sum_{k\ge\ell} \hsigma_k^2 k(k-1) \cos^{k-2}\theta \cdot \bvu^\top\vz_1\cdot \bvu^\top\vz_2
        - \sum_{k\ge\ell} \hsigma_k^2 k \cos^{k-1}\theta 
        \cdot \bvw^\top\bvu \cdot \vz_1^\top\vz_2\\
        \le& \frac{1}{Z_\sigma^2} 
        \sum_{k\ge\ell} \hsigma_k^2 k(k-1) \cos^{k-2}\theta \sin^2\theta
        + \frac{1}{Z_\sigma^2}\sum_{k\ge\ell} \hsigma_k^2 k \cos^{k-1}\theta \\
        \le& \Theta(\ell^{3/2})\sum_{k\ge\ell}\Theta(k^{-1/2}) (1-\Delta^2/5)^{k-2}
        + \Theta(\ell)(1-\Delta^2/5)^{\ell-1}
        \le h/m_*^2,
    \end{align*}
    
    \begin{align*}    
        &\norm{\kij{21}(\vw,\vu)}_{\vw,\vu}\\ 
        =& \sup_{\substack{\vz_1^\top\vw=\vz_2^\top\vw=\vq^\top\vu=0,\\\norm{\vz_1}_2=\norm{\vz_2}_2=\norm{\vq}_2=1}}
        \frac{1}{Z_\sigma^2}\sum_{k\ge\ell} \hsigma_k^2 k(k-1)(k-2) \cos^{k-3}\theta  \sum_i q_i\ve_i^\top(\mI-\bvu\bvu^\top)\bvw\cdot 
        \bvu^\top\vz_1\cdot \bvu^\top\vz_2\\
        &+ \frac{1}{Z_\sigma^2}\sum_{k\ge\ell} \hsigma_k^2 k(k-1) \cos^{k-2}\theta  
        \left(\sum_iq_i\vz_1^\top(\mI-\bvu\bvu^\top)\ve_i\cdot\bvu^\top\vz_2
        +\sum_i q_i\vz_2^\top(\mI-\bvu\bvu^\top)\ve_i\cdot\bvu^\top\vz_1\right)\\
        &- \frac{1}{Z_\sigma^2}\sum_{k\ge\ell} \hsigma_k^2 k(k-1) \cos^{k-2}\theta \sum_i q_i\ve_i^\top(\mI-\bvu\bvu^\top)\bvw \cdot
        \bvw^\top\bvu \cdot \vz_1^\top\vz_2\\
        &- \frac{1}{Z_\sigma^2}\sum_{k\ge\ell} \hsigma_k^2 k \cos^{k-1}\theta 
        \sum_i q_i\bvw^\top(\mI-\bvu\bvu^\top)\ve_i \cdot \vz_1^\top\vz_2\\
        \le& \frac{1}{Z_\sigma^2}\sum_{k\ge\ell} \hsigma_k^2 k(k-1)(k-2) \cos^{k-3}\theta  \sin^3\theta
        + \frac{2}{Z_\sigma^2}\sum_{k\ge\ell} \hsigma_k^2 k(k-1) \cos^{k-2}\theta \sin\theta
        + \frac{1}{Z_\sigma^2}\sum_{k\ge\ell} \hsigma_k^2 k \cos^{k-1}\theta 
        \sin\theta\\
        \myle{a}& h/m_*^2,
    \end{align*}
    where we use $\hsigma_k^2=\Theta(k^{-5/2})$ in Lemma~\ref{lem: hermite coeff} and (a) the last two terms bound similarly as in $\kij{20}$ and first term
    $\frac{1}{Z_\sigma^2}\sum_{k\ge\ell} \hsigma_k^2 k(k-1)(k-2) \cos^{k-3}\theta  \sin^3\theta\le\Theta(\ell^{3/2})\sum_{k\ge \ell}\Theta(k^{1/2})(1-\Delta^2/5)^k\le h/3m_*^2$.
\end{proof}

\lemkernelregcond*
\begin{proof}
    We compute $B_{ij}$ one by one from \eqref{eq: kernel form} (see part (iii) proof in Lemma~\ref{lem: kernel nondegenerate}). Using Lemma~\ref{lem: cosk sin bound} we have
    \begin{align*}
        B_{00}
        =& \sup_{\vw,\vu}\left|\frac{1}{Z_\sigma^2}\sum_{k\ge\ell}\hsigma^2_k \cos^k\theta\right|
        \le 1,\\
        B_{10}
        \le& \frac{1}{Z_\sigma^2}\sum_{k\ge\ell}\hsigma^2_k k \cos^{k-1}\theta\sin\theta
        \le \Theta(\ell^{3/2})\sum_{k\ge\ell}\Theta(k^{-5/2}) k \frac{1}{\sqrt{k}}
        =O(\ell^{1/2}),\\
        B_{11}
        \le& \frac{1}{Z_\sigma^2}\sum_{k\ge\ell} \hsigma_k^2 k(k-1) \cos^{k-2}\theta \sin^2\theta
        + \frac{1}{Z_\sigma^2}\sum_{k\ge\ell} \hsigma_k^2 k \cos^{k-1}\theta\\
        \le& \Theta(\ell^{3/2})\sum_{k\ge\ell}\Theta(k^{-5/2}) k^2 \frac{1}{k}
        + \Theta(\ell^{3/2})\sum_{k\ge\ell}\Theta(k^{-5/2}) k
        =O(\ell),
    \end{align*}    
    
    \begin{align*}    
        B_{20}
        \le& \frac{1}{Z_\sigma^2}\sum_{k\ge\ell} \hsigma_k^2 k(k-1) \cos^{k-2}\theta \sin^2\theta
        + \frac{1}{Z_\sigma^2}\sum_{k\ge\ell} \hsigma_k^2 k \cos^{k}\theta\\
        \le& \Theta(\ell^{3/2})\sum_{k\ge\ell}\Theta(k^{-5/2}) k^2 \frac{1}{k}
        + \Theta(\ell^{3/2})\sum_{k\ge\ell}\Theta(k^{-5/2}) k
        =O(\ell),\\
        B_{21}
        \le& \frac{1}{Z_\sigma^2}\sum_{k\ge\ell} \hsigma_k^2 k(k-1)(k-2) \cos^{k-3}\theta  \sin^3\theta
        + \frac{2}{Z_\sigma^2}\sum_{k\ge\ell} \hsigma_k^2 k(k-1) \cos^{k-1}\theta \sin\theta
        + \frac{1}{Z_\sigma^2}\sum_{k\ge\ell} \hsigma_k^2 k \cos^{k-1}\theta 
        \sin\theta\\
        \le& \Theta(\ell^{3/2})\sum_{k\ge\ell}\Theta(k^{1/2}) (1-\theta^2/5)^{k-3}\theta^3 
        + \Theta(\ell^{3/2})\sum_{k\ge\ell}\Theta(k^{-1/2}) (1-\theta^2/5)^{k-1}\theta
    \end{align*}

    For first term above $\sum_{k\ge\ell}\Theta(k^{1/2}) (1-\theta^2/5)^{k-3}\theta^3$, using Lemma~\ref{lem: calculation k beta} we have
    \begin{align*}
        \sum_{k\ge\ell}\Theta(k^{1/2}) (1-\theta^2/5)^{k-3}\theta^3
        \le& \sum_{k\ge\ell}\Theta(\frac{1}{\sqrt{\ln(1/(1-\theta^2))}}) (1-\theta^2/5)^{k/2-3}\theta^3\\
        \le& \sum_{k\ge\ell}\Theta(\theta^2) (1-\theta^2/5)^{k/2-3}
        =\Theta(\theta^2) \frac{(1-\theta^2/5)^\ell}{\theta^2}
        =O(1).
    \end{align*}

    For second term above $\sum_{k\ge\ell}\Theta(k^{-1/2}) (1-\theta^2/5)^{k-1}\theta$ we have
    \begin{align*}
        \sum_{k\ge\ell}\Theta(k^{-1/2}) (1-\theta^2/5)^{k-1}\theta
        \le \Theta(\theta) \int_\ell^\infty x^{-1/2} (1-\theta^2/5)^x
        \le \Theta(\theta) \Theta(\frac{1}{\sqrt{\ln(1/(1-\theta^2))}})
        =O(1).
    \end{align*}

    Therefore, we have $B_{21}=O(\ell^{3/2})$.
\end{proof}

\subsection{Technical lemma}
We collect few lemma here used in the proof. They mostly rely on direct calculations.
\begin{lemma}\label{lem: cosk sin bound}
    For large enough integer $k$, we have 
    \[\max |\cos^k\theta\sin\theta|\le\Theta(1/\sqrt{k}),\] 
    \[\max |\cos^k\theta\sin^2\theta|\le\Theta(1/k),\] 
    \[\max |\cos^k\theta\sin^3\theta|=\Theta(1/k^{3/2}).\]
\end{lemma}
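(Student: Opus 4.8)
The plan is to reduce all three maximizations to a single elementary one-variable optimization via the substitution $t=\cos^2\theta$. First I would note that, taking absolute values, $|\cos^k\theta\,\sin^j\theta| = |\cos\theta|^k|\sin\theta|^j = t^{k/2}(1-t)^{j/2}$ with $t=\cos^2\theta\in[0,1]$, so that $\max_\theta|\cos^k\theta\,\sin^j\theta| = \big(\max_{t\in[0,1]} h(t)\big)^{1/2}$ where $h(t)=t^k(1-t)^j$. A one-line derivative computation, $h'(t)=t^{k-1}(1-t)^{j-1}\big(k(1-t)-jt\big)$, shows that the unique interior critical point is $t^\ast=\tfrac{k}{k+j}$, and since $h(0)=h(1)=0$ this is the global maximizer on $[0,1]$. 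Hence
$\max_\theta|\cos^k\theta\,\sin^j\theta| = \big(\tfrac{k}{k+j}\big)^{k/2}\big(\tfrac{j}{k+j}\big)^{j/2}$.

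Second, I would extract the asymptotics of this closed form for each fixed $j\in\{1,2,3\}$ as $k\to\infty$. The factor $\big(\tfrac{j}{k+j}\big)^{j/2} = j^{j/2}(k+j)^{-j/2}$ is exactly $\Theta(k^{-j/2})$. The remaining factor is $\big(\tfrac{k}{k+j}\big)^{k/2}=\exp\!\big(\tfrac{k}{2}\ln(1-\tfrac{j}{k+j})\big)$, which converges to $e^{-j/2}$ (using $n\ln(1-a/n)\to -a$) and is therefore $\Theta(1)$ with explicit two-sided constants once $k$ is large. Multiplying the two factors gives $\max_\theta|\cos^k\theta\,\sin^j\theta|=\Theta(k^{-j/2})$, which specializes to $\Theta(k^{-1/2})$, $\Theta(k^{-1})$, $\Theta(k^{-3/2})$ for $j=1,2,3$, exactly the three claimed bounds.

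I do not expect a genuine obstacle here; the computation is routine. The only point needing a little care is to make the estimate $\big(\tfrac{k}{k+j}\big)^{k/2}=\Theta(1)$ quantitative — the upper bound $\le 1$ is trivial, and a matching lower bound follows from $\ln(1-x)\ge -x-x^2$ for small $x\ge 0$ applied with $x=\tfrac{j}{k+j}$, giving $\big(\tfrac{k}{k+j}\big)^{k/2}\ge \exp\!\big(-\tfrac{k}{2}(\tfrac{j}{k+j}+\tfrac{j^2}{(k+j)^2})\big)\ge \tfrac12 e^{-j/2}$ for $k$ large. This two-sided control is what upgrades the bounds to the genuine $\Theta$'s (in particular the asserted $=\Theta(k^{-3/2})$ in the third line).
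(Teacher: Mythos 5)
Your proof is correct. The paper uses the same underlying idea---locate the interior critical point by differentiation and read off the asymptotics---but it works directly in $\theta$ (differentiating $f(\theta)=\cos^k\theta\sin\theta$ to get $f'(\theta)=\cos^{k-1}\theta(\cos^2\theta-k\sin^2\theta)$), carries this out only for the $j=1$ case, and disposes of the rest with ``others are similar'' and ``one can verify that the bound is true.'' Your substitution $t=\cos^2\theta$ reduces the whole family to maximizing $h(t)=t^k(1-t)^j$ on $[0,1]$, yields the unique interior critical point $t^\ast=\tfrac{k}{k+j}$ and the closed-form maximum $\left(\tfrac{k}{k+j}\right)^{k/2}\left(\tfrac{j}{k+j}\right)^{j/2}$ in one pass, and makes the $\Theta(k^{-j/2})$ asymptotics explicit, including the quantitative two-sided bound on $\left(\tfrac{k}{k+j}\right)^{k/2}$ that is actually needed to justify the ``$=\Theta$'' in the third claim. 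So: same method, but your parametrization treats all $j$ uniformly, produces a clean closed form, and fills in the verification the paper leaves to the reader.
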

\begin{proof}
    We only compute the first one $\max |\cos^k\theta\sin\theta|=1/\sqrt{k}$. Others are similar.

    We compute the gradient of $f(\theta)=\cos^k\theta\sin\theta$ and get $f^\prime(\theta)=\cos^{k-1}\theta(\cos^2\theta-k\sin^2\theta)$. We only need to consider $\theta\in[0,2\pi]$. So the maximum is achieved either at boundary $\theta=0,\pi$ or $f^\prime(\theta)=0$. Then one can verify that the bound is true.
\end{proof}

\begin{lemma}\label{lem: calculation k beta}
    For $\beta<1$ and $k>0$, we have $k^{1/2}\beta^{k/2}\le \frac{1}{\sqrt{2\ln(2/\beta)}}$.
\end{lemma}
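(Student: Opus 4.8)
The plan is to collapse the bound to a one–variable optimization. Since $k^{1/2}\beta^{k/2}=\sqrt{k\beta^{k}}$, it suffices to bound $g(k):=k\beta^{k}$ over $k>0$. Put $c:=\ln(1/\beta)$, which is strictly positive because $0<\beta<1$; then $g(k)=k e^{-ck}=\frac1c\,(ck)e^{-ck}$, and the elementary inequality $xe^{-x}\le e^{-1}$ for $x\ge 0$ (applied with $x=ck$) gives $g(k)\le \frac{1}{ec}=\frac{1}{e\ln(1/\beta)}$ for every $k>0$. Equivalently, differentiating $g$ shows $g'(k)=e^{-ck}(1-ck)$ changes sign from $+$ to $-$ at $k^{*}=1/c$, so the maximum $g(k^{*})=e^{-1}/c$ is genuinely attained (not merely an endpoint bound).

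Taking square roots yields the clean estimate $k^{1/2}\beta^{k/2}\le \frac{1}{\sqrt{e\ln(1/\beta)}}$, and since $e>2$ this in particular gives $k^{1/2}\beta^{k/2}\le \frac{1}{\sqrt{2\ln(1/\beta)}}$. This is precisely the form in which the estimate is used in the proof of Lemma~\ref{lem: kernel reg cond}: there $\beta=1-\theta^{2}/5$, so $\ln(1/\beta)=\Theta(\theta^{2})$ and the bound becomes $O(1/\theta)=\Theta\big(1/\sqrt{\ln(1/(1-\theta^{2}))}\big)$, exactly as claimed there.

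I expect no real obstacle in the argument itself — it is a textbook maximization — so the only thing worth flagging is the statement as displayed: $k^{1/2}\beta^{k/2}\le \frac{1}{\sqrt{2\ln(2/\beta)}}$ would require $e\ln(1/\beta)\ge 2\ln(2/\beta)$, i.e. $(e-2)\ln(1/\beta)\ge 2\ln 2$, which fails once $\beta$ is close to $1$. The natural fix, which is also what the downstream use needs, is to read the denominator as $2\ln(1/\beta)$ (or any fixed constant times $\ln(1/\beta)$); with that correction the proof above is complete, and the only care required is noting $c>0$ so the logarithm in the bound is well defined and positive.
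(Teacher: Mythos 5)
Your approach is essentially the same as the paper's (single-variable calculus maximization of $f(k)=k^{1/2}\beta^{k/2}$, equivalently $g(k)=k\beta^k$), but you have executed it correctly, whereas the paper's proof contains a computational error and, more importantly, its stated conclusion is actually false.

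Concretely: the paper writes $f'(k)=\tfrac12 k^{-1/2}\beta^{k/2}+k^{1/2}\beta^{k/2}\ln(\beta/2)$ and solves to get $k_0=\tfrac{1}{2\ln(2/\beta)}$. But $\tfrac{d}{dk}\beta^{k/2}=\tfrac12(\ln\beta)\,\beta^{k/2}$, not $(\ln(\beta/2))\,\beta^{k/2}$; it appears $\ln(\beta^{1/2})=\tfrac12\ln\beta$ was miswritten as $\ln(\beta/2)$. With the correct derivative, the critical point is $k_0=1/\ln(1/\beta)$ and the maximum is $f(k_0)=\tfrac{1}{\sqrt{e\ln(1/\beta)}}$ --- precisely what you obtain. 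Your check that the claimed bound $\tfrac{1}{\sqrt{2\ln(2/\beta)}}$ fails near $\beta=1$ is right: for instance at $\beta=1/2$, the true maximum is $\approx 0.73$ while the paper's bound is $\approx 0.60$. Your proposed fix (replace $2\ln(2/\beta)$ by $e\ln(1/\beta)$, or any constant times $\ln(1/\beta)$) is the correct statement, and you are also right that this is exactly the form used downstream in the proof of Lemma~\ref{lem: kernel reg cond}, where the bound is invoked as $\Theta\bigl(1/\sqrt{\ln(1/(1-\theta^2))}\bigr)$ with $\beta=1-\theta^2/5$. Your proof and diagnosis are both sound; the paper's lemma statement and its stated derivative should be corrected accordingly.
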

\begin{proof}
    Let $f(k)=k^{1/2}\beta^{k/2}$. We have $f^\prime(k)=\frac{1}{2}k^{-1/2}\beta^{k/2}+k^{1/2}\beta^{k/2}\ln(\beta/2)$. Set $f^\prime(k_0)=0$ we have $k_0=\frac{1}{2\ln(2/\beta)}$. It is easy to see $\max f(k) = f(k_0)\le \frac{1}{\sqrt{2\ln(2/\beta)}}$.
\end{proof}

\section{Notes on Sample Complexity}\label{appendix: sample complexity}
The current paper focuses on the analysis on population loss, which is already highly non-trivial and requires new ideas that we developed in the paper. The finite-sample analysis is not our focus, so we omit it in the current paper.

For sample complexity, we believe the following strategy would work to get a polynomial sample complexity. We can break down the analysis into 2 parts: early-stage feature learning (Stage 1 and 2) and final-stage feature learning (Stage 3).

\begin{itemize}
    \item Stage 1 and 2: This should follow the results in \citet{damian2022neural}. The most important step is to show the concentration of first-step gradient (Stage 1). As shown in \citet{damian2022neural}, using concentration tools we can get sample complexity $n=\Theta_*(d^2)$, where $n$ is the number of sample and $d$ is input dimension.
    \item Stage 3: In local convergence regime, all weights have norms bounded in $O_*(1)$ due to $\ell_2$ regularization we have. Thus, we can apply standard concentration tools to show the empirical gradients are close to population gradients given a large enough polynomial number of samples.

\end{itemize}

Achieving a tight sample complexity is an interesting and challenging open problem that is beyond the scope of current work.

\end{document}